\definecolor{bgcolor}{rgb}{0.8,1,1}
\definecolor{bgcolor2}{rgb}{0.8,1,0.8}
\newcommand{\fronorm}[1]{\left\|#1\right\|_{\text{F}}}
\newcommand{\eqdef}{\; { := }\;}
\newcommand{\R}{\mathbb{R}}
\newcommand{\B}{\mathbb{B}}
\newcommand{\bC}{\mathbb{C}}
\newcommand{\N}{\mathbb{N}}
\newcommand{\E}{\mathbb{E}}
\newcommand{\norm}[1]{\left\|#1\right\|}
\def\<#1,#2>{\langle #1,#2\rangle}
\newcommand{\algname}[1]{{\sf\green#1}\xspace}
\newcommand{\algnameS}[1]{{\sf\green#1}\xspace}
\newcommand{\dataname}[1]{{\tt\color{blue}#1}\xspace}
\newcommand{\des}[1]{{\em\color{red} [#1]}}
\newcommand{\alert}[1]{{\em\color{red} #1}}
\newcommand{\HS}{L_{*}}      
\newcommand{\HF}{L_{\rm F}}
\newcommand{\HM}{L_{\infty}}
\newcommand{\makecellnew}[1]{{\renewcommand{\arraystretch}{0.8}\begin{tabular}{c} #1 \end{tabular}}}
\newcommand{\squeeze}{}
\def\<{\left\langle}
\def\>{\right\rangle}
\def\[{\left[}
\def\]{\right]}
\def\({\left(}
\def\){\right)}
\newcommand{\cO}{\mathcal{O}}
\definecolor{mydarkgreen}{RGB}{39,130,67}
\definecolor{mydarkred}{RGB}{192,47,25}
\newcommand{\green}{\color{mydarkgreen}}
\newcommand{\red}{\color{mydarkred}}
\newcommand{\cmark}{\green\ding{51}}%
\newcommand{\xmark}{\red\ding{55}}%
\newcommand{\mA}{\mathbf{A}}
\newcommand{\mB}{\mathbf{B}}
\newcommand{\mH}{\mathbf{H}}
\newcommand{\mI}{\mathbf{I}}
\newcommand{\mM}{\mathbf{M}}
\newcommand{\mS}{\mathbf{S}}
\newcommand{\cC}{{\mathcal{C}}}
\newcommand{\cH}{{\mathcal{H}}}
\declaretheorem[within=section]{definition}
\declaretheorem[sibling=definition]{theorem}
\declaretheorem[sibling=definition]{assumption}
\declaretheorem[sibling=definition]{lemma}
\title{\bf FedNL: Making Newton-Type Methods \\ \bf Applicable to Federated Learning}
\author{Mher Safaryan\thanks{King Abdullah University of Science and Technology, Thuwal, Saudi Arabia.} \and Rustem Islamov\thanks{King Abdullah University of Science and Technology (KAUST), Thuwal, Saudi Arabia, and Moscow Institute of Physics and Technology (MIPT), Dolgoprudny, Russia. This research was conducted while this author was an intern at KAUST and an undergraduate student at MIPT.} \and Xun Qian\thanks{King Abdullah University of Science and Technology, Thuwal, Saudi Arabia.} \and Peter Richt\'{a}rik\thanks{King Abdullah University of Science and Technology, Thuwal, Saudi Arabia.}}
\date{June 05, 2021}
\begin{document}

\maketitle

\begin{abstract}
Inspired by recent work of Islamov et al (2021), we propose a family of Federated Newton Learn (\algname{FedNL}) methods, which we believe is a marked step in the direction of making second-order methods applicable to FL. In contrast to the aforementioned work, \algname{FedNL} employs a different Hessian learning technique which i) enhances privacy as it does not rely on the training data to be revealed to the coordinating server, ii) makes it applicable beyond generalized linear models, and iii) provably works with general contractive compression operators for compressing the local Hessians, such as Top-$K$ or Rank-$R$, which are vastly superior in practice. Notably, we do not need to rely on error feedback for our methods to work with contractive compressors. 

Moreover, we develop \algname{FedNL-PP}, \algname{FedNL-CR} and \algname{FedNL-LS}, which are variants of \algname{FedNL} that support partial participation, and globalization via cubic regularization and line search, respectively, and \algname{FedNL-BC}, which is a variant that can further benefit from bidirectional compression of gradients and models, i.e., smart uplink gradient and smart downlink model compression. 

We prove local convergence rates that are independent of the condition number, the number of training data points, and compression variance.  Our communication efficient Hessian learning technique provably learns the Hessian at the optimum. 

Finally, we perform a variety of numerical experiments that show that our \algname{FedNL} methods have state-of-the-art communication complexity when compared to key baselines.
\end{abstract}

\clearpage 
 \tableofcontents
 
\clearpage 
\section{Introduction}

In this paper we consider the {\em federated learning} problem
\begin{equation}\label{erm-prob}
\squeeze
\min\limits_{x\in\R^d} \left\{f(x) \eqdef \frac{1}{n}\sum \limits_{i=1}^n f_i(x)\right\},
\end{equation}
where $d$ denotes dimension of the model $x\in\R^d$ we wish to train, $n$ is the total number of silos/machines/devices/clients in the distributed system, $f_i(x)$ is the loss/risk associated with the data stored on machine $i\in[n] \eqdef \{1, 2, \dots, n\}$,
and $f(x)$ is the empirical loss/risk. 

\subsection{First-order methods for FL}
The prevalent paradigm for training federated learning (FL) models \citep{FEDLEARN,FEDOPT,FL2017-AISTATS} (see also the recent surveys by \citet{FL-big, FL_survey_2020}) is to use {\em distributed first-order optimization methods} employing one or more tools for enhancing communication efficiency, which is a key bottleneck in the federated setting. 

These tools include {\em communication compression} \citep{FEDLEARN, QSGD, DCGD} and techniques for progressively reducing the variance introduced by compression \citep{DIANA, DIANA-VR, sigma_k, ADIANA, MARINA},  {\em local computation} \citep{FL2017-AISTATS,Stich-localSGD, localSGD-AISTATS2020, FedRR} and techniques for reducing the client drift introduced by local computation \citep{SCAFFOLD, Gorbunov2020localSGD},  and  {\em partial participation} \citep{FL2017-AISTATS, SGD-AS} and techniques for taming the slow-down introduced by partial participation \citep{sigma_k, OptClientSampling2020}.  

Other useful techniques for further reducing the communication complexity of FL methods include the use of {\em momentum} \citep{DIANA, ADIANA}, and {\em adaptive learning rates} \citep{MM2019, xie2019local, reddi2020adaptive, xie2019local, IntSGD}.   In addition, aspiring FL methods need to protect the {\em privacy} of the clients' data, and need to be built with {\em data  heterogeneity} in mind \citep{FL-big}.

\subsection{Towards second-order methods for FL}
While first-order methods are the methods of choice in the context of FL  at the moment, their communication complexity necessarily depends on (a suitable notion of) the condition number of the problem, which can be very large as it depends on the structure of the model being trained, on the choice of the loss function, and most importantly, on the properties of the~training~data. 

However,  in many situations when algorithm design is not constrained by the stringent requirements characterizing FL, it is very well known that carefully designed {\em second-order methods} can be vastly superior. On an intuitive level, this is mainly because these methods make an extra computational effort to estimate the local curvature of the loss landscape, which is useful in generating more powerful and adaptive update direction. However, in FL, it is often communication and not computation which forms the key bottleneck, and hence the idea of ``going second order'' looks attractive.  The theoretical benefits of using curvature information are well known. For example, the classical Newton's method, which forms the basis for most efficient second-order method in much the same way the gradient descent method forms the basis for more elaborate first-order methods, enjoys a fast {\em condition-number-independent} (local) convergence rate \citep{Beck-book-nonlinear}, which is beyond the reach of {\em all} first-order methods. However,  Newton's method does not admit an efficient distributed implementation in the heterogeneous data regime as it requires repeated communication of local  Hessian matrices $\nabla^2 f_i \in \R^{d\times d}$ to the server, which is prohibitive as this constitutes a massive burden on the communication links. 

\subsection{Desiderata for second-order methods applicable to FL} 
In this paper we take the stance that it would be highly \alert{desirable} to develop Newton-type methods for solving the federated learning problem \eqref{erm-prob} that would  
\begin{itemize}
\item [\des{hd}] work well in the truly \alert{h}eterogeneous \alert{d}ata setting (i.e., we do not want to assume that the functions $f_1,\dots,f_n$ are ``similar''), 
\item [\des{fs}] apply to the general \alert{f}inite-\alert{s}um problem \eqref{erm-prob}, without imparting strong structural assumptions on the local functions $f_1,\dots,f_n$ (e.g., we do not want to assume that the functions $f_1,\dots,f_n$ are quadratics, generalized linear models, and so on), 
\item [\des{as}] benefits from Newton-like (matrix-valued) \alert{a}daptive  \alert{s}tepsizes,
\item [\des{pe}] employ at least a rudimentary \alert{p}rivacy \alert{e}nhancement mechanism (in particular, we do not want the devices to be sending/revealing their training data to the server), 
\item [\des{uc}] enjoy, through  \alert{u}biased communication \alert{c}ompression strategies applied to the Hessian, such as Rand-$K$, the same low $\cO(d)$  communication cost per communication round as gradient descent, 
\item [\des{cc}] be able to benefit from the more aggressive \alert{c}ontractive communication \alert{c}ompression strategies  applied to the Hessian, such as Top-$K$ and Rank-$R$,
\item [\des{fr}] have \alert{f}ast local \alert{r}ates unattainable by first order methods (e.g., rates independent of the condition number), 
\item [\des{pp}] support \alert{p}artial \alert{p}articipation (this is important when the number $n$ of devices is very large), 
\item [\des{gg}] have \alert{g}lobal convergence \alert{g}uarantees, and superior global empirical behavior, when combined with a suitable globalization strategy (e.g., line search or cubic regularization), 
\item [\des{gc}] optionally be able to use, for a more dramatic communication reduction, additional smart uplink (i..e, device to server) \alert{g}radient \alert{c}ompression,
\item [\des{mc}]  optionally be able to use, for a more dramatic communication reduction, additional smart downlink (i.e., server to device) \alert{m}odel \alert{c}ompression,
\item [\des{lc}] perform provably useful \alert{l}ocal \alert{c}omputation, even in the heterogeneous data setting (it is known that local computation via gradient-type steps, which form the backbone of methods such as FedAvg and LocalSGD, provably helps under some degree of data similarity only). 
\end{itemize}

  However, to the best of our knowledge, existing Newton-type methods are not applicable to FL as they are not compatible with most of the aforementioned desiderata. 
  
  \begin{quote}
  {\em It is therefore natural and pertinent to ask whether it is possible to design  theoretically well grounded and empirically well performing Newton-type methods that would be able to conform to the  FL-specific {\color{red}desiderata} listed above. } 
  \end{quote}

  In this work, we address this challenge  {\em in the affirmative.}

\section{Contributions} \label{sec:contributions}

Before detailing our contributions,  it will be very useful to briefly outline the key elements of the recently proposed {\em Newton Learn} (\algname{NL}) framework of \citet{Islamov2021NewtonLearn}, which served as the main inspiration for our work, and which  is also the closest work to ours. 

\subsection{The Newton Learn framework of \citet{Islamov2021NewtonLearn}}
The starting point of their work is the observation that the Newton-like method  $$x^{k+1}=x^k-(\nabla^2 f(x^*))^{-1} \nabla f(x^k),$$ called Newton Star (\algname{NS}), where $x^*$ is the (unique) solution of \eqref{erm-prob}, converges to $x^*$ locally quadratically under suitable assumptions, which is a desirable property it inherits from the classical Newton method. Clearly, this method is not practical, as it relies on the knowledge of the Hessian at the optimum. 

However, under the assumption that the matrix $\nabla^2 f(x^*)$ is known to the server, \algname{NS} can be implemented with $\cO(d)$ cost in each communication round. Indeed, \algname{NS} can simply  be treated as gradient descent, albeit with a matrix-valued stepsize equal to $(\nabla^2 f(x^*))^{-1}$. 

The first key contribution of \citet{Islamov2021NewtonLearn} is the design of a strategy, for which they coined the term {\em Newton Learn}, which {\em learns the Hessians $\nabla^2 f_1(x^*), \dots, \nabla^2 f_n(x^*)$}, and hence their average, $\nabla^2 f(x^*)$, progressively throughout the iterative process, and does so in a {\em communication efficient manner,} using unbiased compression \des{uc} of Hessian information. In particular, the compression level can be adjusted so that in each communication round,  $\cO(d)$ floats need to be communicated between each device and the server only. In each iteration, the master uses the average of the current learned local Hessian matrices in place of the Hessian at the optimum, and subsequently performs a step similar to that of \algname{NS}. So, their method uses {\em adaptive matrix-valued stepsizes} \des{as}. 

\citet{Islamov2021NewtonLearn} prove that their learning procedure indeed works in the sense that the sequences of the learned local matrices converge to the local optimal Hessians $\nabla^2 f_i(x^*)$. This property leads to a Newton-like {\em acceleration}, and as a result,  their \algname{NL} methods enjoy a local linear convergence rate (for a Lyapunov function that includes Hessian convergence) and local superlinear convergence rate (for distance to the optimum) that is {\em independent of the condition number}, which is a property beyond the reach of any first-order method \des{fr}.  Moreover, all of this provably works in the heterogeneous data setting \des{hd}. 

Finally, they develop a practical and theoretically grounded globalization strategy \des{gg} based on cubic regularization, called {\em Cubic Newton Learn} (\algname{CNL}).
   
\begin{table*}
\caption{Comparison of the main features of our family of \algname{FedNL} algorithms and results  with those of \citet{Islamov2021NewtonLearn}, which we used as an inspiration. We have made numerous and significant modifications and improvements in order to obtain methods applicable to federated learning.} \label{tbl:main}
\centering
\begin{threeparttable}
\footnotesize
\begin{tabular}{|c|c|c|c|}
\hline
\bf \# & \bf Feature & \begin{tabular}{c}\bf \cite{Islamov2021NewtonLearn} \end{tabular} & \begin{tabular}{c}\bf This Work \end{tabular} \\
\hline
\des{hd} & supports \alert{h}eterogeneous \alert{d}ata setting
& \cmark  & \cmark \\
\hline
\des{fs} & applies to general \alert{f}inite-\alert{s}um problems
& \xmark  & \cmark \\
\hline
\des{as} & uses \alert{a}daptive \alert{s}tepsizes & \cmark  & \cmark \\
\hline
\des{pe} &  \alert{p}rivacy is \alert{e}nhanced (training data is not sent to the server)
&  \xmark & \cmark \\
\hline
\des{uc}  & supports \alert{u}nbiased Hessian \alert{c}ompression  (e.g., Rand-$K$)  &  \cmark & \cmark \\
\hline
\des{cc}  & supports \alert{c}ontractive Hessian \alert{c}ompression (e.g., Top-$K$) &  \xmark & \cmark \\
\hline
\des{fr} & \alert{f}ast local \alert{r}ate: independent of the condition number &  \cmark & \cmark \\
\hline
 \des{fr} & \alert{f}ast local \alert{r}ate:  independent of the \# of training data points &  \xmark & \cmark \\
\hline
\des{fr}  &  \alert{f}ast local \alert{r}ate:  independent of the compressor variance &  \xmark & \cmark \\
\hline
\des{pp} & supports \alert{p}artial \alert{p}articipation &  \xmark & \cmark (Alg~\ref{alg:FedNL-PP}) \\
\hline
\des{gg} & has \alert{g}lobal convergence \alert{g}uarantees via line search &  \xmark & \cmark (Alg~\ref{alg:FedNL-LS}) \\
\hline
\des{gg} & has \alert{g}lobal convergence \alert{g}uarantees via cubic regularization &  \cmark & \cmark (Alg~\ref{alg:FedNL-CR}) \\
\hline
\des{gc} & supports smart uplink \alert{g}radient \alert{c}ompression at the devices &  \xmark & \cmark (Alg~\ref{alg:FedNL-BC}) \\
\hline
\des{mc} & supports smart downlink \alert{m}odel \alert{c}ompression by the master &  \xmark & \cmark (Alg~\ref{alg:FedNL-BC}) \\
\hline
\des{lc}  & performs useful \alert{l}ocal \alert{c}omputation &  \cmark & \cmark \\
\hline
\end{tabular}
\end{threeparttable}
\end{table*}   
   
\subsection{Issues with the Newton Learn framework}
While the above development is clearly very promising in the context of distributed optimization, the results suffer from several limitations which prevent the methods from being applicable to FL. First, the Newton Learn strategy of \citet{Islamov2021NewtonLearn} critically depends on the assumption that the local functions are of the form \begin{equation}\squeeze f_i(x) = \frac{1}{m}\sum_{j=1}^m  \varphi_{ij}(a_{ij}^\top x),\label{eq:GLM}\end{equation} where $\varphi_{ij}:\R\to \R$ are sufficiently well behaved functions, and $a_{i1},\dots,a_{im}\in \R^d$ are the training data points owned by device $i$. As a result, their approach is limited to generalized linear models only, which violates \des{fs} from the aforementioned wish list. Second,  their communication strategy critically relies on each device $i$ sending a small subset of their private training data $\{a_{i1},\dots,a_{im}\}$ to the server in each communication round, which violates \des{pe}.  Further, while their approach supports $\cO(d)$ communication, it does not support more general contractive compressors \des{cc}, such as Top-$K$ and Rank-$R$, which have been found very useful in the context of first order methods with gradient compression.  Finally, the methods of \citet{Islamov2021NewtonLearn} do not support bidirectional compression \des{bc} of gradients and models, and do not support partial participation \des{pp}.

{
    \begin{table*}[!h]
    \scriptsize
    \addtolength{\tabcolsep}{-3pt} 
        \centering
        \caption{Theoretical comparison of 2 gradient-based (Gradient Descent and \algname{ADIANA}) and 3 second-order (\algname{Newton}, \algname{NL} and \algname{FedNL}) methods. See Section \ref{sec:more} and the extended Table \ref{tbl:comm-complexity} for more details.}
        \label{tbl:comm-complexity-sketch}
        \renewcommand{\arraystretch}{1.7}
        \begin{tabular}{|c|c|c|c|}
        \hline
        \makecellnew{Method}
        & \makecellnew{\# Communication Rounds}
        & \makecellnew{Comm. Cost \\ per Round}
        & \makecellnew{Communication Complexity} \\
        \hline
\hline
            \makecellnew{Gradient Descent$^1$}
            & $\cO(\kappa\log\frac{1}{\epsilon})$
            & $\cO(d)$
            & $\cO(d\kappa\log\frac{1}{\epsilon})$\\
\hline
            \makecellnew{ADIANA$^1$ \\ \cite{ADIANA}}
            & $\cO\(\(d + \sqrt{\kappa} + \sqrt{\(\frac{d}{n} + \sqrt{\frac{d}{n}}\)d\kappa} \)\log\frac{1}{\epsilon}\)$
            & $\cO(1)$
            & $\cO\(\(d + \sqrt{\kappa} + \sqrt{\(\frac{d}{n} + \sqrt{\frac{d}{n}}\)d\kappa} \)\log\frac{1}{\epsilon}\)$\\
\hline
            \makecellnew{Newton}
            & $\cO(\log\log\frac{1}{\epsilon})$
            & $\cO(d^2)$
            & $\cO(d^2\log\log\frac{1}{\epsilon})$\\
\hline
            \makecellnew{NL \\ \citep{Islamov2021NewtonLearn}}
            & $\cO\(\sqrt{\#\text{data}} \sqrt{\log\frac{1}{\epsilon}}\)$
            & $\cO(d)$
            & $\cO\(d\sqrt{\#\text{data}} \sqrt{\log\frac{1}{\epsilon}}\)$\\
\hline
            \makecellnew{\bf FedNL \\ \bf (this work; \eqref{rate:local-linear-iter})}
            & $\cO\(\log\frac{1}{\epsilon}\)$
            & $\cO(d)$
            & $\cO\(d \log\frac{1}{\epsilon}\)$\\
\hline
            \makecellnew{\bf FedNL \\ \bf (this work; \eqref{rate:local-superlinear-iter})}
            & $\cO\(\sqrt{d} \sqrt{\log\frac{1}{\epsilon}}\)$
            & $\cO(d)$
            & $\cO\(d\sqrt{d} \sqrt{\log\frac{1}{\epsilon}}\)$\\
\hline
        \end{tabular}   
        \begin{tablenotes}
        {\scriptsize     
        \item ${}^1$ These methods have global rates. $\kappa$ is the condition number: $\kappa = \frac{L}{\mu}$ where $L$ is a smoothness constant and $\mu$ is the strong convexity constant.
        \item ${}^2$ The last column (communication complexity) is the product of the previous two columns and is the key quantity to be compared.
        }
        \end{tablenotes}   
    \end{table*}
}

\subsection{Our \algname{FedNL} framework}
We propose a family of five {\em Federated} Newton Learn methods (Algorithms~\ref{alg:FedNL}--\ref{alg:FedNL-BC}), which we believe constitutes a marked step in the direction of making second-order methods applicable to FL.  

In contrast to the work of \citet{Islamov2021NewtonLearn} (see Table~\ref{tbl:main}), our vanilla method \algname{FedNL} (Algorithm~\ref{alg:FedNL}) employs a {\em different Hessian learning technique}, which makes it applicable beyond generalized linear models \eqref{eq:GLM} to general finite-sum problems \des{fs}, enhances privacy as it does not rely on the training data to be revealed to the coordinating server \des{pe}, and provably works with general contractive compression operators for compressing the local Hessians, such as Top-$K$ or Rank-$R$, which are vastly superior in practice \des{cc}. Notably, we do not need to rely on error feedback \citep{1bit, StichNIPS2018-memory, SQSM, EC-SGD}, which is essential to prevent divergence in first-order methods employing such compressors \citep{biased2020}, for our methods to work with contractive compressors. We prove that our communication efficient Hessian learning technique provably learns the Hessians at the optimum. 

Like \citet{Islamov2021NewtonLearn}, we prove local convergence rates that are independent of the condition number \des{fr}. However, unlike their rates, some of our  rates are also independent of number training data points, and of compression variance \des{fr}. All our complexity results are summarized in Table~\ref{tbl:rates}.

Moreover, we show that our approach works in the partial participation \des{pp} regime by developing the \algname{FedNL-PP} method (Algorithm~\ref{alg:FedNL-PP}), and devise methods employing globalization strategies: \algname{FedNL-LS} (Algorithm~\ref{alg:FedNL-LS}), based on backtracking line search, and \algname{FedNL-CR} (Algorithm~\ref{alg:FedNL-CR}), based on cubic regularization \des{gg}. We show through experiments that the former is much more efficient in practice than the latter. Hence, the proposed line search globalization is superior to the cubic regularization  approach employed by \citet{Islamov2021NewtonLearn}. 

Our approach can further benefit from smart uplink gradient compression \des{gc} and smart downlink model compression \des{mc} -- see \algname{FedNL-BC} (Algorithm~\ref{alg:FedNL-BC}).   

Finally, we perform a variety of numerical experiments that show that our \algname{FedNL} methods have state-of-the-art communication complexity when compared to~key~baselines.

\begin{table*}[t]
    \caption{Summary of algorithms proposed and convergence results proved in this paper.}
    \label{tbl:rates}
    \vspace{-12pt}
    \begin{center}
        \scriptsize
            \begin{tabular}{| c | ccc | cc |}
                \hline
                  & \multicolumn{3}{c|}{\bf Convergence} & {\bf Rate independent of}  &   \\
                  {\bf Method} &  {\bf result} ${}^\dagger$   &  {\bf type}  & {\bf rate}  & \begin{tabular}{c} {\bf the condition \# (left)} \\ {\bf \# training data  (middle)} \\ {\bf compressor  (right)}  \end{tabular} & {\bf Theorem }\\              
                \hline
                \begin{tabular}{c} Newton Zero \\ \algnameS{N0} (Equation \eqref{eq:N0}) \end{tabular} & $r_{k}\leq \tfrac{1}{2^k} r_0$   & local & linear & \cmark\quad\cmark\quad\cmark &  \ref{th:NLU}\\ \hline  
             \multirow{3}{*}{ \begin{tabular}{c} \algnameS{FedNL}  (Algorithm~\ref{alg:FedNL}) \end{tabular}  } 
             &  $r_{k}\leq \tfrac{1}{2^k} r_0$ & local & linear & \cmark\quad\cmark\quad\cmark &  \ref{th:NLU}\\ 
       &  $\Phi_1^k \leq \theta^k \Phi_1^0$ & local & linear & \cmark\quad\cmark\quad\xmark &  \ref{th:NLU}\\ 
             &  $r_{k+1} \leq c \theta^k  r_k$ & local & superlinear & \cmark\quad\cmark\quad\xmark &  \ref{th:NLU}\\ \hline    
             \multirow{3}{*}{  \begin{tabular}{c}   Partial Participation \\ \algnameS{FedNL-PP} (Algorithm~\ref{alg:FedNL-PP})\end{tabular}  } 
             &   ${\cal W}^k \leq \theta^k {\cal W}^0$  & local & linear & \cmark\quad\cmark\quad\cmark &  \ref{th:NL-pp}\\ 
             &  $\Phi_2^k \leq \theta^k  \Phi_2^0$ & local & linear & \cmark\quad\cmark\quad\xmark &  \ref{th:NL-pp}\\  
       &  $r_{k+1} \leq c \theta^k  {\cal W}_k$ & local & linear & \cmark\quad\cmark\quad\xmark &  \ref{th:NL-pp}\\  
             \hline 
                           \begin{tabular}{c} Line Search \\ \algnameS{FedNL-LS} (Algorithm~\ref{alg:FedNL-LS}) \end{tabular} 
                & $\Delta_k \leq \theta^k \Delta_0$   & global & linear & \xmark\quad\cmark\quad\cmark &  \ref{th:NL-ls}\\ 
            \hline               
 \multirow{4}{*}{ \begin{tabular}{c}Cubic Regularization \\ \algnameS{FedNL-CR} (Algorithm~\ref{alg:FedNL-CR}) \end{tabular}  } 
                 & $\Delta_k \leq c/k$   & global & sublinear & \xmark\quad\cmark\quad\cmark &  \ref{th:NL-cr}\\ 
                 & $\Delta_k \leq \theta^k \Delta_0$ & global & linear & \xmark\quad\cmark\quad\cmark &  \ref{th:NL-cr}\\ 
                 & $\Phi_1^k \leq \theta^k \Phi_1^0$ & local & linear & \cmark\quad\cmark\quad\xmark &  \ref{th:NL-cr}\\                  
                 & $r_{k+1} \leq c \theta^k r_k$ & local & superlinear & \cmark\quad\cmark\quad\xmark &  \ref{th:NL-cr}\\ 
 \hline                          
       \begin{tabular}{c} Bidirectional Compression \\ \algnameS{FedNL-BC} (Algorithm~\ref{alg:FedNL-BC}) \end{tabular} & $\Phi_3^k \leq \theta^k \Phi_3^0$  & local & linear & \cmark\quad\cmark\quad\xmark &  \ref{th:FedNL-BC}\\ 
       \hline
       \begin{tabular}{c} Newton Star \\ \algnameS{NS} (Equation~\eqref{eq:newton-star}) \end{tabular} & $r_{k+1} \leq c r_k^2$   & local & quadratic & \cmark\quad\cmark\quad\cmark &  \ref{th:N*}\\ 
       \hline               
            \end{tabular}
    \begin{tablenotes}
      {\scriptsize      
          \item Quantities for which we prove convergence:
          (i) distance to solution
               $r_k \eqdef \|x^k-x^*\|^2$;\;
               ${\cal W}^k \eqdef \frac{1}{n} \sum_{i=1}^n \|w_i^k-x^*\|^2$ (ii) Lyapunov functions
               $\Phi_1^k \eqdef c \|x^k-x^*\|^2 +  \frac{1}{n} \sum_{i=1}^n \|\mH_i^k - \nabla^2 f_i(x^*)\|^2_{\rm F}$;
               \;
               $\Phi_2^k \eqdef c{\cal W}^k +  \frac{1}{n} \sum_{i=1}^n \|\mH_i^k - \nabla^2 f_i(x^*)\|^2_{\rm F}$;
               \;
               $\Phi_3^k \eqdef \|z^k - x^*\|^2 + c\|w^k - x^*\|^2$.
          (iii) Function value suboptimality  $\Delta_k \eqdef f(x^k) - f(x^*)$
        \item ${}^\dagger$ constants $c>0$ and $\theta\in(0,1)$ are possibly different each time they appear. Refer to the precise statements of the theorems for the exact values.
        }
    \end{tablenotes}            
    \end{center}
\end{table*}

\section{The Vanilla Federated Newton Learn}\label{sec:FedNL-main-paper}

We start the presentation of our algorithms with the vanilla \algname{FedNL} method,  commenting on the intuitions and technical novelties. The method is formally described\footnote{For all our methods, we describe the steps constituting a single communication round only. To get an iterative method, one simply needs to repeat provided steps in an iterative fashion.} in Alg. \ref{alg:FedNL}.

\subsection{New Hessian learning technique}
The first key technical novelty in \algname{FedNL} is the new mechanism for learning the Hessian $\nabla^2 f(x^*)$ at the (unique) solution $x^*$ in a communication efficient manner. This is achieved by maintaining and progressively updating local Hessian estimates $\mH_i^k$ of $\nabla^2 f_i(x^*)$ for all devices $i\in[n]$ and the global Hessian estimate $$\mH^k = \frac{1}{n}\sum_{i=1}^n \mH_i^k$$ of $\nabla^2 f(x^*)$ for the central server. Thus, the goal is to induce $\mH_i^k\to\nabla^2 f_i(x^*)$ for all $i\in[n]$, and as a consequence, $\mH^k\to\nabla^2 f(x^*)$, throughout the training process.

A {\em naive} choice for the local estimates $\mH_i^k$ would be the exact local Hessians $\nabla^2 f_i(x^k)$, and consequently the global estimate $\mH^k$ would be the exact global Hessian $\nabla^2 f(x^k)$. While this naive approach learns the global Hessian at the optimum, it needs to communicate the entire matrices $\nabla^2 f_i(x^k)$ to the server in each iteration, which is extremely costly.
Instead, in \algname{FedNL} we aim to {\em reuse} past Hessian information and build the next estimate $\mH_i^{k+1}$ by updating the current estimate $\mH_i^k$. Since all devices have to be synchronized with the server, we also need to make sure the update from $\mH_i^k$ to $\mH_i^{k+1}$ is easy to communicate. With this intuition in mind, we propose to update the local Hessian estimates via the rule $$\squeeze \mH_i^{k+1} = \mH_i^k + \alpha\mS_i^k,$$ where $$\mS_i^k = \cC_i^k(\nabla^2 f_i(x^k) - \mH_i^k),$$ and $\alpha>0$ is the learning rate. Notice that we reduce the communication cost by explicitly requiring all devices $i\in[n]$ to send compressed matrices $\mS_i^k$ to the server only.

\begin{algorithm}[H]
\caption{\algname{FedNL} (Federated Newton Learn) }
\label{alg:FedNL}
\begin{algorithmic}[1]
\STATE \textbf{Parameters:} Hessian learning rate $\alpha\ge0$; compression operators $\{\cC_1^k, \dots,\cC_n^k\}$
\STATE \textbf{Initialization:} $x^0\in\R^d$; $\mH_1^0, \dots, \mH_n^0 \in \R^{d\times d}$ and $\mH^0 \eqdef \frac{1}{n}\sum_{i=1}^n \mH_i^0$
\FOR{each device $i = 1, \dots, n$ in parallel} 
\STATE Get $x^k$ from the server and compute local gradient $\nabla f_i(x^k)$ and local Hessian $\nabla^2 f_i(x^k)$
\STATE Send $\nabla f_i(x^k)$,\; $\mS_i^k \eqdef \cC_i^k(\nabla^2 f_i(x^k) - \mH_i^k)$ and $l_i^k \eqdef \|\mH_i^k - \nabla^2 f_i(x^k)\|_{\rm F}$ to the server
\STATE Update local Hessian shift to $\mH_i^{k+1} = \mH_i^k + \alpha\mS_i^k$
\ENDFOR
\STATE \textbf{on} server
\STATE \quad Get $\nabla f_i(x^k),\; \mS_i^k$ and $l_i^k$ from each node $i\in [n]$
\STATE \quad $\mS^k = \frac{1}{n}\sum\limits_{i=1}^n \mS_i^k,\; l^k = \frac{1}{n}\sum\limits_{i=1}^n l_i^k,\; \mH^{k+1} = \mH^k + \alpha\mS^k$
\STATE \quad \textit{Option 1:} $x^{k+1} = x^k - \[\mH^{k}\]_{\mu}^{-1} \nabla f(x^k)$
\STATE \quad \textit{Option 2:} $x^{k+1} = x^k - \[\mH^{k} + l^k\mI\]^{-1} \nabla f(x^k)$
\end{algorithmic}
\end{algorithm}

The Hessian learning technique employed in the Newton Learn framework of \cite{Islamov2021NewtonLearn} is critically different to ours as it heavily depends on the structure \eqref{eq:GLM} of the local functions. Indeed, the local optimal Hessians $$\squeeze \nabla^2 f_i(x^*) = \frac{1}{m}\sum_{j=1}^m \varphi''_{ij}(a_{ij}^\top x^*) a_{ij} a_{ij}^\top$$ are learned via the proxy of learning the optimal scalars $\varphi''_{ij}(a_{ij}^\top x^*)$ for all local data points $j\in[m]$, which also requires the transmission of the active data points $a_{ij}$ to the server in each iteration. This makes their method inapplicable to the general finite sum problems \des{fs}, and incapable of securing even the most rudimentary privacy enhancement \des{pe} mechanism.

We do not make any structural assumption on the problem \eqref{erm-prob}, and rely on the following general conditions to prove effectiveness of our Hessian learning technique:

\begin{assumption}\label{asm:main}
The average loss $f$ is $\mu$-strongly convex, and all local losses $f_i(x)$ have Lipschitz continuous Hessians. Let $\HS$, $\HF$ and $\HM$ be the Lipschitz constants with respect to three different matrix norms: spectral, Frobenius and infinity norms, respectively. Formally,  we require 
\begin{eqnarray*}
\squeeze
\|\nabla^2 f_i(x) - \nabla^2 f_i(y)\| &\leq & \HS \|x-y\| \\
\|\nabla^2 f_i(x) - \nabla^2 f_i(y)\|_{\rm F} & \leq & \HF \|x-y\| \\
\max_{j,l}| (\nabla^2 f_i(x) - \nabla^2 f_i(y))_{jl}| & \leq & \HM \|x-y\|
\end{eqnarray*} 
to hold for all $i\in[n]$ and $x,y\in\R^d$.
\end{assumption}

\subsection{Compressing matrices}
 In the literature on first-order compressed methods, compression operators are typically applied to vectors (e.g., gradients, gradient differences, models). As our approach is based on second-order information, we apply compression operators to $d\times d$ matrices of the form $\nabla^2 f_i(x^k) - \mH_i^k$ instead. For this reason, we adapt two popular classes of compression operators used in first-order methods to act on $d\times d$ matrices by treating them as vectors of dimension $d^2$. 

\begin{definition}[Unbiased Compressors]\label{def:class-unbiased}
By $\B(\omega)$ we denote the class of (possibly randomized) unbiased compression operators $\mathcal{C}\colon\R^{d\times d}\to\R^{d\times d}$ with  variance parameter $\omega\geq 0$ satisfying 
\begin{equation}\label{class-unbiased}
\squeeze
\E\[\cC(\mM)\] = \mM, \; \E\[\|\cC(\mM)-\mM\|_{\rm F}^2\] \le \omega\|\mM\|_{\rm F}^2
\end{equation}
for all matrices $\mM\in\R^{d\times d}$.
\end{definition}

Common choices of unbiased compressors are random sparsification and quantization (see Appendix).

\begin{definition}[Contractive Compressors]\label{def:class-contractive}
By $\bC(\delta)$ we denote the class of deterministic contractive compression operators $\mathcal{C}\colon\R^{d\times d}\to\R^{d\times d}$ with contraction parameter $\delta\in[0,1]$ satisfying
\begin{equation}\label{class-contractive}
\squeeze
\|\cC(\mM)\|_{\rm F} \le \|\mM\|_{\rm F}, \; \|\cC(\mM)-\mM\|^2_{\rm F} \le (1-\delta)\|\mM\|_{\rm F}^2
\end{equation}
for all matrices $\mM\in\R^{d\times d}$.
\end{definition}

The first condition of \eqref{class-contractive} can be easily removed by scaling the operator $\cC$ appropriately. Indeed, if for some $\mM\in\R^{d\times d}$ we have $\|\cC(\mM)\|_{\rm F} > \|\mM\|_{\rm F}$, then we can use the scaled compressor $\widetilde{\cC}(\mM) \eqdef \tfrac{\|\mM\|_{\rm F}}{\|\cC(\mM)\|_{\rm F}}\cC(\mM)$ instead, as this satisfies \eqref{class-contractive} with the same parameter $\delta$. Common examples of contractive compressors are Top-$K$ and Rank-$R$ operators (see Appendix).

From the theory of first-order methods employing compressed communication, it is known that handling contractive biased compressors is much more challenging than handling unbiased compressors. In particular, a popular mechanism for preventing first-order methods utilizing biased compressors from divergence is the {\em error feedback} framework. However, contractive compressors often perform much better empirically than their unbiased counterparts. To highlight the strength of our new Hessian learning technique, we develop our theory in a flexible way, and handle both families of compression operators. Surprisingly, we do not need to use error feedback for contractive compressors for our methods to work.

Compression operators are used in \citep{Islamov2021NewtonLearn} in a fundamentally different way. First, their theory supports unbiased compressors only, and does not cover the practically favorable contractive compressors \des{cc}. More importantly, compression is applied within the representation \eqref{eq:GLM} as an operator acting on the space $\R^m$. In contrast to our strategy of using compression operators, this brings the necessity to reveal, in each iteration,  the training data $\{a_{i1},\dots,a_{im}\}$ whose corresponding coefficients in \eqref{eq:GLM} are not zeroed out after the compression step \des{pe}. Moreover, when $\cO(d)$ communication cost per communication round is achieved, the variance of the compression noise depends on the number of data points $m$, which then negatively affects the local convergence rates. As the amount of training data can be huge, our convergence rates provide stronger guarantees by not depending on the size of the training dataset \des{fr}.

\subsection{Two options for updating the global model}
Finally, we offer two options for updating the global model at the server.
\begin{itemize}
\item The first option assumes that the server knows the strong convexity parameter $\mu>0$ (see Assumption \ref{asm:main}), and that it is powerful enough to compute the projected Hessian estimate $\[\mH^k\]_{\mu}$, i.e., that it is able to project the current global Hessian estimate $\mH^k$ onto the set $$\left\{\mM\in\R^{d\times d} \colon \mM^\top = \mM,\; \mu\mI \preceq \mM \right\}$$ in each iteration (see the Appendix).

\item Alternatively, if $\mu$ is unknown, all devices send the compression errors $$l_i^k \eqdef \|\mH_i^k - \nabla^2 f_i(x^k)\|_{\rm F}$$ (this extra communication is extremely cheap as all $l_i^k$ variables are floats) to the server, which then computes the corrected Hessian estimate $\mH^k + l^k\mI$ by adding the average error $l^k = \frac{1}{n}\sum_{i=1}^n l_i^k$ to the global Hessian estimate $\mH^k$. 
\end{itemize}

Both options require the server  in each iteration to solve a linear system to invert either the projected, or the corrected, global Hessian estimate. The purpose of these options is quite simple: unlike the true Hessian, the compressed local Hessian estimates $\mH_i^k$, and also the global Hessian estimate $\mH^k$, might not be positive definite, or might even not be of full rank. Further importance of the errors $l_i^k$ will be discussed when we consider extensions of \algname{FedNL} to partial participation and globalization via cubic regularization.

\subsection{Local convergence theory}
Note that  \algname{FedNL}  includes two parameters, compression operators $\cC_i^k$ and Hessian learning rate $\alpha>0$, and two options to perform global updates by the master.
To provide theoretical guarantees, we need one of the following two assumptions.

\begin{assumption}\label{asm:comp-1}
 $\cC_i^k\in\bC(\delta)$ for all $i\in [n]$ and $k\ge0$. Moreover, (i) $\alpha=1-\sqrt{1-\delta}$, or (ii) $\alpha=1$.
\end{assumption}

\begin{assumption}\label{asm:comp-2}
$\cC_i^k\in\B(\omega)$ for all $i\in [n]$ and $k\ge0$ and  $0< \alpha \le \frac{1}{\omega+1}$. Moreover, for all $i\in[n]$ and $j, l\in [d]$,  each entry $(\mH_i^k)_{jl}$ is a convex combination of $\{  (\nabla^2 f_i(x^t))_{jl}  \}_{t=0}^k$ for any $k\geq 0$.
\end{assumption}

To present our results in a unified manner, we define some constants depending on what parameters and which option is used in \algname{FedNL}. Below, constants $A$ and $B$ depend on the choice of the compressors $\cC_i^k$ and the learning rate $\alpha$, while $C$ and $D$ depend on which option is chosen for the global update.
\begin{eqnarray}\label{ABCD}
(A, B) &\eqdef&
\begin{cases}
(\alpha^2, \alpha)   & \;\text{if Assumption}\; \ref{asm:comp-1} \text{(i) holds} \phantom{i}  \\
(\nicefrac{\delta}{4}, \nicefrac{6}{\delta} - \nicefrac{7}{2})   & \;\text{if Assumption}\; \ref{asm:comp-1} \text{(ii) holds}  \\
(\alpha, \alpha) & \;\text{if Assumption}\; \ref{asm:comp-2}\; \text{holds} \phantom{00}
\end{cases} \phantom{~~~} \\
(C, D) &\eqdef&
\begin{cases}
(2, \HS^2) & \;\text{if {\it Option 1} is used}  \\
(8, (\HS+2\HF)^2)   & \;\text{if {\it Option 2} is used} \\
\end{cases}
\end{eqnarray}

We prove three local rates for \algname{FedNL}: for the squared distance to the solution $\|x^k-x^*\|^2$, and for the Lyapunov function $$\Phi^k \eqdef {\cal H}^k + 6B\HF^2 \|x^k-x^*\|^2,$$ where $${\cal H}^k \eqdef \frac{1}{n} \sum_{i=1}^n \|\mH_i^k - \nabla^2 f_i(x^*)\|^2_{\rm F}.$$

\begin{theorem}\label{th:NLU}
    Let Assumption \ref{asm:main} hold. Assume $\|x^0-x^*\| \leq \frac{\mu}{\sqrt{2D}}$ and ${\cal H}^k \leq \frac{\mu^2}{4C}$ for all $k\geq 0$. Then, \algname{FedNL} (Algorithm~\ref{alg:FedNL}) converges linearly with the  rate
    \begin{equation}\label{rate:local-linear-iter}
    \squeeze
    \|x^k - x^*\|^2 \leq  \frac{1}{2^k}   \|x^0-x^*\|^2. 
    \end{equation}
  Moreover, depending on the choice \eqref{ABCD} of the compressors $\cC_i^k$ (Assumption \ref{asm:comp-1} or \ref{asm:comp-2}), learning rate $\alpha$, and which option is used for global model updates, we have the following linear and superlinear rates:
    \begin{equation}\label{rate:local-linear-Lyapunov}
    \squeeze
  \mathbb{E}[\Phi^k] \leq  \left(  1 - \min\left\{  A, \frac{1}{3}  \right\}  \right)^k\Phi^0,
    \end{equation}
  \begin{equation}\label{rate:local-superlinear-iter}
  \E\[\frac{\|x^{k+1}-x^*\|^2}{\|x^k-x^*\|^2}\] \leq  \left(  1 - \min\left\{A, \frac{1}{3}\right\}  \right)^k \left(  C + \frac{D}{12B\HF^2}  \right) \frac{\Phi^0}{\mu^2}.
  \end{equation}    
\end{theorem}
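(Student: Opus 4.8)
The plan is to prove the three displays via three ingredients: a one-step contraction estimate for the iterates that holds pathwise given the information available at step $k$; a one-step expected-decrease estimate for the Hessian-learning error $\mathcal{H}^k$; and a Lyapunov argument gluing them together. Throughout I use $\nabla f(x^*)=0$ (since $x^*$ minimizes the $\mu$-strongly convex $f$), and write $\mathbf{B}^k$ for the server's preconditioner: $\mathbf{B}^k=[\mH^k]_\mu$ under \textit{Option 1}, and $\mathbf{B}^k=\mH^k+l^k\mI$ under \textit{Option 2}.

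\emph{Step 1 (one-step iterate contraction).} Setting $\mathbf{G}^k\eqdef\int_0^1\nabla^2 f(x^*+t(x^k-x^*))\,dt$, so that $\nabla f(x^k)=\mathbf{G}^k(x^k-x^*)$, the update becomes $x^{k+1}-x^*=[\mathbf{B}^k]^{-1}(\mathbf{B}^k-\mathbf{G}^k)(x^k-x^*)$. I would first verify $\mathbf{B}^k\succeq\mu\mI$, hence $\|[\mathbf{B}^k]^{-1}\|\le 1/\mu$: under \textit{Option 1} because the projection lands in $\{\mathbf{M}=\mathbf{M}^\top:\mathbf{M}\succeq\mu\mI\}$ by construction, and under \textit{Option 2} because $\|\mH^k-\nabla^2 f(x^k)\|\le\|\mH^k-\nabla^2 f(x^k)\|_{\rm F}\le\tfrac1n\sum_i\|\mH_i^k-\nabla^2 f_i(x^k)\|_{\rm F}=l^k$ while $\nabla^2 f(x^k)\succeq\mu\mI$. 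Then I would bound $\|\mathbf{B}^k-\mathbf{G}^k\|\le\|\mathbf{B}^k-\nabla^2 f(x^*)\|+\|\nabla^2 f(x^*)-\mathbf{G}^k\|$, the second term being $\le\tfrac{\HS}{2}\|x^k-x^*\|$ by Lipschitzness of $\nabla^2 f$ along the segment, and the first controlled through $\|\mH^k-\nabla^2 f(x^*)\|_{\rm F}^2\le\tfrac1n\sum_i\|\mH_i^k-\nabla^2 f_i(x^*)\|_{\rm F}^2=\mathcal{H}^k$ — directly under \textit{Option 1} (non-expansiveness of the projection, $[\nabla^2 f(x^*)]_\mu=\nabla^2 f(x^*)$) and via $l^k\le\sqrt{\mathcal{H}^k}+\HF\|x^k-x^*\|$ under \textit{Option 2}. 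This produces, with the constants of \eqref{ABCD},
$$\|x^{k+1}-x^*\|^2\;\le\;\frac{1}{\mu^2}\Bigl(C\,\mathcal{H}^k+\frac{D}{2}\,\|x^k-x^*\|^2\Bigr)\|x^k-x^*\|^2 ,$$
and since $x^{k+1}$ depends only on $x^k$ and $\mH^k$ (the fresh compression randomness enters $\mH^{k+1}$, not $x^{k+1}$), this holds pathwise. Feeding in the standing hypotheses $\mathcal{H}^k\le\tfrac{\mu^2}{4C}$ and $\|x^k-x^*\|^2\le\tfrac{\mu^2}{2D}$ (the latter propagated by induction from $\|x^0-x^*\|\le\tfrac{\mu}{\sqrt{2D}}$) makes each parenthesized piece $\le\tfrac14$, so $\|x^{k+1}-x^*\|^2\le\tfrac12\|x^k-x^*\|^2$, which unrolls to \eqref{rate:local-linear-iter}.

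\emph{Step 2 (Hessian-learning recursion).} I would establish, for each $i$,
$$\ExpBrk{\|\mH_i^{k+1}-\nabla^2 f_i(x^*)\|_{\rm F}^2}\;\le\;(1-A)\,\|\mH_i^k-\nabla^2 f_i(x^*)\|_{\rm F}^2+B\,\|\nabla^2 f_i(x^k)-\nabla^2 f_i(x^*)\|_{\rm F}^2 ,$$
then average over $i$ and use $\|\nabla^2 f_i(x^k)-\nabla^2 f_i(x^*)\|_{\rm F}\le\HF\|x^k-x^*\|$ to get $\ExpBrk{\mathcal{H}^{k+1}}\le(1-A)\mathcal{H}^k+B\HF^2\|x^k-x^*\|^2$. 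The update $\mH_i^{k+1}=\mH_i^k+\alpha\cC_i^k(\nabla^2 f_i(x^k)-\mH_i^k)$ is analyzed first around the \emph{moving} target $\nabla^2 f_i(x^k)$ and then re-anchored at $\nabla^2 f_i(x^*)$ by Young's inequality, the drift being absorbed into the $B$-term. For contractive $\cC_i^k$ (Assumption~\ref{asm:comp-1}) I would use the identity $\mH_i^{k+1}-\nabla^2 f_i(x^k)=(1-\alpha)(\mH_i^k-\nabla^2 f_i(x^k))-\alpha\,e_i^k$, where $e_i^k$ is the compression error with $\|e_i^k\|_{\rm F}^2\le(1-\delta)\|\mH_i^k-\nabla^2 f_i(x^k)\|_{\rm F}^2$ by \eqref{class-contractive}; the two admissible values of $\alpha$ yield the first two lines of \eqref{ABCD}. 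For unbiased $\cC_i^k$ (Assumption~\ref{asm:comp-2}) I would use the bias–variance decomposition applied to $\mathbf{V}_i^k\eqdef\mH_i^k+\cC_i^k(\nabla^2 f_i(x^k)-\mH_i^k)$, which satisfies $\E_k[\mathbf{V}_i^k]=\nabla^2 f_i(x^k)$ and $\mH_i^{k+1}=(1-\alpha)\mH_i^k+\alpha\mathbf{V}_i^k$, together with the step-size restriction $\alpha\le\tfrac{1}{\omega+1}$ and, crucially, the entry-wise convex-combination structure of $\mH_i^k$ to control the residual variance, giving the last line of \eqref{ABCD}. This is the technically heaviest step and the main obstacle: without error feedback one must simultaneously dissipate the compression error and tame the Hessian drift, and in the unbiased case the crude $\omega$-variance bound is too lossy, so the convex-combination invariant must be exploited to obtain the \emph{clean} constants of \eqref{ABCD} on which the final rates depend.

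\emph{Step 3 (Lyapunov assembly).} With $\Phi^k=\mathcal{H}^k+6B\HF^2\|x^k-x^*\|^2$, and using that the Step~1 contraction holds pathwise (so $\|x^{k+1}-x^*\|^2$ is measurable at step $k$),
$$\ExpBrk{\Phi^{k+1}}\;=\;\ExpBrk{\mathcal{H}^{k+1}}+6B\HF^2\|x^{k+1}-x^*\|^2\;\le\;(1-A)\mathcal{H}^k+B\HF^2\|x^k-x^*\|^2+3B\HF^2\|x^k-x^*\|^2 ,$$
i.e. $\ExpBrk{\Phi^{k+1}}\le(1-A)\mathcal{H}^k+4B\HF^2\|x^k-x^*\|^2$; since $1-A\le 1-\min\{A,\tfrac13\}$ and $4\le 6\bigl(1-\min\{A,\tfrac13\}\bigr)$, this is $\le\bigl(1-\min\{A,\tfrac13\}\bigr)\Phi^k$, and taking total expectations and iterating gives \eqref{rate:local-linear-Lyapunov}. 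Finally, for \eqref{rate:local-superlinear-iter} I would divide the Step~1 bound by $\|x^k-x^*\|^2$ and use $\mathcal{H}^k\le\Phi^k$ and $\|x^k-x^*\|^2\le\Phi^k/(6B\HF^2)$ to obtain the pathwise inequality $\tfrac{\|x^{k+1}-x^*\|^2}{\|x^k-x^*\|^2}\le\tfrac{1}{\mu^2}\bigl(C+\tfrac{D}{12B\HF^2}\bigr)\Phi^k$; taking expectations and substituting \eqref{rate:local-linear-Lyapunov} completes the proof.
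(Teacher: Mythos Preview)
Your overall architecture matches the paper: a pathwise one-step iterate bound of the form $\|x^{k+1}-x^*\|^2\le\frac{C}{\mu^2}\mathcal{H}^k\|x^k-x^*\|^2+\frac{D}{2\mu^2}\|x^k-x^*\|^4$, a Hessian-learning recursion $\E_k[\mathcal{H}^{k+1}]\le(1-A)\mathcal{H}^k+B\HF^2\|x^k-x^*\|^2$, and the Lyapunov combination $\Phi^k=\mathcal{H}^k+6B\HF^2\|x^k-x^*\|^2$. Steps~1 and~3 are correct and essentially identical to the paper (your integral representation $\mathbf{G}^k$ is just a repackaging of the inequality $\|\nabla f(x^k)-\nabla f(x^*)-\nabla^2 f(x^*)(x^k-x^*)\|\le\tfrac{\HS}{2}\|x^k-x^*\|^2$ that the paper uses).

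The one genuine misstep is in Step~2 for the unbiased case. The convex-combination structure in Assumption~\ref{asm:comp-2} is \emph{not} used anywhere in the proof of Theorem~\ref{th:NLU}; it is invoked only later (Lemma~\ref{lm:boundforspar}) to propagate the a~priori hypothesis $\mathcal{H}^k\le\mu^2/(4C)$ from initial conditions. Your proposed route---write $\mH_i^{k+1}=(1-\alpha)\mH_i^k+\alpha\mathbf{V}_i^k$, apply convexity of $\|\cdot\|_{\rm F}^2$, then bias--variance---produces a leftover term $\alpha\omega\,\|\nabla^2 f_i(x^k)-\mH_i^k\|_{\rm F}^2$ that the convex-combination invariant does not bound in any way compatible with the recursion, so you would not recover $(A,B)=(\alpha,\alpha)$. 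The paper instead expands $\|\mH_i^k+\alpha\cC_i^k(\nabla^2 f_i(x^k)-\mH_i^k)-\nabla^2 f_i(x^*)\|_{\rm F}^2$ directly, uses unbiasedness and $\E_k\|\cC_i^k(\cdot)\|_{\rm F}^2\le(\omega+1)\|\cdot\|_{\rm F}^2$ together with $\alpha^2(\omega+1)\le\alpha$, and then the identity $2\langle\mA,\mB\rangle_{\rm F}+\|\mB\|_{\rm F}^2=\|\mA+\mB\|_{\rm F}^2-\|\mA\|_{\rm F}^2$ with $\mA=\mH_i^k-\nabla^2 f_i(x^*)$, $\mB=\nabla^2 f_i(x^k)-\mH_i^k$, which collapses everything to $(1-\alpha)\|\mA\|_{\rm F}^2+\alpha\|\nabla^2 f_i(x^k)-\nabla^2 f_i(x^*)\|_{\rm F}^2$ in one line. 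So the unbiased case is actually the \emph{easiest} of the three, not the hardest; replace your plan there with this direct expansion and the rest goes through unchanged.
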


Let us comment on these rates. 
\begin{itemize}
\item First, the local linear rate \eqref{rate:local-linear-iter} with respect to iterates is based on a universal constant, i.e., it does not depend on the condition number of the problem, the size of the training data, or the dimension of the problem. Indeed, the squared distance to the optimum is halved in each iteration. 

\item Second, we have linear rate \eqref{rate:local-linear-Lyapunov} for the Lyapunov function $\Phi^k$, which implies the  linear convergence of all local Hessian estimates $\mH_i^k$ to the local optimal Hessians $\nabla^2 f_i(x^*)$.  Thus, our initial goal to progressively learn the local optimal Hessians in a communication efficient manner is achieved, justifying the effectiveness of the new Hessian learning technique. 

\item Finally, our Hessian learning process accelerates the convergence of iterates to a superlinear rate \eqref{rate:local-superlinear-iter}. Both rates \eqref{rate:local-linear-Lyapunov} and \eqref{rate:local-superlinear-iter} are independent of the condition number of the problem, or the number of data points. However, they do depend on the compression variance (since $A$ depends on $\delta$ or $\omega$), which, in case of $\cO(d)$ communication constraints, depend on the dimension $d$ only.
\end{itemize}

For clarity of exposition,  in Theorem \ref{th:NLU} we assumed ${\cal H}^k \leq \frac{\mu^2}{4C}$ for all iterations $k\geq 0$. Below, we prove that this inequality holds, using the initial conditions only.

\begin{lemma}\label{lm:boundforbiased}
Let Assumption \ref{asm:comp-1} hold, and  assume $\|x^0 - x^*\| \le e_1 \eqdef \min\{ \frac{\mu}{2\HF} \sqrt{\frac{A}{BC}}, \frac{\mu}{\sqrt{2D}}  \}$ and $\|\mH_i^0 - \nabla^2 f_i(x^*)\|_{\rm F} \leq \frac{\mu}{2\sqrt{C}}$.
Then $\|x^k-x^*\| \leq e_1$ and $\|\mH_i^k - \nabla^2 f_i(x^*)\|_{\rm F}  \leq \frac{\mu}{2\sqrt{C}}$ for all $k\geq 0$. 
\end{lemma}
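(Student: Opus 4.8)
The plan is to prove Lemma~\ref{lm:boundforbiased} by induction on $k$, establishing simultaneously the two invariants $\|x^k - x^*\| \le e_1$ and $\max_i \|\mH_i^k - \nabla^2 f_i(x^*)\|_{\rm F} \le \frac{\mu}{2\sqrt{C}}$. The base case $k=0$ is exactly the hypothesis. For the inductive step, assume both invariants hold at iteration $k$ and derive them at $k+1$. A key preliminary observation is that the two per-device bounds immediately give the aggregate bound ${\cal H}^k = \frac{1}{n}\sum_i \|\mH_i^k - \nabla^2 f_i(x^*)\|_{\rm F}^2 \le \frac{\mu^2}{4C}$, which is precisely the condition that Theorem~\ref{th:NLU} needs; since $\|x^k - x^*\| \le e_1 \le \frac{\mu}{\sqrt{2D}}$ also holds, I may invoke (the one-step version of) the machinery behind \eqref{rate:local-linear-iter} to conclude $\|x^{k+1} - x^*\| \le \|x^k - x^*\| \le e_1$. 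That disposes of the first invariant for $k+1$ essentially for free, provided the proof of Theorem~\ref{th:NLU} is genuinely a one-step contraction argument (which it must be, since the stated rate is $\frac{1}{2^k}$ per step). So the real content is controlling the Hessian-estimate error at $k+1$.

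For the second invariant, I would track $\|\mH_i^{k+1} - \nabla^2 f_i(x^*)\|_{\rm F}$ using the update rule $\mH_i^{k+1} = \mH_i^k + \alpha \cC_i^k(\nabla^2 f_i(x^k) - \mH_i^k)$. Write $\nabla^2 f_i(x^k) - \mH_i^k = (\nabla^2 f_i(x^k) - \nabla^2 f_i(x^*)) + (\nabla^2 f_i(x^*) - \mH_i^k)$, so that by Assumption~\ref{asm:main} the first term has Frobenius norm at most $\HF \|x^k - x^*\| \le \HF e_1$. Under Assumption~\ref{asm:comp-1}, $\cC_i^k$ is a deterministic $(1-\delta)$-contraction, and the standard computation for contractive compressors (in the form used in error-feedback-free analyses) gives, for the two choices of $\alpha$, a bound of the shape
\[
\|\mH_i^{k+1} - \nabla^2 f_i(x^*)\|_{\rm F}^2 \le (1 - A)\,\|\mH_i^k - \nabla^2 f_i(x^*)\|_{\rm F}^2 + B\,\HF^2 \|x^k - x^*\|^2,
\]
with $(A,B)$ as in \eqref{ABCD} — this is the per-device Frobenius analogue of the descent lemma that underlies \eqref{rate:local-linear-Lyapunov}. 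Plugging in the inductive hypotheses $\|\mH_i^k - \nabla^2 f_i(x^*)\|_{\rm F}^2 \le \frac{\mu^2}{4C}$ and $\|x^k - x^*\|^2 \le e_1^2 \le \frac{\mu^2 A}{4 B C \HF^2}$ (the first term in the definition of $e_1$ is exactly engineered for this), the right-hand side is at most $(1-A)\frac{\mu^2}{4C} + B\HF^2 \cdot \frac{\mu^2 A}{4BC\HF^2} = \frac{\mu^2}{4C}$, closing the induction.

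**Main obstacle.** The delicate point is justifying the one-step contraction inequality of the displayed form for \emph{both} sub-cases of Assumption~\ref{asm:comp-1}, particularly case (ii) with $\alpha = 1$ and the seemingly odd constants $(A,B) = (\delta/4,\ 6/\delta - 7/2)$; here one cannot simply use $\|\mH_i^{k+1} - \nabla^2 f_i(x^*)\|_{\rm F} \le \|\mH_i^k - \nabla^2 f_i(x^*)\|_{\rm F}$ plus a small perturbation, and instead needs to exploit the contraction $\|\cC_i^k(\mM) - \mM\|_{\rm F}^2 \le (1-\delta)\|\mM\|_{\rm F}^2$ together with a Young's-inequality split, balancing the cross term carefully to land the coefficient of $\|x^k-x^*\|^2$ at exactly $B\HF^2$. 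I expect this computation to be the same one performed (in aggregated form) in the proof of Theorem~\ref{th:NLU}, so the cleanest route is to isolate it as a one-device, one-step sub-lemma there and cite it here; the remainder of the present lemma is then just the bookkeeping above, checking that $e_1$ was defined so that every inequality closes exactly.
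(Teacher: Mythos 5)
Your proposal is correct and follows essentially the same route as the paper: a joint induction on the two invariants, where the iterate bound is closed via the one-step inequality $\|x^{k+1}-x^*\|^2 \le \frac{C}{\mu^2}\|x^k-x^*\|^2\,{\cal H}^k + \frac{D}{2\mu^2}\|x^k-x^*\|^4$ (which is indeed derived in the paper independently of the blanket assumption on ${\cal H}^k$), and the Hessian bound via the per-device one-step recursion $\|\mH_i^{k+1}-\nabla^2 f_i(x^*)\|_{\rm F}^2 \le (1-A)\|\mH_i^{k}-\nabla^2 f_i(x^*)\|_{\rm F}^2 + B\HF^2\|x^k-x^*\|^2$ together with $e_1^2 \le \frac{A\mu^2}{4BC\HF^2}$. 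The single-device sub-lemma you ask to isolate (including the $\alpha=1$ case with $(A,B)=(\delta/4,\,6/\delta-7/2)$) is exactly the paper's Lemma~\ref{lm:threecomp}, which its proof of Lemma~\ref{lm:boundforbiased} cites with $y=x^k$, $z=x^*$.
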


\begin{lemma}\label{lm:boundforspar}
Let Assumption \ref{asm:comp-2} hold, and assume $\|x^0 - x^*\| \le e_2 \eqdef \frac{\mu}{\sqrt{D + 4Cd^2\HM^2}}$. Then $\|x^k - x^*\| \leq e_2$ and ${\cal H}^k \leq \frac{\mu^2}{4C}$ for all $k\geq 0$. 
\end{lemma}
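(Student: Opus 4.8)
The plan is a \emph{simultaneous} induction on $k$ carrying the two claims $\|x^k-x^*\|\le e_2$ and ${\cal H}^k\le\tfrac{\mu^2}{4C}$ together: advancing the iterate bound will need ${\cal H}^k$ to be small, while bounding ${\cal H}^k$ will need the iterate bound up to time $k$. Unlike the contractive case (Lemma~\ref{lm:boundforbiased}), where ${\cal H}^k$ stays small thanks to a deterministic contraction recursion, unbiased compressors give no such recursion, so the argument instead leans on the structural hypothesis in Assumption~\ref{asm:comp-2} that every entry $(\mH_i^k)_{jl}$ is a convex combination of the past exact Hessian entries $\{(\nabla^2 f_i(x^t))_{jl}\}_{t=0}^{k}$.

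The first step is the pointwise estimate ${\cal H}^k\le d^2\HM^2\max_{0\le t\le k}\|x^t-x^*\|^2$. To get it, fix $i\in[n]$ and $j,l\in[d]$ and write $(\mH_i^k)_{jl}=\sum_{t=0}^{k}\lambda_t(\nabla^2 f_i(x^t))_{jl}$ with $\lambda_t\ge0$ and $\sum_t\lambda_t=1$; subtract $(\nabla^2 f_i(x^*))_{jl}=\sum_t\lambda_t(\nabla^2 f_i(x^*))_{jl}$ and use convexity of $|\cdot|$ together with the entrywise Lipschitz bound of Assumption~\ref{asm:main}, namely $|(\nabla^2 f_i(x^t)-\nabla^2 f_i(x^*))_{jl}|\le\HM\|x^t-x^*\|$. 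This gives $|(\mH_i^k-\nabla^2 f_i(x^*))_{jl}|\le\HM\max_{t\le k}\|x^t-x^*\|$; summing the $d^2$ squared entries and averaging over $i\in[n]$ yields the claim. Note that no bound on the compressor variance $\omega$ enters here, which is precisely why the resulting neighborhood $e_2$ turns out to be $\omega$-independent.

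Now I would run the induction. For $k=0$: the iterate bound is the hypothesis, and since Assumption~\ref{asm:comp-2} forces $\mH_i^0=\nabla^2 f_i(x^0)$, the pointwise estimate gives ${\cal H}^0\le d^2\HM^2 e_2^2=\tfrac{d^2\HM^2\mu^2}{D+4Cd^2\HM^2}\le\tfrac{\mu^2}{4C}$, the last inequality being just $D\ge0$. For the inductive step, assume $\|x^t-x^*\|\le e_2$ for all $t\le k$; the pointwise estimate then gives ${\cal H}^k\le\tfrac{\mu^2}{4C}$ as in the base case, and I would invoke the per-iteration version of \eqref{rate:local-linear-iter} established in the proof of Theorem~\ref{th:NLU}: whenever $\|x^k-x^*\|\le\tfrac{\mu}{\sqrt{2D}}$ and ${\cal H}^k\le\tfrac{\mu^2}{4C}$, one has $\|x^{k+1}-x^*\|^2\le\tfrac12\|x^k-x^*\|^2$. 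This applies because $e_2\le\tfrac{\mu}{\sqrt{2D}}$, and hence $\|x^{k+1}-x^*\|\le\|x^k-x^*\|\le e_2$, closing the induction.

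The one point that needs care is the compatibility inequality $e_2\le\tfrac{\mu}{\sqrt{2D}}$, equivalently $D\le 4Cd^2\HM^2$: it says the radius within which the convex-combination structure keeps ${\cal H}^k$ below $\tfrac{\mu^2}{4C}$ is contained in the radius within which the Newton-type step halves the distance, so the two halves of the induction can indeed feed each other. It follows from the elementary norm comparisons $\HS\le\HF\le d\HM$: for Option~1, $D=\HS^2\le d^2\HM^2\le 8d^2\HM^2=4Cd^2\HM^2$; for Option~2, $D=(\HS+2\HF)^2\le 9\HF^2\le 9d^2\HM^2\le 32d^2\HM^2=4Cd^2\HM^2$, using the values of $C$ from \eqref{ABCD}. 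Everything else is routine bookkeeping.
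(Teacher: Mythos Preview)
Your proof is correct and follows the same overall strategy as the paper: induct on $k$, use the convex-combination hypothesis of Assumption~\ref{asm:comp-2} to bound each entry $|(\mH_i^k-\nabla^2 f_i(x^*))_{jl}|$ by $\HM\max_{t\le k}\|x^t-x^*\|$, deduce ${\cal H}^k\le d^2\HM^2 e_2^2\le \tfrac{\mu^2}{4C}$, and then propagate the iterate bound via the one-step recursion.

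The one place where you diverge from the paper is in how you close the inductive step. You invoke the $\tfrac12$-halving from the proof of Theorem~\ref{th:NLU}, which forces you to verify the compatibility inequality $e_2\le \mu/\sqrt{2D}$, i.e., $D\le 4Cd^2\HM^2$; you then justify this via the norm comparisons $\HS\le\HF\le d\HM$. This works (provided the Lipschitz constants are taken as the smallest valid ones, which is the natural reading of Assumption~\ref{asm:main}), but it is an avoidable detour. The paper instead plugs directly into the underlying recurrence \eqref{eq:xk+1U} and bounds the two factors separately: $\tfrac{C}{\mu^2}{\cal H}^K\le\tfrac14$ and $\tfrac{D}{2\mu^2}\|x^K-x^*\|^2\le \tfrac{D}{2(D+4Cd^2\HM^2)}\le\tfrac12$, obtaining a $\tfrac34$-contraction with no relation between $D$ and $Cd^2\HM^2$ required. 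That route is slightly cleaner because it does not rely on any comparison among $\HS$, $\HF$, $\HM$.
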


\subsection{\algname{FedNL} and the ``Newton Triangle''}
One implication of Theorem \ref{th:NLU} is that the local rate $\frac{1}{2^k}$ (see \eqref{rate:local-linear-iter})  holds even when we specialize \algname{FedNL} to $\cC_i^k\equiv\bm{0}$, $\alpha=0$ and $\mH_i^0=\nabla^2 f_i(x^0)$ for all $i\in[n]$. These parameter choices give rise to the following simple method, which we call Newton Zero (\algname{N0}):
\begin{equation} \label{eq:N0}
\squeeze
x^{k+1} = x^k - \[\nabla^2 f(x^0)\]^{-1}\nabla f(x^k), \quad k\ge0.
\end{equation}

Interestingly, \algname{N0} only  needs {\em initial second-order information}, i.e., Hessian at the zeroth iterate, and the same first-order information as Gradient Descent (\algname{GD}), i.e., $\nabla f(x^k)$ in each iteration. Moreover, unlike \algname{GD}, whose rate depends on a condition number, the local rate $\frac{1}{2^k}$ of  \algname{N0} does not. Besides, \algname{FedNL} includes \algname{NS} (when $\cC_i^k\equiv\bm{0}$, $\alpha=0$, $\mH_i^0=\nabla^2 f_i(x^*)$) and classical Newton (\algname{N}) (when $\cC_i^k\equiv\mI$, $\alpha=1$, $\mH_i^0=\bm{0}$)  as special cases.

It can be helpful to visualize the three special Newton-type methods---\algname{N}, \algname{NS} and \algname{N0} ---as the vertices of a triangle capturing a subset of two of these three requirements: 1) $\cO(d)$ communication cost per round, 2) implementability in practice, and 3) local quadratic rate. Indeed, each of these three methods satisfies {\em two} of these requirements only: \algname{N} (2+3), \algname{NS} (1+3) and \algname{N0} (1+2). Finally, \algname{FedNL}  interpolates between these requirements. See Figure~\ref{fig:FedNL-NT}.

\begin{figure*}[h]
  \begin{center}
    \includegraphics[width=0.6\linewidth]{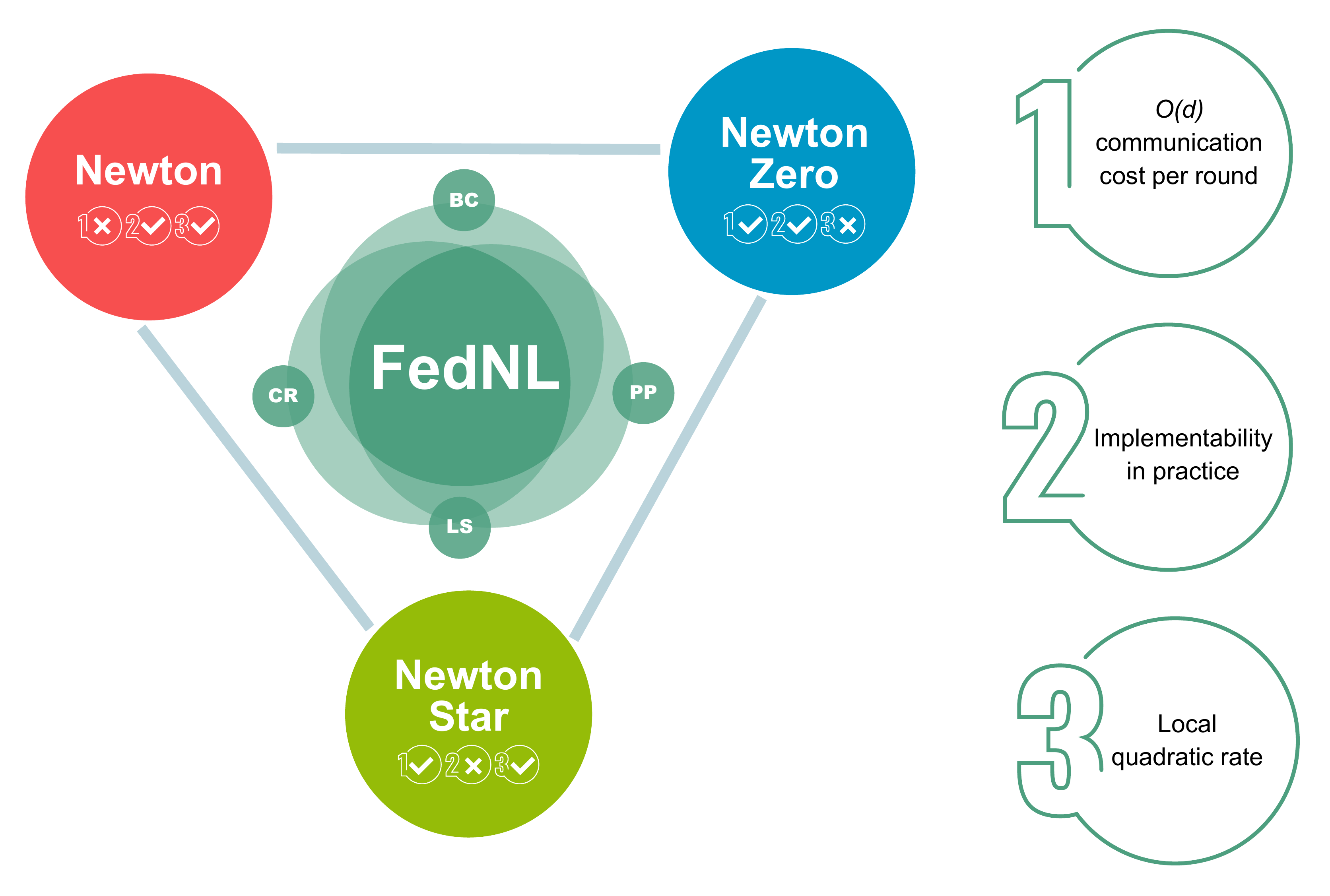}
  \end{center}
  \caption{Visualization of the three special Newton-type methods---Newton (\algname{N}), Newton Star (\algname{NS}) and Newton Zero (\algname{N0})---as the vertices of a triangle capturing a subset of two of these three requirements: 1) $\cO(d)$ communication cost per round, 2) implementability in practice, and 3) local quadratic rate. Indeed, each of these three methods satisfies {\em two} of these requirements only: \algname{N} (2+3), \algname{NS} (1+3) and \algname{N0} (1+2).
  Finally, the proposed \algname{FedNL} framework with its four extensions to Partial Participation (\algname{FedNL-PP}), globalization via Line Search (\algname{FedNL-LS}), globalization via Cubic Regularization (\algname{FedNL-CR}) and Bidirectional Compression (\algname{FedNL-BC}) interpolates between these requirements.
  } 
  \label{fig:FedNL-NT}
\end{figure*}

\section{\algname{FedNL} with Partial Participation, Globalization and Bidirectional Compression}

Here we briefly describe four extensions to \algname{FedNL} and the key technical contributions. Detailed sections for each extension are deferred to the Appendix.

\subsection{Partial Participation (see Section \ref{apx:FedNL-PP})}
In \algname{FedNL-PP} (Algorithm \ref{alg:FedNL-PP}), the server selects a subset $S^k\subseteq[n]$ of $\tau$ devices, uniformly at random, to participate in each iteration. As devices might be inactive for several iterations, the same local gradient and local Hessian used in \algname{FedNL} does not provide convergence in this case. To guarantee convergence, devices need to compute {\em Hessian corrected local gradients} $$g_i^k = (\mH_i^k + l_i^k\mI)w_i^k - \nabla f_i(w_i^k),$$ where $w_i^k$ is the last global model that device $i$ received from the server. This is an innovation which also requires a different analysis.

\subsection{Globalization via Line Search (see Section \ref{apx:FedNL-LS})}
Our first globalization strategy, \algname{FedNL-LS} (Algorithm \ref{alg:FedNL-LS}), which performs {\em significantly better in practice} than \algname{FedNL-CR} (described next), is based on a backtracking line search procedure. The idea is to fix the search direction $$d^k = -\[\mH^k\]_\mu^{-1}\nabla f(x^k)$$ by the server and find the smallest integer $s\ge0$ which leads to a sufficient decrease in the loss $$f(x^k+\gamma^sd^k) \le f(x^k) + c\gamma^s \<\nabla f(x^k), d^k\>$$ with some parameters $c\in(0,\nicefrac{1}{2}]$ and $\gamma\in(0,1)$.

\subsection{Globalization via Cubic Regularization (see Section \ref{apx:FedNL-CR})}
Our next globalization strategy, \algname{FedNL-CR} (Algorithm~\ref{alg:FedNL-CR}), is to use a cubic regularization term $\frac{\HS}{6}\|h\|^3$, where $\HS$ is the Lipschitz constant for Hessians and $h$ is the direction to the next iterate. However, to get a global upper bound, we had to {\em correct the global Hessian estimate} $\mH^k$ via compression error $l^k$. Indeed, since $\nabla^2 f(x^k) \preceq \mH^k + l^k\mI$, we deduce $$f(x^{k+1}) \le f(x^k) + \<\nabla f(x^k), h^k\> + \frac{1}{2}\<(\mH^k+l^k\mI)h^k, h^k\> + \frac{\HS}{6}\|h^k\|^3$$ for all $k\ge0$. This leads to theoretical challenges and necessitates a new analysis.

\subsection{Bidirectional Compression (see Section \ref{apx:FedNL-BC})}
Finally, we modify  \algname{FedNL} to allow for an even more severe level of compression that can't be attained by compressing the Hessians only. This is achieved by compressing the gradients (uplink) and the model (downlink), in a ``smart'' way. In \algname{FedNL-BC} (Alg.~\ref{alg:FedNL-BC}), the server operates its own compressors $\cC^k_{\rm M}$ applied to the model, and uses an additional ``smart'' {\em model learning technique} similar to the proposed Hessian learning technique. Besides, all devices compress their local gradients via a Bernoulli compression scheme, which necessitates the use of another ``smart'' strategy using {\em Hessian corrected local gradients} $$g_i^k = \mH_i^k(z^k - w^k) + \nabla f_i(w^k),$$ where $z^k$ is the current learned global model and $w^k$ is the last learned global model when local gradients are sent to the server. These changes are substantial and require~novel~analysis.

\section{Experiments}

We carry out numerical experiments to study the performance of \algname{FedNL}, and compare it with various state-of-the-art methods in federated learning. We consider the problem \eqref{erm-prob} with local loss functions
\begin{equation}\label{prob:log-reg}
\squeeze
\min\limits_{x\in\R^d}\left\{f(x)\eqdef \frac{1}{n}\sum\limits_{i=1}^n f_i(x) +\frac{\lambda}{2}\|x\|^2\right\}, \qquad f_i(x) = \frac{1}{m}\sum \limits_{j=1}^m\log\(1+\exp(-b_{ij}a_{ij}^\top x)\),
\end{equation}
where $\{a_{ij},b_{ij}\}_{j\in [m]}$ are data points at the $i$-th device and $\lambda>0$ is a regularization parameter. 
The datasets were taken from LibSVM library \citep{chang2011libsvm}: \dataname{a1a}, \dataname{a9a}, \dataname{w7a}, \dataname{w8a}, and \dataname{phishing}.

\begin{figure*}[ht]
    \begin{center}
        \begin{tabular}{cccc}
            \includegraphics[width=0.23\linewidth]{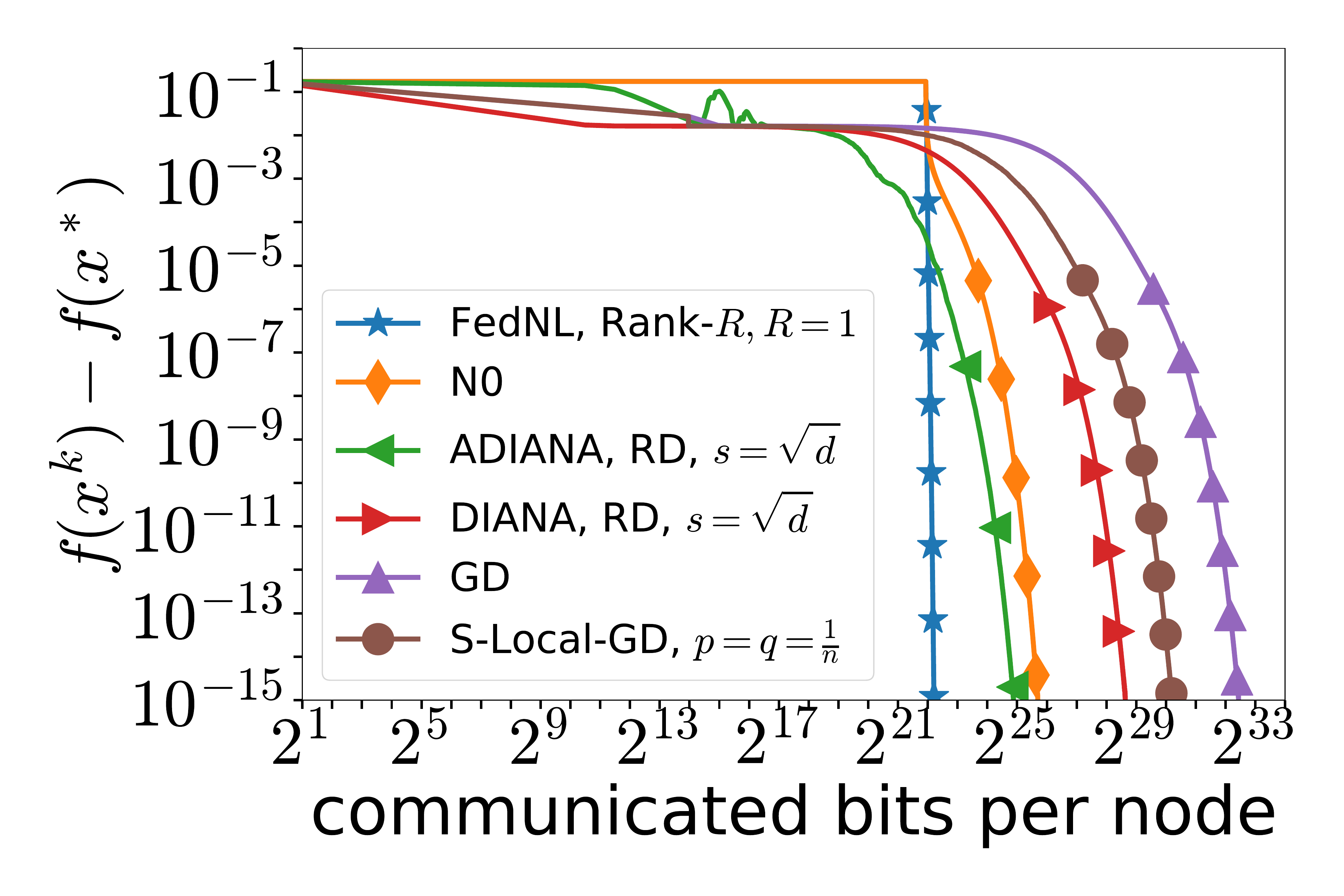}
            & 
            \includegraphics[width=0.23\linewidth]{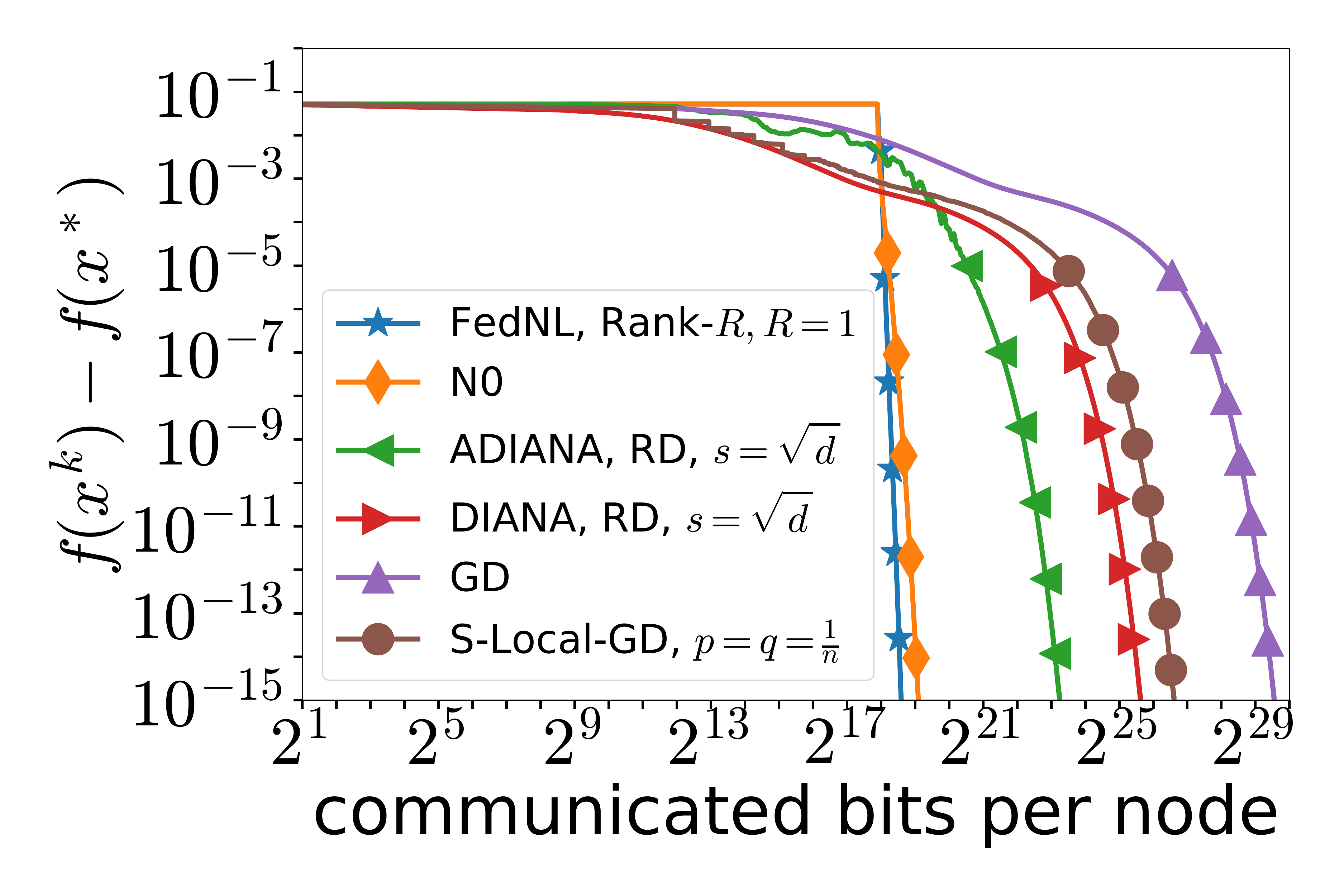} &
            \includegraphics[width=0.23\linewidth]{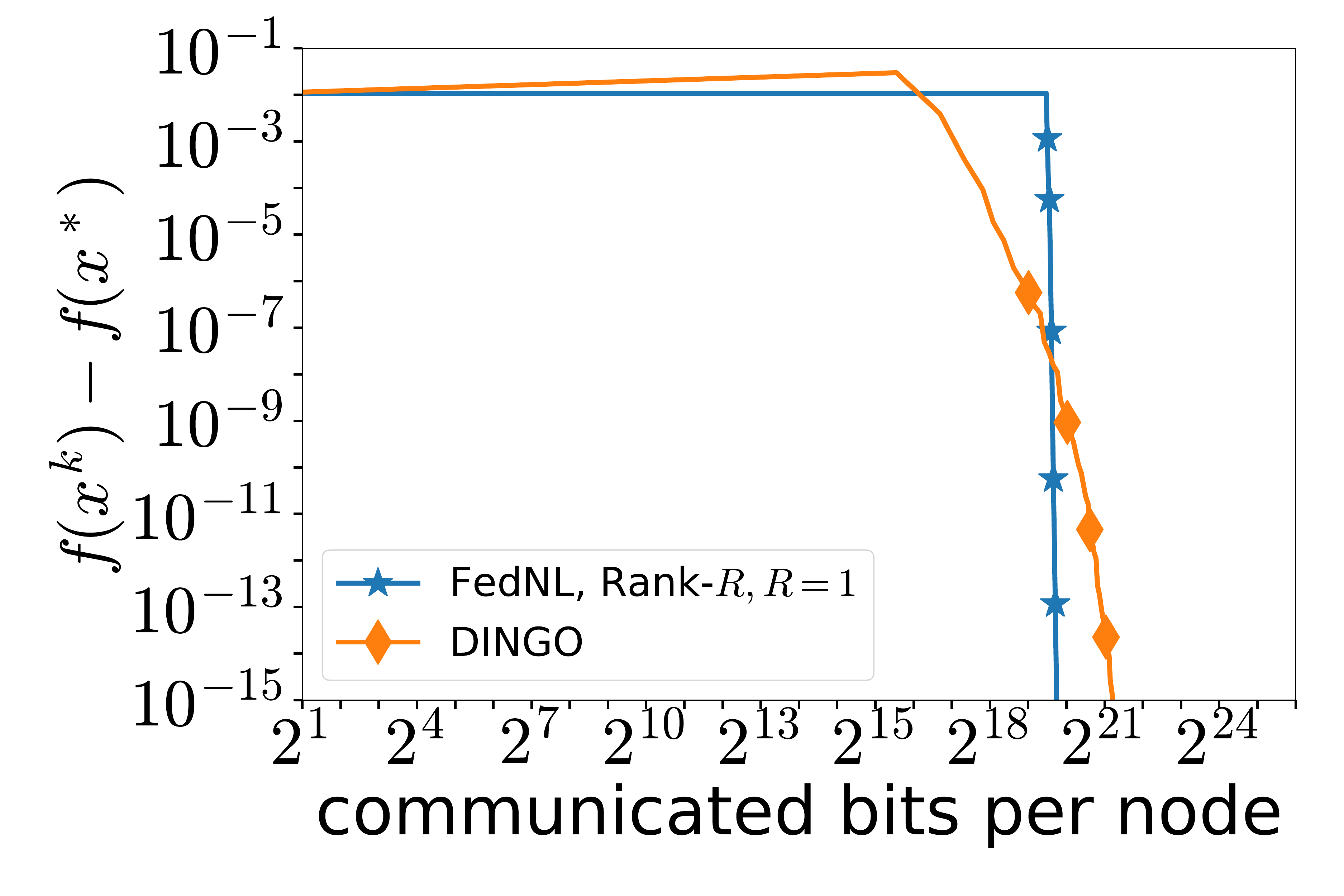} &
            \includegraphics[width=0.23\linewidth]{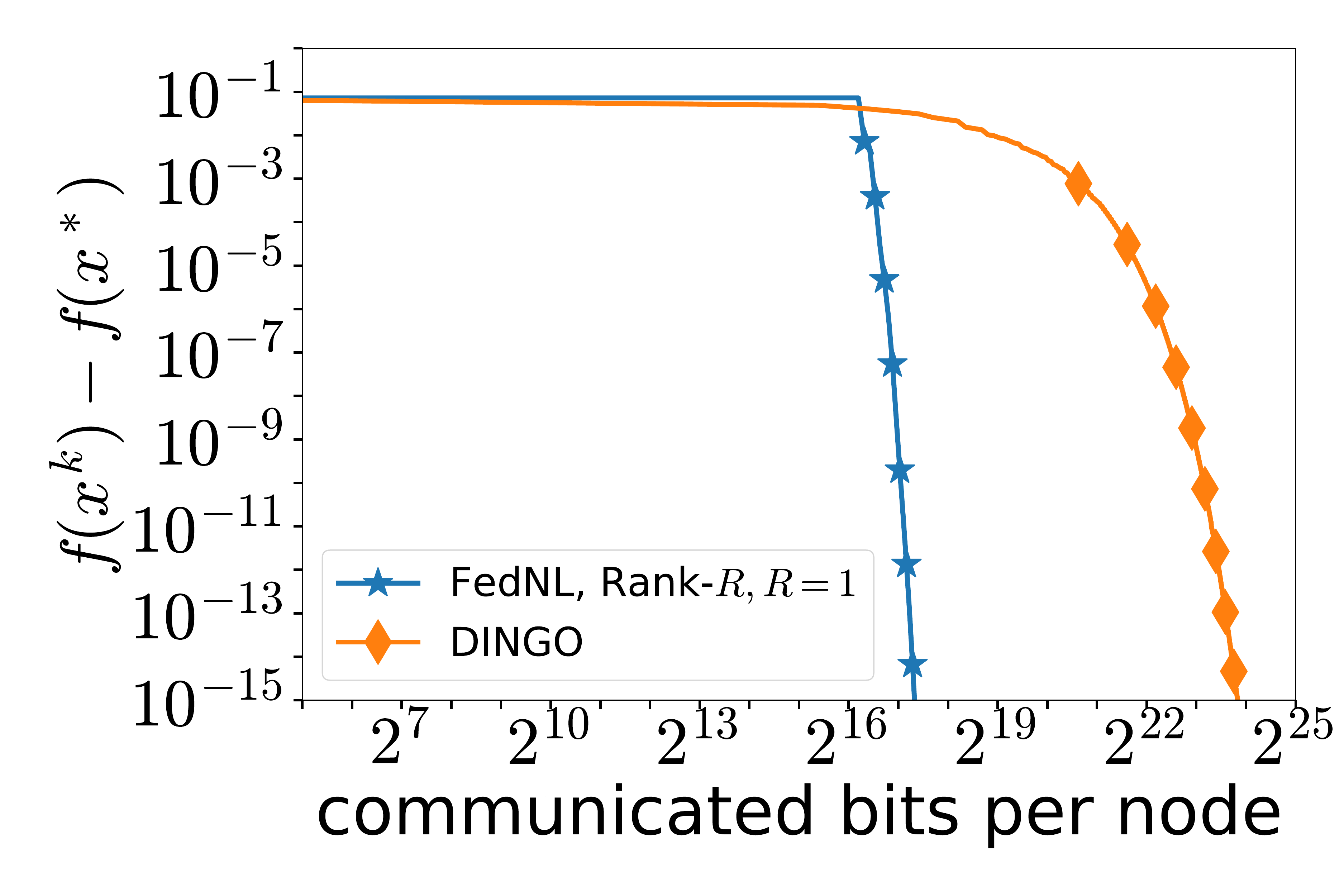}\\
            (a) \dataname{madelon}, {\scriptsize  $\lambda=10^{-3}$} &
            (b) \dataname{a1a}, {\scriptsize  $\lambda=10^{-4}$} &
            (c) \dataname{w8a}, {\scriptsize  $\lambda=10^{-3}$} &
            (d) \dataname{phishing}, {\scriptsize  $\lambda=10^{-4}$}
        \end{tabular}
        \\
        \begin{tabular}{cccc}
            \includegraphics[width=0.23\linewidth]{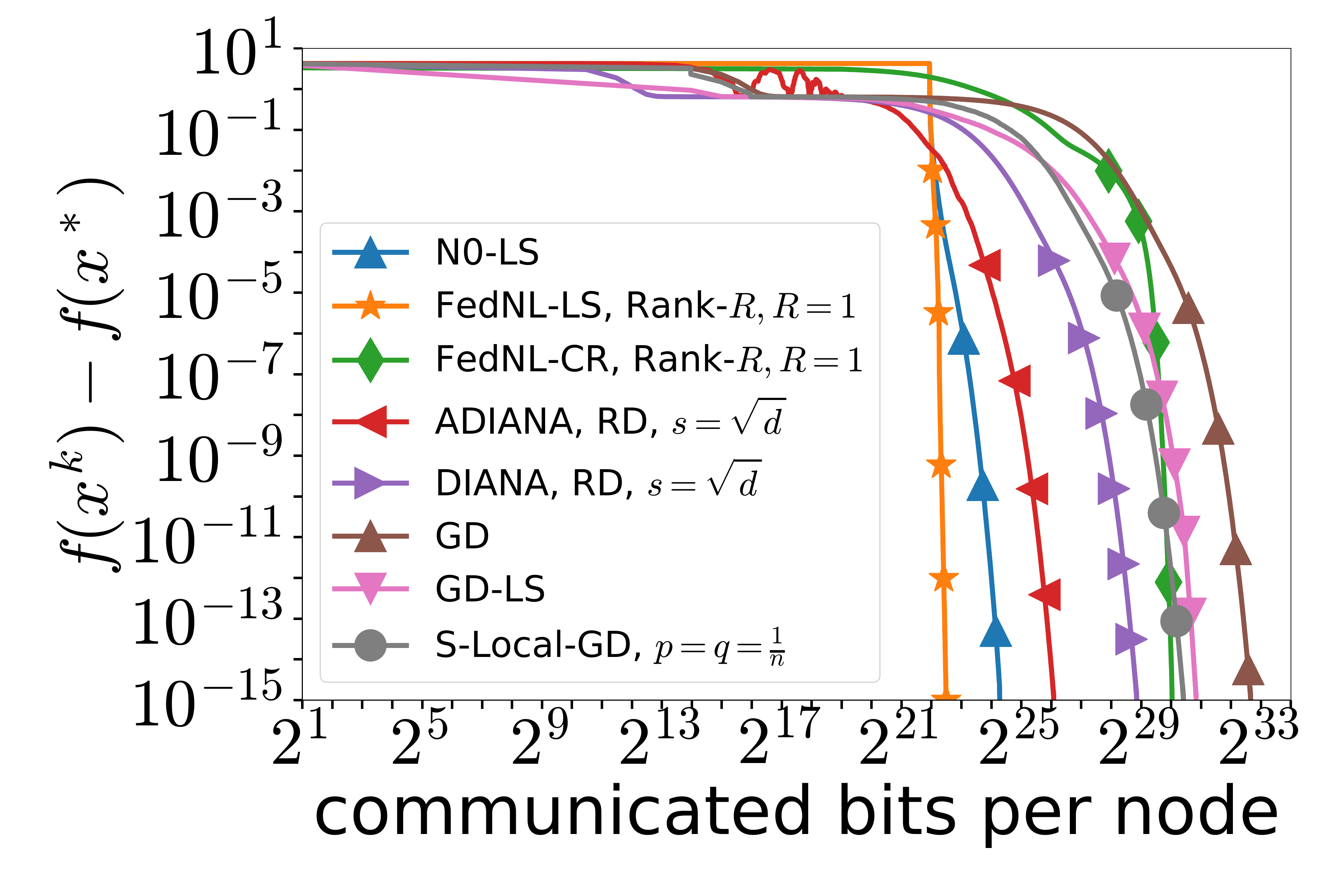} & 
            \includegraphics[width=0.23\linewidth]{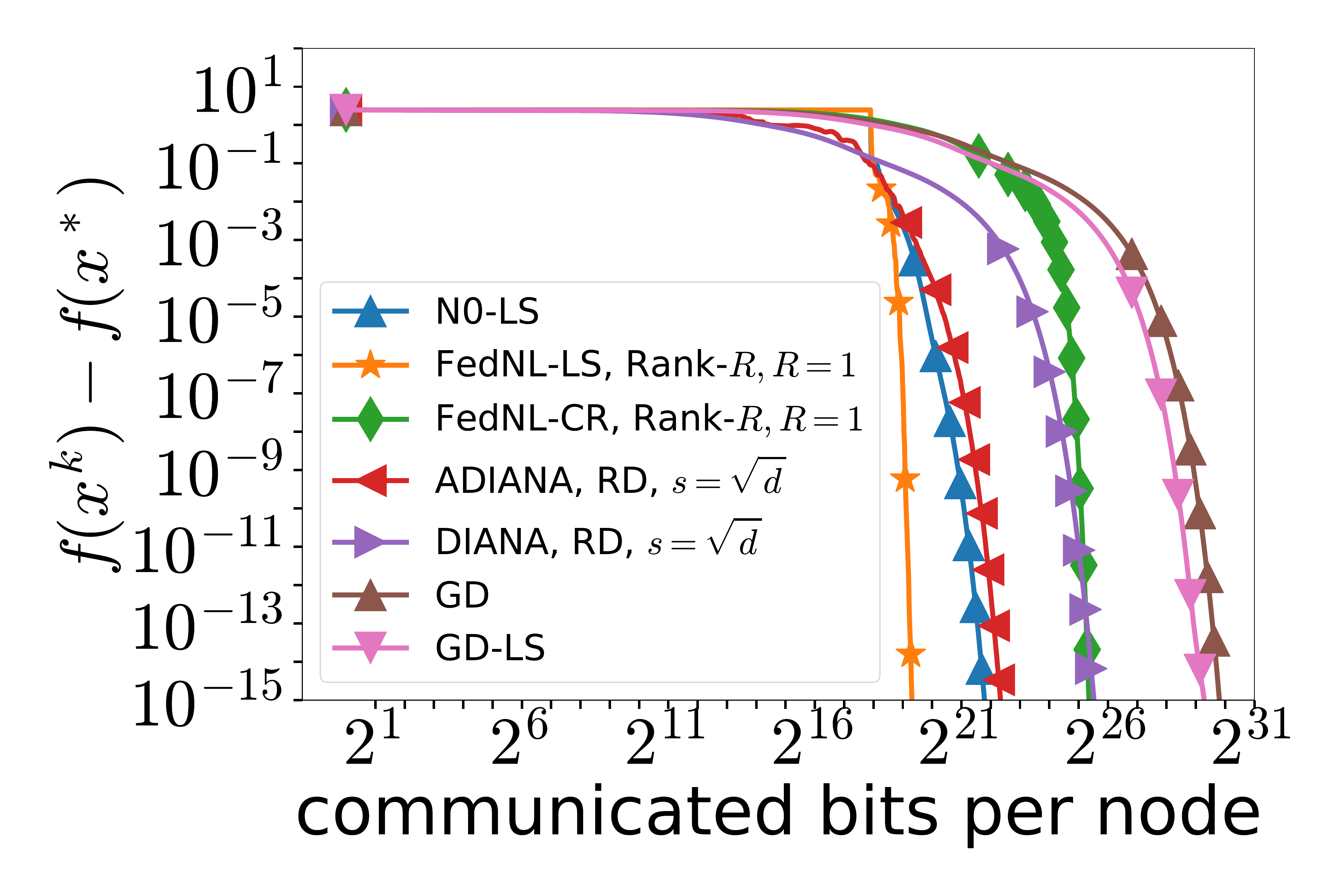} &
            \includegraphics[width=0.23\linewidth]{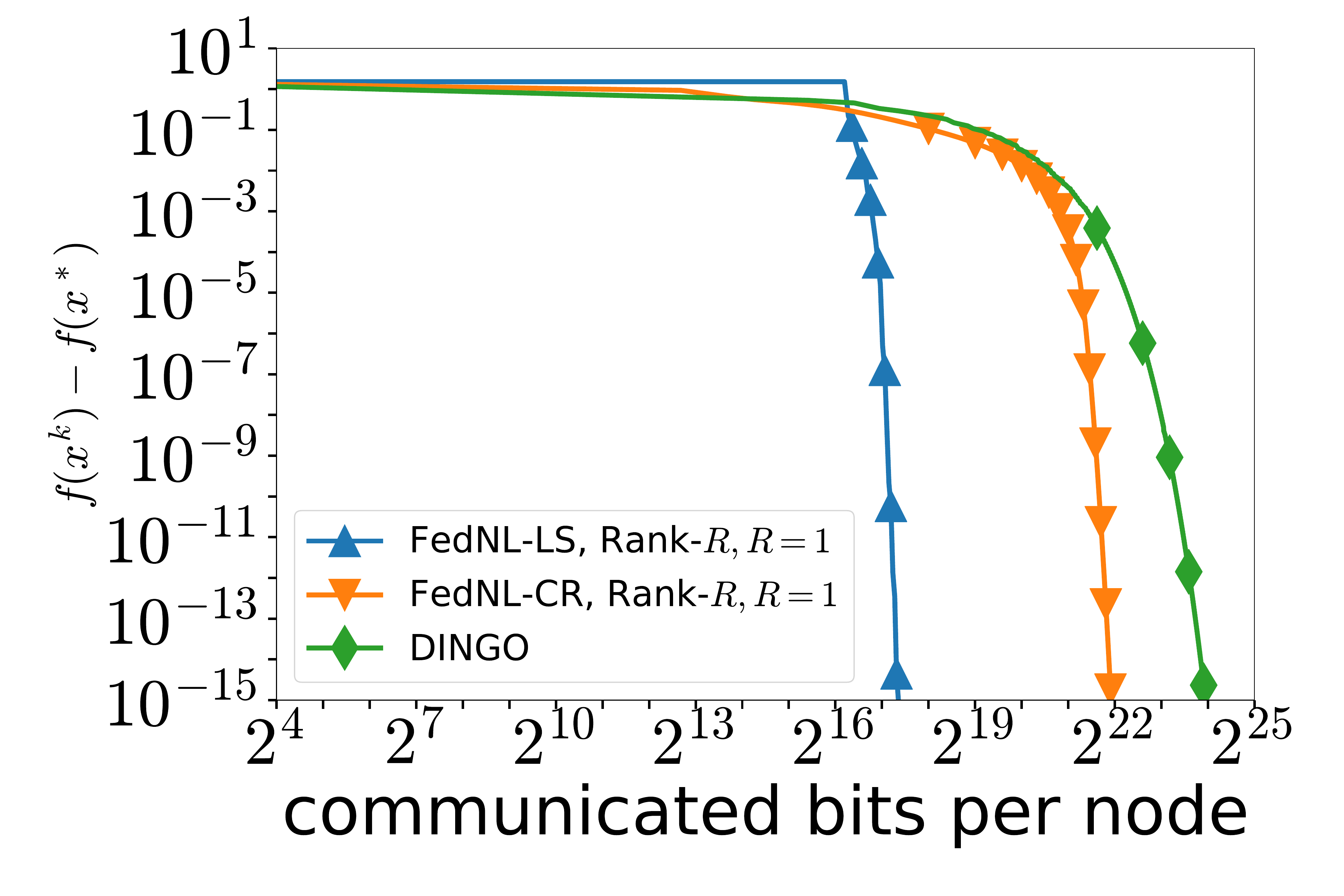} 
            &
            \includegraphics[width=0.23\linewidth]{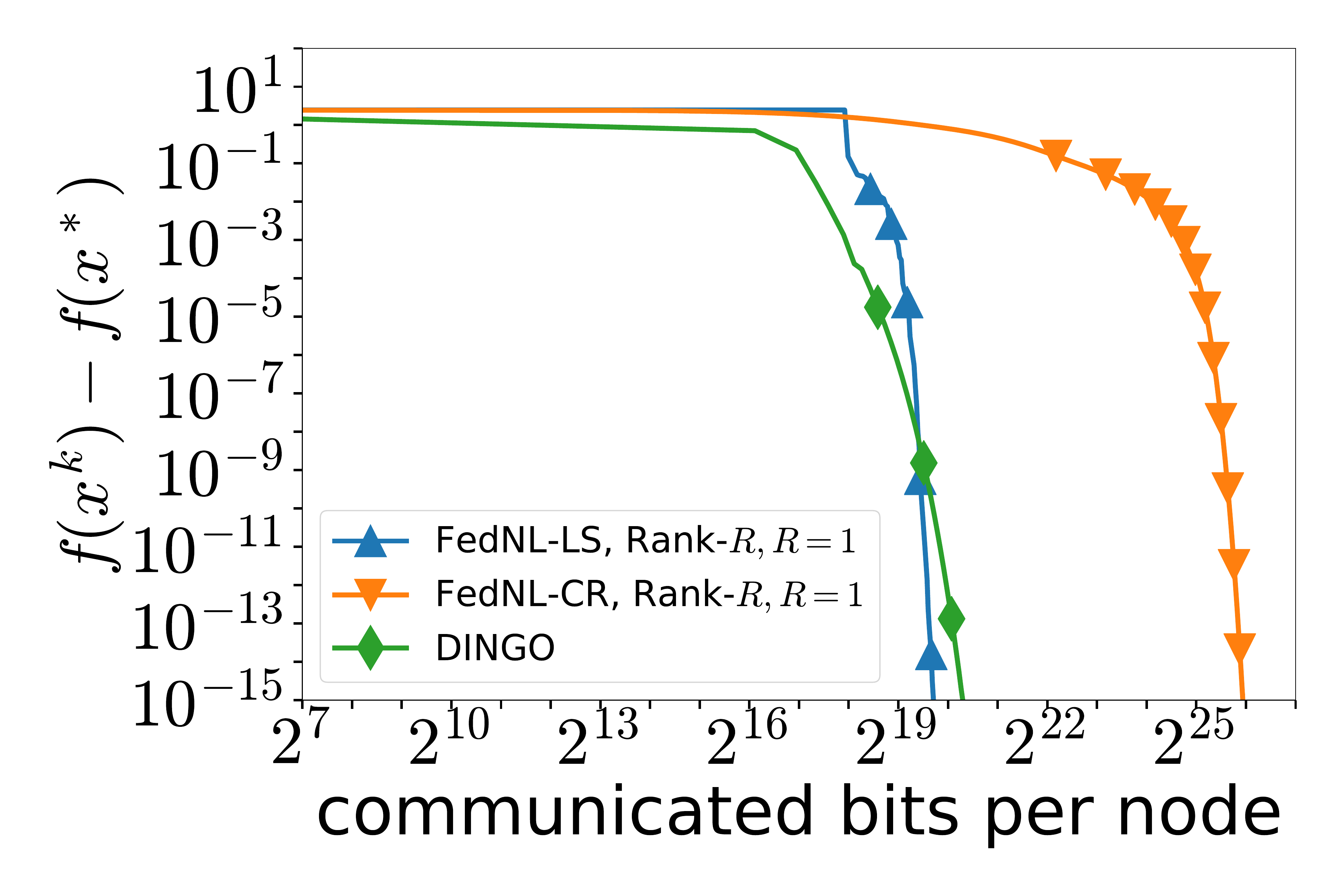} \\
            (a) \dataname{madelon}, {\scriptsize  $\lambda=10^{-3}$} &
            (b) \dataname{a1a}, {\scriptsize  $\lambda=10^{-4}$} &
            (c) \dataname{phishing}, {\scriptsize$ \lambda=10^{-3}$} &
            (d) \dataname{a9a}, {\scriptsize$ \lambda=10^{-4}$}
        \end{tabular}
        \\
        \begin{tabular}{cccc}
            \includegraphics[width=0.23\linewidth]{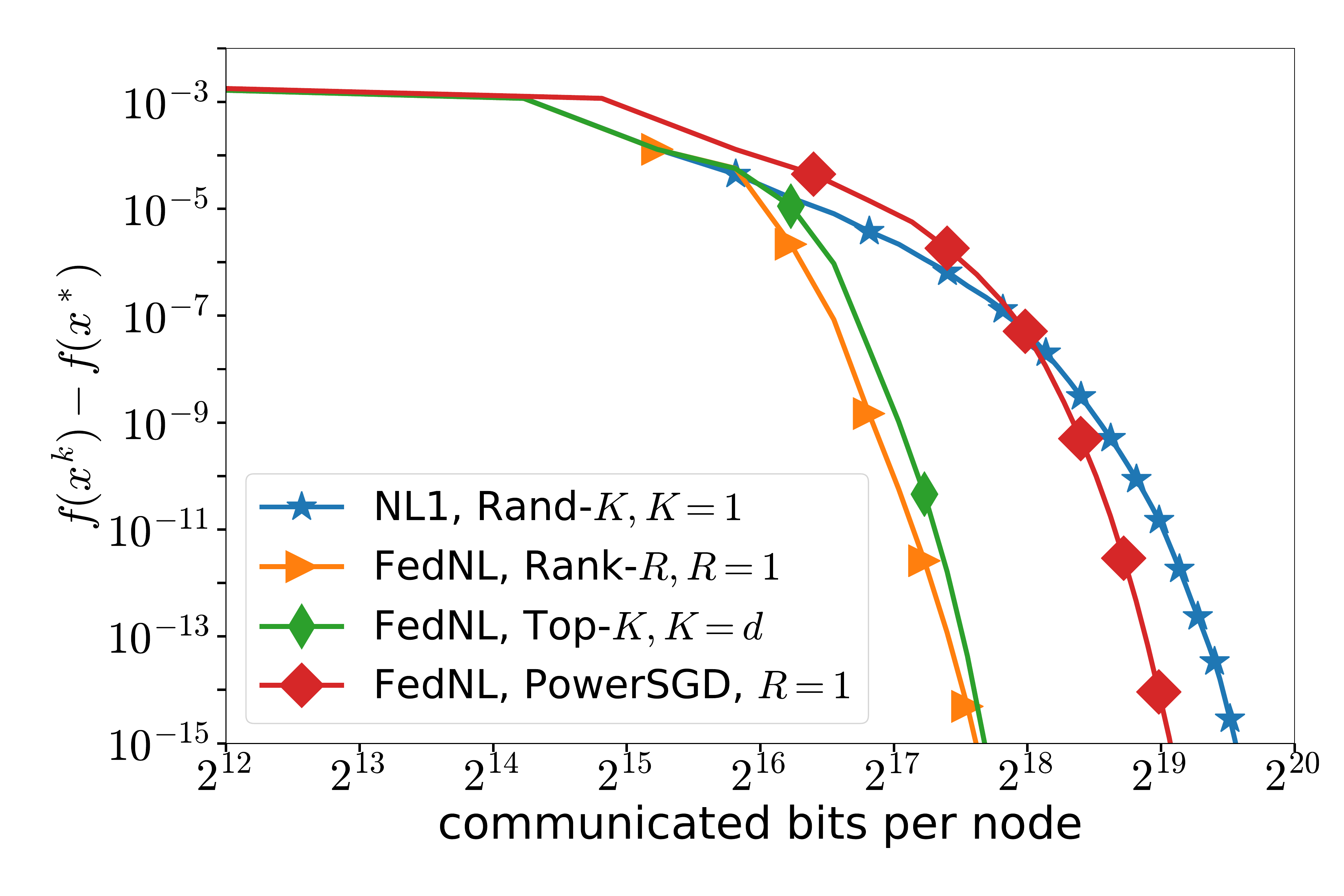} & 
            \includegraphics[width=0.23\linewidth]{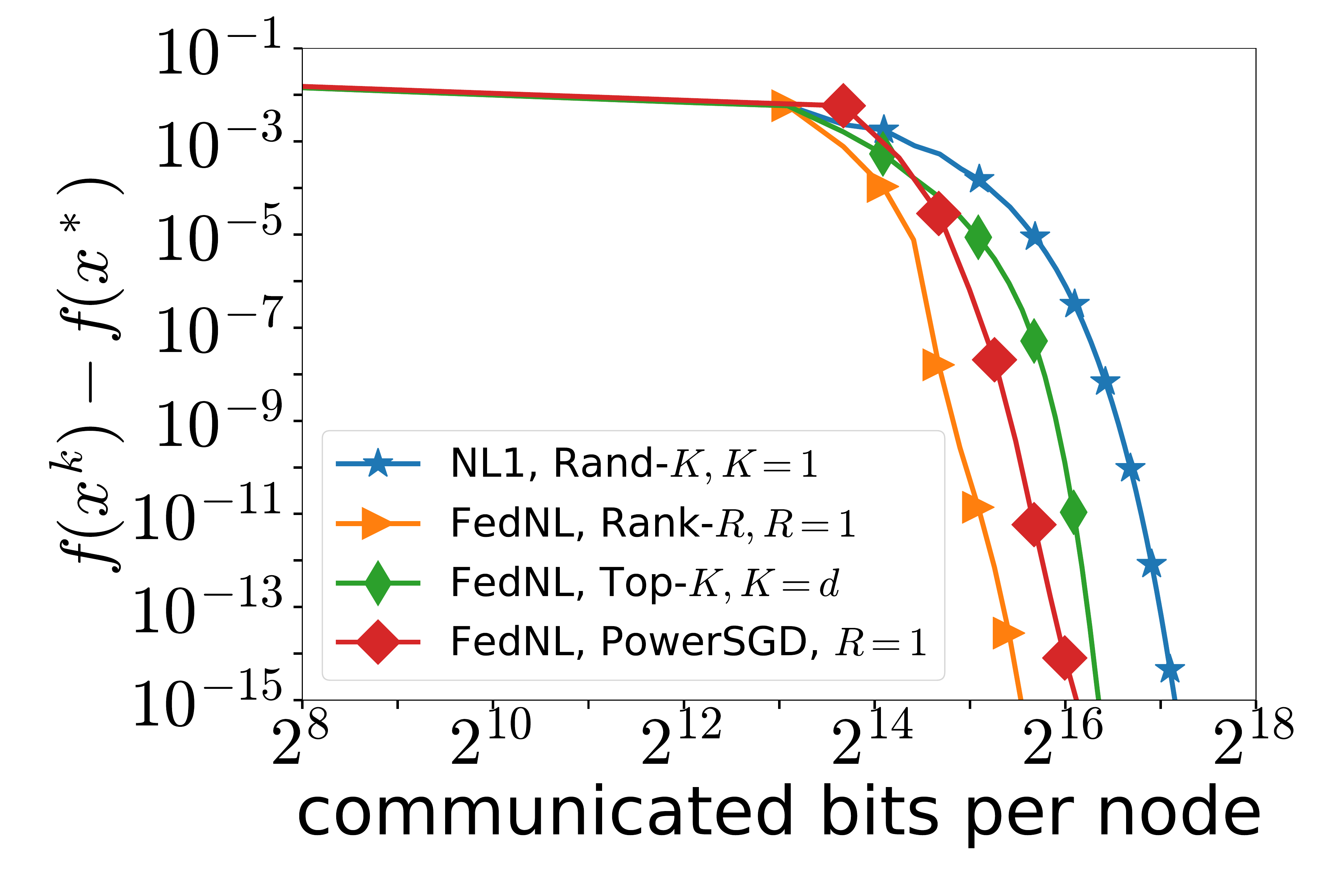} &
            \includegraphics[width=0.23\linewidth]{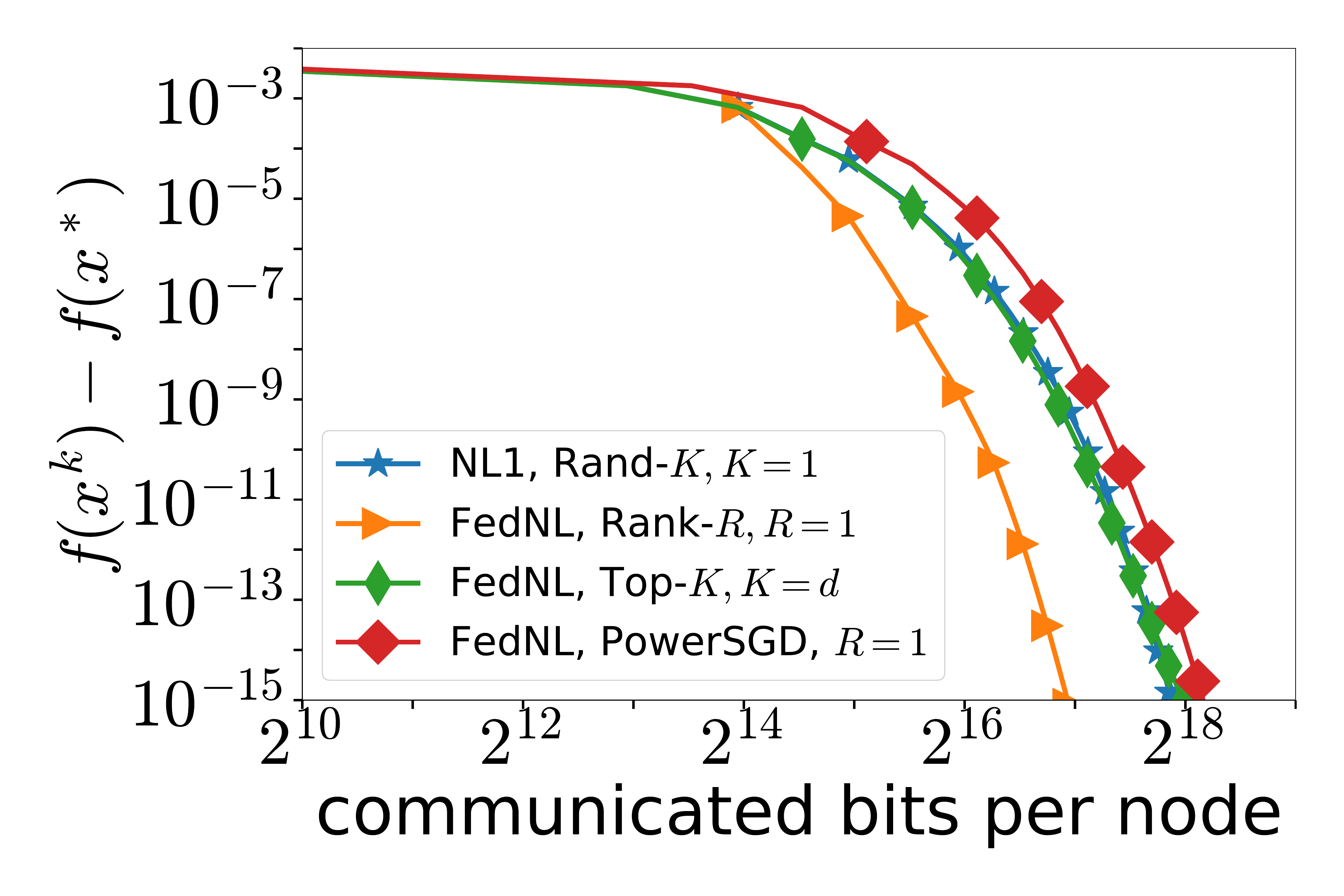} &
            \includegraphics[width=0.23\linewidth]{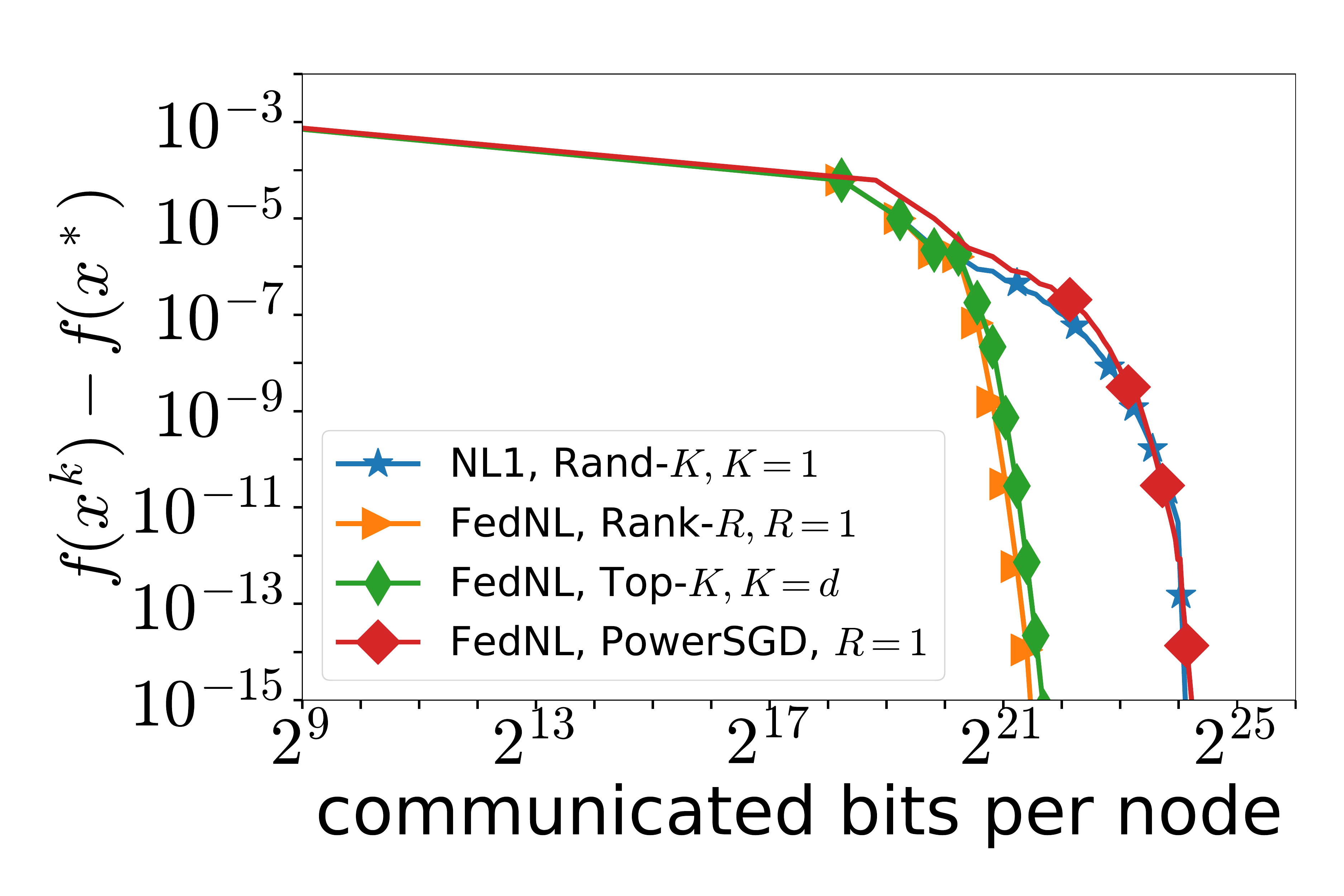} \\
            (a) \dataname{w8a}, {\scriptsize$ \lambda=10^{-3}$} &
            (b) \dataname{phishing}, {\scriptsize $\lambda=10^{-3}$} &
            (c) \dataname{a1a}, {\scriptsize$ \lambda=10^{-4}$} &
            (d) \dataname{w7a}, {\scriptsize$ \lambda=10^{-4}$} 
        \end{tabular}               
    \end{center}
    \caption{{\bf First row:} Local comparison of \algname{FedNL} and \algname{N0} with (a), (b) \algname{ADIANA}, \algname{DIANA}, \algname{GD}; with (c), (d) \algname{DINGO} in terms of communication complexity. {\bf Second row:} Global comparison of \algname{FedNL-LS}, \algname{N0-LS} and \algname{FedNL-CR} with (a), (b) \algname{ADIANA}, \algname{DIANA}, \algname{GD}, and \algname{GD} with line search; with (c), (d) \algname{DINGO} in terms of communication complexity. {\bf Third row:} Local comparison of \algname{FedNL} with $3$ types of compression operators and \algname{NL1} in terms of communication complexity.}
    \label{fig:FedNL-three-in-one}
\end{figure*}

\subsection{Parameter setting}
In all experiments we use the theoretical parameters for gradient type methods (except those using line search): vanilla gradient descent \algname{GD}, \algname{DIANA} \citep{DIANA}, \algname{ADIANA} \citep{ADIANA}, and Shifted Local gradient descent, \algname{S-Local-GD} \citep{Gorbunov2020localSGD}. For \algname{DINGO} \citep{DINGO} we use the authors' choice: $\theta=10^{-4}, \phi=10^{-6}, \rho=10^{-4}$. Backtracking line search for \algname{DINGO} selects the largest stepsize from $\{1,2^{-1},\dots,2^{-10}\}.$ The initialization of $\mH^0_i$ for \algname{NL1} \citep{Islamov2021NewtonLearn}, \algname{FedNL} and \algname{FedNL-LS} is $\nabla^2 f_i(x^0)$, and for \algname{FedNL-CR} is $\mathbf{0}$. For \algname{FedNL}, \algname{FedNL-LS}, and \algname{FedNL-CR} we use Rank-$1$ compression operator and stepsize $\alpha=1$. We use two values of the regularization parameter: $\lambda \in \{10^{-3}, 10^{-4}\}$. In the figures we plot the relation of the optimality gap $f(x^k)-f(x^*)$ and the number of communicated bits per node, or the number of communication rounds. The optimal value $f(x^*)$ is chosen as the function value at the $20$-th iterate of standard Newton's method. 

\subsection{Local convergence}
In our first experiment we compare \algname{FedNL} and \algname{N0} with gradient type methods: \algname{ADIANA} with random dithering (\algname{ADIANA}, RD, $s=\sqrt{d}$), \algname{DIANA} with random dithering (\algname{DIANA}, RD, $s=\sqrt{d}$), Shifted Local gradient descent (\algname{S-Local-GD}, $p=q=\frac{1}{n}$), vanilla gradient descent (\algname{GD}), and \algname{DINGO}. {\em According to the results summarized in Figure~\ref{fig:FedNL-three-in-one} (first row), we conclude that \algname{FedNL} outperforms all gradient type methods and \algname{DINGO}, locally, by many orders in magnitude.} We want to note that we include the communication cost of the initialization for \algname{FedNL} and \algname{N0} in order to make a fair comparison (this is why there is a straight line for these methods initially).

\subsection{Global convergence}
We now compare  \algname{FedNL-LS}, \algname{N0-LS}, and \algname{FedNL-CR} with the first-order methods  \algname{ADIANA} and \algname{DIANA} with random dithering, Shifted Local gradient descent \algname{S-Local-GD}, gradient descent (\algname{GD}), and \algname{GD} with line search (\algname{GD-LS}). Besides, we compare \algname{FedNL-LS} and \algname{FedNL-CR} with \algname{DINGO}. In this experiment we choose $x^0$ far from the solution $x^*$, i.e., we test the global convergence behavior; see Figure~\ref{fig:FedNL-three-in-one} (second row). We observe that \algname{FedNL-LS} is more communication efficient  than all first-order methods and \algname{DINGO}. However, \algname{FedNL-CR} is better  than \algname{GD} and \algname{GD-LS} only. In these experiments we again include the communication cost of initialization for \algname{FedNL-LS} and \algname{N0-LS}.

\subsection{Comparison with \algname{NL1}}
Next, we compare \algname{FedNL} with three type of compression operators: Rank-$R$ ($R=1$), Top-$K$ ($K=d$), and PowerSGD \citep{vogels} ($R=1$) against \algname{NL1} with the Rand-$K$ ($K=1$) compressor. The results, presented in Figure~\ref{fig:FedNL-three-in-one} (third row), 
show that \algname{FedNL} with Rank-$1$ compressor performs the best.

\bibliography{references}

\begin{thebibliography}{42}
\providecommand{\natexlab}[1]{#1}
\providecommand{\url}[1]{\texttt{#1}}
\expandafter\ifx\csname urlstyle\endcsname\relax
  \providecommand{\doi}[1]{doi: #1}\else
  \providecommand{\doi}{doi: \begingroup \urlstyle{rm}\Url}\fi

\bibitem[Alimisis et~al.(2021)Alimisis, Davies, and
  Alistarh]{Alimisis2021QNewton}
Foivos Alimisis, Peter Davies, and Dan Alistarh.
\newblock Communication-efficient distributed optimization with quantized
  preconditioners.
\newblock In \emph{International Conference on Machine Learning (ICML)}, 2021.

\bibitem[Alistarh et~al.(2017)Alistarh, Grubic, Li, Tomioka, and
  Vojnovic]{QSGD}
Dan Alistarh, Demjan Grubic, Jerry Li, Ryota Tomioka, and Milan Vojnovic.
\newblock {QSGD}: Communication-efficient {SGD} via gradient quantization and
  encoding.
\newblock In \emph{Advances in Neural Information Processing Systems}, pages
  1709--1720, 2017.

\bibitem[Beck(2014)]{Beck-book-nonlinear}
Amir Beck.
\newblock \emph{Introduction to Nonlinear Optimization: {T}heory, Algorithms,
  and Applications with {MATLAB}}.
\newblock Society for Industrial and Applied Mathematics, USA, 2014.
\newblock ISBN 1611973643.

\bibitem[Beznosikov et~al.(2020)Beznosikov, Horv\'{a}th, Richt\'{a}rik, and
  Safaryan]{biased2020}
Aleksandr Beznosikov, Samuel Horv\'{a}th, Peter Richt\'{a}rik, and Mher
  Safaryan.
\newblock On biased compression for distributed learning.
\newblock \emph{arXiv preprint arXiv:2002.12410}, 2020.

\bibitem[Chang and Lin(2011)]{chang2011libsvm}
Chih-Chung Chang and Chih-Jen Lin.
\newblock {LibSVM}: a library for support vector machines.
\newblock \emph{ACM Transactions on Intelligent Systems and Technology (TIST)},
  2\penalty0 (3):\penalty0 1--27, 2011.

\bibitem[Chen et~al.(2020)Chen, Horv\'{a}th, and
  Richt\'{a}rik]{OptClientSampling2020}
Wenlin Chen, Samuel Horv\'{a}th, and Peter Richt\'{a}rik.
\newblock Optimal client sampling for federated learning.
\newblock \emph{arXiv preprint arXiv:2010.13723}, 2020.

\bibitem[Crane and Roosta(2019)]{DINGO}
Rixon Crane and Fred Roosta.
\newblock Dingo: Distributed newton-type method for gradient-norm optimization.
\newblock In \emph{Advances in Neural Information Processing Systems},
  volume~32, pages 9498--9508, 2019.

\bibitem[Gorbunov et~al.(2020{\natexlab{a}})Gorbunov, Hanzely, and
  Richt\'{a}rik]{sigma_k}
Eduard Gorbunov, Filip Hanzely, and Peter Richt\'{a}rik.
\newblock A unified theory of {SGD}: {V}ariance reduction, sampling,
  quantization and coordinate descent.
\newblock In \emph{The 23rd International Conference on Artificial Intelligence
  and Statistics}, 2020{\natexlab{a}}.

\bibitem[Gorbunov et~al.(2020{\natexlab{b}})Gorbunov, Kovalev, Makarenko, and
  Richt\'{a}rik]{EC-SGD}
Eduard Gorbunov, Dmitry Kovalev, Dmitry Makarenko, and Peter Richt\'{a}rik.
\newblock Linearly converging error compensated {SGD}.
\newblock In \emph{34th Conference on Neural Information Processing Systems
  (NeurIPS 2020)}, 2020{\natexlab{b}}.

\bibitem[Gorbunov et~al.(2021{\natexlab{a}})Gorbunov, Burlachenko, Li, and
  Richt\'{a}rik]{MARINA}
Eduard Gorbunov, Konstantin Burlachenko, Zhize Li, and Peter Richt\'{a}rik.
\newblock {MARINA}: {F}aster non-convex distributed learning with compression.
\newblock \emph{arXiv preprint arXiv:2102.07845}, 2021{\natexlab{a}}.

\bibitem[Gorbunov et~al.(2021{\natexlab{b}})Gorbunov, Hanzely, and
  Richt{\'a}rik]{Gorbunov2020localSGD}
Eduard Gorbunov, Filip Hanzely, and Peter Richt{\'a}rik.
\newblock Local {SGD}: {U}nified theory and new efficient methods.
\newblock In \emph{International Conference on Artificial Intelligence and
  Statistics (AISTATS)}, 2021{\natexlab{b}}.

\bibitem[Gower et~al.(2019)Gower, Loizou, Qian, Sailanbayev, Shulgin, and
  Richt\'{a}rik]{SGD-AS}
Robert~Mansel Gower, Nicolas Loizou, Xun Qian, Alibek Sailanbayev, Egor
  Shulgin, and Peter Richt\'{a}rik.
\newblock {SGD}: General analysis and improved rates.
\newblock In Kamalika Chaudhuri and Ruslan Salakhutdinov, editors,
  \emph{Proceedings of the 36th International Conference on Machine Learning},
  volume~97 of \emph{Proceedings of Machine Learning Research}, pages
  5200--5209, Long Beach, California, USA, 09--15 Jun 2019. PMLR.

\bibitem[Horv\'{a}th et~al.(2019)Horv\'{a}th, Kovalev, Mishchenko, Stich, and
  Richt\'{a}rik]{DIANA-VR}
Samuel Horv\'{a}th, Dmitry Kovalev, Konstantin Mishchenko, Sebastian Stich, and
  Peter Richt\'{a}rik.
\newblock Stochastic distributed learning with gradient quantization and
  variance reduction.
\newblock \emph{arXiv preprint arXiv:1904.05115}, 2019.

\bibitem[Islamov et~al.(2021)Islamov, Qian, and
  Richt{\'a}rik]{Islamov2021NewtonLearn}
Rustem Islamov, Xun Qian, and Peter Richt{\'a}rik.
\newblock Distributed second order methods with fast rates and compressed
  communication.
\newblock \emph{arXiv preprint arXiv:2102.07158}, 2021.

\bibitem[Kairouz~et al(2019)]{FL-big}
Peter Kairouz~et al.
\newblock Advances and open problems in federated learning.
\newblock \emph{arXiv preprint arXiv:1912.04977}, 2019.

\bibitem[Karimireddy et~al.(2019)Karimireddy, Rebjock, Stich, and Jaggi]{SQSM}
Sai~Praneeth Karimireddy, Quentin Rebjock, Sebastian Stich, and Martin Jaggi.
\newblock Error feedback fixes {S}ign{SGD} and other gradient compression
  schemes.
\newblock In \emph{Proceedings of the 36th International Conference on Machine
  Learning}, volume~97, pages 3252--3261, 2019.

\bibitem[Karimireddy et~al.(2020)Karimireddy, Kale, Mohri, Reddi, Stich, and
  Suresh]{SCAFFOLD}
Sai~Praneeth Karimireddy, Satyen Kale, Mehryar Mohri, Sashank~J. Reddi,
  Sebastian~U. Stich, and Ananda~Theertha Suresh.
\newblock {SCAFFOLD}: {S}tochastic controlled averaging for on-device federated
  learning.
\newblock In \emph{International Conference on Machine Learning (ICML)}, 2020.

\bibitem[Khaled et~al.(2020)Khaled, Mishchenko, and
  Richt\'{a}rik]{localSGD-AISTATS2020}
Ahmed Khaled, Konstantin Mishchenko, and Peter Richt\'{a}rik.
\newblock Tighter theory for local {SGD} on identical and heterogeneous data.
\newblock In \emph{The 23rd International Conference on Artificial Intelligence
  and Statistics (AISTATS 2020)}, 2020.

\bibitem[Khirirat et~al.(2018)Khirirat, Feyzmahdavian, and Johansson]{DCGD}
Sarit Khirirat, Hamid~Reza Feyzmahdavian, and Mikael Johansson.
\newblock Distributed learning with compressed gradients.
\newblock \emph{arXiv preprint arXiv:1806.06573}, 2018.

\bibitem[Kone\v{c}n\'{y} et~al.(2016{\natexlab{a}})Kone\v{c}n\'{y}, McMahan,
  Ramage, and Richt\'{a}rik]{FEDOPT}
Jakub Kone\v{c}n\'{y}, H.~Brendan McMahan, Daniel Ramage, and Peter
  Richt\'{a}rik.
\newblock Federated optimization: {D}istributed machine learning for on-device
  intelligence.
\newblock \emph{arXiv preprint arXiv:1610.02527}, 2016{\natexlab{a}}.

\bibitem[Kone\v{c}n\'{y} et~al.(2016{\natexlab{b}})Kone\v{c}n\'{y}, McMahan,
  Yu, Richt\'{a}rik, Suresh, and Bacon]{FEDLEARN}
Jakub Kone\v{c}n\'{y}, H.~Brendan McMahan, Felix Yu, Peter Richt\'{a}rik,
  Ananda~Theertha Suresh, and Dave Bacon.
\newblock Federated learning: strategies for improving communication
  efficiency.
\newblock In \emph{NIPS Private Multi-Party Machine Learning Workshop},
  2016{\natexlab{b}}.

\bibitem[Li et~al.(2018)Li, Sahu, Zaheer, Sanjabi, Talwalkar, and
  Smith]{li2020federated}
Tian Li, Anit~Kumar Sahu, Manzil Zaheer, Maziar Sanjabi, Ameet Talwalkar, and
  Virginia Smith.
\newblock Federated optimization in heterogeneous networks.
\newblock \emph{arXiv preprint arXiv:1812.06127}, 2018.

\bibitem[Li et~al.(2020{\natexlab{a}})Li, Sahu, Talwalkar, and
  Smith]{FL_survey_2020}
Tian Li, Anit~Kumar Sahu, Ameet Talwalkar, and Virginia Smith.
\newblock Federated learning: challenges, methods, and future directions.
\newblock \emph{IEEE Signal Processing Magazine}, 37\penalty0 (3):\penalty0
  50--60, 2020{\natexlab{a}}.
\newblock \doi{10.1109/MSP.2020.2975749}.

\bibitem[Li et~al.(2020{\natexlab{b}})Li, Kovalev, Qian, and
  Richt\'{a}rik]{ADIANA}
Zhize Li, Dmitry Kovalev, Xun Qian, and Peter Richt\'{a}rik.
\newblock Acceleration for compressed gradient descent in distributed and
  federated optimization.
\newblock In \emph{International Conference on Machine Learning},
  2020{\natexlab{b}}.

\bibitem[Liu et~al.(2020)Liu, Li, Tang, and Yan]{liu2019double}
Xiaorui Liu, Yao Li, Jiliang Tang, and Ming Yan.
\newblock A double residual compression algorithm for efficient distributed
  learning.
\newblock In \emph{International Conference on Artificial Intelligence and
  Statistics (AISTATS)}, 2020.

\bibitem[Malitsky and Mishchenko(2019)]{MM2019}
Yura Malitsky and Konstantin Mishchenko.
\newblock Adaptive gradient descent without descent.
\newblock In \emph{International Conference on Machine Learning (ICML)}, 2019.

\bibitem[McMahan et~al.(2017)McMahan, Moore, Ramage, Hampson, and Ag\"{u}era~y
  Arcas]{FL2017-AISTATS}
H~Brendan McMahan, Eider Moore, Daniel Ramage, Seth Hampson, and Blaise
  Ag\"{u}era~y Arcas.
\newblock Communication-efficient learning of deep networks from decentralized
  data.
\newblock In \emph{Proceedings of the 20th International Conference on
  Artificial Intelligence and Statistics (AISTATS)}, 2017.

\bibitem[Mishchenko et~al.(2019)Mishchenko, Gorbunov, Tak{\'a}{\v{c}}, and
  Richt{\'a}rik]{DIANA}
Konstantin Mishchenko, Eduard Gorbunov, Martin Tak{\'a}{\v{c}}, and Peter
  Richt{\'a}rik.
\newblock Distributed learning with compressed gradient differences.
\newblock \emph{arXiv preprint arXiv:1901.09269}, 2019.

\bibitem[Mishchenko et~al.(2021{\natexlab{a}})Mishchenko, Khaled, and
  Richt\'{a}rik]{FedRR}
Konstantin Mishchenko, Ahmed Khaled, and Peter Richt\'{a}rik.
\newblock Proximal and federated random reshuffling.
\newblock \emph{arXiv preprint arXiv:2102.06704}, 2021{\natexlab{a}}.

\bibitem[Mishchenko et~al.(2021{\natexlab{b}})Mishchenko, Wang, Kovalev, and
  Richt\'{a}rik]{IntSGD}
Konstantin Mishchenko, Bokun Wang, Dmitry Kovalev, and Peter Richt\'{a}rik.
\newblock {I}nt{SGD}: {F}loatless compression of stochastic gradients.
\newblock \emph{arXiv preprint arXiv:2102.08374}, 2021{\natexlab{b}}.

\bibitem[Philippenko and Dieuleveut(2021)]{philippenko2021bidirectional}
Constantin Philippenko and Aymeric Dieuleveut.
\newblock Bidirectional compression in heterogeneous settings for distributed
  or federated learning with partial participation: tight convergence
  guarantees.
\newblock \emph{arXiv preprint arXiv:2006.14591}, 2021.

\bibitem[Reddi et~al.(2020)Reddi, Charles, Zaheer, Garrett, Rush,
  Kone\v{c}n\'{y}, Kumar, and McMahan]{reddi2020adaptive}
Sashank Reddi, Zachary Charles, Manzil Zaheer, Zachary Garrett, Keith Rush,
  Jakub Kone\v{c}n\'{y}, Sanjiv Kumar, and H.~Brendan McMahan.
\newblock Adaptive federated optimization.
\newblock \emph{arXiv preprint arXiv:2003.00295}, 2020.

\bibitem[Reddi et~al.(2016)Reddi, Kone\v{c}n{\'y}, Richt{\'a}rik, P{\'o}czos,
  and Smola]{Reddi:2016aide}
Sashank~J. Reddi, Jakub Kone\v{c}n{\'y}, Peter Richt{\'a}rik, Barnab{\'a}s
  P{\'o}czos, and Alexander~J. Smola.
\newblock {AIDE}: Fast and communication efficient distributed optimization.
\newblock \emph{CoRR}, abs/1608.06879, 2016.

\bibitem[Seide et~al.(2014)Seide, Fu, Droppo, Li, and Yu]{1bit}
Frank Seide, Hao Fu, Jasha Droppo, Gang Li, and Dong Yu.
\newblock 1-bit stochastic gradient descent and its application to
  data-parallel distributed training of speech dnns.
\newblock In \emph{Fifteenth Annual Conference of the International Speech
  Communication Association}, 2014.

\bibitem[Shamir et~al.(2014)Shamir, Srebro, and Zhang]{DANE}
Ohad Shamir, Nati Srebro, and Tong Zhang.
\newblock Communication-effcient distributed optimization using an approximate
  newton-type method.
\newblock In \emph{Proceedings of the 31th International Conference on Machine
  Learning}, volume~32, pages 1000--1008, 2014.

\bibitem[Stich et~al.(2018)Stich, Cordonnier, and Jaggi]{StichNIPS2018-memory}
S.~U. Stich, J.-B. Cordonnier, and M.~Jaggi.
\newblock Sparsified {SGD} with memory.
\newblock In \emph{Advances in Neural Information Processing Systems
  (NeurIPS)}, 2018.

\bibitem[Stich(2020)]{Stich-localSGD}
Sebastian~U. Stich.
\newblock Local {SGD} converges fast and communicates little.
\newblock In \emph{International Conference on Learning Representations
  (ICLR)}, 2020.

\bibitem[Vogels et~al.(2019)Vogels, Karimireddy, and Jaggi]{vogels}
Thijs Vogels, Sai~Praneeth Karimireddy, and Martin Jaggi.
\newblock Power{SGD}: Practical low-rank gradient compression for distributed
  optimization.
\newblock In \emph{Advances in Neural Information Processing Systems 32
  (NeurIPS)}, 2019.

\bibitem[Wang et~al.(2018)Wang, abd Peng~Xu, and Mahoney]{GIANT2018}
Shusen Wang, Fred~Roosta abd Peng~Xu, and Michael~W Mahoney.
\newblock {GIANT}: Globally improved approximate {N}ewton method for
  distributed optimization.
\newblock In \emph{Advances in Neural Information Processing Systems
  (NeurIPS)}, 2018.

\bibitem[Xie et~al.(2019)Xie, Koyejo, Gupta, and Lin]{xie2019local}
Cong Xie, Oluwasanmi Koyejo, Indranil Gupta, and Haibin Lin.
\newblock Local {A}da{A}lter: Communication-efficient stochastic gradient
  descent with adaptive learning rates.
\newblock \emph{arXiv preprint arXiv:1911.09030}, 2019.

\bibitem[Zhang et~al.(2020)Zhang, You, and Başar]{DAN-LA2020}
Jiaqi Zhang, Keyou You, and Tamer Başar.
\newblock Achieving globally superlinear convergence for distributed
  optimization with adaptive newton method.
\newblock In \emph{2020 59th IEEE Conference on Decision and Control (CDC)},
  pages 2329--2334, 2020.
\newblock \doi{10.1109/CDC42340.2020.9304321}.

\bibitem[Zhang and Lin(2015)]{DiSCO}
Yuchen Zhang and Xiao Lin.
\newblock Disco: Distributed optimization for self-concordant empirical loss.
\newblock In Francis Bach and David Blei, editors, \emph{Proceedings of the
  32nd International Conference on Machine Learning}, volume~37 of
  \emph{Proceedings of Machine Learning Research}, pages 362--370, Lille,
  France, 07--09 Jul 2015. PMLR.

\end{thebibliography}
\bibliographystyle{plainnat}

\clearpage

\appendix
\part*{Appendix}

\section{Theoretical Comparisons with Related Works}\label{sec:more}

In this part, we compare our results with the most relevant prior works in the literature. We start comparing our work with several recently proposed second order distributed optimization methods to the following criterias: problem structure, assumptions on the loss functions, communication complexity (the number of encoding bits sent from client to server in each communication round), theoretical convergence rate and other aspects of the method (such as local computation and privacy). Table \ref{tbl:summary} below provides the summary.

{
    \begin{table*}[ht]
    \scriptsize
    \addtolength{\tabcolsep}{-3pt} 
        \centering
        \caption{Theoretical comparison of 7 second order methods (including ours). Advantages are written in {\color{mydarkgreen} green}, while limitations are colored in {\color{mydarkred} red}.}
        \label{tbl:summary}
        \renewcommand{\arraystretch}{1.7}
        \begin{tabular}{|@{\hskip 0.001in} c @{\hskip 0.001in}|@{\hskip 0.001in} c @{\hskip 0.001in}|@{\hskip 0.001in}c@{\hskip 0.001in}|@{\hskip 0.001in}c@{\hskip 0.001in}|@{\hskip 0.001in}c@{\hskip 0.001in}|@{\hskip 0.001in}c@{\hskip 0.001in}|}
        \hline
        \makecellnew{Method}
        & \makecellnew{Problem}
        & \makecellnew{Assumptions}
        & \makecellnew{Comm. Cost \\ per Round}
        & \makecellnew{Rate}
        & \makecellnew{Comments} \\
        \hline
            \hline
            \makecellnew{GIANT \\ \citep{GIANT2018}}
            & {\color{mydarkred} GLM}$^2$
            & \makecellnew{LipC$^{1}$ Hessian, \\ convex $+ \, l_2$ reg., \\ {\color{mydarkred} $\approx$ i.i.d. data}}
            & $\cO(d)$
            & \makecellnew{{\color{mydarkred} Local $\kappa$-dependent linear.} \\ {\color{mydarkgreen} Global $\cO(\log\nicefrac{\kappa}{\epsilon})$, quadratics}}
            & \makecellnew{Big data regime \\ (\#data$\,\gg d$)} \\
\hline
            \makecellnew{DINGO \\ \citep{DINGO}}
            & {\color{mydarkgreen} GFS}$^3$
            & \makecellnew{Moral Smoothness$^{4}$, \\ {\color{mydarkgreen} $\approx$ strong convexity}$^5$}
            & $\cO(d)$
            & \makecellnew{ {\color{mydarkgreen} Global linear rate.} \\ {\color{mydarkred} No fast local rate.} }
            & \makecellnew{ Operates {\color{mydarkred} full gradients,} \\ {\color{mydarkgreen} Hessian-vector products,} \\ {\color{mydarkgreen} Hessian pseudo-inverse} \\ {\color{mydarkgreen} and vector products.} }\\
\hline
            \makecellnew{DAN \\ \citep{DAN-LA2020}}
            & {\color{mydarkgreen} GFS}
            & \makecellnew{LipC Hessian, \\ strong convexity}
            & {\color{mydarkred} $\cO(n d^2)$}
            & \makecellnew{ {\color{mydarkgreen} Global quadratic rate} \\ {\color{mydarkred} after $\cO(\nicefrac{L}{\mu^2})$ iterations.}}
            & \makecellnew{ Operates {\color{mydarkred} full gradients} \\ {\color{mydarkred} and Hessian matrices.}} \\
\hline
            \makecellnew{DAN-LA \\ \citep{DAN-LA2020}}
            & {\color{mydarkgreen} GFS}
            & \makecellnew{LipC Hessian, \\ {\color{mydarkred} LipC gradient}, \\ strong convexity}
            & {\color{mydarkred} $\cO(n d)$}
            & \makecellnew{ {\color{mydarkred} Asymptotic and implicit} \\ {\color{mydarkgreen} global superlinear rate.} }
            & \makecellnew{$\lim_{k\to\infty} \frac{\|x_{k+1}-x^*\|}{\|x_{k}-x^*\|} = 0$ \\ {\color{mydarkred} Independent of $\kappa$ ?} \\ {\color{mydarkred} Better non-asymptotic} \\ {\color{mydarkred} complexity over linear rate ?}} \\
\hline
            \makecellnew{NL \\ \citep{Islamov2021NewtonLearn}}
            & {\color{mydarkred} GLM}
            & \makecellnew{LipC Hessian, \\ convex $+ \, l_2$ reg.}
            & $\cO(d)$
            & \makecellnew{ {\color{mydarkgreen} Local superlinear rate} \\ {\color{mydarkgreen} independent of $\kappa$}, \\ {\color{mydarkred} but dependent on \#data.} \\ {\color{mydarkgreen} Global linear rate.}}
            & \makecellnew{\color{mydarkred} reveals local data to server} \\
\hline
            \makecellnew{Quantized Newton \\ \\ \citep{Alimisis2021QNewton}}
            & {\color{mydarkgreen} GFS}
            & \makecellnew{LipC Hessian, \\ {\color{mydarkred} LipC gradient,} \\ strong convexity$^5$}
            & {\color{mydarkred} $\widetilde{\cO}(d^2)$}
            & \makecellnew{ {\color{mydarkgreen} Local (fixed) linear rate.} \\ {\color{mydarkred} No global rate.} }
            & \makecellnew{ Operates {\color{mydarkred} full gradients} \\ {\color{mydarkred} and Hessian matrices.}} \\
\hline
            \makecellnew{\bf FedNL \bf (this work)}
            & {\color{mydarkgreen} GFS}
            & \makecellnew{LipC Hessian, \\ strong convexity}
            & $\cO(d)$
            & \makecellnew{ {\color{mydarkgreen} Local (fixed) linear rate.} \\ {\color{mydarkgreen} Local superlinear rate} \\ {\color{mydarkgreen} independent of $\kappa$,} \\ {\color{mydarkgreen} independent of \#data.} \\ {\color{mydarkgreen} Global linear rate.}}
            & \makecellnew{ Operates {\color{mydarkred} full gradients} \\ {\color{mydarkred} and Hessian matrices}. \\ {\color{mydarkgreen} Supports contractive} \\ {\color{mydarkgreen} Hessian compression}. \\ {\color{blue} Extensions}$^\dagger$ } \\
\hline
        \end{tabular}   
        \begin{tablenotes}
      {\scriptsize 
        \item ${}^1$ {\bf LipC} = {\bf Lip}schitz {\bf C}ontinuous.
        \item ${}^2$ {\color{mydarkred} GLM} = {\color{mydarkred} G}eneralized {\color{mydarkred} L}inear {\color{mydarkred} M}odel, e.g. $loss_j(x;a_j) = \phi_j(a_j^\top x) + \lambda\|x\|^2.$
        \qquad ${}^3$ {\color{mydarkgreen} GFS} = {\color{mydarkgreen} G}eneral {\color{mydarkgreen} F}inite {\color{mydarkgreen} S}um.
        \item ${}^4$ Moral Smoothness: $\|\nabla^2 f(x)\nabla f(x) - \nabla^2 f(y)\nabla f(y)\| \le L \|x-y\|$.
        \qquad ${}^5$ Applies to local loss functions for all clients.
        \item ${}^\dagger$ {\color{blue} Partial Participation, Globalization (via Line Search and Cubic Regularization) and Bidirectional Compression.}
        }
    \end{tablenotes}             
    \end{table*}
}

As we can see from the table, in contrast to \algname{FedNL}, the other methods suffer at least one of the following issues:
\begin{itemize}
\item Theoretical analysis does not cover general finite sum problems (\algname{GIANT} and \algname{NL}).
\item Communication cost per client/iteration is high (\algname{DAN} and Quantized Newton).
\item Convergence rate either depends on condition number (\algname{GIANT} and \algname{DINGO}) or the number of data points (NL) or is not explicit/clear (\algname{DAN-LA}).
\item Privacy is broken by directly revealing local training data (\algname{NL}).
\end{itemize}

We do not compare with algorithms \algname{DANE} \citep{DANE} and its accelerated variant \algname{AIDE} \citep{Reddi:2016aide} since they are first-order methods. This means that convergence rates depend on the conditioning of the problem and hence are worse than what we prove for \algname{FedNL}. Moreover, \algname{DANE} does not work well for heterogeneous datasets - the analysis and experimental evidence of \algname{DANE} only shows benefits in a sufficiently homogeneous data regime. On the other hand, our concern is the heterogeneous data regime typical to FL.
We also omit \algname{DiSCO} \citep{DiSCO} from our empirical study because the problem setup is restricted to homogeneous data distribution regime, generalized linear models and convergence rates depend on the conditioning of the problem. Furthermore, the authors of \algname{DINGO} experimentally showed that \algname{DINGO} outperforms methods like \algname{DiSCO} and \algname{GIANT}, and this is why we focused on comparing to \algname{DINGO}.

Next, we compare several first and second order methods based on their communication complexity, defined as the total number of bits sent from a client to the server to achieve some prescribed accuracy $\epsilon$. For this purpose, we use sparsification as an example of a compressor in most cases. For all methods supporting sparsification, we have used the sparsification, which reduces the number of communicated floats by the factor of $d$ compared to the non-compressed variant of the method. That is, for gradient based methods DCGD, DIANA and ADIANA, we have used the Rand-1 sparsifier, which compresses $\cO(d)$ gradient to $\cO(1)$.

To transform the local superlinear convergence rate \eqref{rate:local-superlinear-iter} of FedNL into an iteration complexity, we proceed as follows. Let $r_{k+1} \le C (1-\rho)^k r_k$, where $r_k = \|x^k - x^*\|^2$, $\rho\in(0,1)$ and $C>0$ is some constant. Note that if $k\ge\frac{2}{\rho}\log\frac{1}{C}$, then $(1-\rho)^{\nicefrac{k}{2}}C \le 1$. Hence, after $\cO(\frac{1}{\rho})$ iteration we have $r_{k+1} \le (1-\rho)^{\nicefrac{k}{2}} r_k$. Unraveling the recursion we get
$$
r_k \le (1-\rho)^{\frac{k-1}{2}} (1-\rho)^{\frac{k-2}{2}} \dots (1-\rho)^{\frac{1}{2}} r_0 = (1-\rho)^{\frac{k(k-1)}{2}} r_0.
$$
Therefore, FedNL needs $\cO\(\sqrt{\frac{1}{\rho}\log\frac{1}{\epsilon}}\)$ number of iterations to achieve $\epsilon$-accuracy. For FedNL, we used step-size $\alpha=1$ (see Assumption \ref{asm:comp-1}(ii) and also \eqref{ABCD}) and matrix sparsification described in Appendix \ref{apx:extra-exp-topk}, which compresses $\cO(d^2)$ Hessian down to $\cO(d)$ (i.e., $\delta = \frac{1}{d}$). With this choice we get $\frac{1}{\rho} = \cO(d)$ and the iteration of FedNL becomes $\cO\(\sqrt{d\log\frac{1}{\epsilon}}\)$. For Newton Learn (NL), $\frac{1}{\rho} = \cO(\#\text{data})$, where $\#\text{data}$ is the number of data points in each device. DAN and Quantized Newton use their own bespoke ways of compressing communication. Table \ref{tbl:comm-complexity} provides the details, from which we make the following observations:

{
    \begin{table*}[!h]
    \scriptsize
    \addtolength{\tabcolsep}{-3pt} 
        \centering
        \caption{Theoretical comparison of 3 gradient-based and 5 second-order methods. The last column (communication complexity) is the product of the previous two columns and is the key quantity to be compared.}
        \label{tbl:comm-complexity}
        \renewcommand{\arraystretch}{1.7}
        \begin{tabular}{|c|c|c|c|}
        \hline
        \makecellnew{Method}
        & \makecellnew{\# Communication Rounds}
        & \makecellnew{Comm. Cost \\ per Round}
        & \makecellnew{Communication \\ Complexity} \\
        \hline
\hline
            \makecellnew{Gradient Descent$^1$}
            & $\cO(\kappa\log\frac{1}{\epsilon})$
            & $\cO(d)$
            & $\cO(d\kappa\log\frac{1}{\epsilon})$\\
\hline
            \makecellnew{DCGD$^1$ \\ \cite{DCGD}}
            & $\cO\(\frac{d\sigma^*}{n\mu^2}\frac{1}{\epsilon} \log\frac{1}{\epsilon} \)$
            & $\cO(1)$
            & $\cO\(\frac{d\sigma^*}{n\mu^2}\frac{1}{\epsilon} \log\frac{1}{\epsilon} \)$\\
\hline
            \makecellnew{DIANA$^1$ \\ \cite{DIANA}}
            & $\cO\(\(d+\kappa+\kappa\frac{d}{n}\)\log\frac{1}{\epsilon}\)$
            & $\cO(1)$
            & $\cO\(\(d+\kappa+\kappa\frac{d}{n}\)\log\frac{1}{\epsilon}\)$\\
\hline
            \makecellnew{ADIANA$^1$ \\ \cite{ADIANA}}
            & $\cO\(\(d + \sqrt{\kappa} + \sqrt{\(\frac{d}{n} + \sqrt{\frac{d}{n}}\)d\kappa} \)\log\frac{1}{\epsilon}\)$
            & $\cO(1)$
            & $\cO\(\(d + \sqrt{\kappa} + \sqrt{\(\frac{d}{n} + \sqrt{\frac{d}{n}}\)d\kappa} \)\log\frac{1}{\epsilon}\)$\\
\hline
            \makecellnew{Newton}
            & $\cO(\log\log\frac{1}{\epsilon})$
            & $\cO(d^2)$
            & $\cO(d^2\log\log\frac{1}{\epsilon})$\\
\hline
            \makecellnew{DAN$^1$ \\ \cite{DAN-LA2020}}
            & $\cO\(\frac{\HS}{\mu^2} + \log\log\frac{1}{\epsilon}\)$
            & $\cO(d^2)$
            & $\cO\(d^2\(\frac{\HS}{\mu^2} + \log\log\frac{1}{\epsilon}\)\)$\\
\hline
            \makecellnew{Quantized Newton \\ \citep{Alimisis2021QNewton}}
            & $\cO(\log\frac{1}{\epsilon})$
            & $\widetilde{\cO}(d^2)$
            & $\widetilde{\cO}(d^2\log\frac{1}{\epsilon})$\\
\hline
            \makecellnew{NL \\ \citep{Islamov2021NewtonLearn}}
            & $\cO\(\sqrt{\#\text{data}} \sqrt{\log\frac{1}{\epsilon}}\)$
            & $\cO(d)$
            & $\cO\(d\sqrt{\#\text{data}} \sqrt{\log\frac{1}{\epsilon}}\)$\\
\hline
            \makecellnew{\bf FedNL \\ \bf (this work; \eqref{rate:local-linear-iter})}
            & $\cO\(\log\frac{1}{\epsilon}\)$
            & $\cO(d)$
            & $\cO\(d \log\frac{1}{\epsilon}\)$\\
\hline
            \makecellnew{\bf FedNL \\ \bf (this work; \eqref{rate:local-superlinear-iter})}
            & $\cO\(\sqrt{d} \sqrt{\log\frac{1}{\epsilon}}\)$
            & $\cO(d)$
            & $\cO\(d\sqrt{d} \sqrt{\log\frac{1}{\epsilon}}\)$\\
\hline
        \end{tabular}   
        \begin{tablenotes}
        {\scriptsize     
        \item ${}^1$ These methods have global rates.
        \quad ${}^2$ DCGD, DIANA and ADIANA are first order methods.
        \item ${}^3$ Newton, DAN, Quantized Newton, NL and FedNL are second order methods.
        \item ${}^4$ $\kappa$ is the condition number: $\kappa = \frac{L}{\mu}$ where $L$ is a smoothness constant and $\mu$ is the strong convexity constant.
        }
        \end{tablenotes}             
    \end{table*}
}


\begin{itemize}

\item {\bf FedNL achieves better communication complexity than Newton whenever $d > \frac{\log\frac{1}{\epsilon}}{\(\log\log\frac{1}{\epsilon}\)^2}$.} For example, if we set $\epsilon = 10^{-10}$, then this requirement means $d>10$, and hence is not restrictive. So, virtually in all situations of practical interest, FedNL is better than Newton. The improvement is more pronounced with larger $d$, and is approximately of the size $\cO(\sqrt{d})$. So, for 
$d=10^6$, for example, FedNL finds the solution using approximately 1000 times less communicated bits than Newton.

\item {\bf FedNL achieves better communication complexity than Gradient Descent whenever $\kappa > \frac{\sqrt{d}}{\sqrt{\log\frac{1}{\epsilon}}}$.} So, FedNL is better when the condition number $\kappa$ is large enough. This is expected, since FedNL complexity does not depend on the condition number. The advantage of FedNL grows if $d$
or $\epsilon$ are smaller.

\item ADIANA is known to have the state of the art complexity (in the strongly convex regime) among all first order method, and hence we do not need to compare FedNL to DCGD and DIANA, which are both inferior to ADIANA. It is clear that FedNL can beat ADIANA as well since the complexity of ADIANA depends on $\kappa$. {\bf So, for large enough $\kappa$, FedNL is better than ADIANA. For example, a simple sufficient condition for this to happen is to require $\kappa > d^3$} (this can be refined, but the expression will become uglier). Likewise, FedNL has square root dependence on $\log\frac{1}{\epsilon}$, and hence it becomes better than ADIANA if $\epsilon$ is sufficiently small (and other terms are kept constant).

\item Neither Quantized Newton nor DAN improve on Newton in communication complexity (but may be better in practice). We already explained that FedNL improves on Newton.

\item We do not include GIANT in the table since GIANT does not work in the heterogeneous data regime, which is critical to FL and our paper. We do not include DINGO in the table since its rate depends on various iterate-dependent assumptions which make the analysis convoluted. It is not clear that such assumptions can actually be satisfied. Their rates are not explicit - it is not possible to compare to them.

\end{itemize}

It worths noting that our work is not just about communication complexity. In fact, our contributions go far beyond this, and we make it clear in the paper. Our work is the first serious attempt to make second order methods applicable to federated learning in the sense that we address many issues which previously made second order methods inapplicable to FL. We support compression of matrices, rudimentary privacy protection (by not revealing data), partial participation, compression of (Hessian corrected) gradients, compression of model (at the master), arbitrary (strongly convex) finite sum problems rather than generalized linear models only, arbitrary contractive compressors, two globalization strategies and more. The best way to judge our contribution to the literature is via comparison to the NewtonLearn work of \cite{Islamov2021NewtonLearn} as their work is the closest work to ours and was the SOTA second order method supporting communication compression before our work. We have made a very detailed comparison to their work, including tables.

\section{Extra Experiments}

We carry out numerical experiments to study the performance of \algname{FedNL}, and compare it with various state-of-the-art methods in federated learning. We consider the following problem
\begin{equation}
    \min\limits_{x\in\R^d}\left\{\frac{1}{n}\sum\limits_{i=1}^n f_i(x) +\frac{\lambda}{2}\|x\|^2\right\}, \quad f_i(x) = \frac{1}{m}\sum_{j=1}^m\log\(1+\exp(-b_{ij}a_{ij}^\top x)\),
\end{equation}
where $\{a_{ij},b_{ij}\}_{j\in [m]}$ are data points at the $i$-th device. 

\subsection{Data sets}
The datasets were taken from LibSVM library \citep{chang2011libsvm}: \dataname{a1a}, \dataname{a9a}, \dataname{w7a}, \dataname{w8a}, \dataname{phishing}. We partitioned each data set across several nodes to capture a variety of scenarios. See Table \ref{tab:datasets} for more detailed description of data sets settings.

\begin{table}[h]
    \caption{Data sets used in the experiments with the number of worker nodes $n$ used in each case.}
    \label{tab:datasets}
    \centering
    \begin{tabular}{|l|r|r|r|}
        \toprule
        {\bf Data set} & {\# workers} $n$ & {\# data points} ($=nm$) & {\# features} $d$                 \\
        \midrule
        {\tt a1a} & $16$ & $1600$ & $123$\\ \hline
        {\tt a9a} & $80$ & $32560$ & $123$\\ \hline
        {\tt w7a} & $50$ & $24600$ & $300$\\ \hline
        {\tt w8a} & $142$ & $49700$ & $300$\\ \hline
        {\tt phishing} & $100$ & $110$ & $68$\\
        {\tt madelon} & $10$ & $2000$ & $500$\\
        \bottomrule
    \end{tabular}
\end{table}

\subsection{Parameters setting}
In all experiments we use theoretical parameters for gradient type methods (except those with line search procedure): vanilla gradient descent, \algname{DIANA} \citep{DIANA}, \algname{ADIANA} \citep{ADIANA}, and Shifted Local gradient descent \citep{Gorbunov2020localSGD}. The constants for \algname{DINGO} \citep{DINGO} are set as the authors did: $\theta=10^{-4}, \phi=10^{-6}, \rho=10^{-4}$. Backtracking line search for \algname{DINGO} selects the largest stepsize from $\{1,2^{-1},\dots,2^{-10}\}.$ The initialization of $\mH^0_i$ for \algname{NL1} \citep{Islamov2021NewtonLearn}, \algname{FedNL}, \algname{FedNL-LS}, and \algname{FedNL-PP} is $\nabla^2 f_i(x^0)$, and for \algname{FedNL-CR} is $\mathbf{0}$.

We conduct experiments for two values of regularization parameter $\lambda\in \{10^{-3}, 10^{-4}\}$. In the figures we plot the relation of the optimality gap $f(x^k)-f(x^*)$ and the number of communicated bits per node or the number of communication rounds. The optimal value $f(x^*)$ is chosen as the function value at the $20$-th iterate of standard Newton's method.

\subsection{Compression operators}\label{sec:comp-op}

Here we describe four compression operators that are used in our experiments.

\subsubsection{Random dithering for vectors}

For first order methods \algname{ADIANA} and \algname{DIANA} we use random dithering operator \citep{QSGD, DIANA-VR}. This compressor with $s$ levels is defined via the following formula
\begin{equation}
    \cC(x) \eqdef \text{sign}(x) \cdot \|x\|_q \cdot \frac{\xi_s}{s},
\end{equation}
where $\|x\|_q \eqdef \(\sum_i |x_i|^q\)^{1/q}$ and $\xi_s \in \R^d$ is a random vector with $i$-th element defind as follows
\begin{equation}
    (\xi_s)_i = \begin{cases}
        l+1 & \text{with probability } \frac{|x_i|}{\|x\|_q}s-l,\\
        l & \text{otherwise}.
    \end{cases}
\end{equation}
Here $s \in \N_+$ denotes the levels of rounding, and $l$ satisfies $\frac{|x_i|}{\|x\|_q} \in \[\frac{l}{s}, \frac{l+1}{s}\]$. According to \citep{DIANA-VR}, this compressor has the variance parameter $\omega \le 2+\frac{d^{1/2}+d^{1/q}}{s}.$ However, for standard euclidean norm ($q=2$) one can improve the bound by $\omega \leq \min\left\{\frac{d}{s^2}, \frac{\sqrt{d}}{s}\right\}$ \citep{QSGD}.

\subsubsection{Rank-$R$ compression operator for matrices}

Our theory supports contractive compression operators; see Definition~\ref{def:class-contractive}. In the experiments for \algname{FedNL} we use Rank-$R$ compression operator. Let $\mathbf{X}\in \mathbb{R}^{d\times d}$ and $\mathbf{U} \Sigma \mathbf{V}^\top$ be the singular value decomposition of $\mathbf{X}$:
\begin{equation}\label{eq:b97gs_9098fd}
    \mathbf{X} = \sum\limits_{i=1}^d \sigma_i u_iv_i^\top,
\end{equation}
where the singular values $\sigma_i$ are sorted in non-increasing order: $\sigma_1\geq \sigma_2 \geq \cdots \geq \sigma_d$. Then, the Rank-$R$ compressor, for $R \le d$, is  defined by
\begin{equation}\label{eq:n988fdgfd}
    \cC(\mathbf{X}) \eqdef \sum\limits_{i=1}^{R} \sigma_i u_iv_i^\top.
\end{equation}
Note that
$$\|\mathbf{X} \|^2_{\rm F} \overset{\eqref{eq:b97gs_9098fd}}{=} \left\| \sum\limits_{i=1}^{d} \sigma_i u_iv_i^\top  \right\|^2_{\rm F}   = \sum_{i=1}^d \sigma_i^2$$
and
$$\|\cC(\mathbf{X})-\mathbf{X} \|^2_{\rm F}  \overset{\eqref{eq:b97gs_9098fd}+\eqref{eq:n988fdgfd}}{=}  \left\| \sum\limits_{i=R+1}^{d} \sigma_i u_iv_i^\top  \right\|^2_{\rm F}   = \sum_{i=R+1}^d \sigma_i^2 .$$
Since 
$\frac{1}{d-R}\sum_{i=R+1}^d \sigma_i^2 \leq \frac{1}{d}\sum_{i=1}^d \sigma_i^2$, we have 
$$\|\cC(\mathbf{X})-\mathbf{X} \|^2_{\rm F}  \leq \frac{d-R}{d} \| \mathbf{X} \|^2_{\rm F} = \left(1 - \frac{R}{d}\right) \| \mathbf{X} \|^2_{\rm F} ,$$
and hence the Rank-$R$ compression operator belongs to $\mathbb{C}(\delta)$ with $\delta = \frac{R}{d}.$ In case when $\mathbf{X} \in \mathbb{S}^d$, we have $u_i=v_i$ for all $i\in [d]$, and Rank-$R$ compressor on matrix $\mathbf{X}$ transforms to $\sum_{i=1}^R \sigma_i u_i u_i^\top$, i.e., the output of Rank-$R$ compressor is automatically a symmetric matrix, too.

\subsubsection{Top-$K$ compression operator for matrices}\label{apx:extra-exp-topk}

Another example of contractive compression operators is Top-$K$ compressor for matrices. For arbitrary matrix $\mathbf{X} \in \R^{d\times d}$ let sort its entires in non-increasing order by magnitude, i.e., $X_{i_k, j_k}$ is the $k$-th maximal element of $\mathbf{X}$ by magnitude. Let's  $\{\mathbf{E}_{ij}\}_{i,j=1}^d$ me matrices for which 
\begin{equation}
    \(\mathbf{E}_{ij}\)_{ps} \eqdef \begin{cases}1, & \text{if } (p,s) = (i,j),\\
    0, & \text{otherwise.}  \end{cases}
\end{equation}  
Then, the Top-$K$ compression operator can be defined via
\begin{equation}
    \cC(\mathbf{X}) \eqdef \sum\limits_{k=1}^K X_{i_k,j_k}E_{i_k,j_k}.
\end{equation}
This compression operator belongs to $\mathbb{C}(\delta)$ with $\delta = \frac{K}{d^2}$. If we need to keep the output of Top-$K$ on symmetric matrix $\mathbf{X}$ to be symmetric matrix too, then we apply Top-$K$ compressor only on lower triangular part of $\mathbf{X}$.

\subsubsection{Rand-$K$ compression operator for matrices}\label{apx:extra-exp-randk}

Our theory also supports unbiased compression operators; see Definition~\ref{def:class-unbiased}. One of the examples is Rand-$K$. For arbitrary matrix $\mathbf{X} \in \R^{d\times d}$ we choose a set $\mathcal{S}_K$ of indexes $(i,j)$  of cardinality $K$ uniformly at random. Then Rand-$K$ compressor can be defined via
\begin{equation}
    \cC\(\mathbf{X}\)_{ij} \eqdef \begin{cases}\frac{d^2}{K}X_{ij} & \text{if } (i,j) \in \mathcal{S}_K,\\
    0 & \text{otherwise}. \end{cases}
\end{equation}
This compression operator belongs to $\mathbb{B}(\omega)$ with $\omega = \frac{d^2}{K}-1$. If we need to make the output of this compressor to be symmetric matrix, then we apply this compressor only on lower triangular part of the input.

\subsection{Projection onto the cone of positive definite matrices}\label{sec:cone-proj}

If one uses \algname{FedNL} with Option~$1$, then we need to project onto the cone of symmetric and positive definite matrices with constant $\mu$: $$\{\mathbf{M}\in \R^{d\times d}: \mM^\top = \mM,\; \mathbf{M} \succeq \mu\mI\}.$$ The projection of symmetric matrix $\mathbf{X}$ onto the cone of positive semidefinite matrices can by computed via
\begin{equation}
    \[\mathbf{X}\]_0 \eqdef \sum\limits_{i=1}^d \max\{\lambda_i,0\}u_iu_i^\top,
\end{equation}
where $\sum_i \lambda_iu_iu_i^\top$ is an eigenvalue decomposition of $\mathbf{X}$. Using the projection onto the cone of positive semidefinite matrices we can define the projection onto the cone of positive definite matrices with constant $\mu$ via
\begin{equation}
    \[\mathbf{X}\]_\mu \eqdef \[\mathbf{X}-\mu\mI\]_0 + \mu\mI.
\end{equation}

\subsection{The effect of compression}

First, we investigate how the level of compression influences the performance of \algname{FedNL}; see Figure~\ref{fig:comp_operators}. Here we study the performance for three types of compression operators: Rank-$R$, Top-$K$, and PowerSGD of rank $R$. According to numerical experiments, the smaller parameter is, the better performance of \algname{FedNL} is. This statement is true for all three types of compressors.

\begin{figure}[t]
    \begin{center}
        \begin{tabular}{cccc}
            \includegraphics[width=0.22\linewidth]{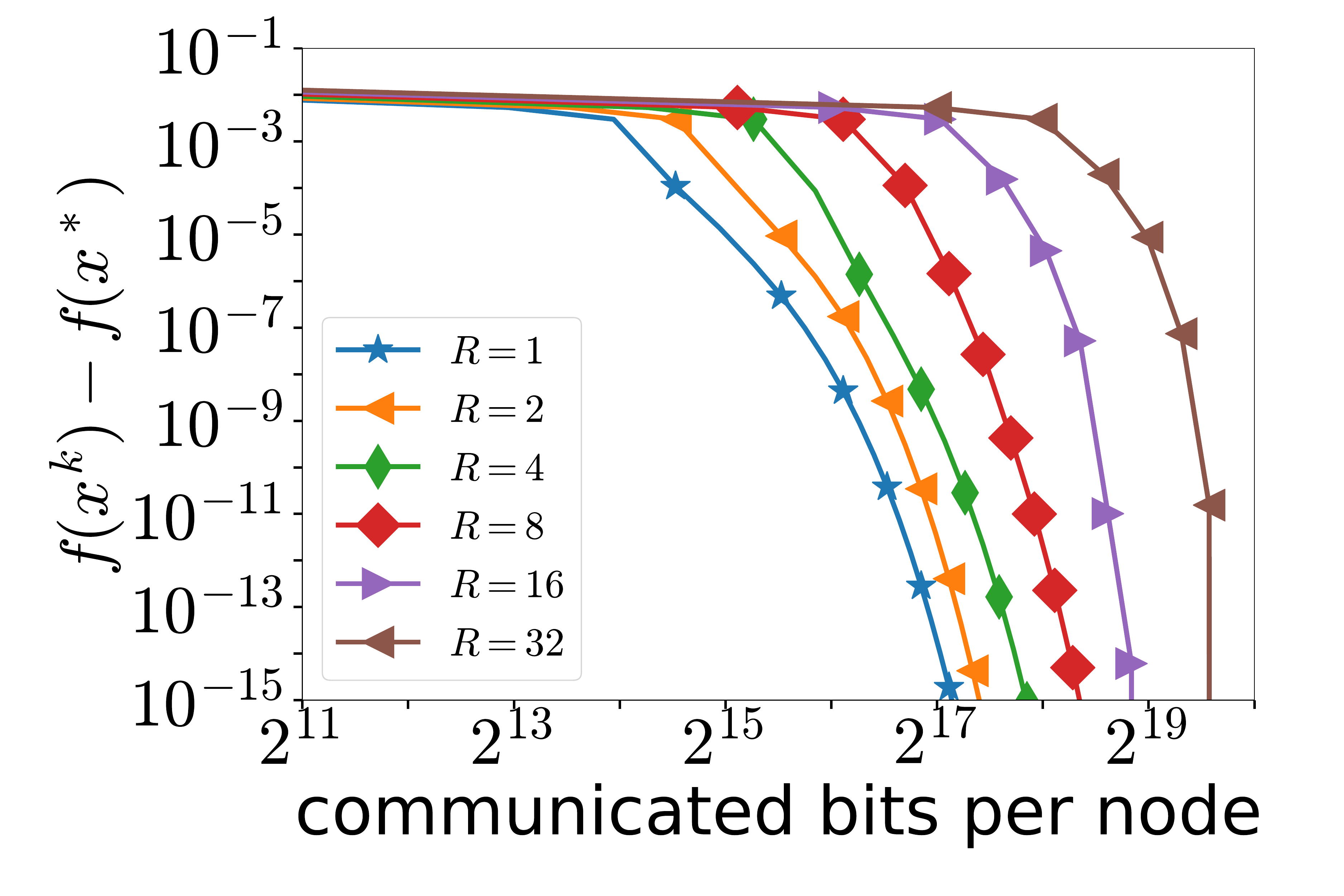} &
            \includegraphics[width=0.22\linewidth]{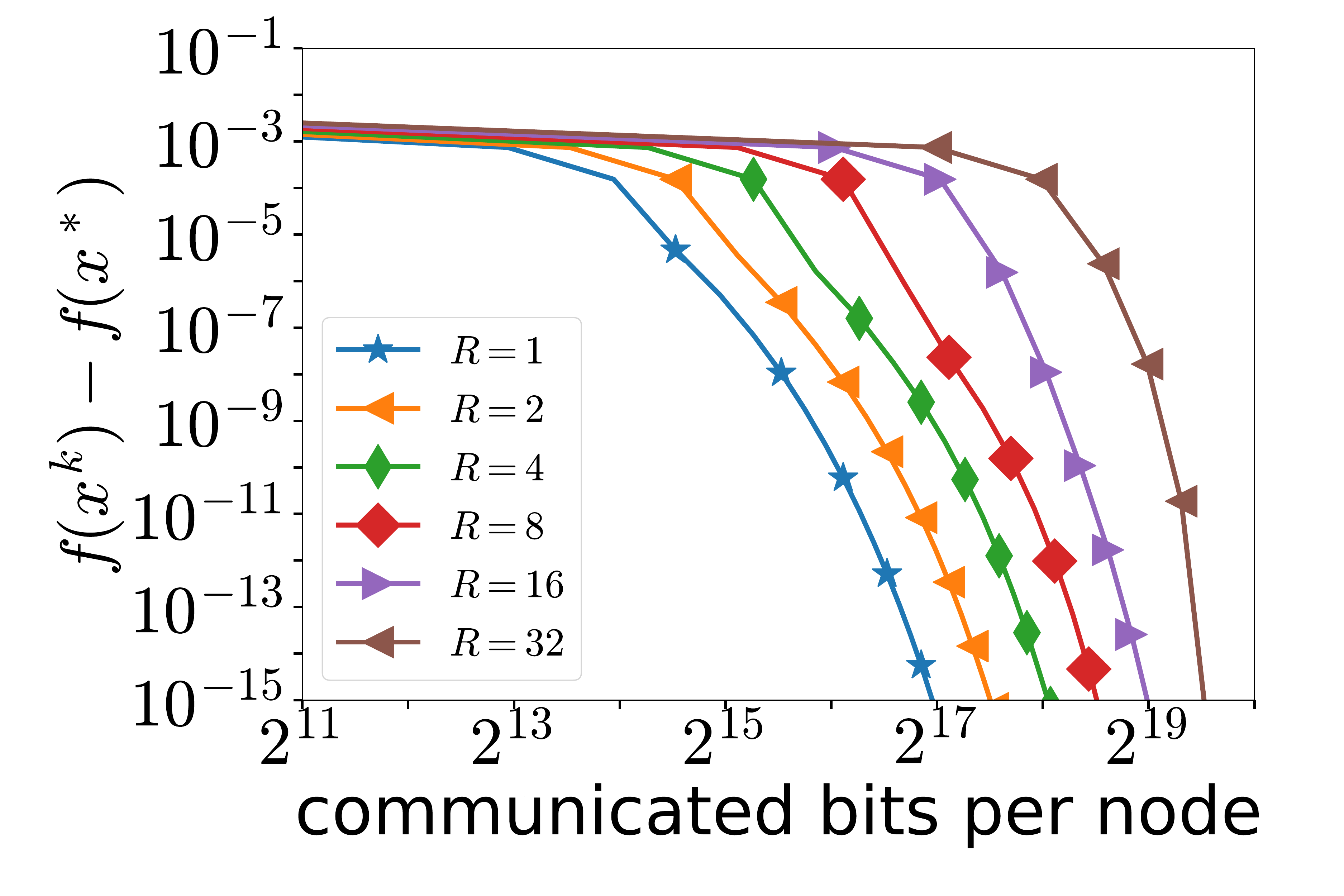} &
            \includegraphics[width=0.22\linewidth]{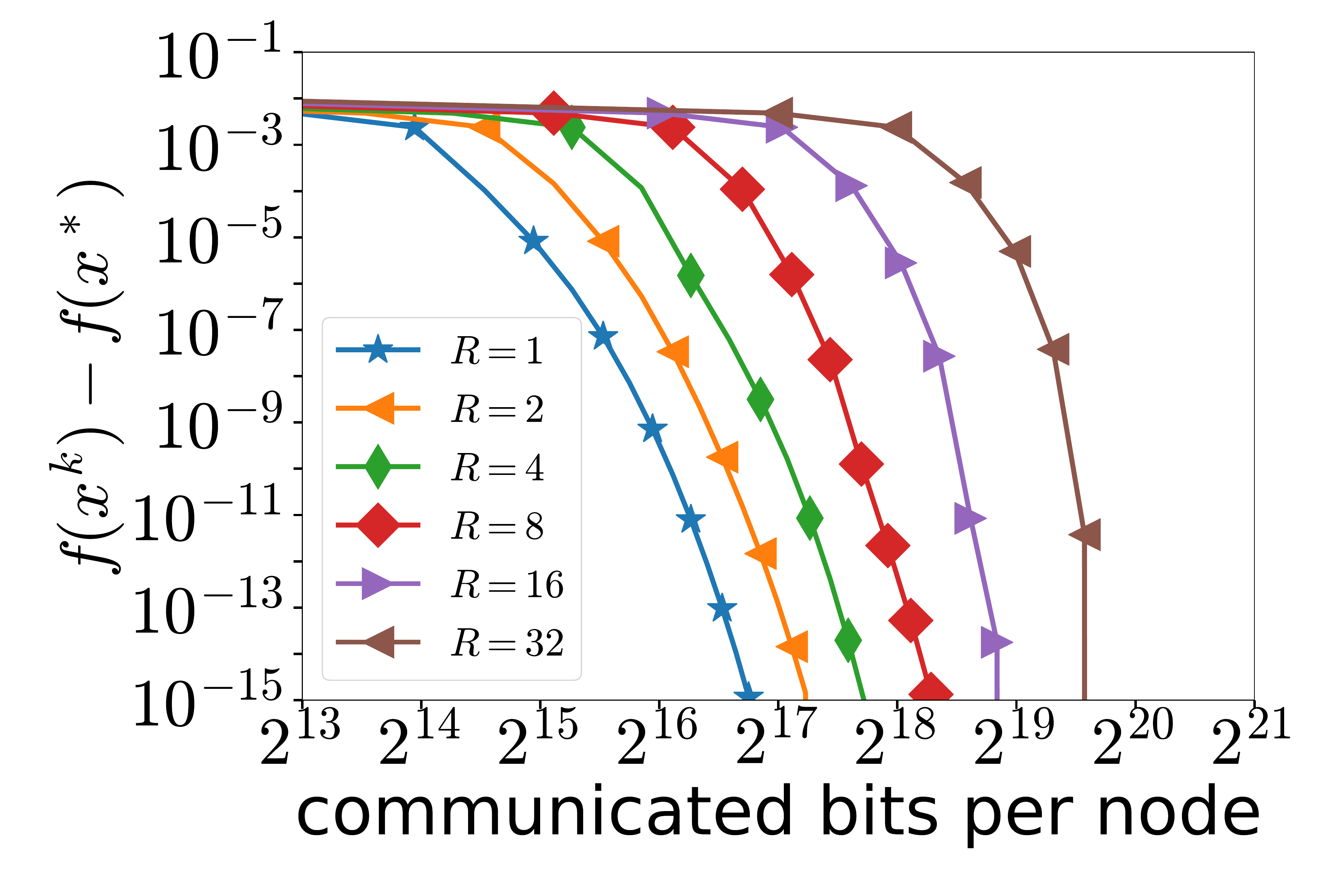} &
            \includegraphics[width=0.22\linewidth]{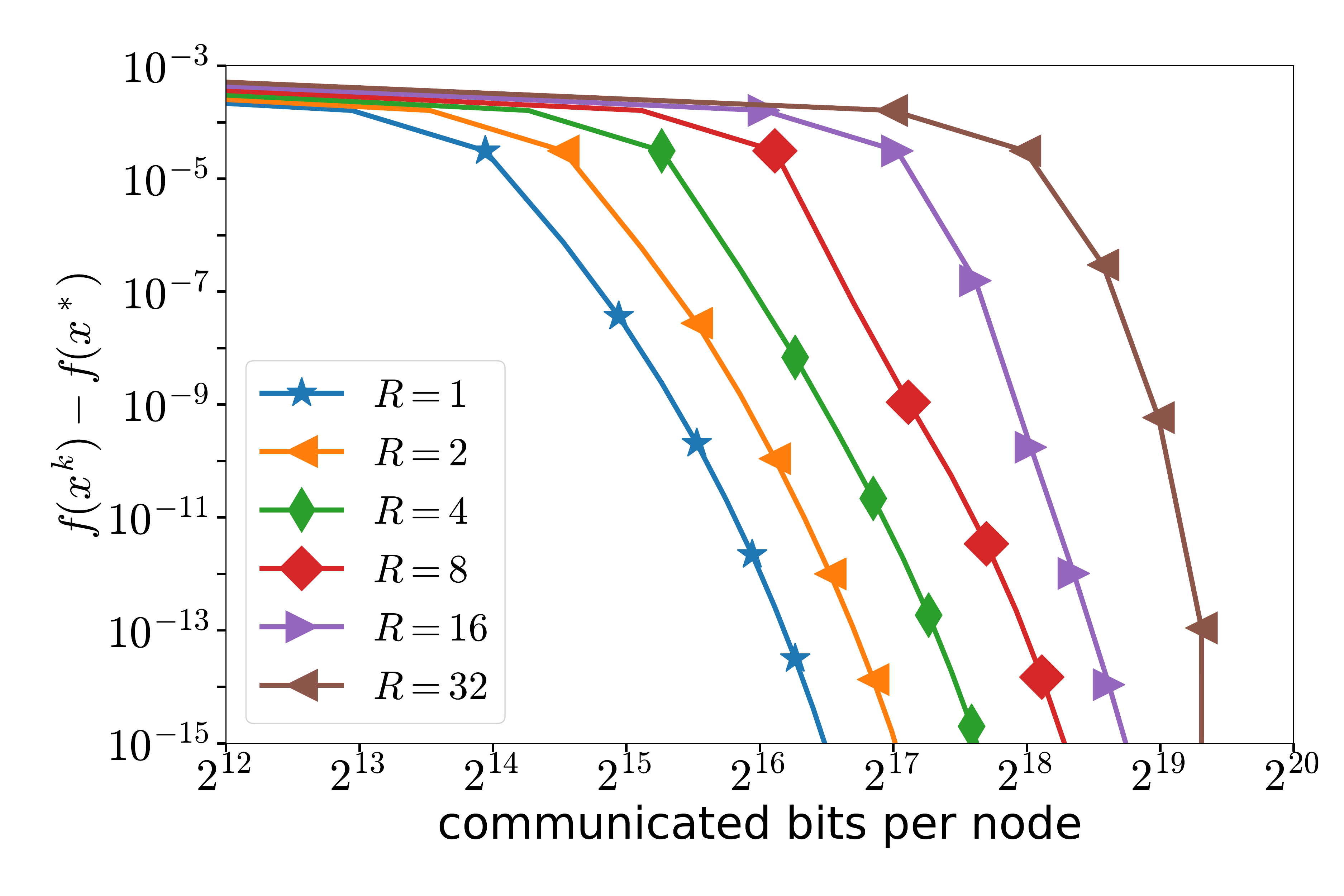}\\
            \dataname{a1a}, $\lambda=10^{-3}$ &
            \dataname{a1a}, $\lambda=10^{-4}$ &
            \dataname{a9a}, $\lambda=10^{-3}$ &
            \dataname{a9a}, $\lambda=10^{-4}$\\
            \includegraphics[width=0.22\linewidth]{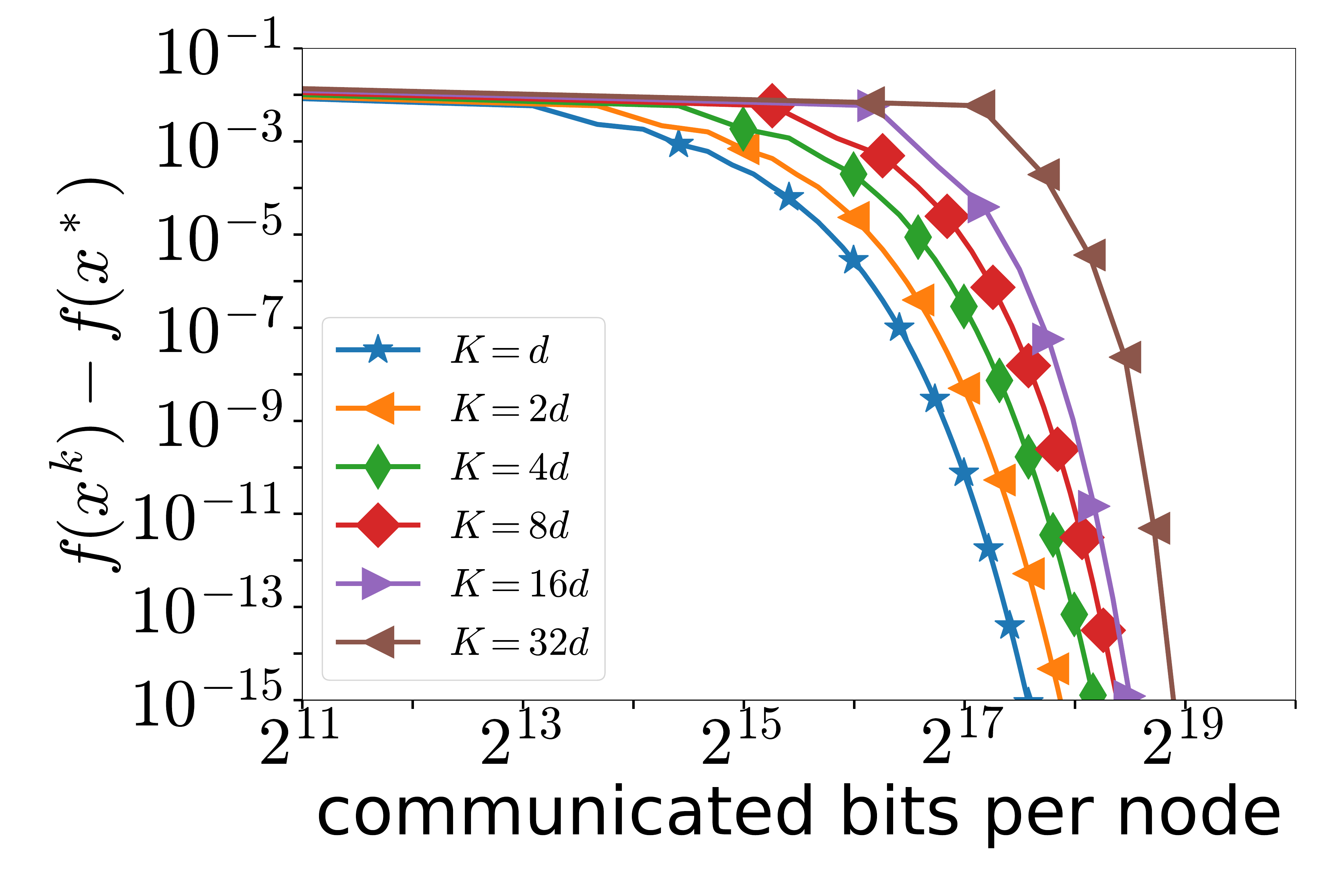} & 
            \includegraphics[width=0.22\linewidth]{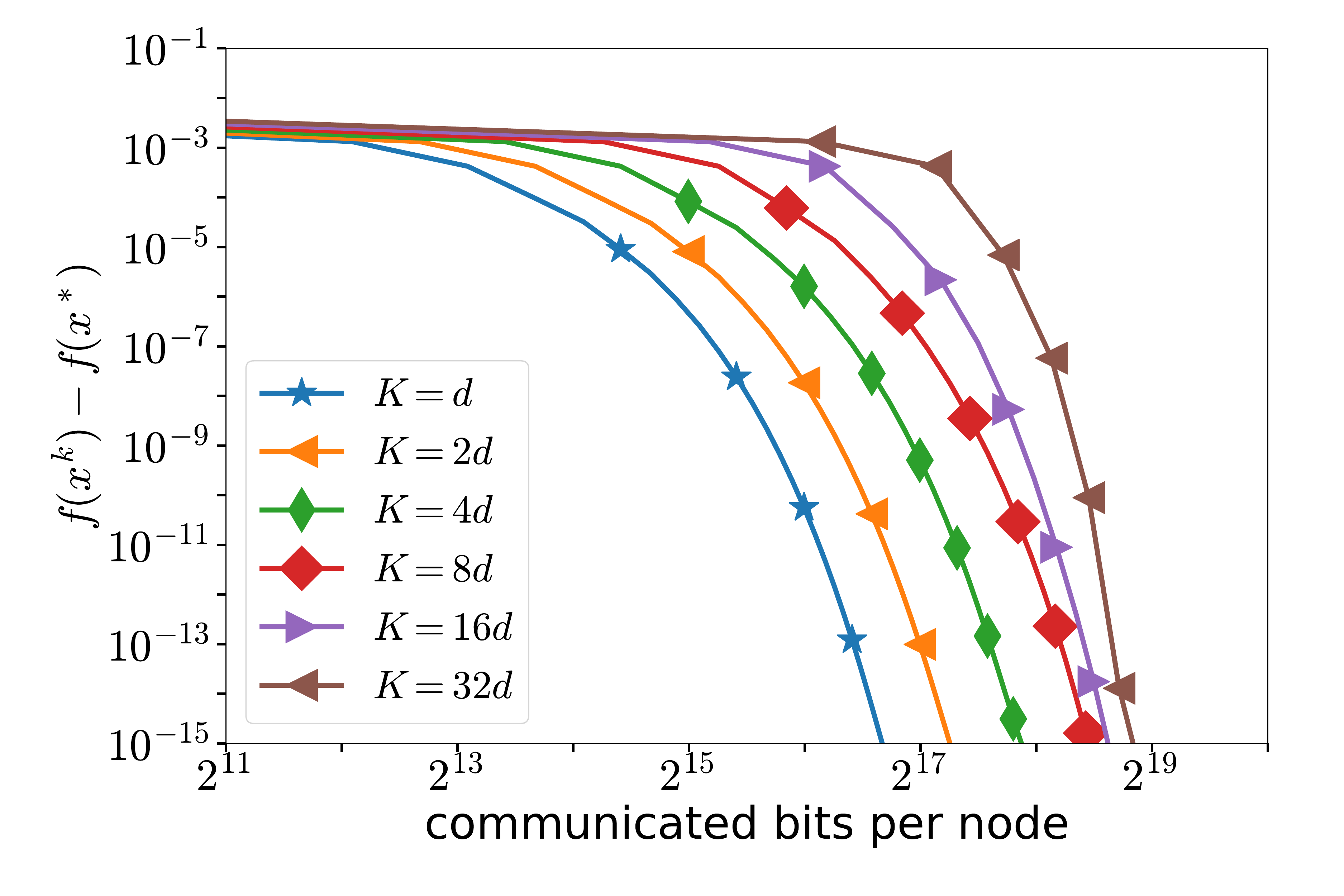} & 
            \includegraphics[width=0.22\linewidth]{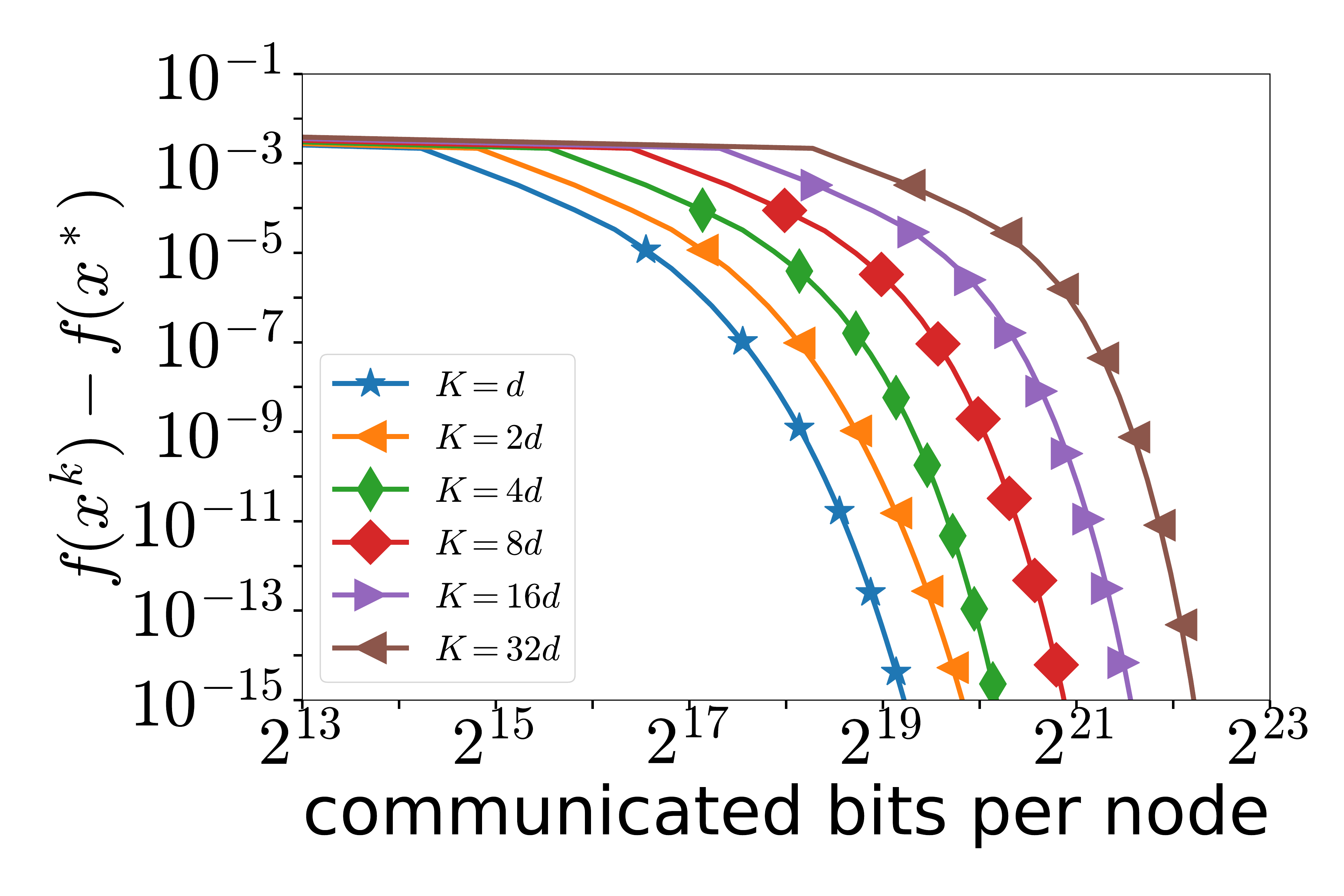} & 
            \includegraphics[width=0.22\linewidth]{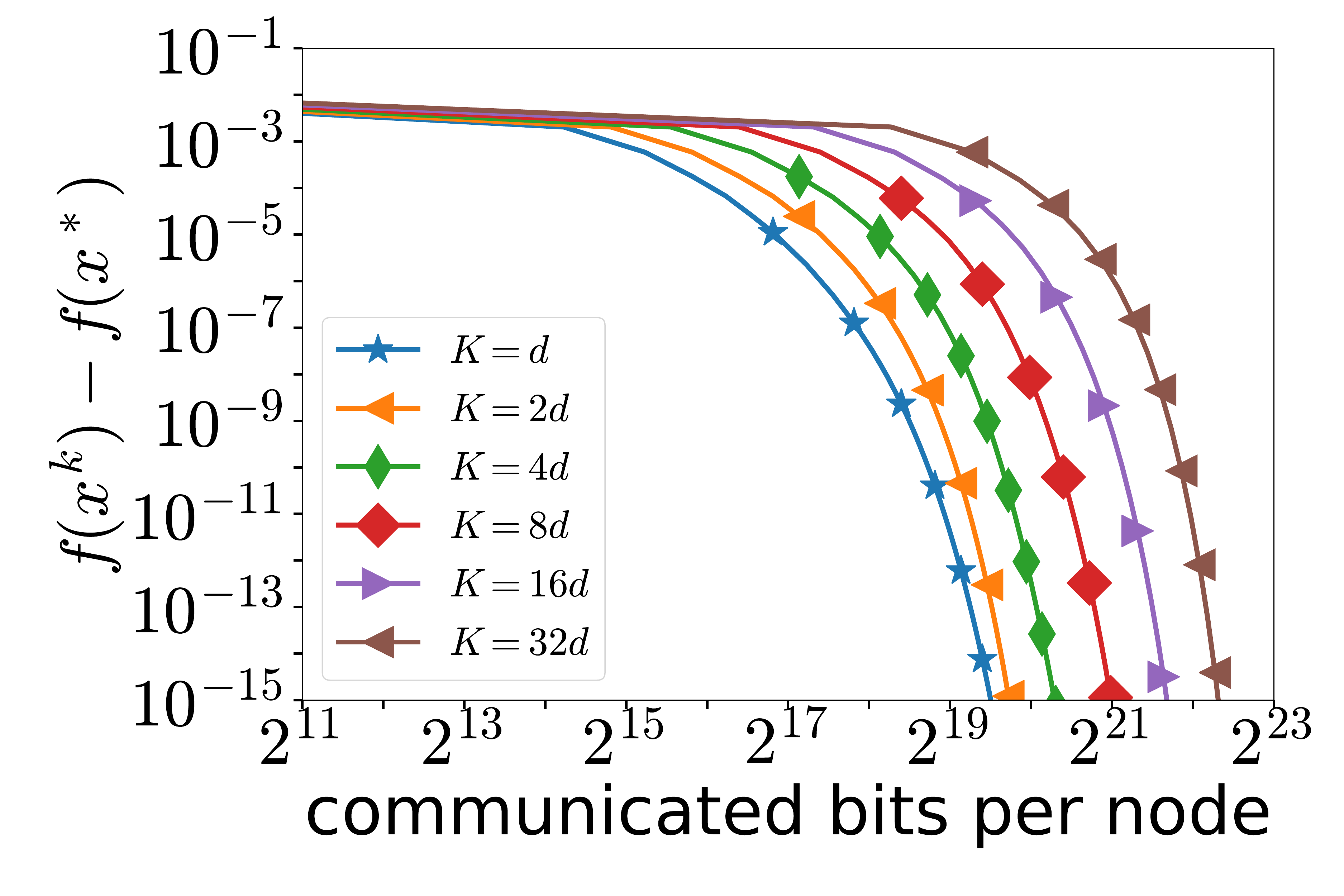}\\
            \dataname{phishing}, $\lambda=10^{-3}$ &
            \dataname{phishing}, $\lambda=10^{-4}$ &
            \dataname{w7a}, $\lambda=10^{-3}$ &
            \dataname{w8a}, $\lambda=10^{-3}$\\
            \includegraphics[width=0.22\linewidth]{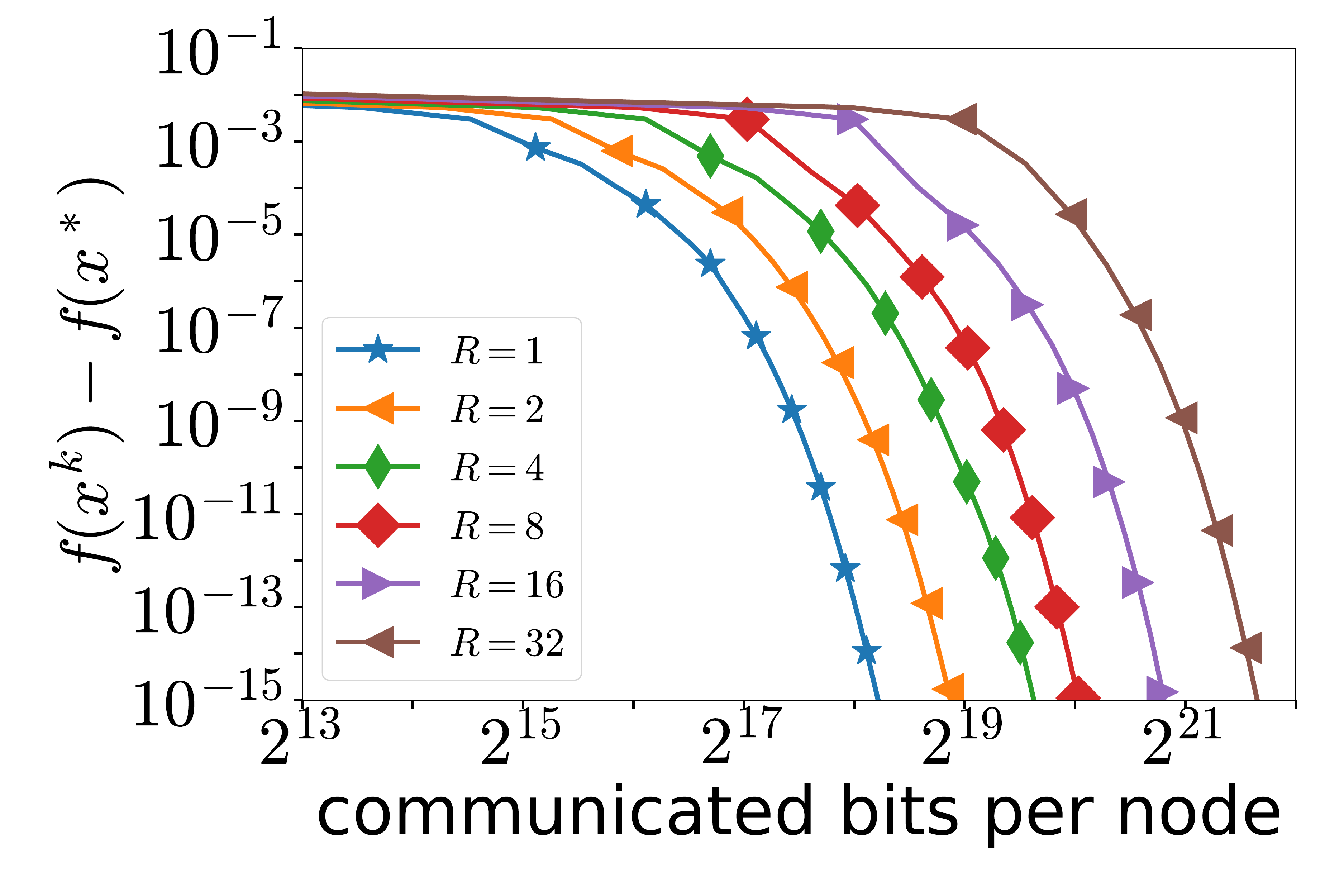} &
            \includegraphics[width=0.22\linewidth]{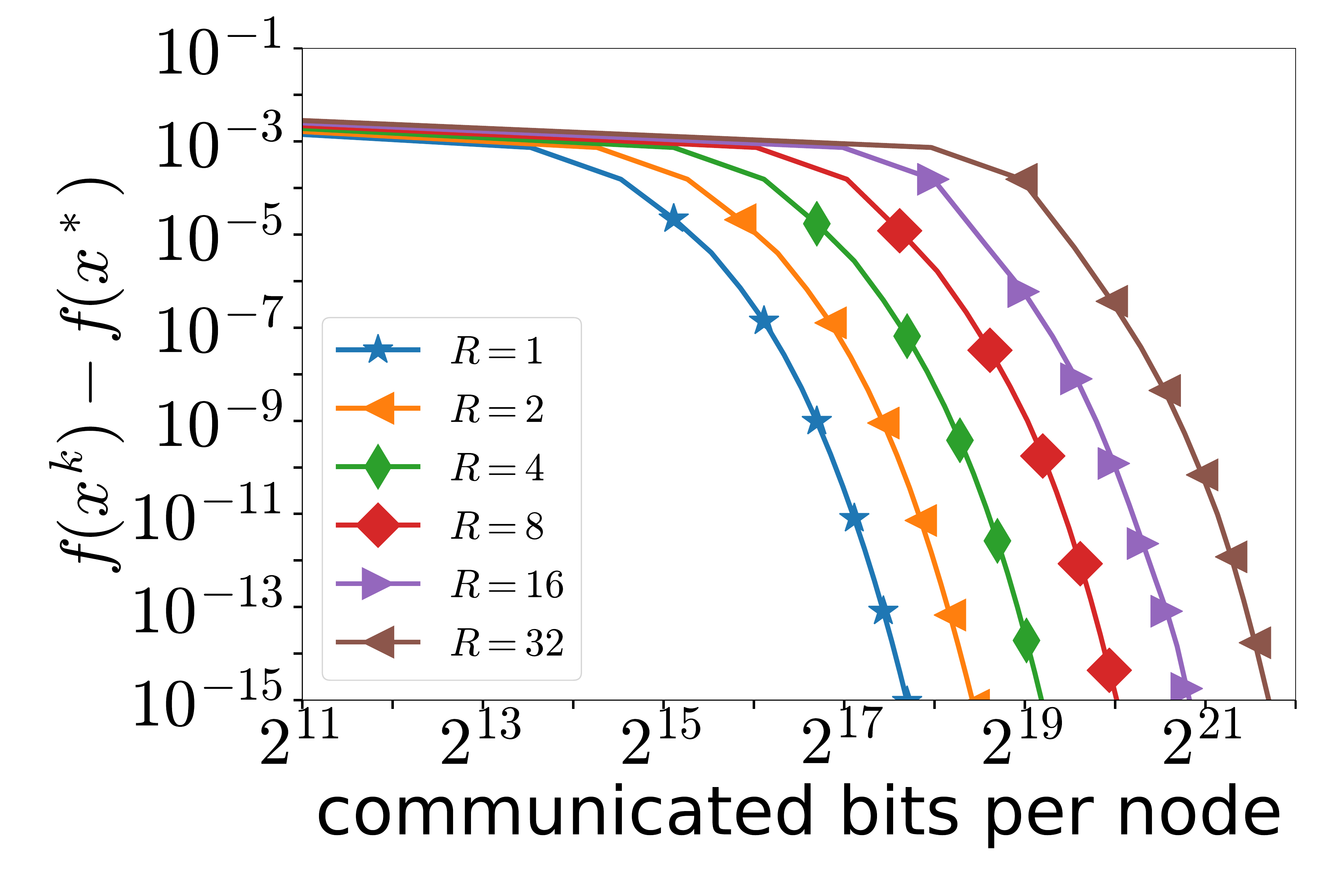} &
            \includegraphics[width=0.22\linewidth]{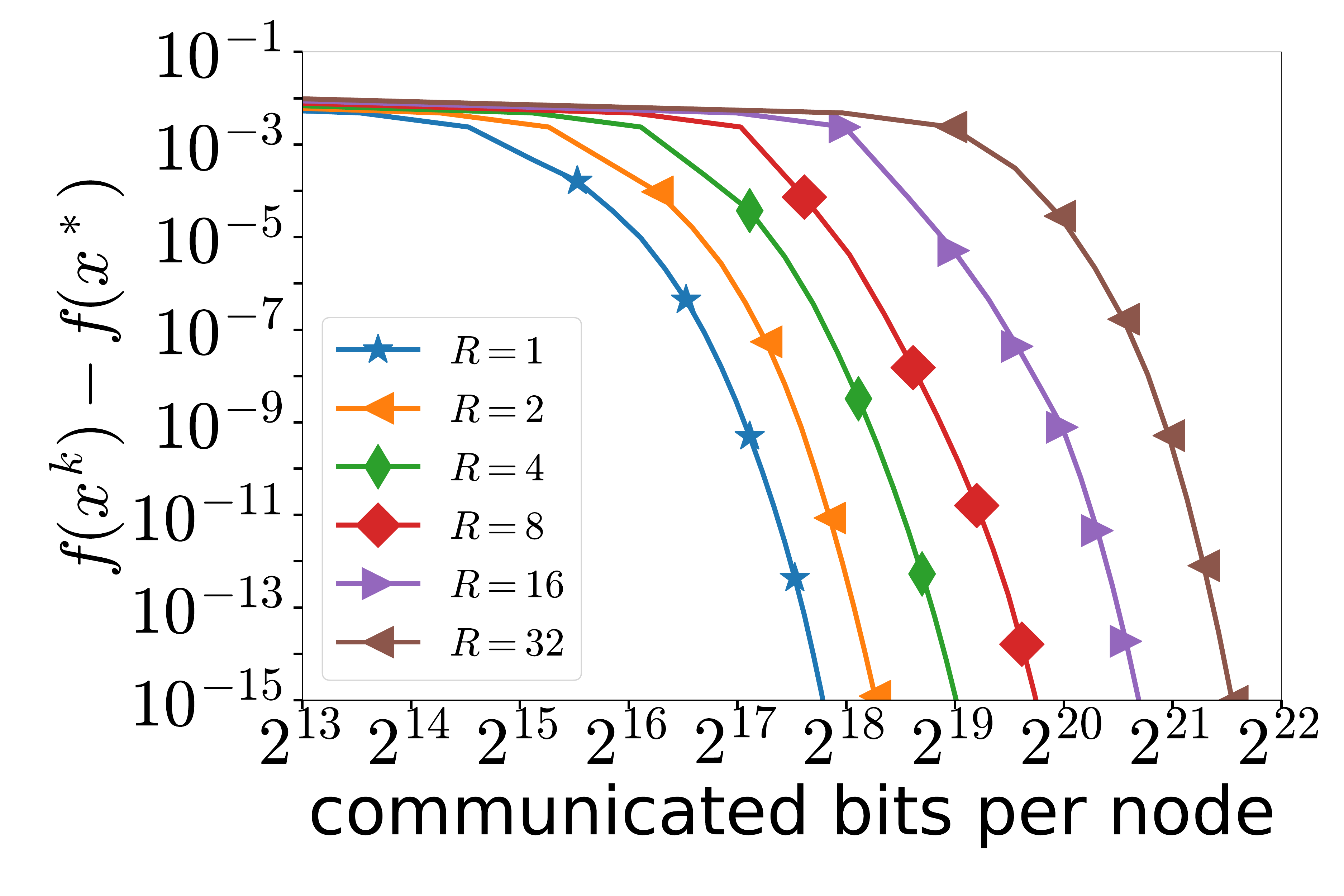} &
            \includegraphics[width=0.22\linewidth]{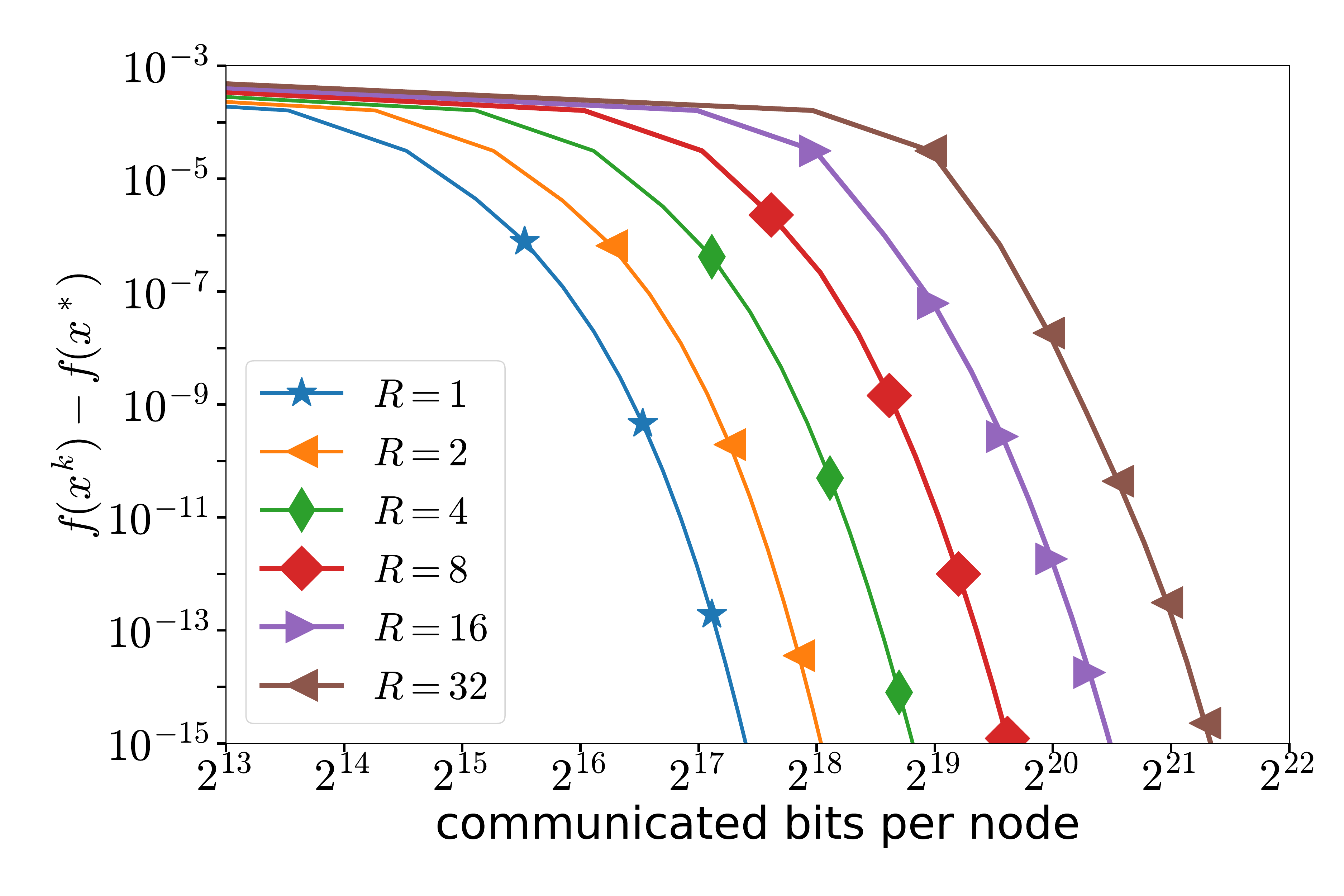}\\
            \dataname{a1a}, $\lambda=10^{-3}$ &
            \dataname{a1a}, $\lambda=10^{-4}$ &
            \dataname{a9a}, $\lambda=10^{-3}$ &
            \dataname{a9a}, $\lambda=10^{-4}$\\
        \end{tabular}
    \end{center}
    \caption{The performance of \algname{FedNL} with different types of compression operators: Rank-$R$ (first row); Top-$K$ (second row); PowerSGD of rank $R$ (third row) for several values of $R$ and $K$ in terms of communication complexity.}
    \label{fig:comp_operators}
\end{figure}

\subsection{Comparison of Options $1$ and $2$}

In our next experiment we investigate which Option ($1$ or $2$) for \algname{FedNL} with Rank-$R$ and stepsize $\alpha=1$ compressor demonstrates better results in terms of communication compexity. According to the results in Figure~\ref{fig:options}, we see that \algname{FedNL} with projection (Option $1$) is more communication effective than that with Option $2$. However, Option $1$ requires more computing resources.

\begin{figure}
    \begin{center}
        \begin{tabular}{cccc}
            \includegraphics[width=0.22\linewidth]{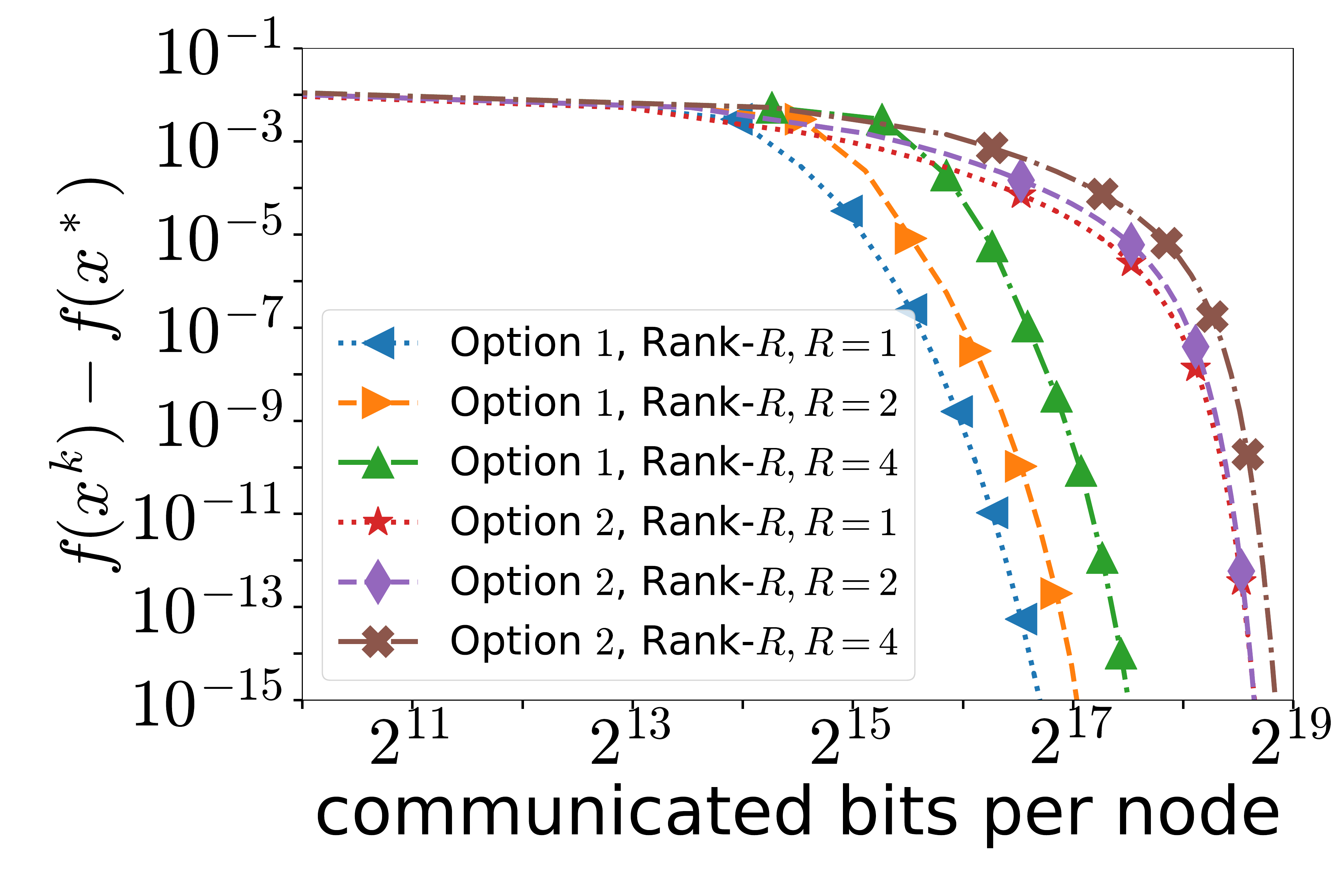} &
            \includegraphics[width=0.22\linewidth]{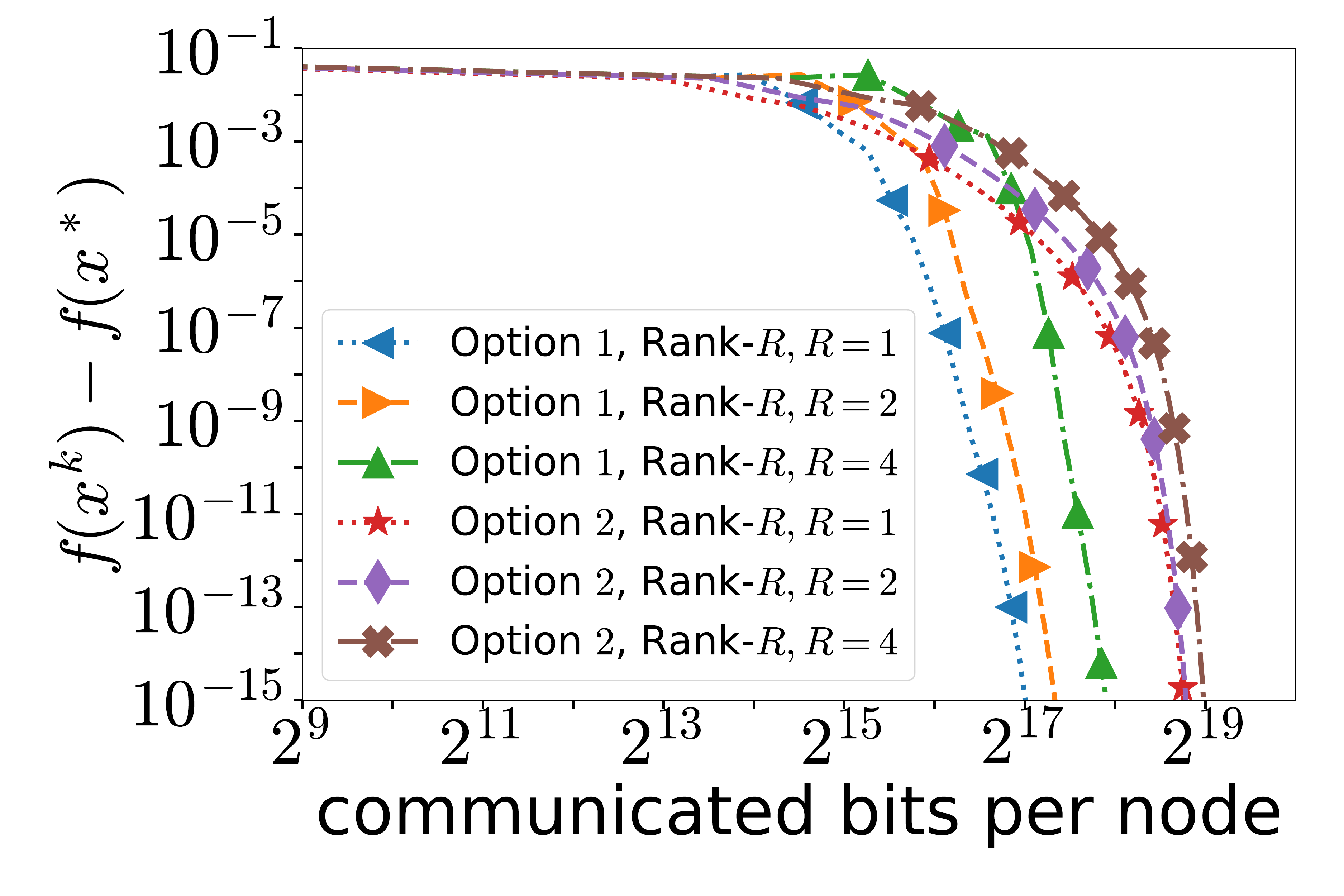} &
            \includegraphics[width=0.22\linewidth]{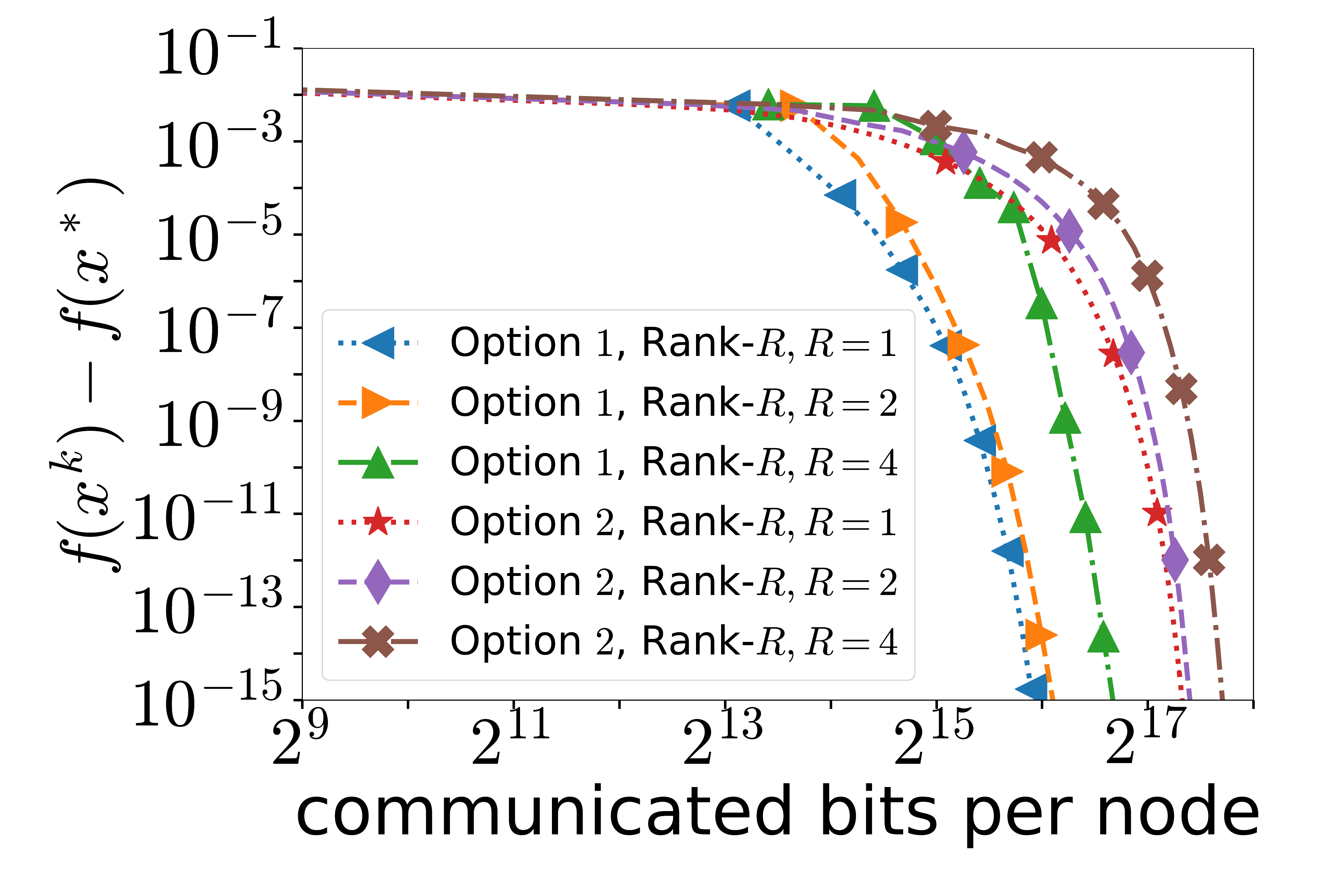} &
            \includegraphics[width=0.22\linewidth]{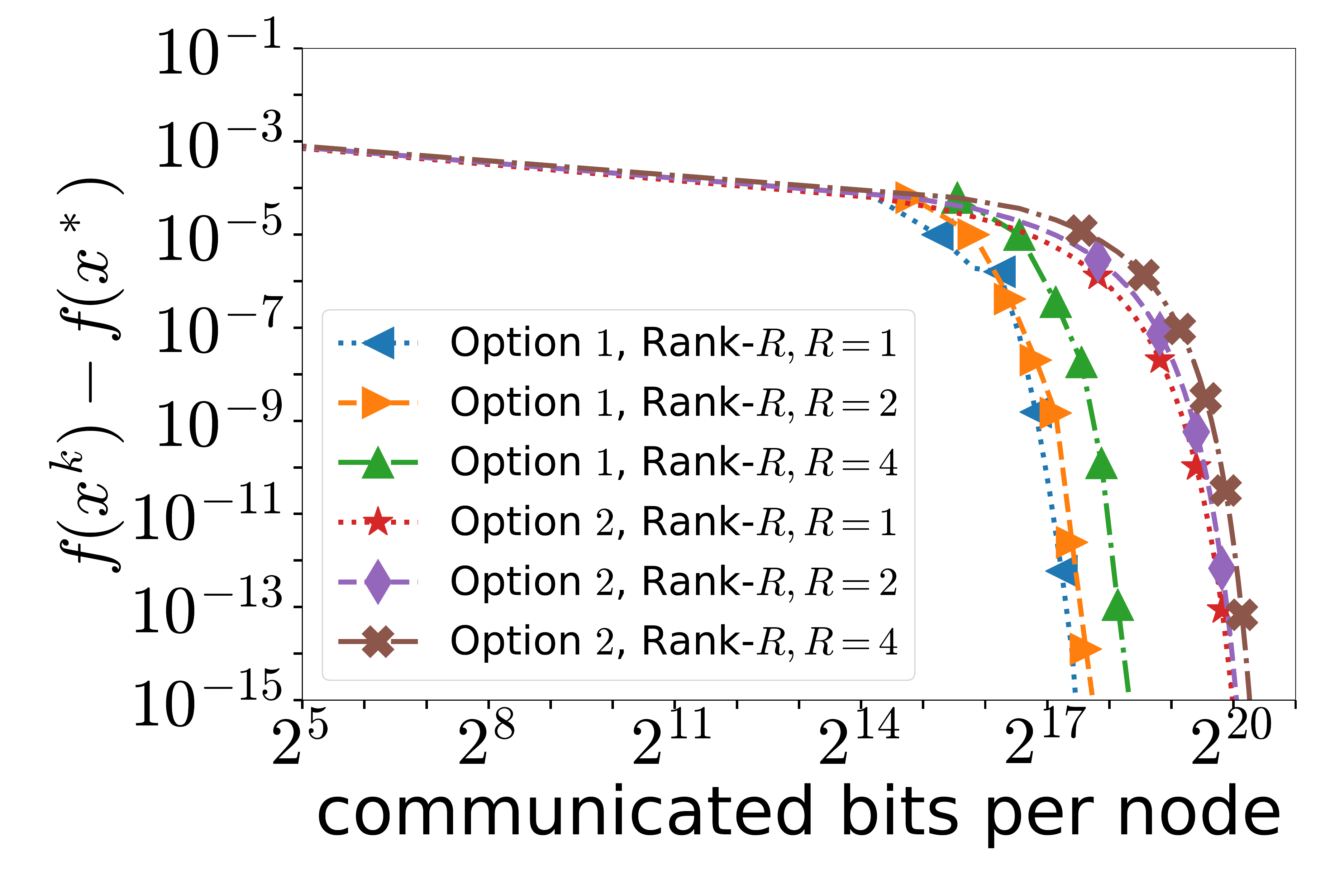}\\
            \dataname{a1a}, $\lambda=10^{-3}$ &
            \dataname{a9a}, $\lambda=10^{-3}$ &
            \dataname{phishing}, $\lambda=10^{-4}$ &
            \dataname{w7a}, $\lambda=10^{-4}$\\
        \end{tabular}
    \end{center}
    \caption{The performance of \algname{FedNL} with Options $1$ and $2$ in terms of communication complexity.}
    \label{fig:options}
\end{figure}

\subsection{Comparison of different compression operators}

Next, we study which compression operator is better in terms of communication complexity. Based on the results in Figure~\ref{fig:diff_compressors}, we can conclude that Rank-$R$ is the best compression operator; Top-$K$ and PowerSGD compressors can beat each other in different cases.

\begin{figure}
    \begin{center}
        \begin{tabular}{cccc}
            \includegraphics[width=0.22\linewidth]{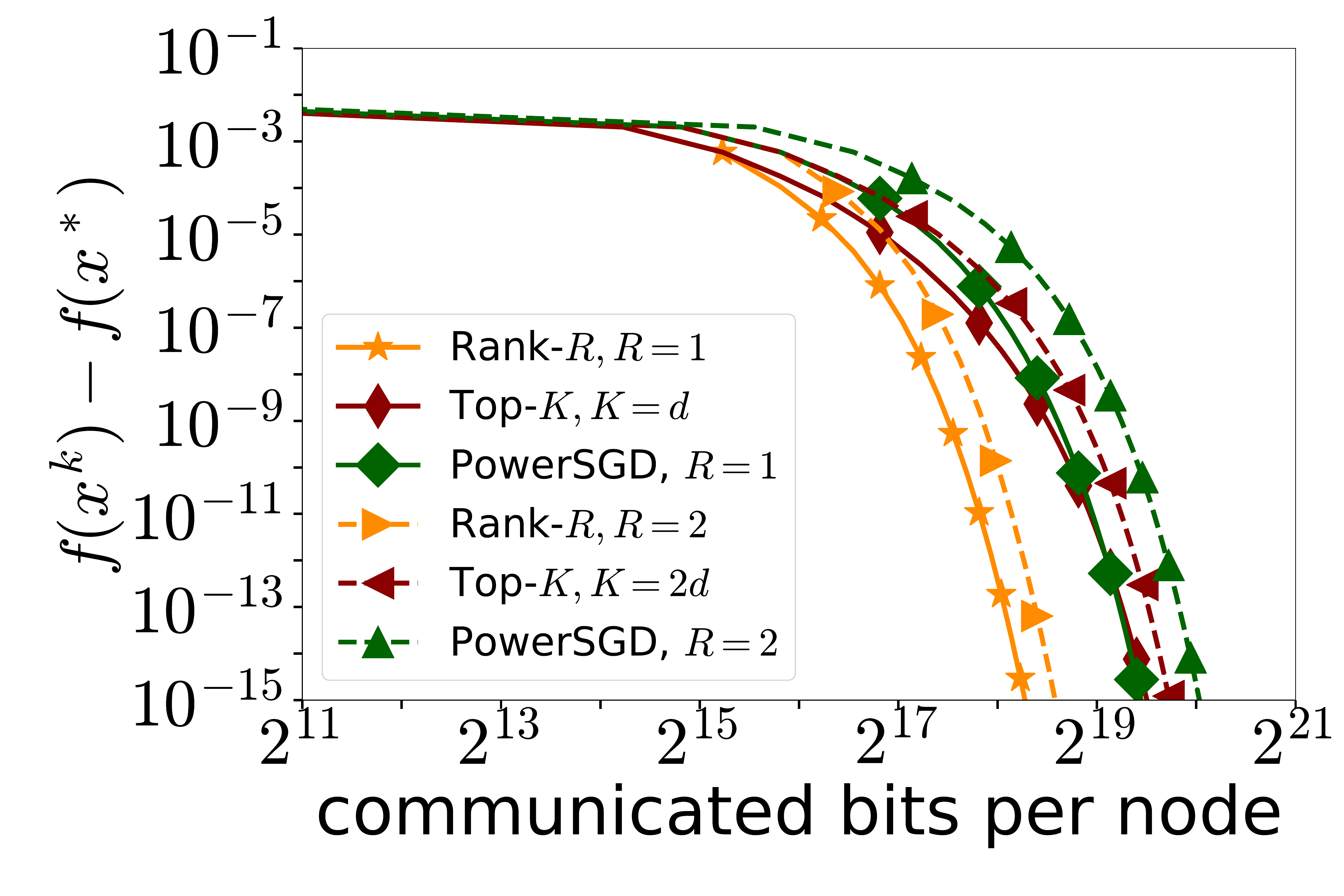} &
            \includegraphics[width=0.22\linewidth]{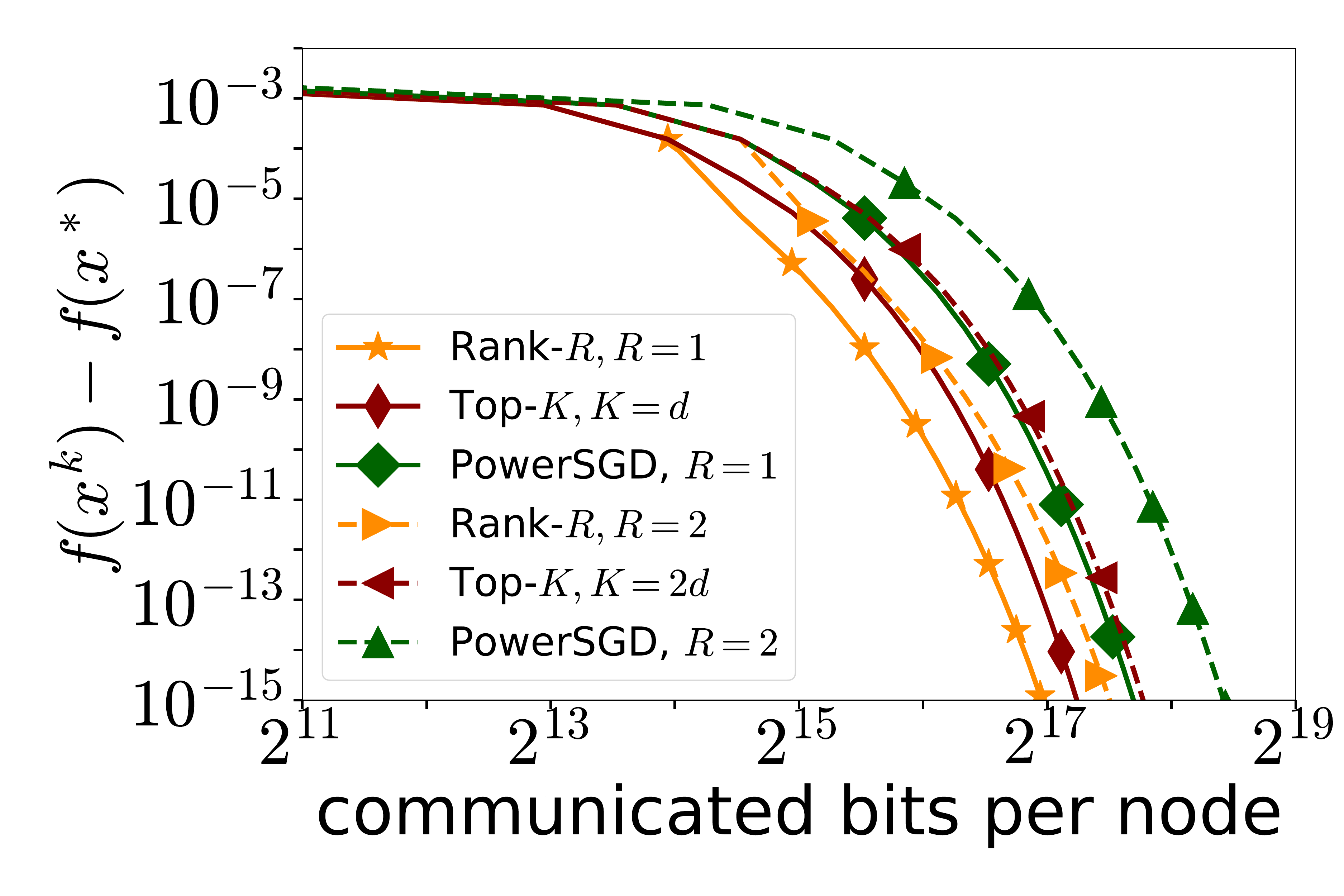} &
            \includegraphics[width=0.22\linewidth]{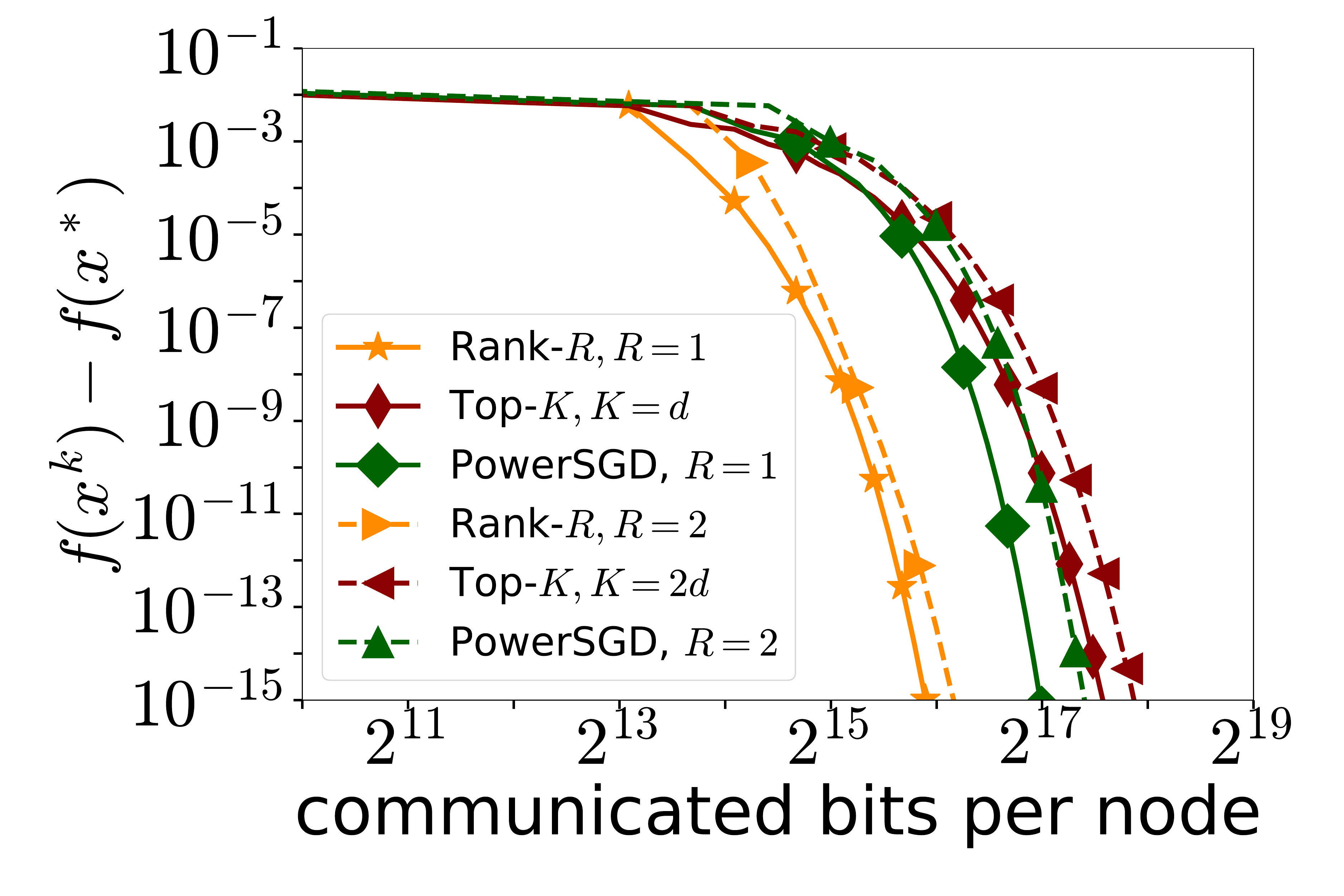} &
            \includegraphics[width=0.22\linewidth]{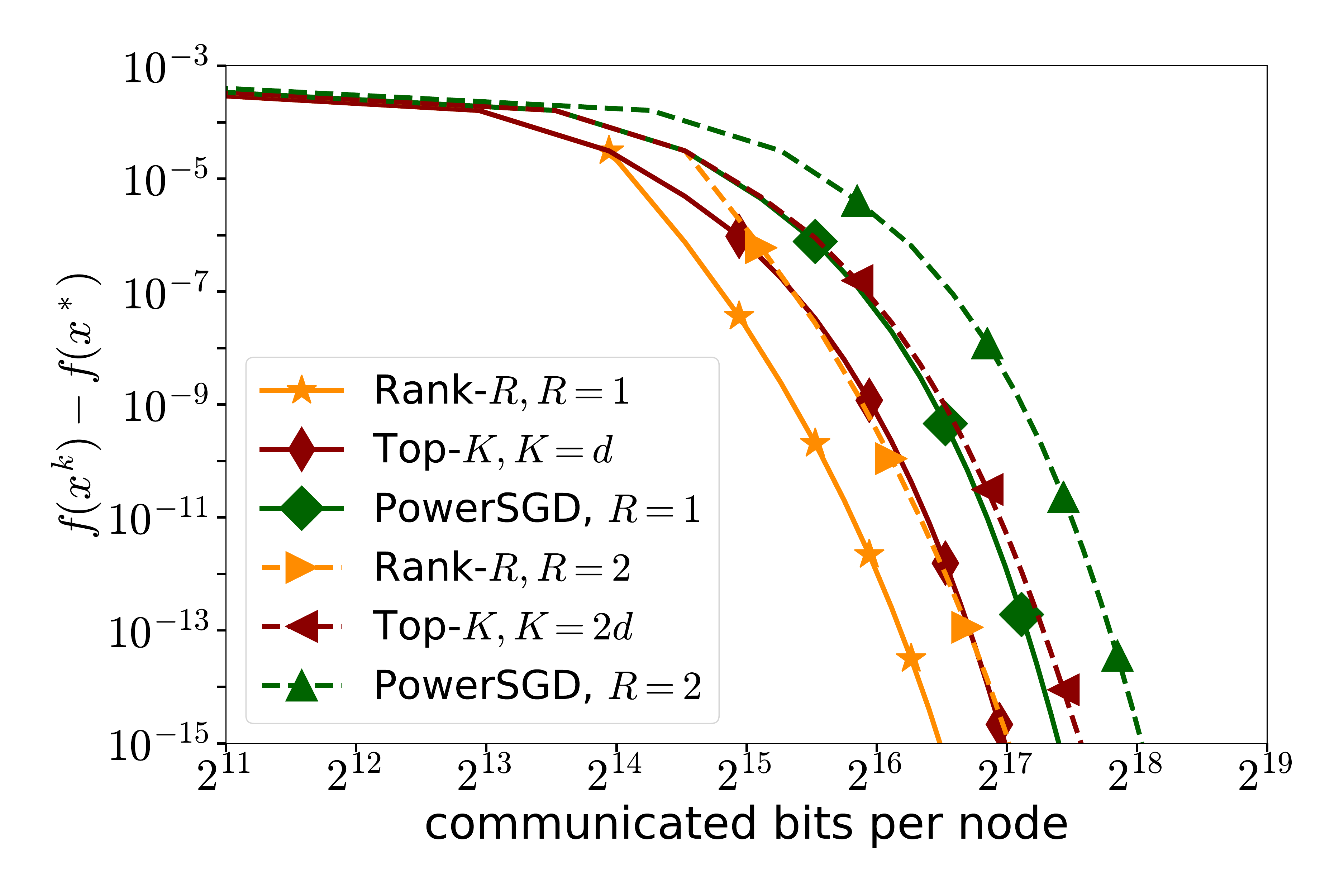}\\
            \dataname{w8a}, $\lambda=10^{-3}$ &
            \dataname{a1a}, $\lambda=10^{-4}$ &
            \dataname{phishing}, $\lambda=10^{-3}$ &
            \dataname{a9a}, $\lambda=10^{-4}$\\
        \end{tabular}
    \end{center}
    \caption{Comparison of the performance of \algname{FedNL} with different compression operators in terms of communication complexity.}
    \label{fig:diff_compressors}
\end{figure}

\subsection{Comparison of different update rules for Hessians}

On the following step we compare \algname{FedNL} with three update rules for Hessians in order to find the best one. They are biased Top-$K$ compression operator with stepsize $\alpha=1$ (Option $1$); biased Top-$K$ compression operator with stepsize $\alpha=1-\sqrt{1-\delta}$; unbiased Rand-$K$ compression operator with stepsize $\alpha=\frac{1}{\omega+1}$. The results of this experiment are presented in Figure~\ref{FIG:FedNL_3_upd_rules}. Based on them, we can make a conclusion that \algname{FedNL} with Top-$K$ compressor and stepsize $\alpha=1$ demonstrates the best performance. \algname{FedNL} with Rand-$K$ compressor and stepsize $\alpha=\frac{1}{\omega+1}$ performs a little bit better than that with Top-$K$ compressor and stepsize $\alpha=1-\sqrt{1-\delta}$. As a consequence, we will use biased compression operator with stepsize $\alpha=1$ for \algname{FedNL} in further experiments.

\begin{figure}
    \begin{center}
        \begin{tabular}{cccc}
            \includegraphics[width=0.22\linewidth]{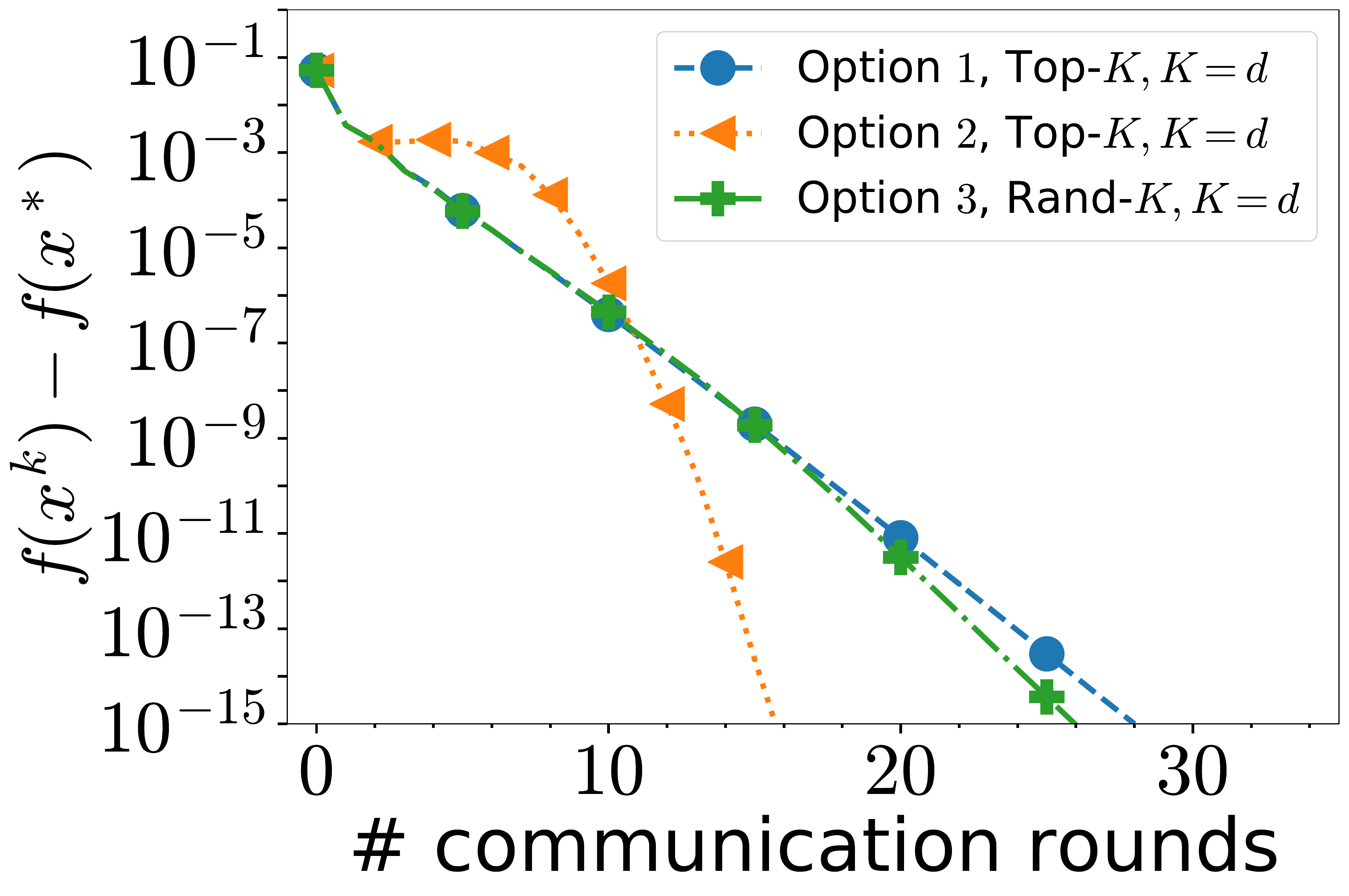} &
            \includegraphics[width=0.22\linewidth]{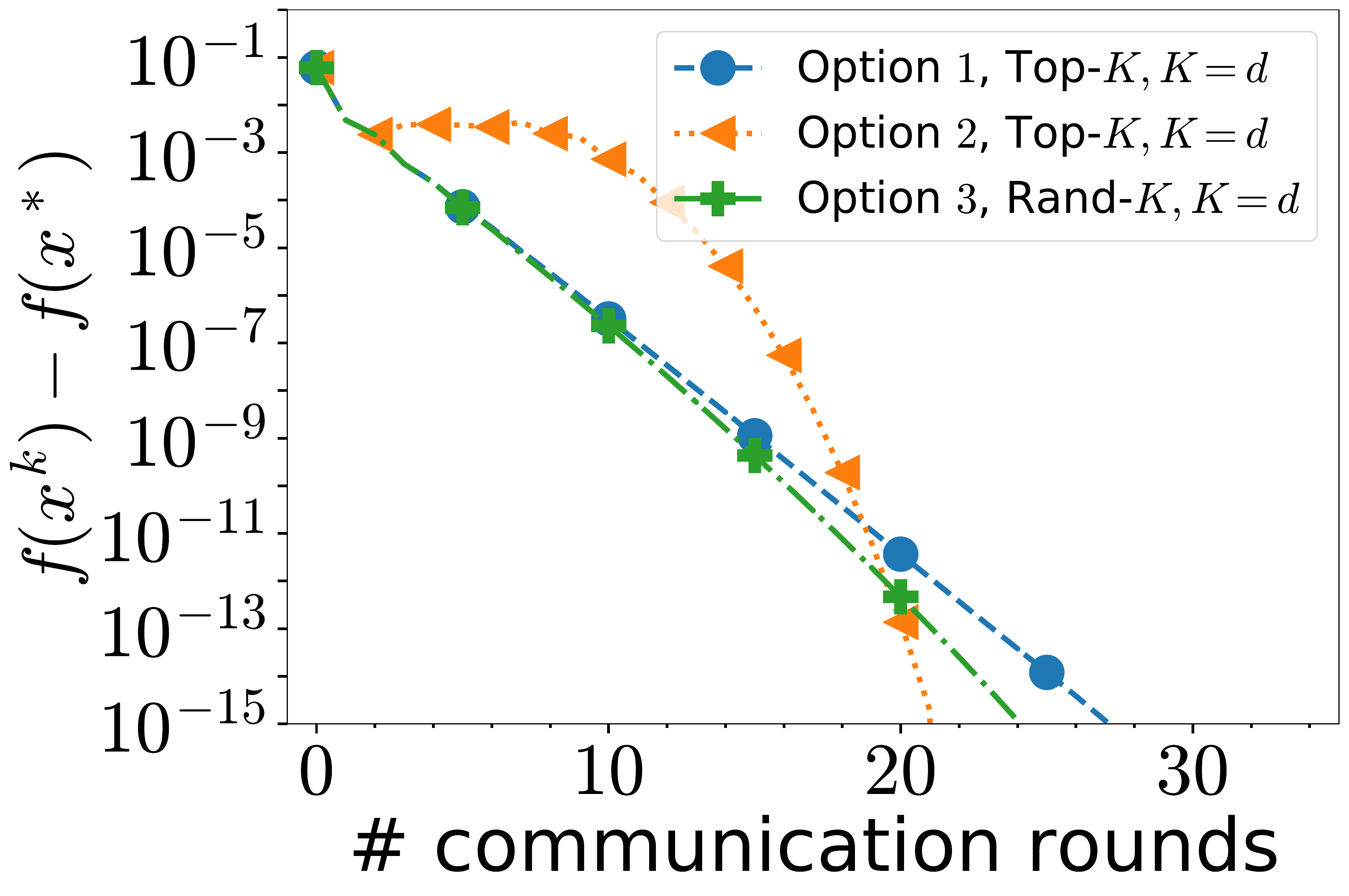} &
            \includegraphics[width=0.22\linewidth]{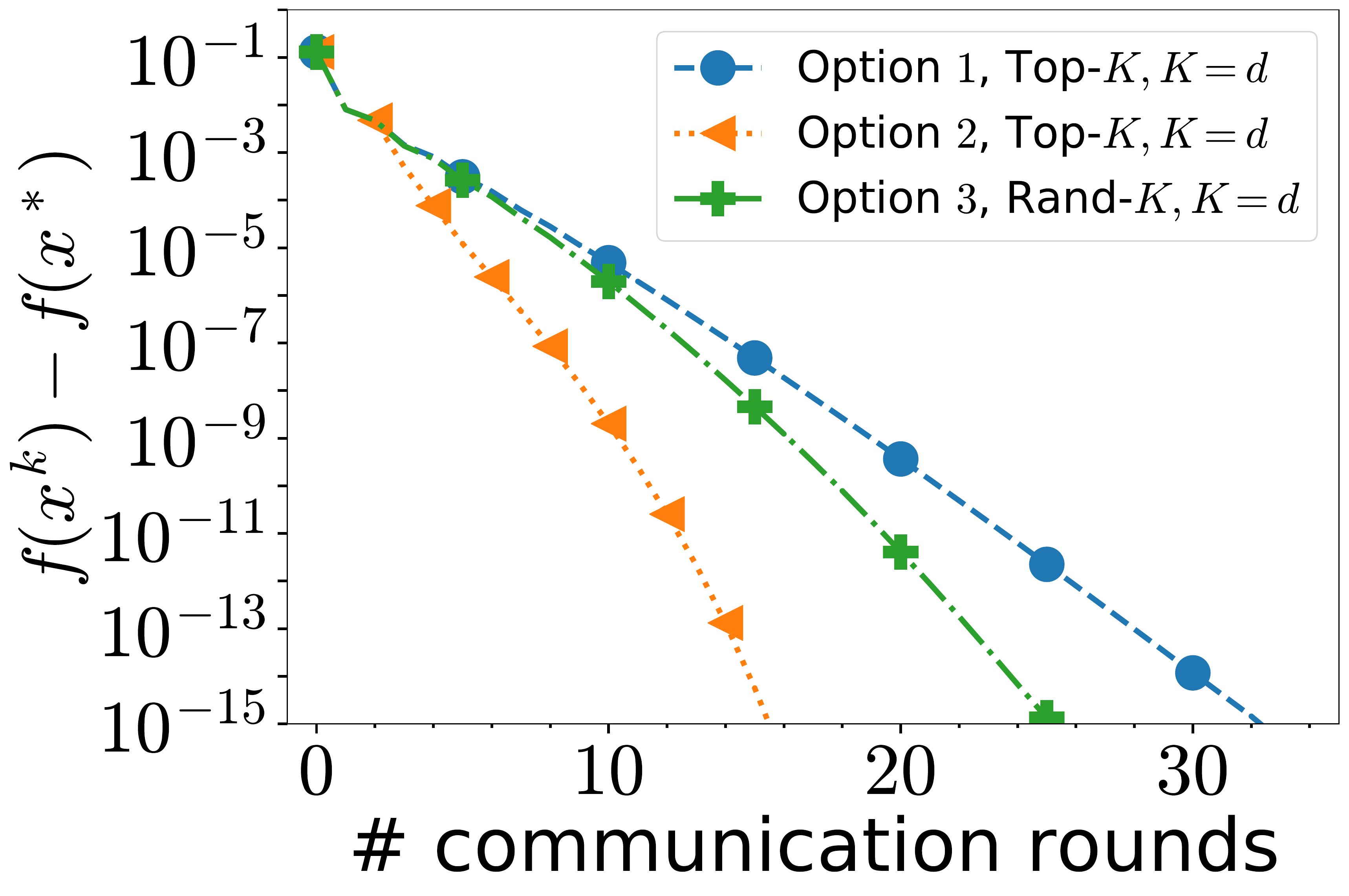} &
            \includegraphics[width=0.22\linewidth]{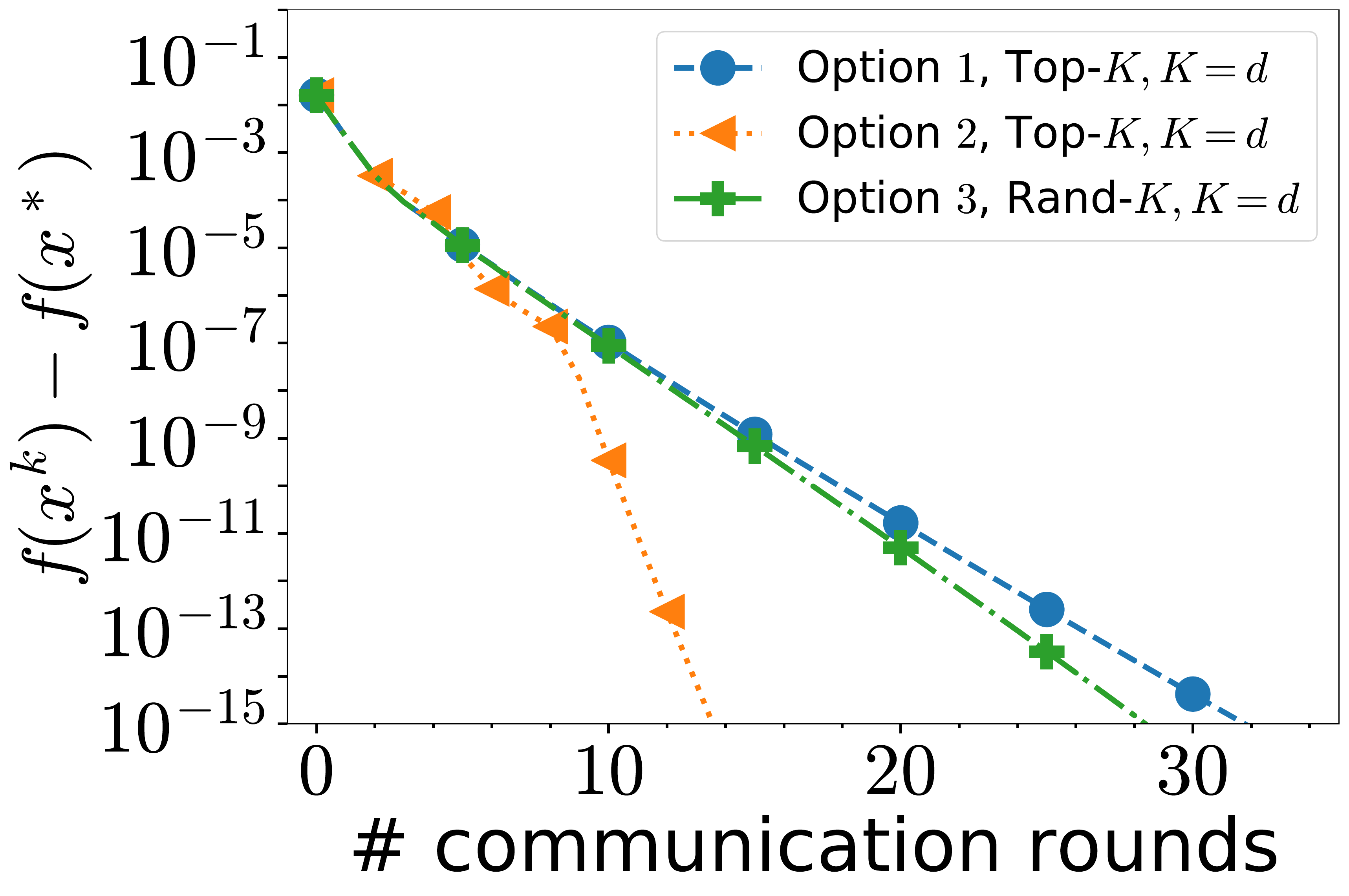}\\
            (a) \dataname{a1a}, $\lambda=10^{-3}$ &
            (b) \dataname{a9a}, $\lambda=10^{-3}$ &
            (c) \dataname{phishing}, $\lambda=10^{-3}$ &
            (d) \dataname{w7a}, $\lambda=10^{-3}$\\
        \end{tabular}
    \end{center}
    \caption{Comparison of \algname{FedNL} with three update rules: Top-$K, \alpha=1-\sqrt{1-\delta}$ (Option $1$); Top-$K, \alpha=1$ (Option $2$); Rand-$K, \alpha=\frac{1}{\omega+1}$ (Option $3$) in terms of iteration complexity.}
    \label{FIG:FedNL_3_upd_rules}
\end{figure}

\subsection{Bidirectional compression}

Now we study how the performance of \algname{FedNL-BC} (with Option $1$ and stepsize $\alpha=1$) is affected by the level of compression in Figure~\ref{FIG:FedNL_BC}. Here we use Top-$K$ compressor for Hessians and models, and broadcast gradients with probability $p$. In order to make the results more interpretable, we set $K$ to be $pd$, then we carry out experiments for several values of $p$. We clearly see that deep compression ($p=0.5; 0.6$) influences negatively the performance of \algname{FedNL-BC}. However, small compression ($p=0.9$) can be beneficial in some cases (see Figure~\ref{FIG:FedNL_BC}: (b), (d)), but this is not the case for Figure~\ref{FIG:FedNL_BC}: (a), (c), where the best performance is demonstrated by \algname{FedNL-BC} with $p=1$. We can conclude that only weak compression (the value of $p$ is close to $1$) can improve the performance of \algname{FedNL-BC}, but the improvement is relatively small.

We also compare \algname{FedNL-BC} (compression was described above, Option $2$ was used in the experiments) with \algname{DORE} method \citep{liu2019double}. This method applies bi-directional  compression on gradients (uplink compression) and models (downlink compression). All constants for this method were chosen according theoretical results in the paper. We use random dithering compressor in both directions ($s=\sqrt{d}$). Based on the numerical experiments in Figure~\ref{FIG:FedNL_vs_DORE}, we can conclude that \algname{FedNL-BC} is much more communication efficient method than \algname{DORE} by many orders in magnitude.

\begin{figure}
    \begin{center}
        \begin{tabular}{cccc}
            \includegraphics[width=0.22\linewidth]{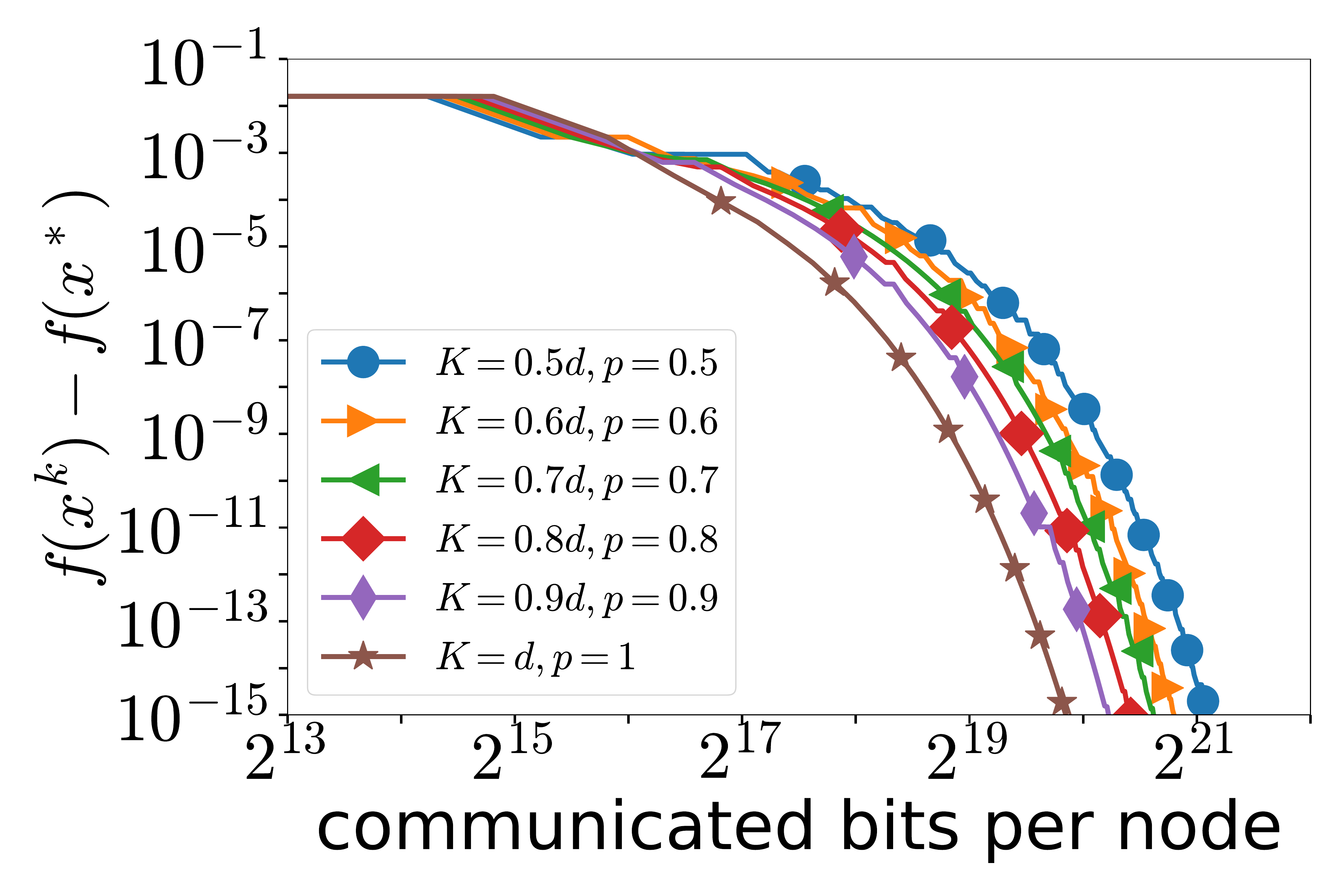} &
            \includegraphics[width=0.22\linewidth]{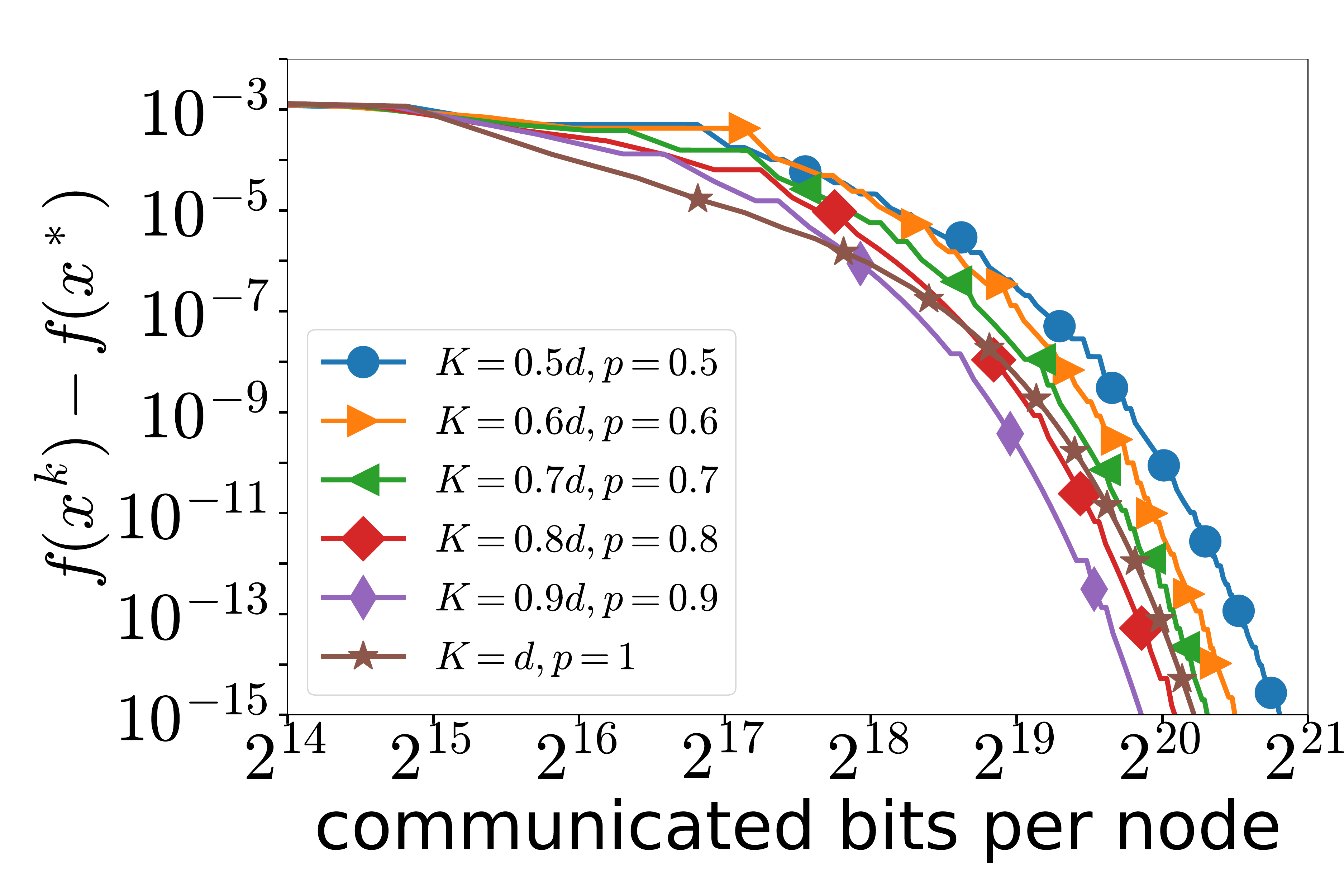} &
            \includegraphics[width=0.22\linewidth]{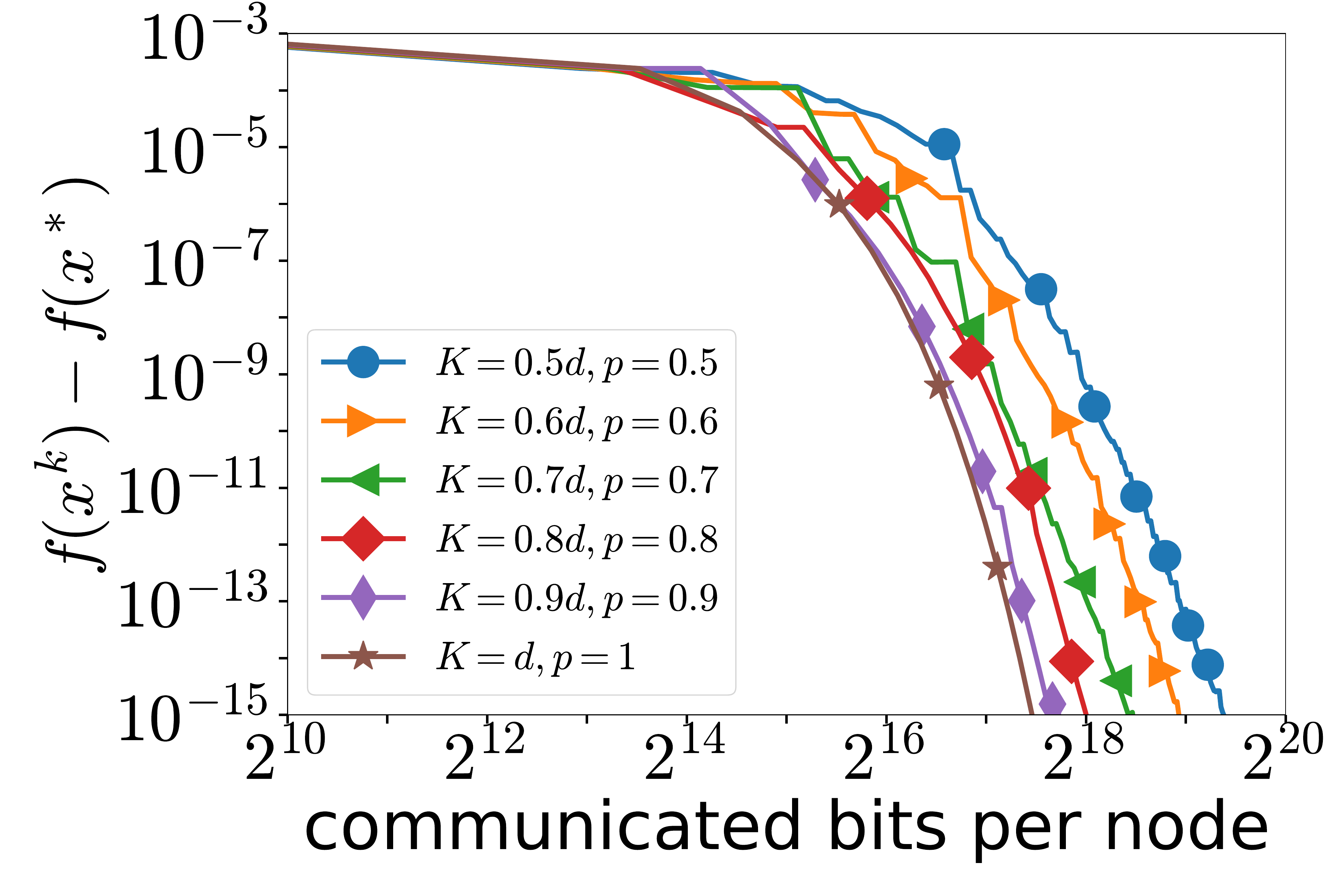} &
            \includegraphics[width=0.22\linewidth]{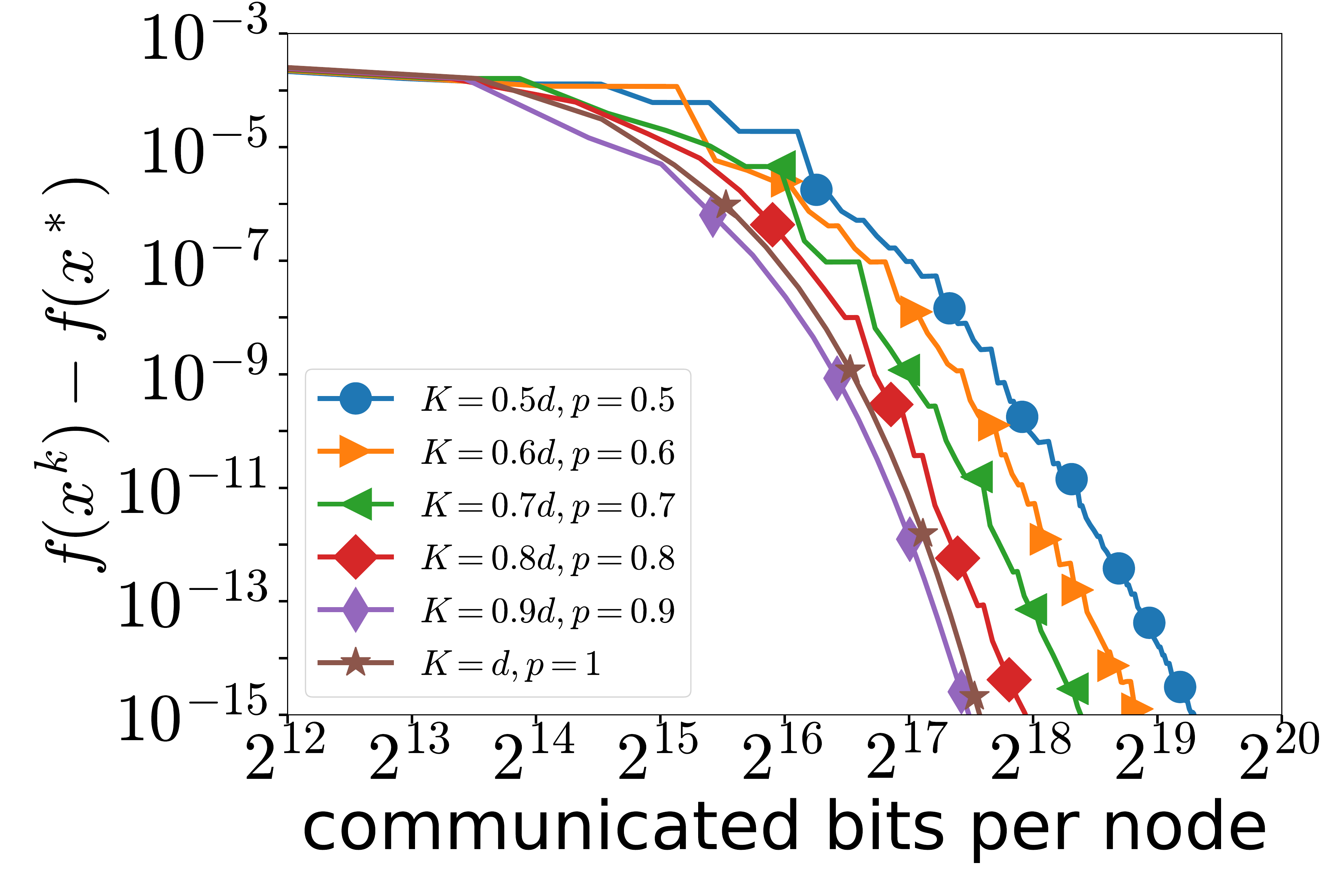}\\
            (a) \dataname{w7a}, $\lambda=10^{-3}$ &
            (b) \dataname{w8a}, $\lambda=10^{-3}$ &
            (c) \dataname{a1a}, $\lambda=10^{-4}$ &
            (d) \dataname{a9a}, $\lambda=10^{-4}$\\
        \end{tabular}
    \end{center}
    \caption{The performance of \algname{FedNL-BC} with Top-$K$ applied to Hessians and models ($K=pd$), and broadcasting gradients with probability $p$ for several values of $p$ in terms of communication complexity.}
    \label{FIG:FedNL_BC}
\end{figure}

\begin{figure}
    \begin{center}
        \begin{tabular}{cccc}
            \multicolumn{4}{c}{
                \includegraphics[width=0.8\linewidth]{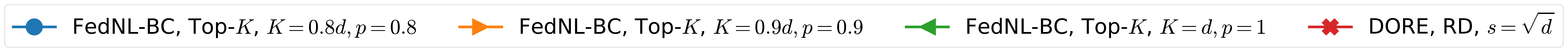}
            }\\
            \includegraphics[width=0.22\linewidth]{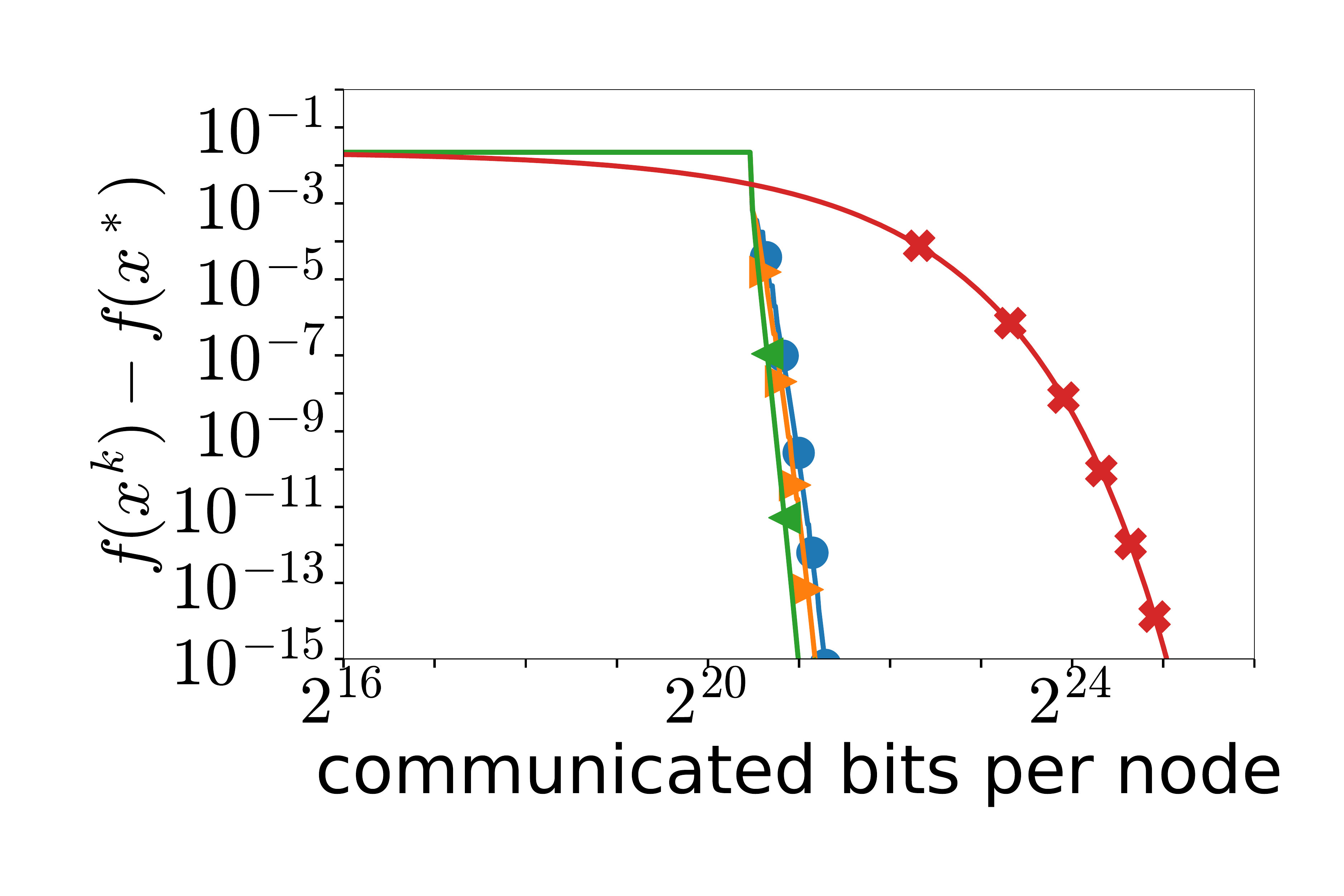} &
            \includegraphics[width=0.22\linewidth]{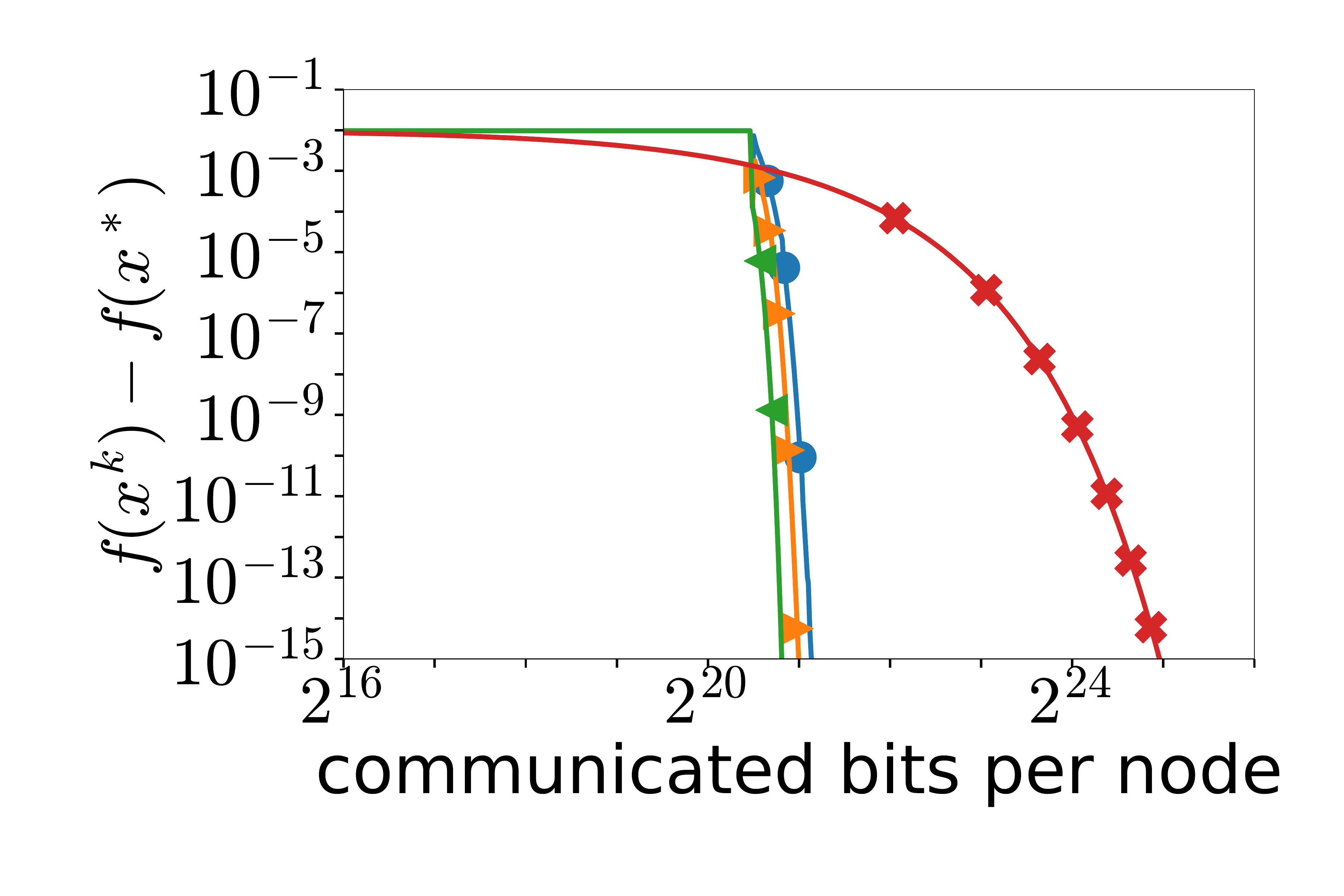} &
            \includegraphics[width=0.22\linewidth]{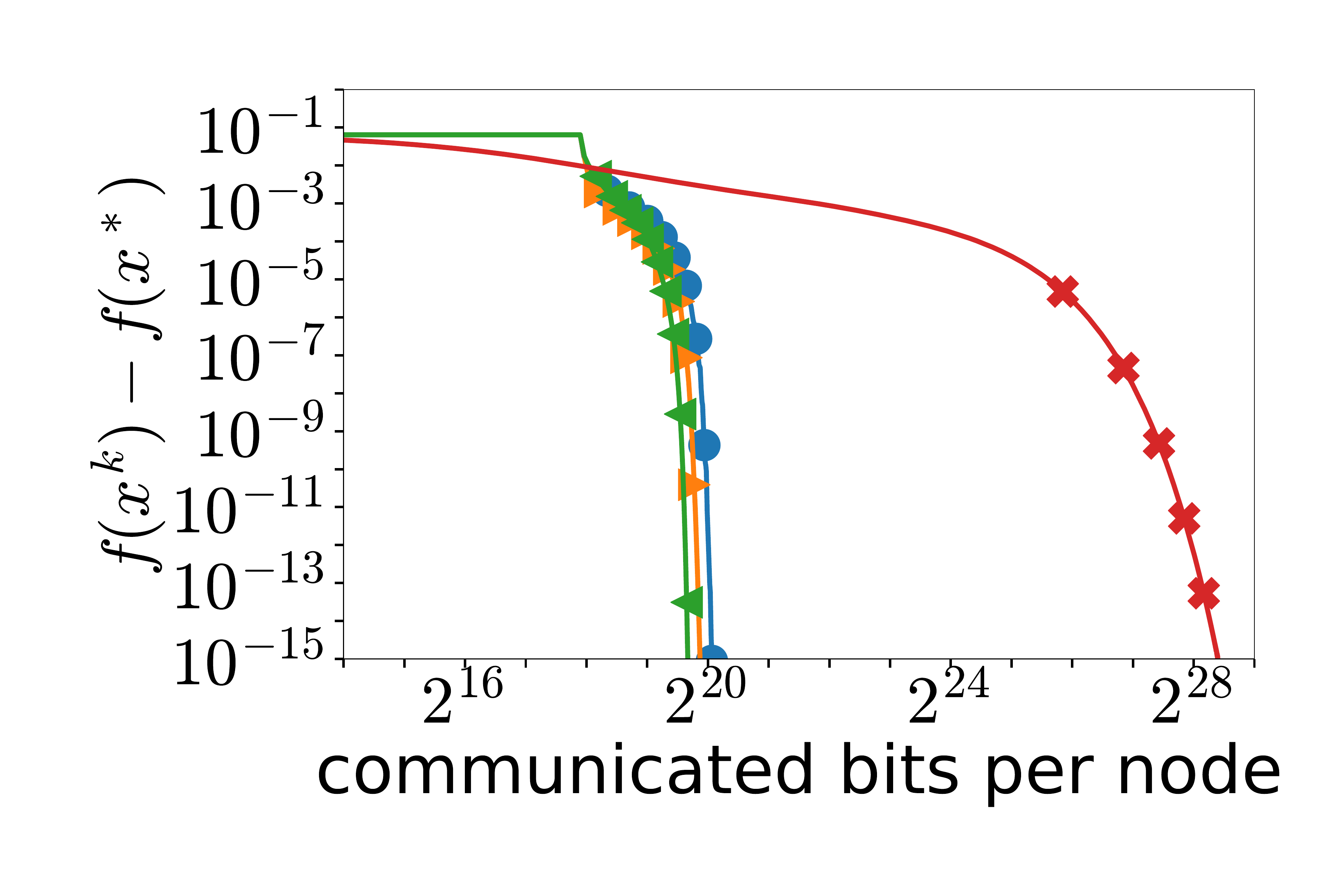} &
            \includegraphics[width=0.22\linewidth]{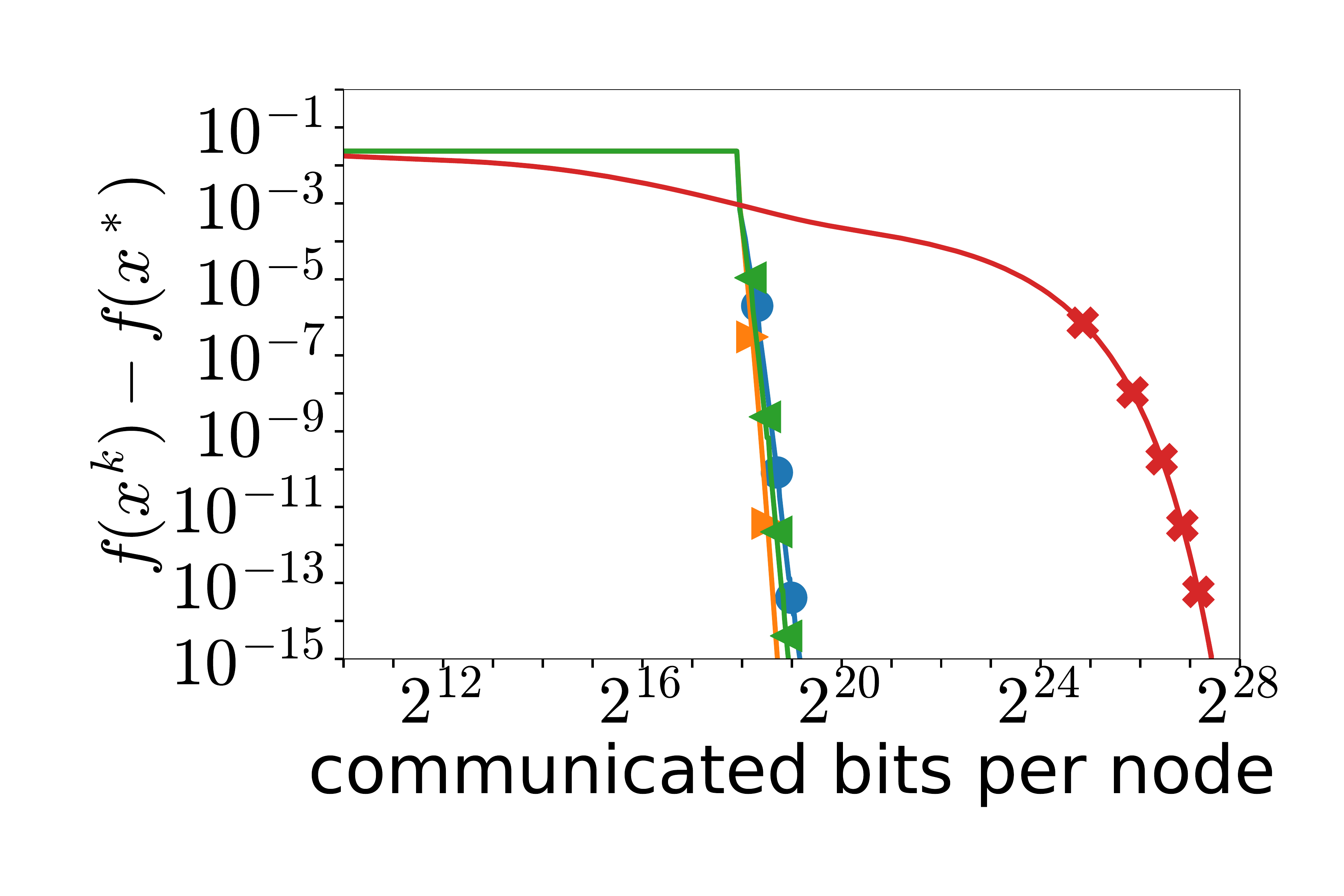}\\
            (a) \dataname{w7a}, $\lambda=10^{-3}$ &
            (b) \dataname{w8a}, $\lambda=10^{-3}$ &
            (c) \dataname{a1a}, $\lambda=10^{-4}$ &
            (d) \dataname{a9a}, $\lambda=10^{-4}$\\
        \end{tabular}
        
    \end{center}
    \caption{Comparison of \algname{FedNL-BC} with Top-$K$ applied to Hessians and models ($K=pd$), and broadcasting gradients with probability $p$ and \algname{DORE} in terms of communication complexity.}
    \label{FIG:FedNL_vs_DORE}
\end{figure}

\subsection{The performance of \algname{FedNL-PP}}

Now we deploy our \algname{FedNL-PP} method in order to study how the performance is inlfuenced by the value of active nodes $\tau$. We use \algname{FedNL-PP} with Rank-$1$ compression operator, and run the method for several values of $\tau$; see Figure~\ref{FIG:FedNL_PP}. As we can see, the smaller value of $\tau$ is, the worse performance of \algname{FedNL-PP} is, as it expected.

Now we compare \algname{FedNL-PP} with \algname{Artemis} \citep{philippenko2021bidirectional} which supports partial participation too. We use random sparsification compressor ($s=\sqrt{d}$) in uplink direction, and the server broadcasts descent direction to each node without compression. All contstants of the method were chosen according theory from the paper. Each node $i$ computes full local gradient $\nabla f_i(x^k)$. We conduct experiments for several number of active nodes: $\tau \in \{0.2n, 0.4n, 0.8n\}$, then we calculate the total number of transmitted bits received by the server from all active nodes. All results are presented in Figure~\ref{FIG:FedNL_vs_Artemis}. We clearly see that \algname{FedNL-PP} outperforms \algname{Artemis} by several orders in magnitude in terms of communication complexity. 

\begin{figure}
    \begin{center}
        \begin{tabular}{cccc}
            \includegraphics[width=0.22\linewidth]{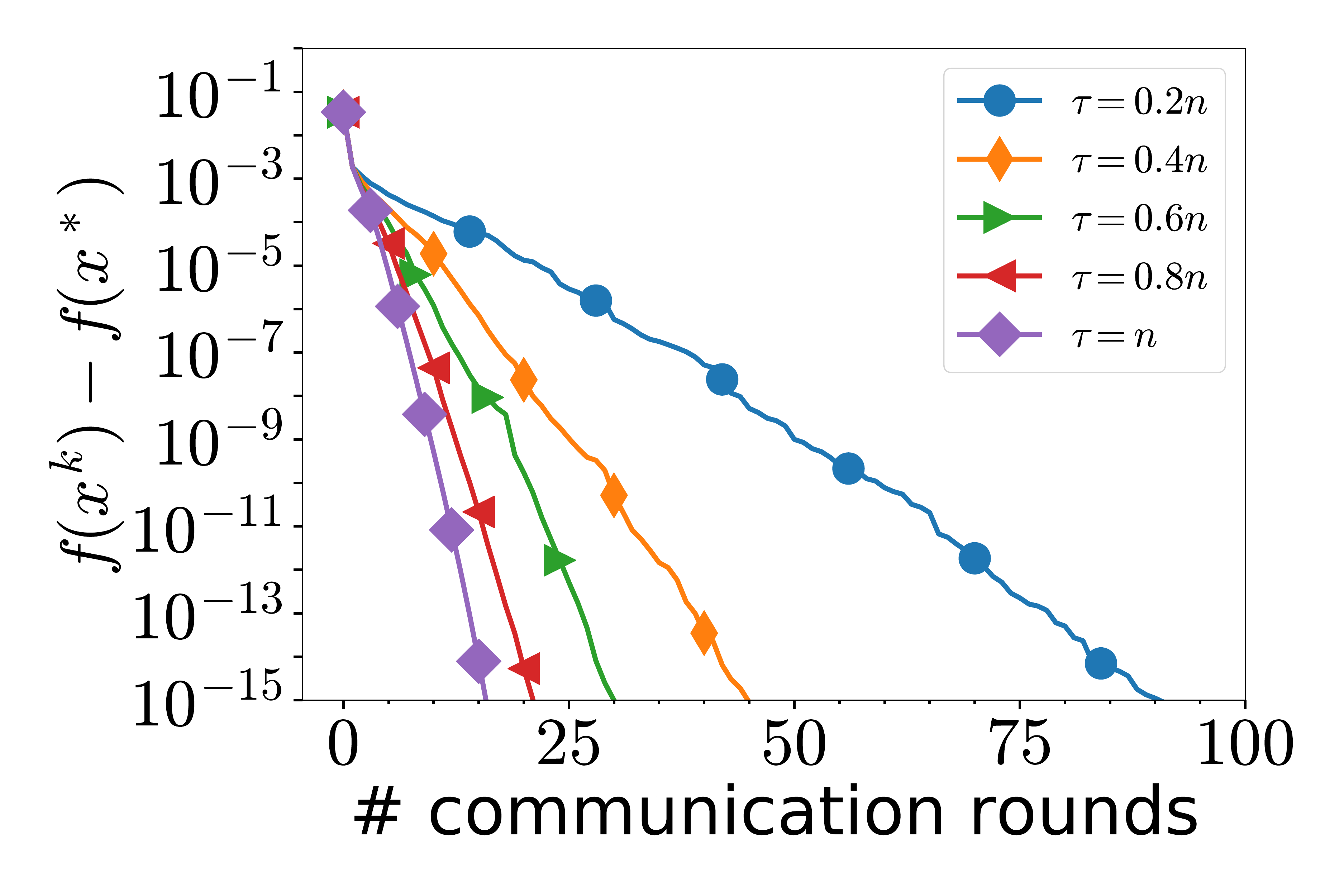} &
            \includegraphics[width=0.22\linewidth]{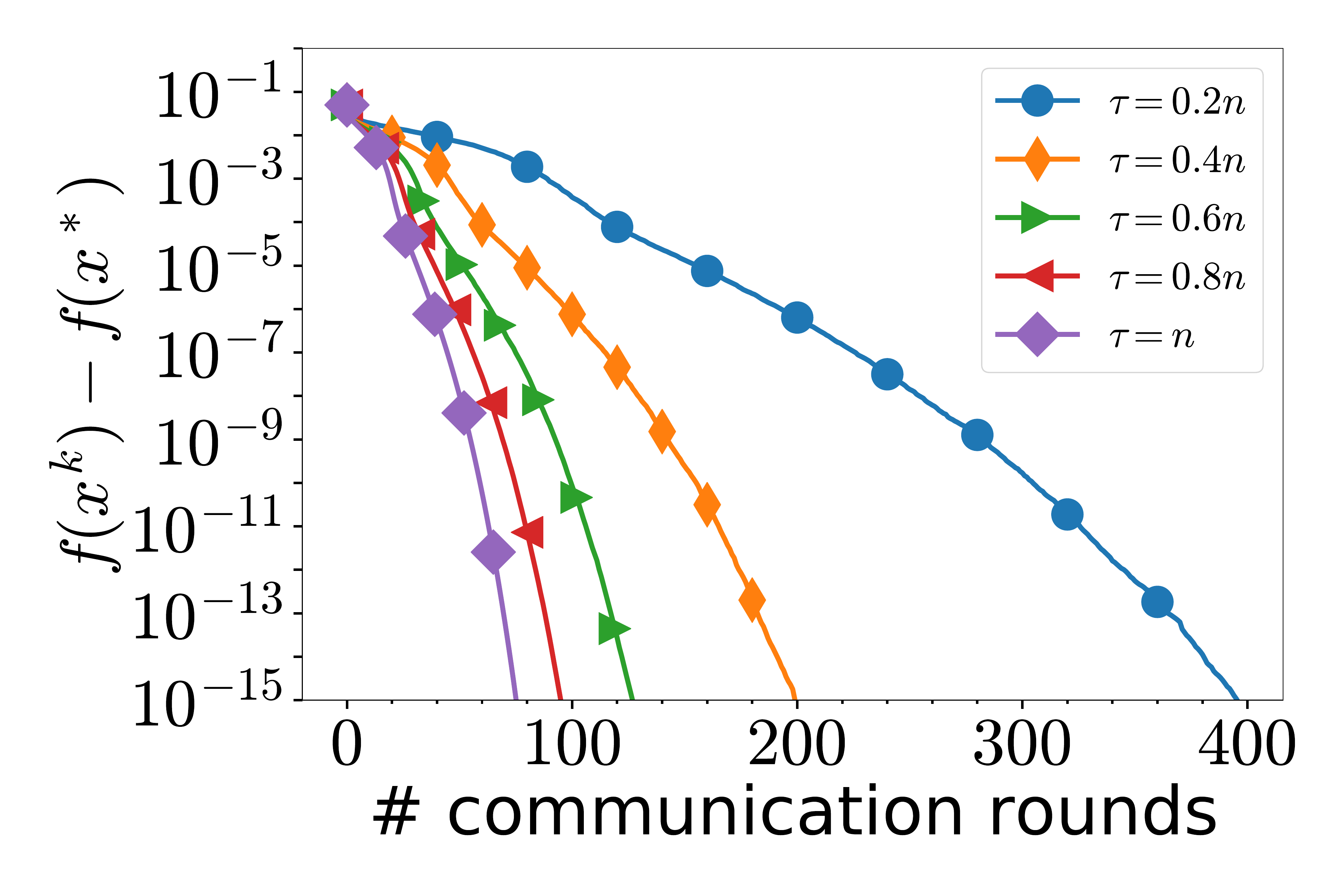} &
            \includegraphics[width=0.22\linewidth]{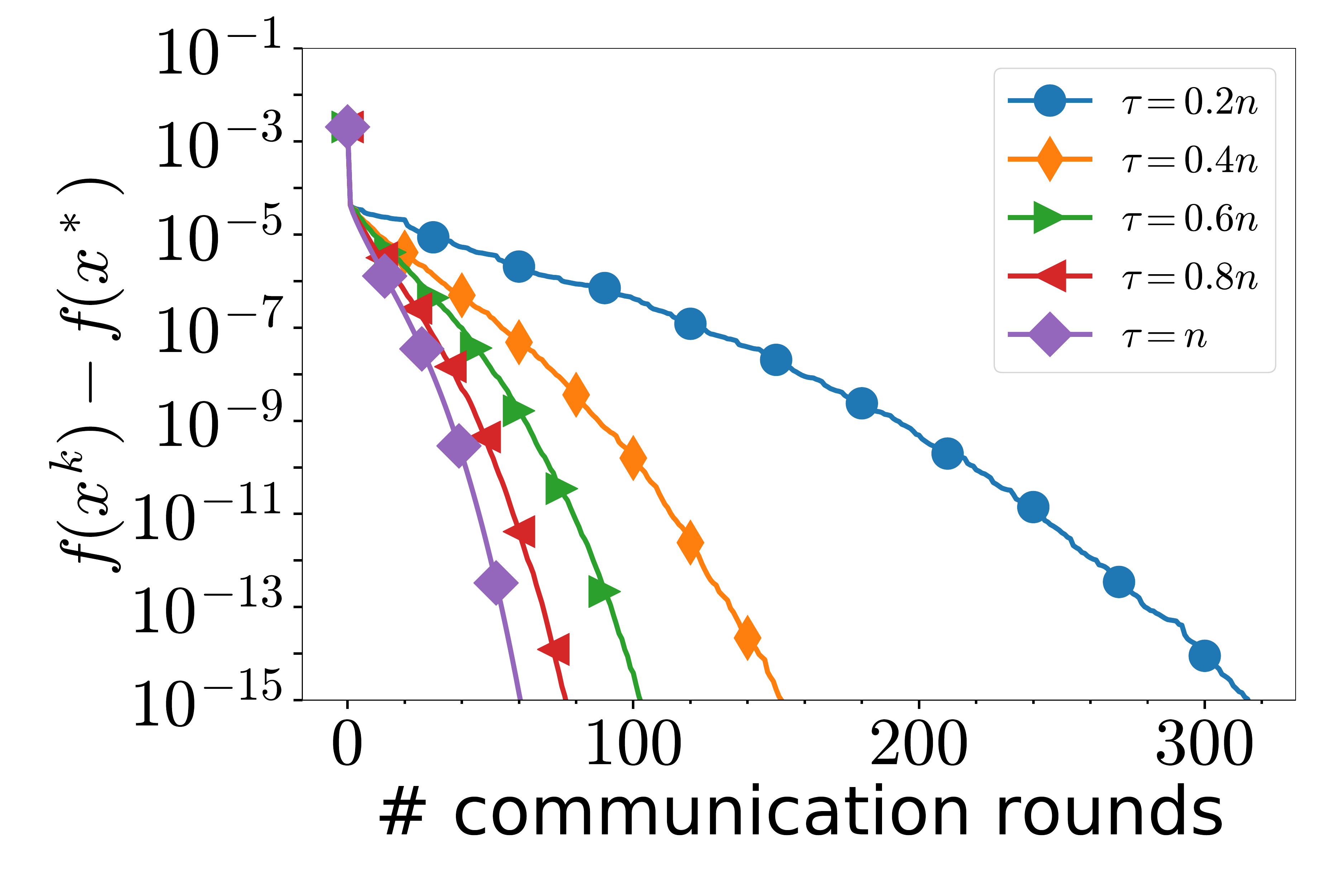} &
            \includegraphics[width=0.22\linewidth]{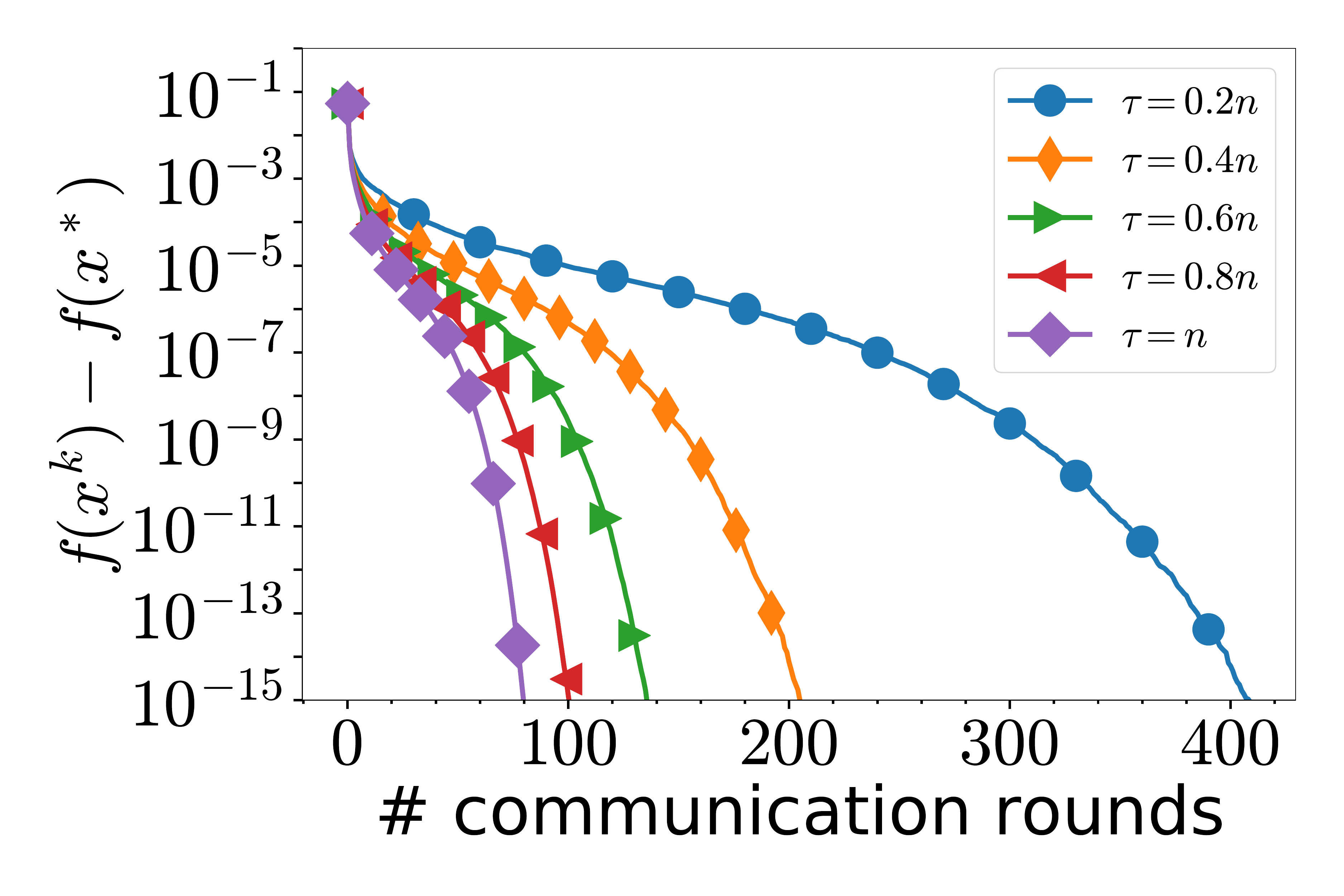}\\
            (a) \dataname{phishing}, $\lambda=10^{-3}$ &
            (b) \dataname{w8a}, $\lambda=10^{-3}$ &
            (c) \dataname{w7a}, $\lambda=10^{-4}$ &
            (d) \dataname{a9a}, $\lambda=10^{-4}$\\
        \end{tabular}
    \end{center}
    \caption{The performance of \algname{FedNL-PP} with Rank-$1$ compressor in terms of iteration complexity.}
    \label{FIG:FedNL_PP}
\end{figure}

\begin{figure*}[h]
    \begin{center}
        \begin{tabular}{cccc}
            \multicolumn{4}{c}{
                \includegraphics[width=1\linewidth]{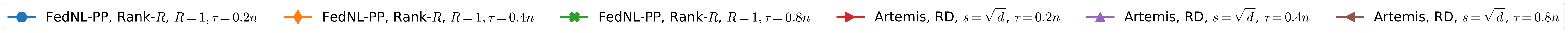}
            }\\
            \includegraphics[width=0.22\linewidth]{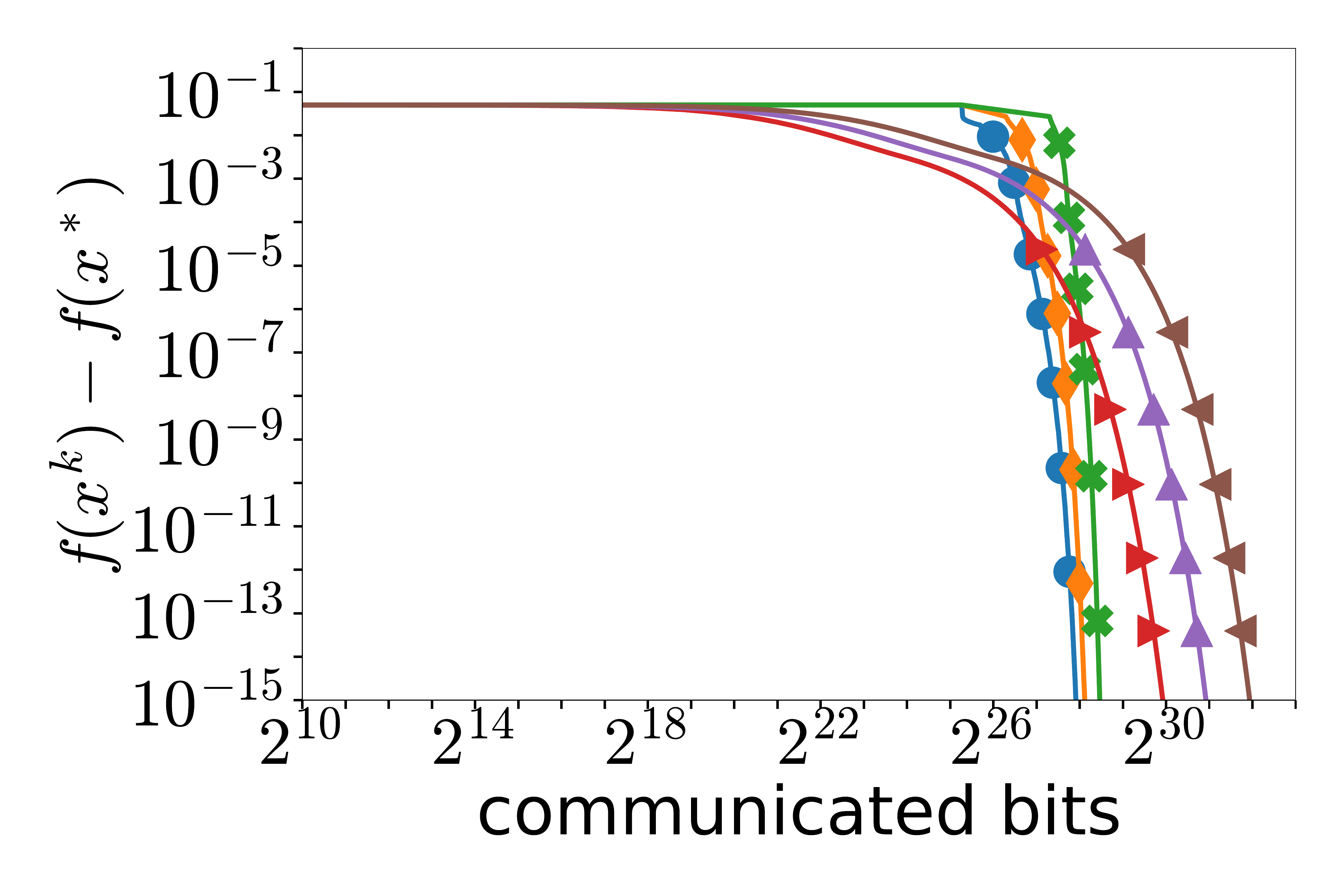} &
            \includegraphics[width=0.22\linewidth]{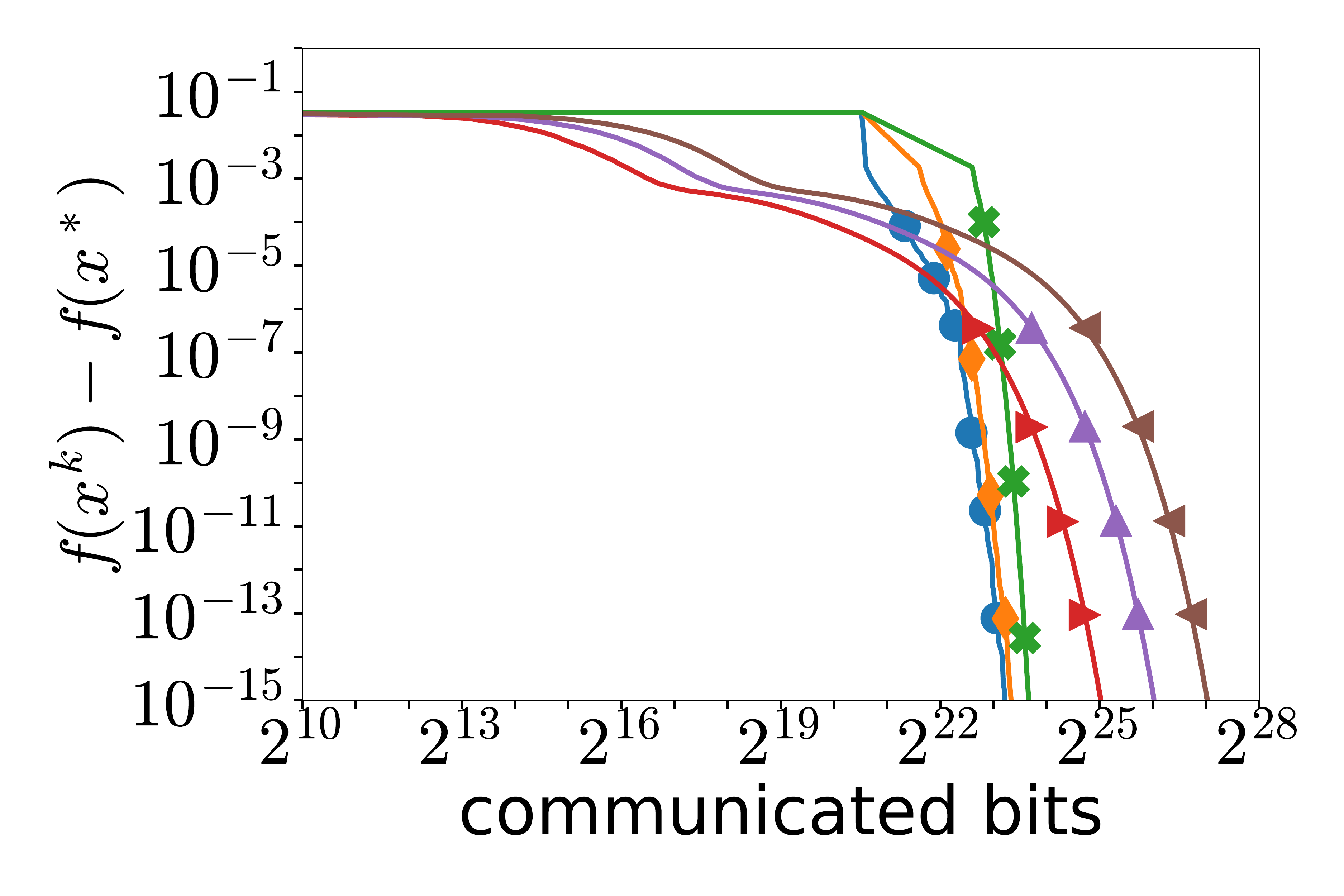} &
            \includegraphics[width=0.22\linewidth]{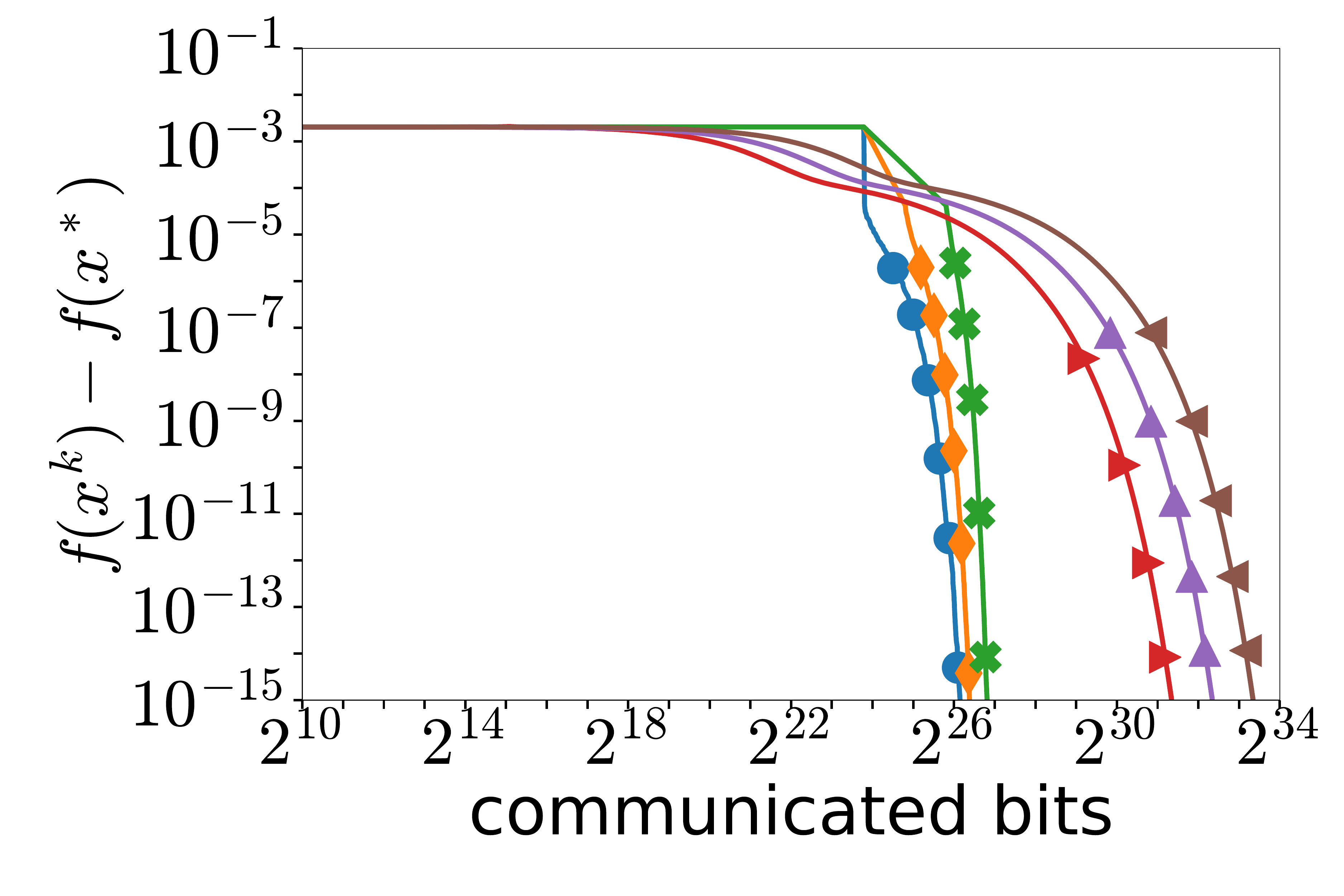} &
            \includegraphics[width=0.22\linewidth]{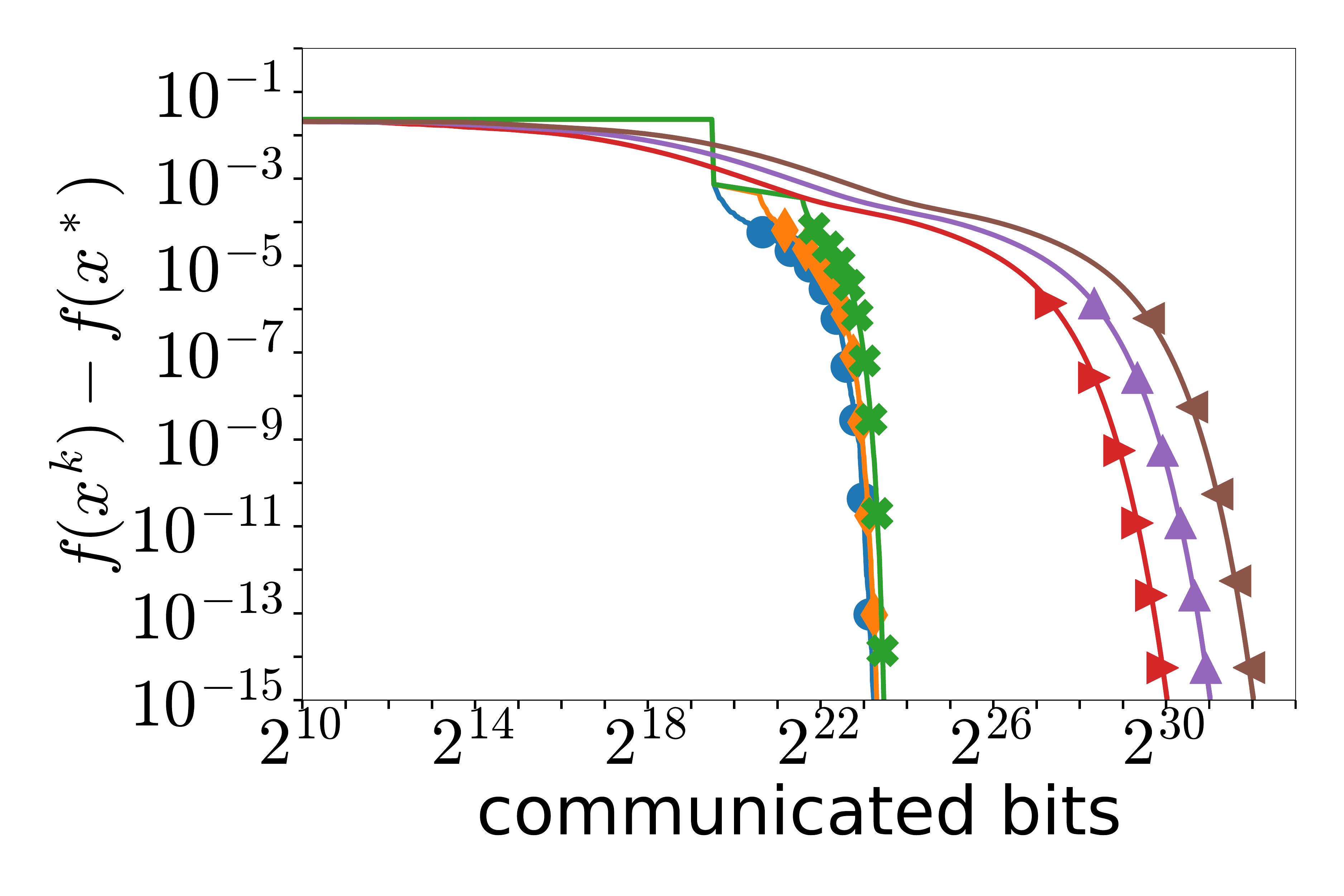}\\
            (a) \dataname{w8a}, $\lambda=10^{-3}$ &
            (b) \dataname{phishing}, $\lambda=10^{-3}$ &
            (c) \dataname{w7a}, $\lambda=10^{-4}$ &
            (d) \dataname{a1a}, $\lambda=10^{-4}$\\
        \end{tabular}
        
    \end{center}
    \caption{Comparison of \algname{FedNL-PP} with \algname{Artemis} in terms of communication complexity for several values of active nodes $\tau$.}
    \label{FIG:FedNL_vs_Artemis}
\end{figure*}

\subsection{Comparison with \algname{NL1}}

In our next experiment we compare \algname{FedNL} with three types of compression operators (Rank-$R$, Top-$K$, PowerSGD) and \algname{NL1}. As we can see in Figure~\ref{FIG:FedNL_vs_NL1}, \algname{FedNL} with Rank-$1$ are more communication efficient method in all cases. \algname{FedNL} with Top-$d$ and PowerSGD of rank $1$ compressors performs better or the same as \algname{NL1} in almost all cases, except Figure~\ref{FIG:FedNL_vs_NL1}: (c), where \algname{FedNL} with PowerSGD demonstrates a little bit worse results than \algname{NL1}. Based on these experiments, we can conclude that new compression mechanism for Hessians is more effective than that was introduced in \citep{Islamov2021NewtonLearn}.

\begin{figure}[h]
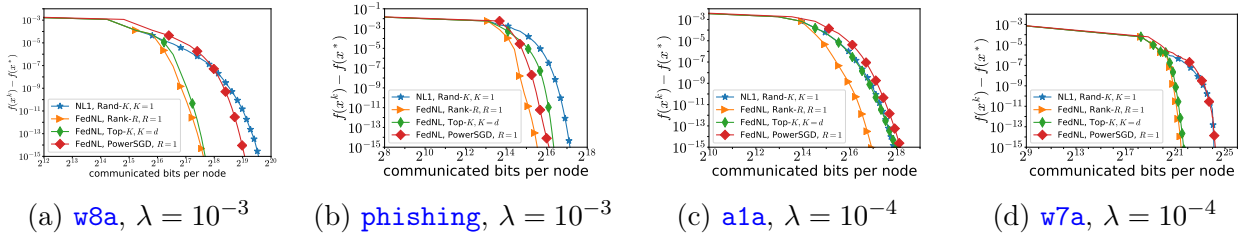

    \begin{center}
        \begin{tabular}{cccc}
            \includegraphics[width=0.23\linewidth]{Experiments/w8a/lmb=1e-3/old_Newton_vs_new_Newton/NL_old_vs_new_All_w8a_lmb=0.001_bits.pdf} &
            \includegraphics[width=0.23\linewidth]{Experiments/phishing/lmb=1e-3/old_Newton_vs_new_Newton/NL_old_vs_new_All_phishing_lmb=0.001_bits.pdf} &
            \includegraphics[width=0.23\linewidth]{Experiments/a1a/lmb=1e-4/old_Newton_vs_new_Newton/NL_old_vs_new_All_a1a_lmb=0.0001_bits.pdf} &
            \includegraphics[width=0.23\linewidth]{Experiments/w7a/lmb=1e-4/old_Newton_vs_new_Newton/NL_old_vs_new_All_w7a_lmb=0.0001_bits.pdf}\\
            (a) \dataname{w8a}, $\lambda=10^{-3}$ &
            (b) \dataname{phishing}, $\lambda=10^{-3}$ &
            (c) \dataname{a1a}, $\lambda=10^{-4}$ &
            (d) \dataname{w7a}, $\lambda=10^{-4}$\\
        \end{tabular}
    \end{center}
    \caption{Comparison of \algname{FedNL} with three types of compression and \algname{NL1} in terms of communication complexity.}
    \label{FIG:FedNL_vs_NL1}
\end{figure}

\subsection{Local comparison}

Now we compare \algname{FedNL} (Rank-$1$ compressor, $\alpha=1$) and \algname{N0} with first order methods: \algname{ADIANA} with random dithering (\algname{ADIANA}, RD, $s=\sqrt{d}$), \algname{DIANA} with random dithering (\algname{DIANA}, RD, $s=\sqrt{d}$), Shifted Local gradient descent (\algname{S-Local-GD}, $p=q=\frac{1}{n}$), and vanilla gradient descent (\algname{GD}). Here we set $x^0$ close to the solution $x^*$ in order to highlight fast local rates of \algname{FedNL} and \algname{N0} independent of the condition number. Moreover, we compare \algname{FedNL} (Rank-$1$ compressor, $\alpha=1$) against \algname{DINGO}. In order to make fair comparison we calculate transmitted bits in both directions, since \algname{DINGO} requires several expensive communication round per one iteration of the algorithm. All results are presented in Figure~\ref{FIG:FedNL_vs_others_locally}. We clearly see that \algname{FedNL} and \algname{N0} are more communication effective methods than gradient type ones. In some cases the difference is large; see Figure~\ref{FIG:FedNL_vs_others_locally}: (a), (d). In addition \algname{FedNL} is more effective than \algname{DINGO} in terms of communication complexity. 

\begin{figure}[h]
    \begin{center}
        \begin{tabular}{cccc}
            \includegraphics[width=0.23\linewidth]{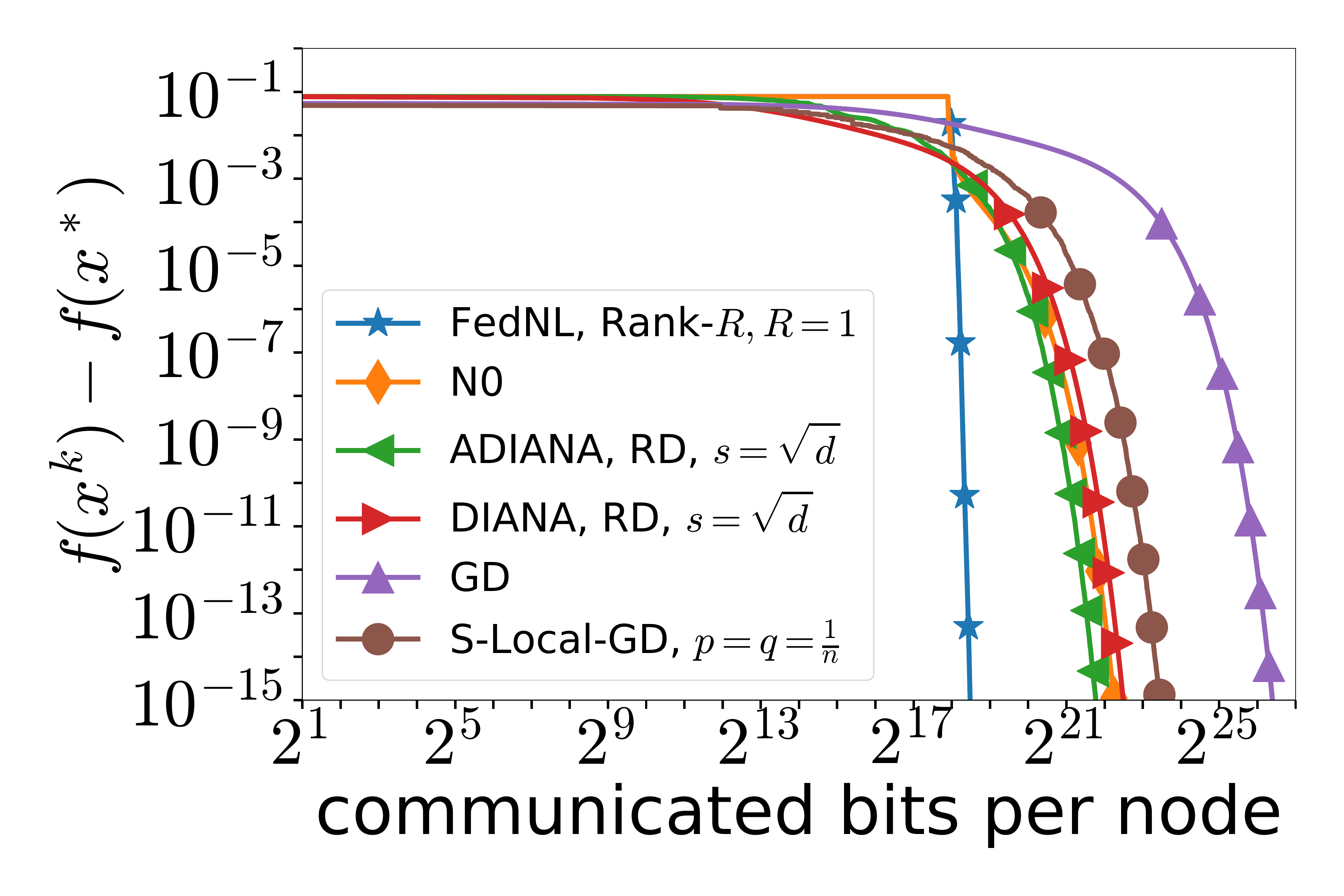} &
            \includegraphics[width=0.23\linewidth]{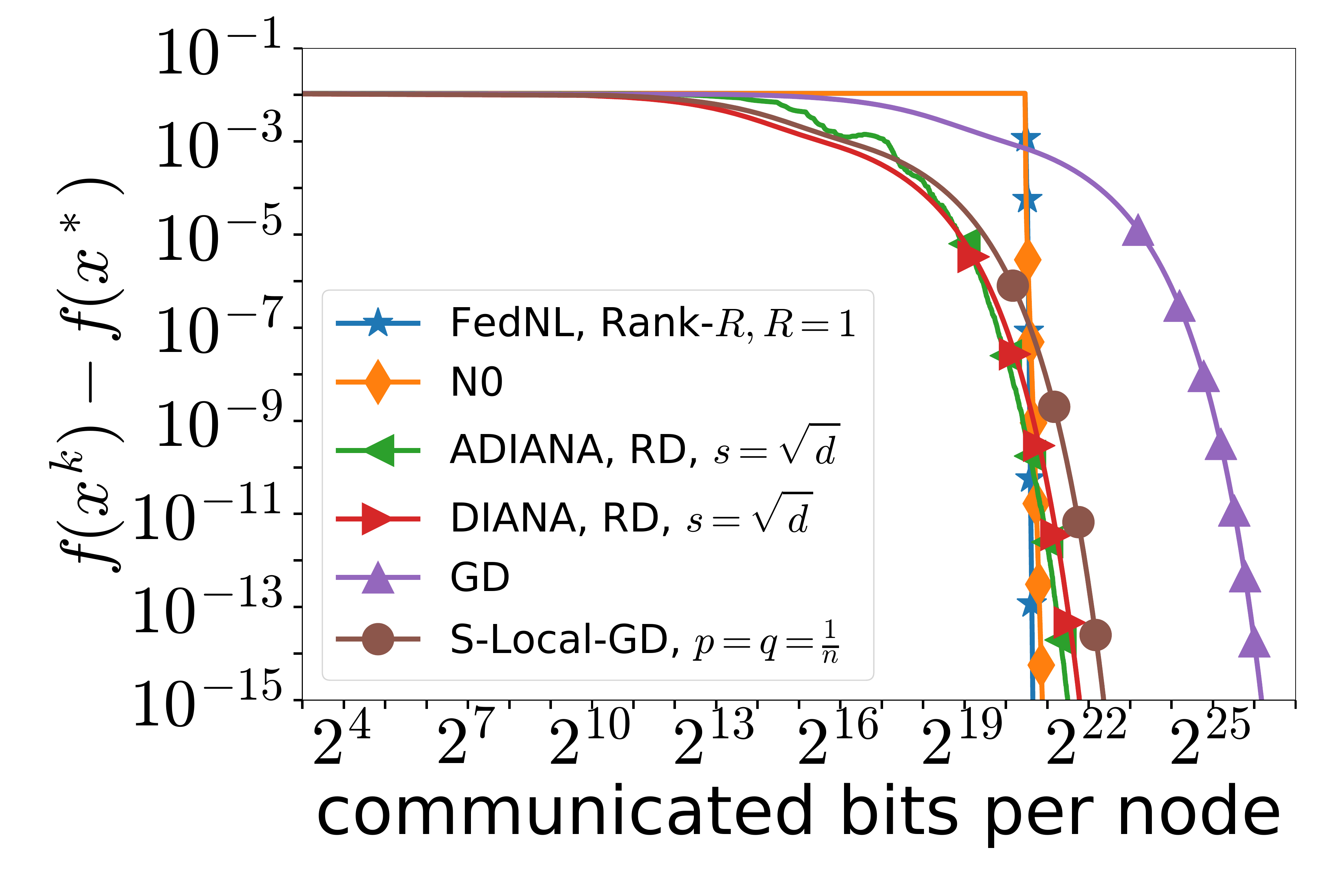} &
            \includegraphics[width=0.23\linewidth]{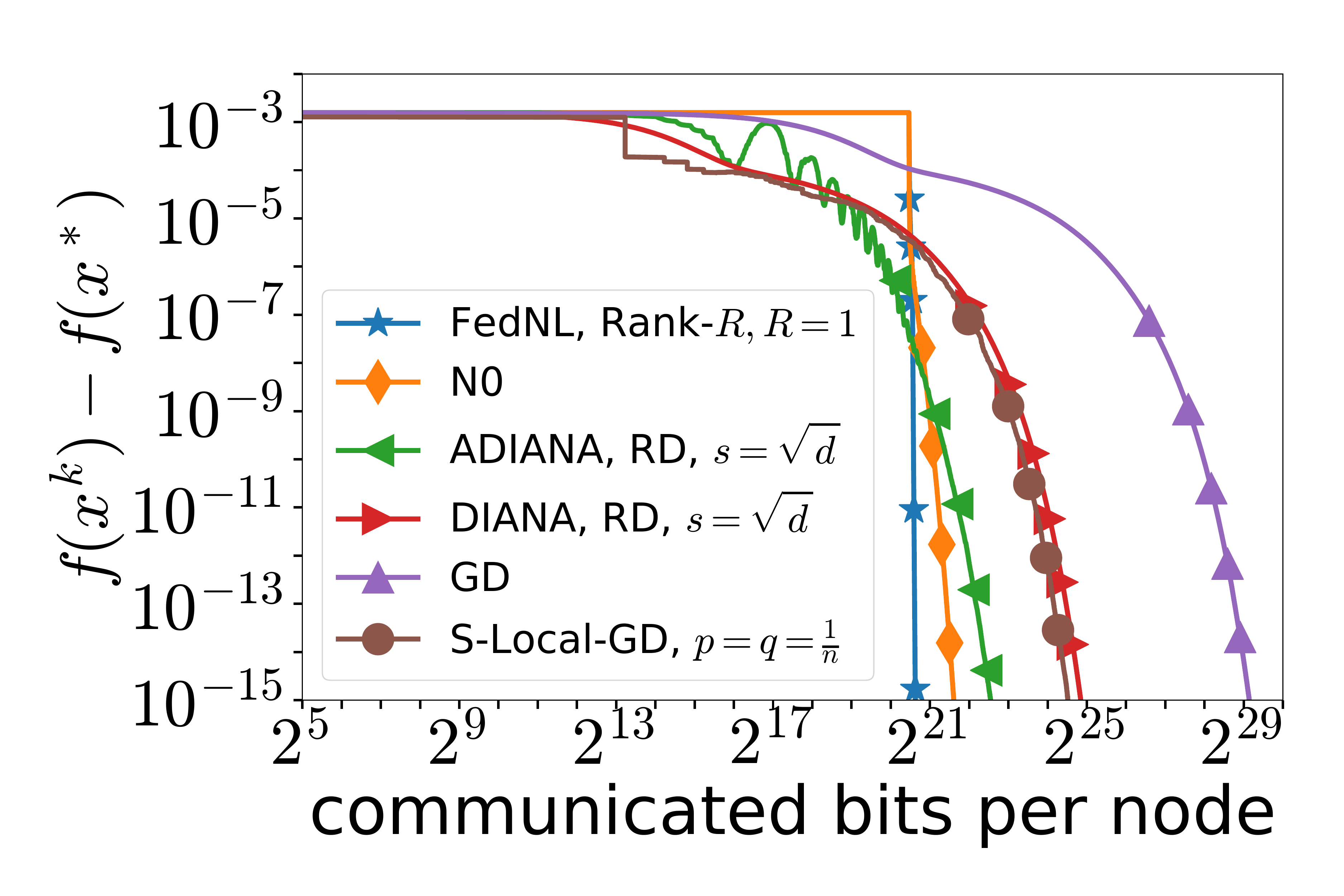} &
            \includegraphics[width=0.23\linewidth]{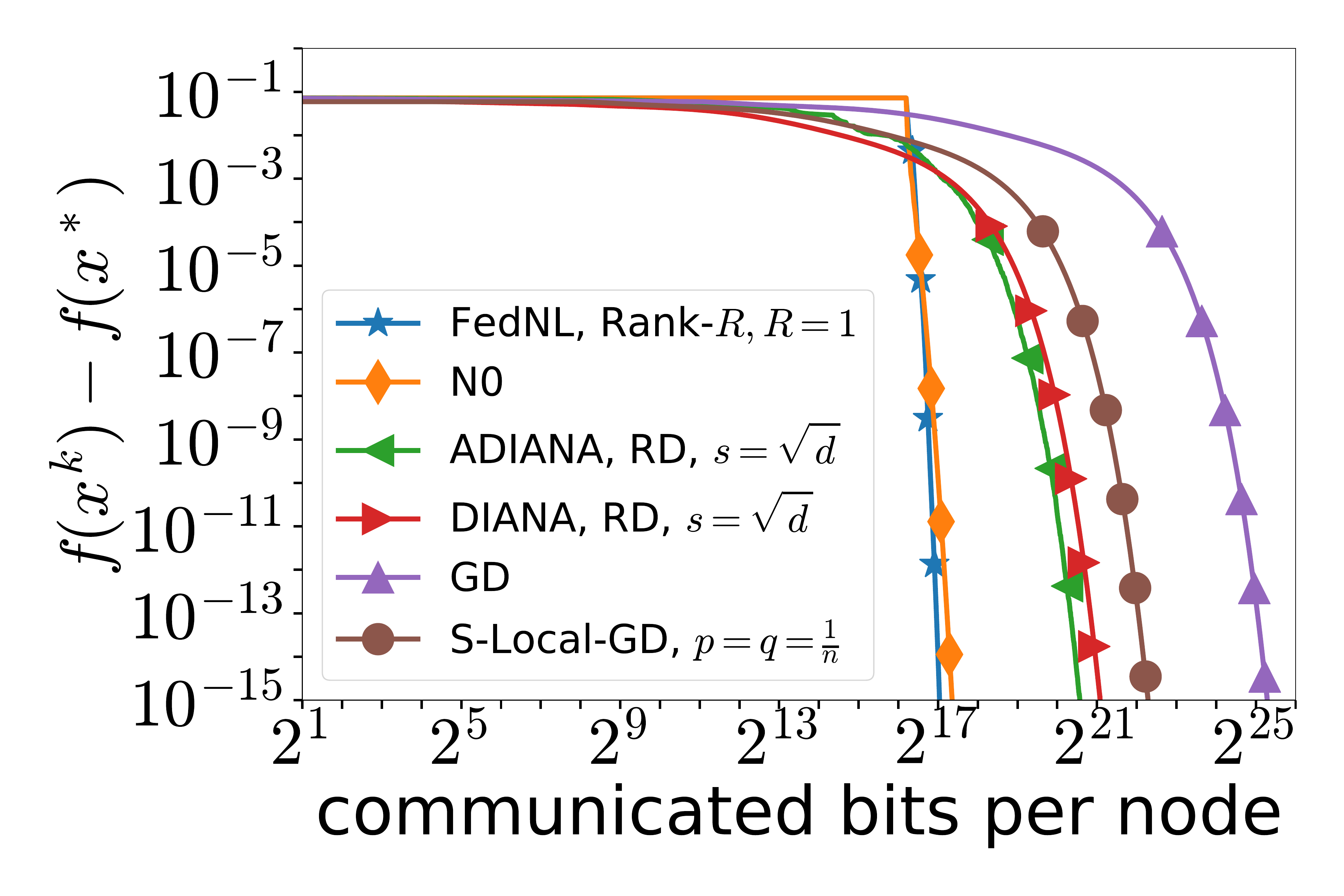}\\
            (a) \dataname{a1a}, $\lambda=10^{-3}$ &
            (b) \dataname{w8a}, $\lambda=10^{-3}$ &
            (c) \dataname{w7a}, $\lambda=10^{-4}$ &
            (d) \dataname{phishing}, $\lambda=10^{-4}$\\
            \includegraphics[width=0.23\linewidth]{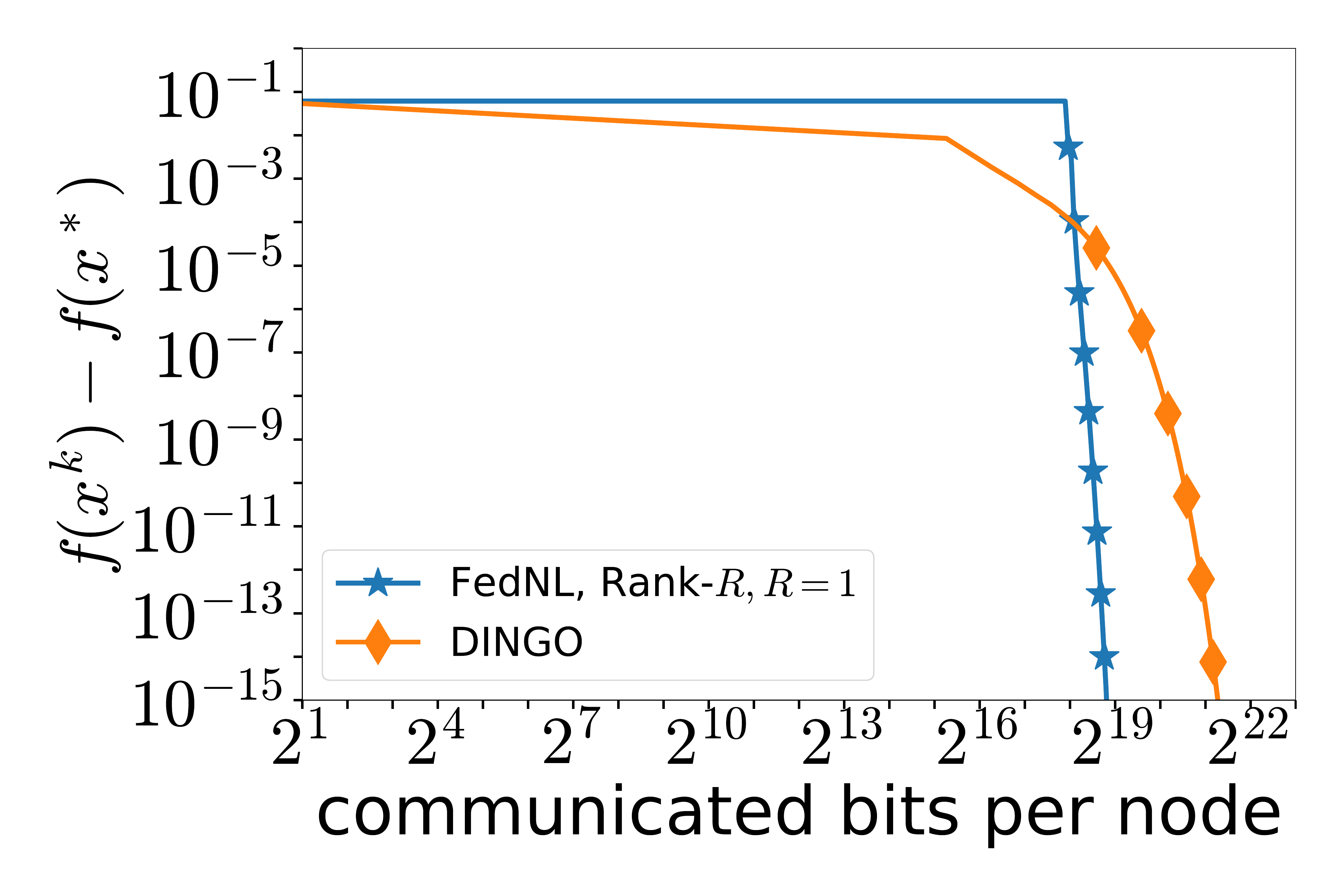} &
            \includegraphics[width=0.23\linewidth]{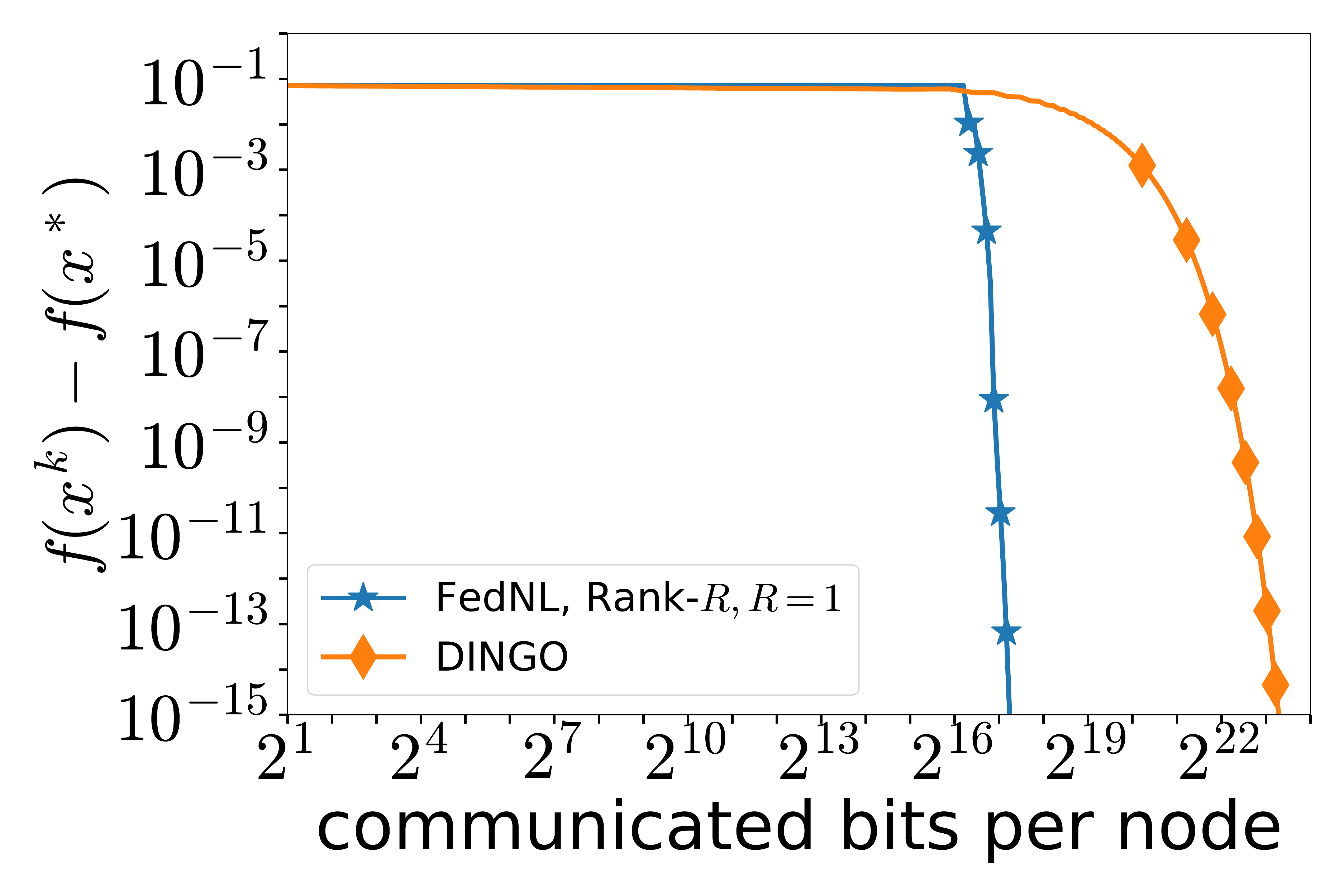} &
            \includegraphics[width=0.23\linewidth]{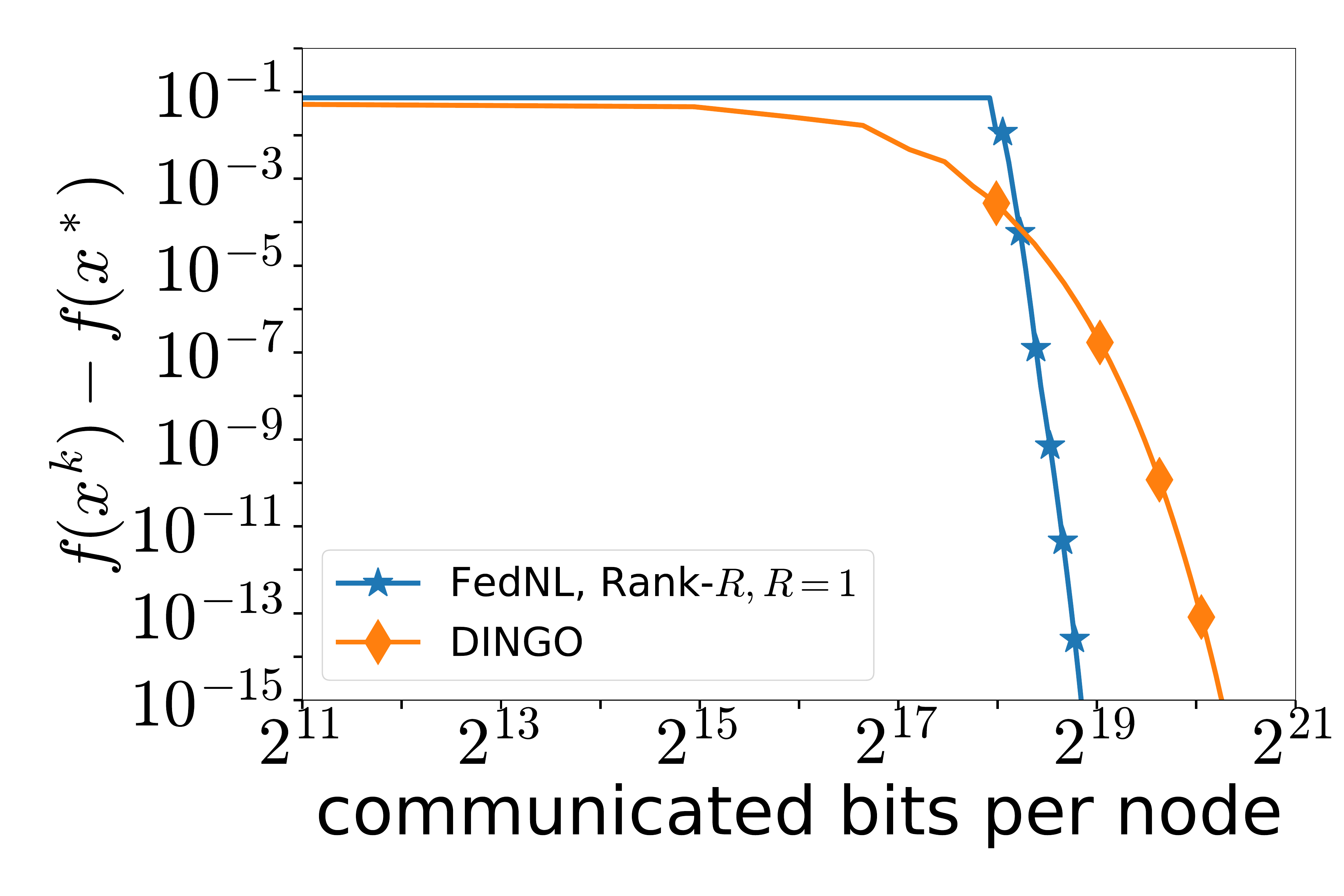} &
            \includegraphics[width=0.23\linewidth]{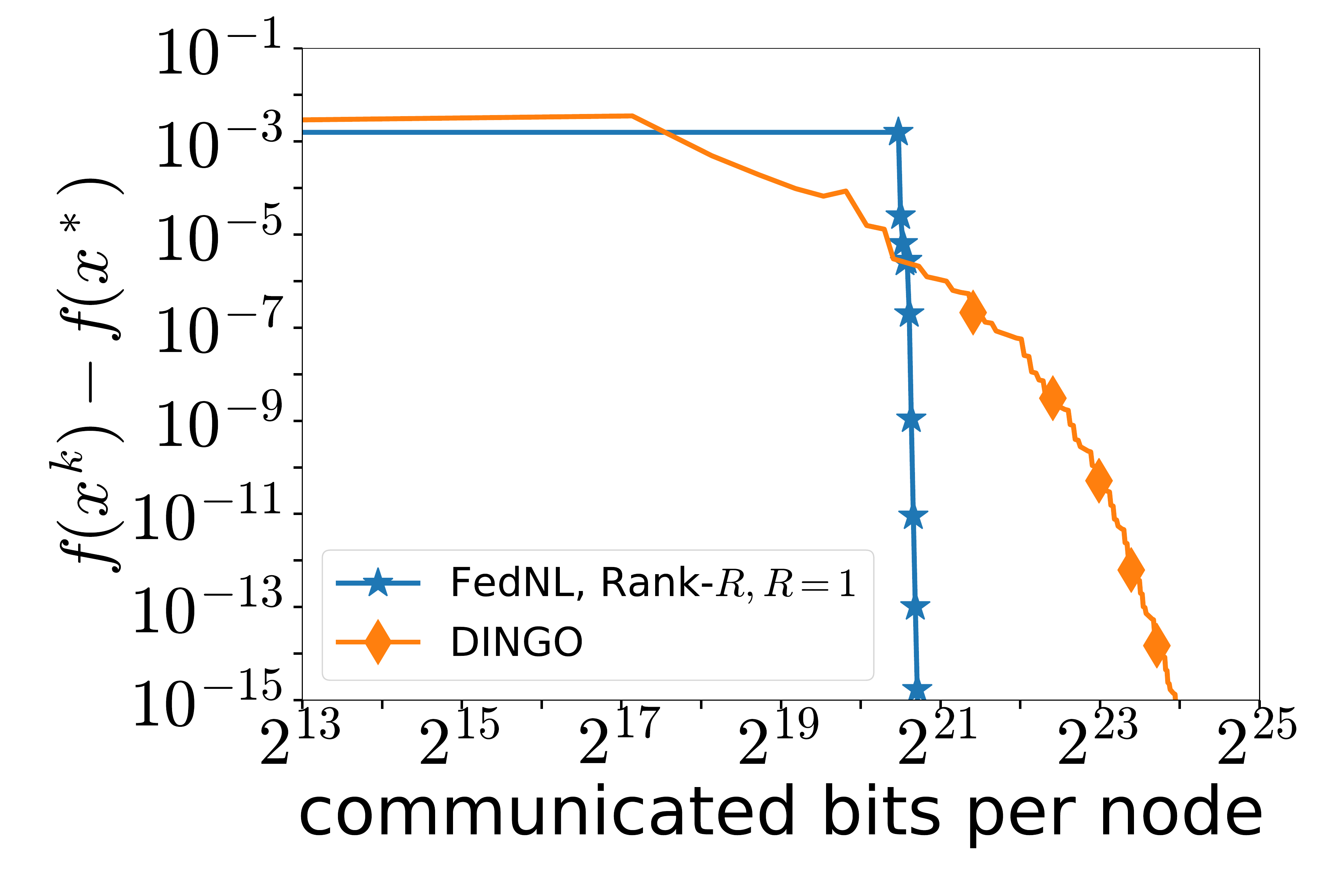}\\
            (a) \dataname{a1a}, $\lambda=10^{-3}$ &
            (b) \dataname{phishing}, $\lambda=10^{-3}$ &
            (c) \dataname{a9a}, $\lambda=10^{-4}$ &
            (d) \dataname{w7a}, $\lambda=10^{-4}$\\
        \end{tabular}
    \end{center}
    \caption{{\bf First row:} Local comparison of \algname{FedNL} and \algname{N0} with \algname{ADIANA}, \algname{DIANA}, \algname{S-Local-GD}, and \algname{GD} in terms of communication complexity. {\bf Second row:} Local comparison of \algname{FedNL} with \algname{DINGO} in terms of communication complexity.}
    \label{FIG:FedNL_vs_others_locally}
\end{figure} 

\subsection{Global compersion}

In our next test we compare \algname{FedNL-LS} (Rank-$1$ compressor, $\alpha=1$), \algname{N0-LS}, and \algname{FedNL-CR} (Rank-$1$ compressor, $\alpha=1$) with gradient type methods such as \algname{ADIANA} with random dithering (\algname{ADIANA}, RD, $s=\sqrt{d}$), \algname{DIANA} with random dithering (\algname{DIANA}, RD, $s=\sqrt{d}$),  Shifted Local gradient descent (\algname{S-Local-GD}, $p=q=\frac{1}{n}$), vanilla gradient descent (\algname{GD}), and gradient descent with line search (\algname{GD-LS}). Besides, we compare \algname{FedNL-LS} (Rank-$1$ compressor, $\alpha=1$) and \algname{FedNL-CR} (Rank-$1$ compressor, $\alpha=1$) with \algname{DINGO}. Since \algname{DINGO} requires several expensive communication round per iteration, we calculate transmitted bits in both directions to make fair comparison. According to numerical experiments, we can conclude that \algname{FedNL-LS} and \algname{N0-LS} are more communication effective methods than gradient type ones. In some cases (see Figure~\ref{FIG:FedNL_vs_others_globally}: (c), (d)) \algname{FedNL-CR} performs better or the same as \algname{DIANA}.

\begin{figure}[h]
    \begin{center}
        \begin{tabular}{cccc}
            \includegraphics[width=0.23\linewidth]{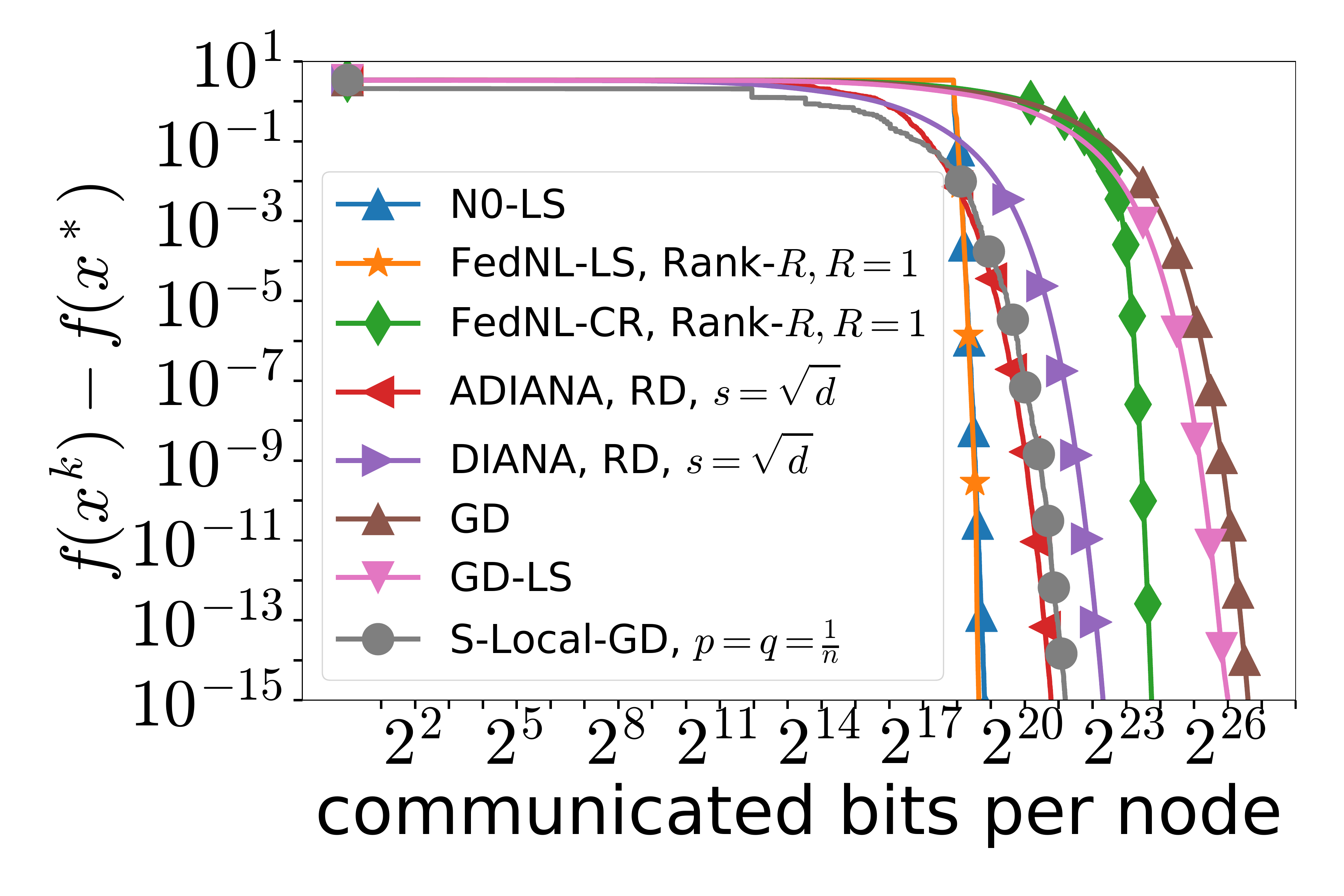} &
            \includegraphics[width=0.23\linewidth]{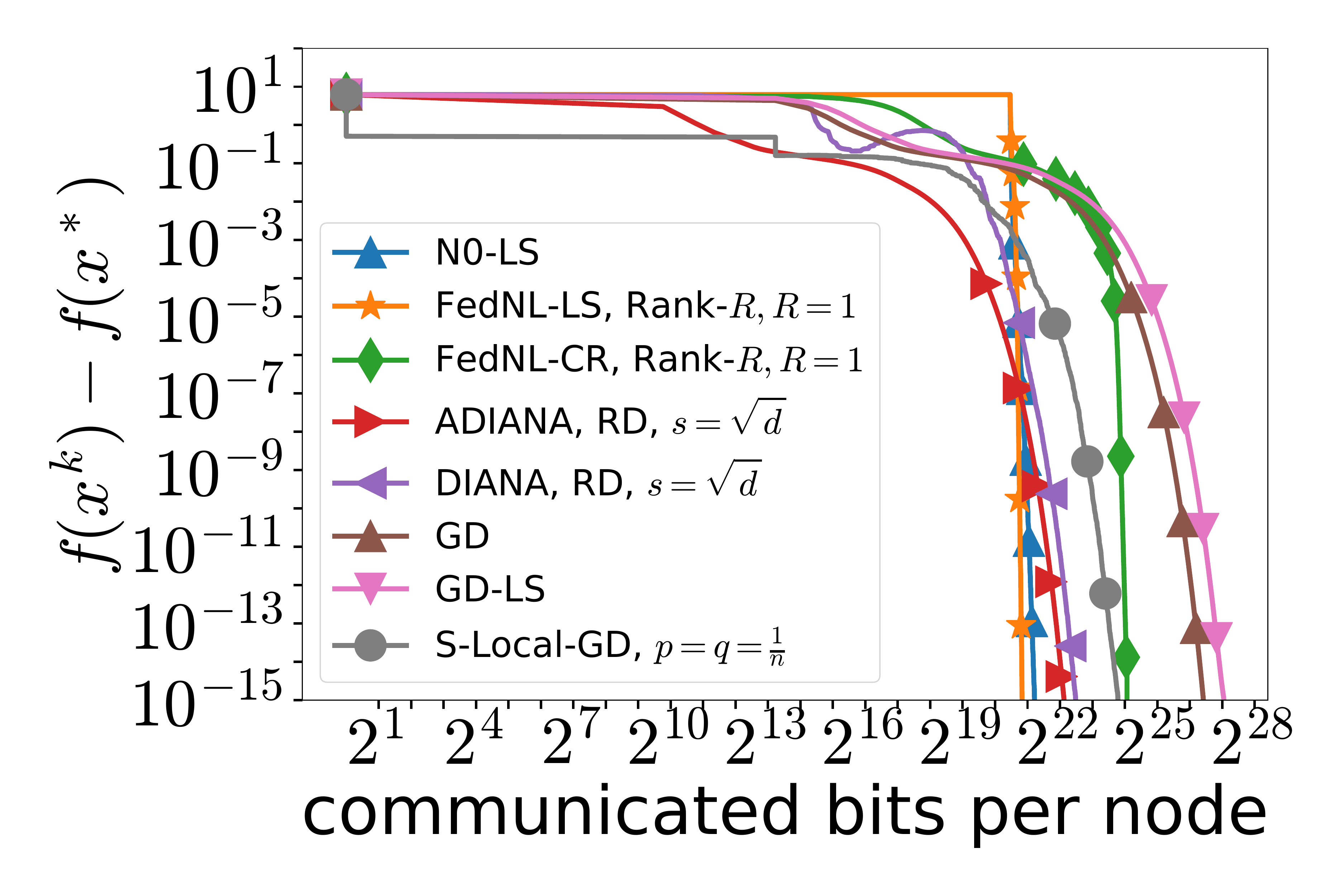} &
            \includegraphics[width=0.23\linewidth]{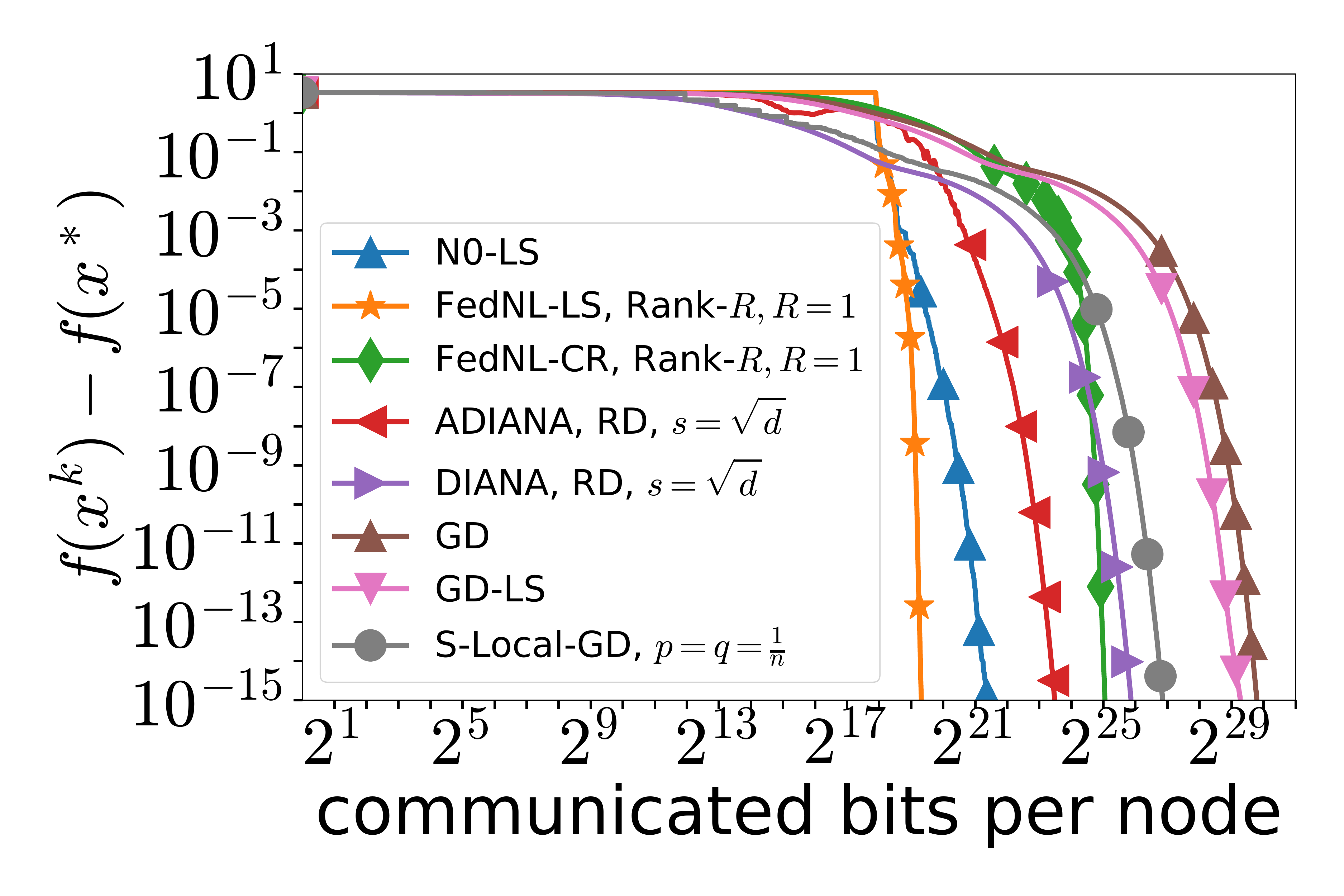} &
            \includegraphics[width=0.23\linewidth]{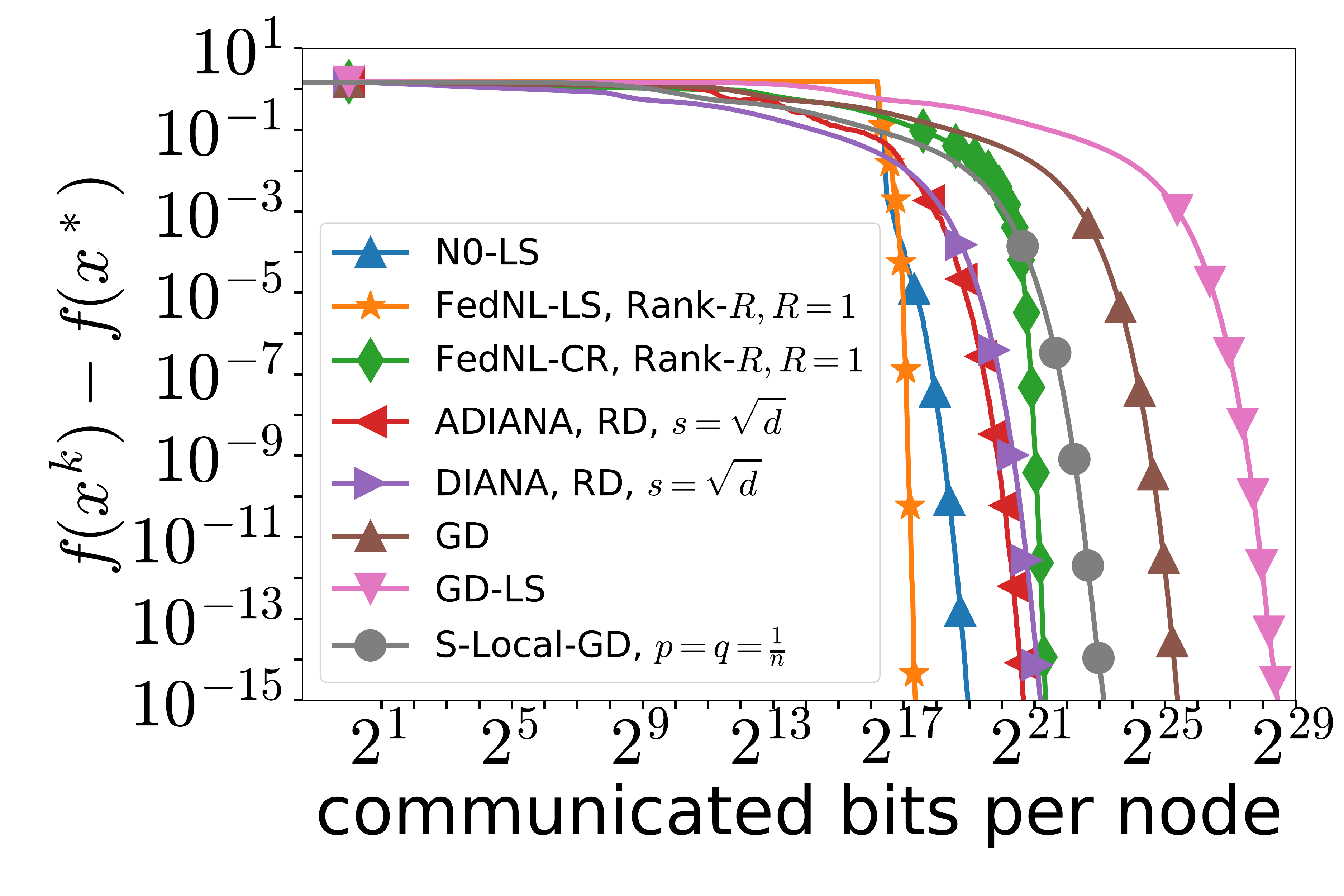}\\
            (a) \dataname{a9a}, $\lambda=10^{-3}$ &
            (b) \dataname{w7a}, $\lambda=10^{-3}$ &
            (c) \dataname{a1a}, $\lambda=10^{-4}$ &
            (d) \dataname{phishing}, $\lambda=10^{-4}$\\
            \includegraphics[width=0.23\linewidth]{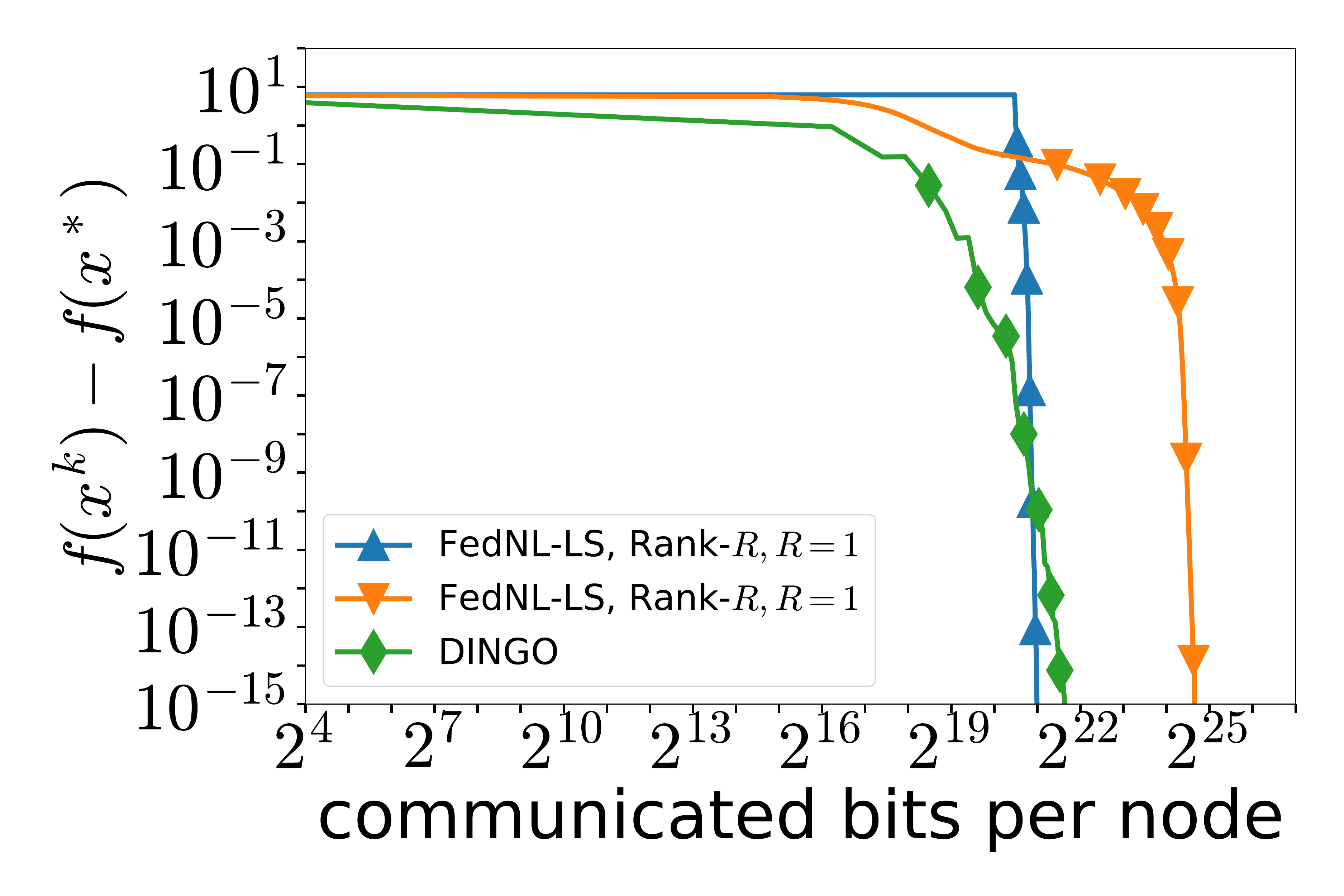} &
            \includegraphics[width=0.23\linewidth]{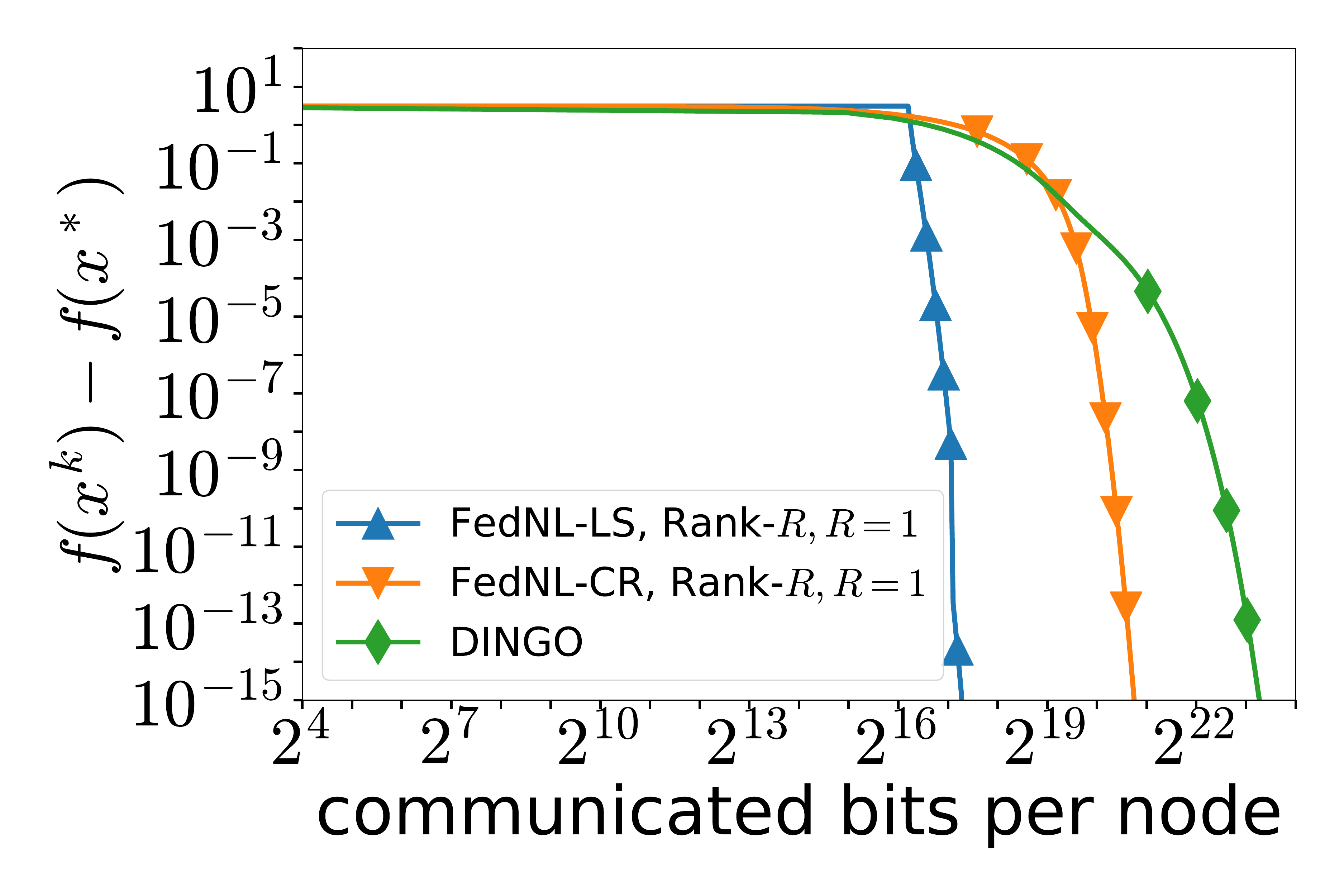} &
            \includegraphics[width=0.23\linewidth]{Experiments/a9a/lmb=1e-4/Global_comparison/Global_Comp_Dingo_a9a_lmb_0.0001_bits.pdf} &
            \includegraphics[width=0.23\linewidth]{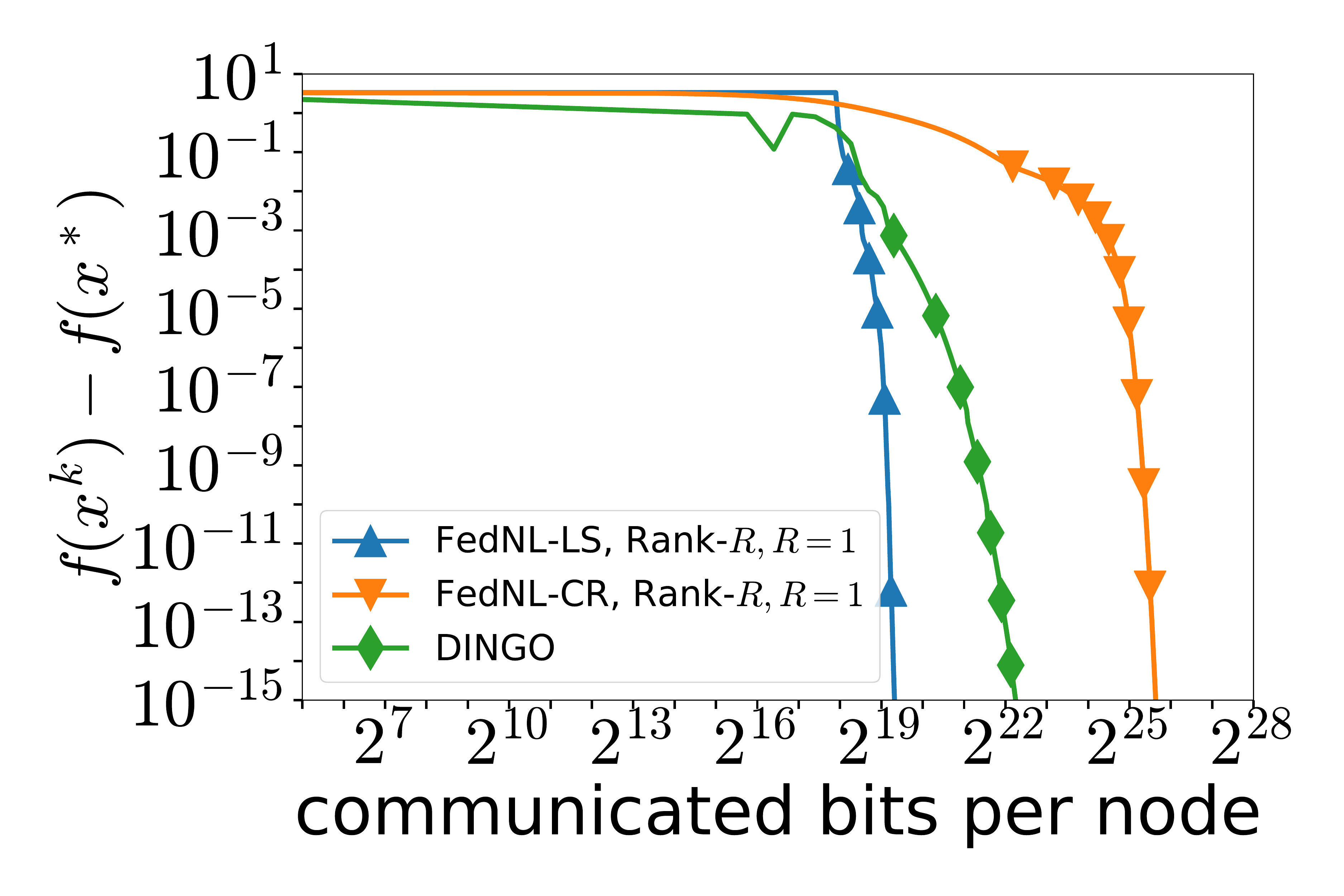}\\
            (a) \dataname{w7a}, $\lambda=10^{-3}$ &
            (b) \dataname{phishing}, $\lambda=10^{-3}$ &
            (c) \dataname{a9a}, $\lambda=10^{-4}$ &
            (d) \dataname{a1a}, $\lambda=10^{-4}$\\
        \end{tabular}
    \end{center}
    \caption{{\bf First row:} Global comparison of \algname{FedNL-LS}, \algname{N0-LS}, and \algname{FedNL-CR} with \algname{ADIANA}, \algname{DIANA}, \algname{S-Local-GD}, \algname{GD}, and \algname{GD-LS} in terms of communication complexity. {\bf Second row:} Global comparison of \algname{FedNL-LS} and \algname{FedNL-CR} with \algname{DINGO} in terms of communication complexity.}
    \label{FIG:FedNL_vs_others_globally}
\end{figure} 

\subsection{Effect of statistical heterogeneity}

In this set of experiments we investigate the performance of \algname{FedNL} under different level of heterogeneity of data. We generate synthethic data via rules as \citep{li2020federated} did. We set number of nodes $n=30$, the size of local data $m=200$, the dimension of the problem $d=100$, and regularization parameter $\lambda=10^{-3}$. 

The generation rules for non-IID synthetic data have two positive parameteres $\alpha, \beta$. For each node $i \in [n]$ let $B_i \sim \mathcal{N}(0,\beta)$. We use diagonal covariance matrix $\Sigma$ with $\Sigma_{j,j} = j^{-1.2}$, and mean vector $v_i$, each element of which is generated from $\mathcal{N}(B_i, 1)$ in order to get feature vector $a_{ij} \in \R^d$ from $\mathcal{N}(v_i, \Sigma)$. Let $u_i \sim \mathcal{N}(0, \alpha)$, $c_i \sim \mathcal{N}(u_i, 1)$, then we generate vector $w_i \in \R^d$ each entire of which is sampled from $\mathcal{N}(u_i, 1)$. Let $p_{ij}=\sigma\(w_i^\top a_{ij} + c_i\)$, where $\sigma(\cdot)$ is a sigmoid function. Finally the label $b_{ij}$ is equal to $-1$ with probability $p_{ij}$, and is equal to $+1$ with probability $1-p_{ij}$. We denote the data which is generated following the rules above as \dataname{Synthetic}$(\alpha, \beta)$. 

In addition, we generate IID data where $w \sim \mathcal{N}(0, 1)$ and $c \sim \mathcal{N}(0, 1)$ are sampled only once and used for each node $i$. Feature vectors $a_{ij}$ is generated from $\mathcal{N}(v_i, \Sigma)$, where each element of $v_i$ is equal to $B_i \sim \mathcal{N}(0,\beta)$. The label $b_{ij}$ is equal to $-1$ with probability $p_{ij} = \sigma(w^\top a_{ij} + c)$, and $+1$ otherwise. We denote such data as \dataname{IID}. 

Using generated synthetic datasets we compare local performance of \algname{FedNL} (Rank-$1$ compressor, $\alpha=1$, Option $2$),  \algname{ADIANA} with random dithering (\algname{ADIANA}, RD, $s=\sqrt{d}$), \algname{DIANA} with random dithering (\algname{DIANA}, RD, $s=\sqrt{d}$), Shifted Local gradient descent (\algname{S-Local-GD}, $p=q=\frac{1}{n}$), and vanilla gradient descent (GD) in terms of communication complexit; see Figure~\ref{FIG:data_heterogeneity} (first row). Besides, we compare \algname{FedNL} and \algname{DINGO}; see Figure~\ref{FIG:data_heterogeneity} (second row). According to the results, we see that the difference between \algname{FedNL} and gradient type methods is getting larger, when the local data is becoming more heterogeneous; \algname{FedNL} outperforms other methods by several orders in magnitude.  \algname{FedNL} is more stable varying data heterogeneity than \algname{DINGO}. The difference between these two methods on \dataname{IID} data is small; when data is becoming more heterogeneous, the difference is increasing dramatically. 


\begin{figure*}[h]
    \begin{center}
        \begin{tabular}{cccc}
            \includegraphics[width=0.23\linewidth]{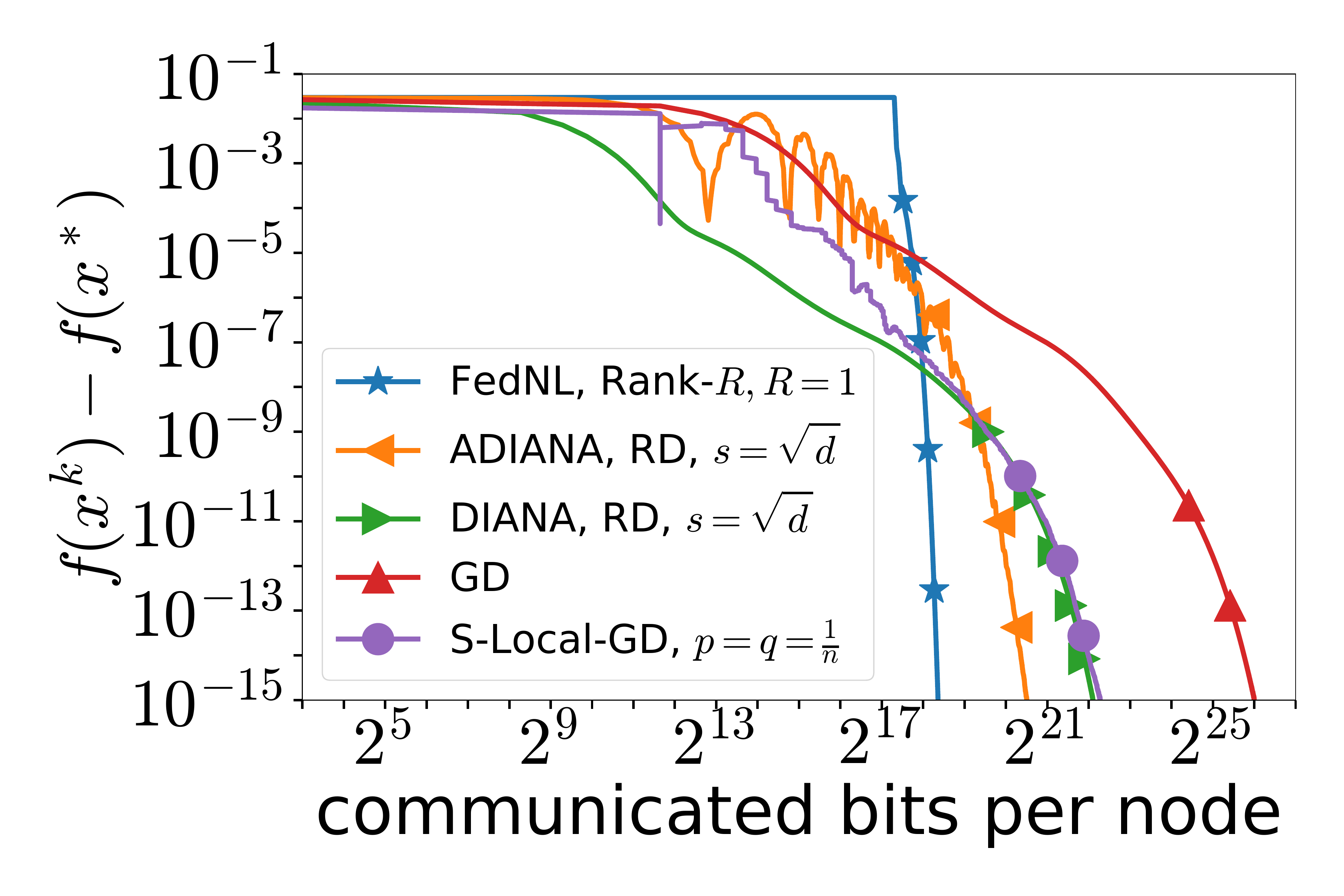} &
            \includegraphics[width=0.23\linewidth]{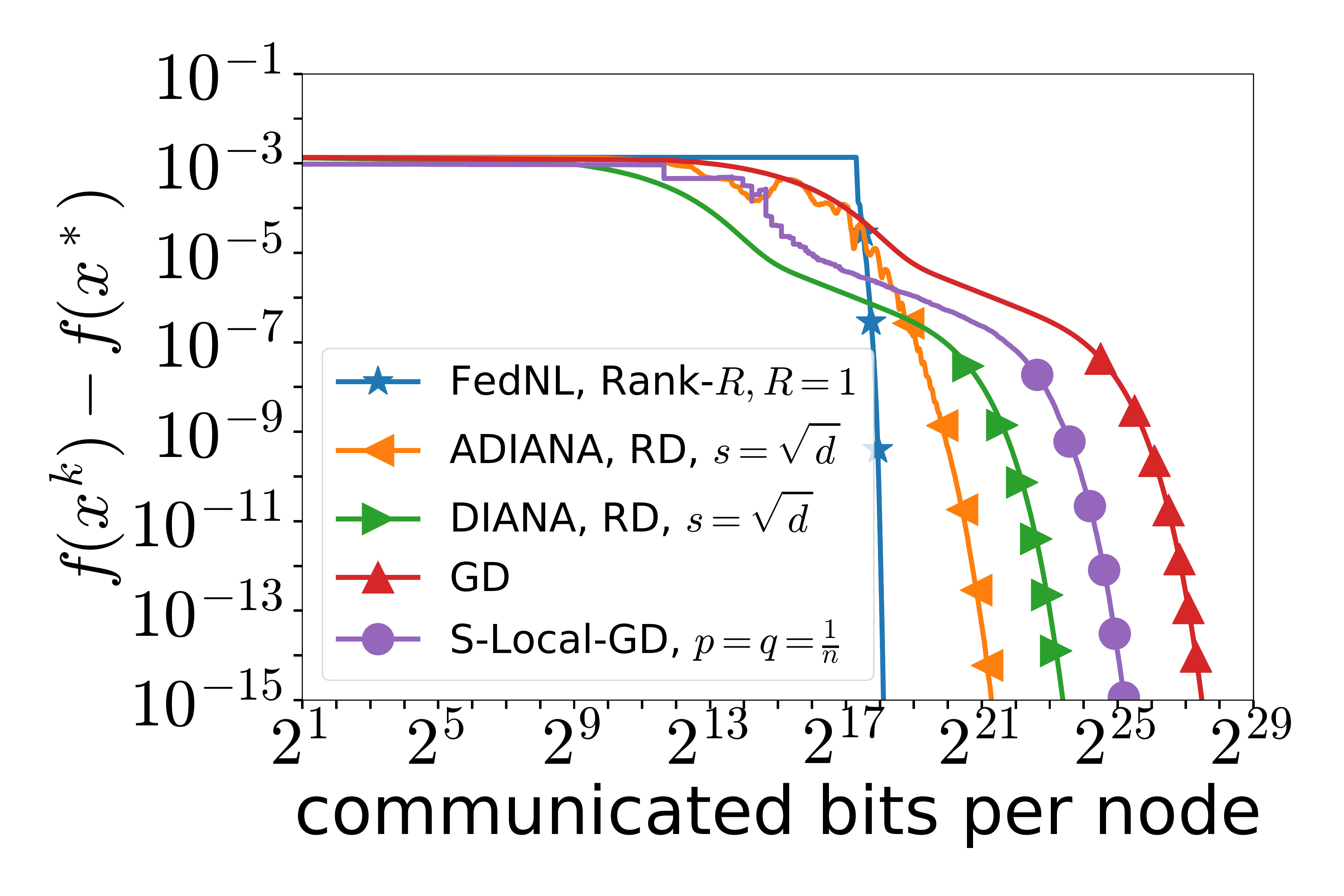} &
            \includegraphics[width=0.23\linewidth]{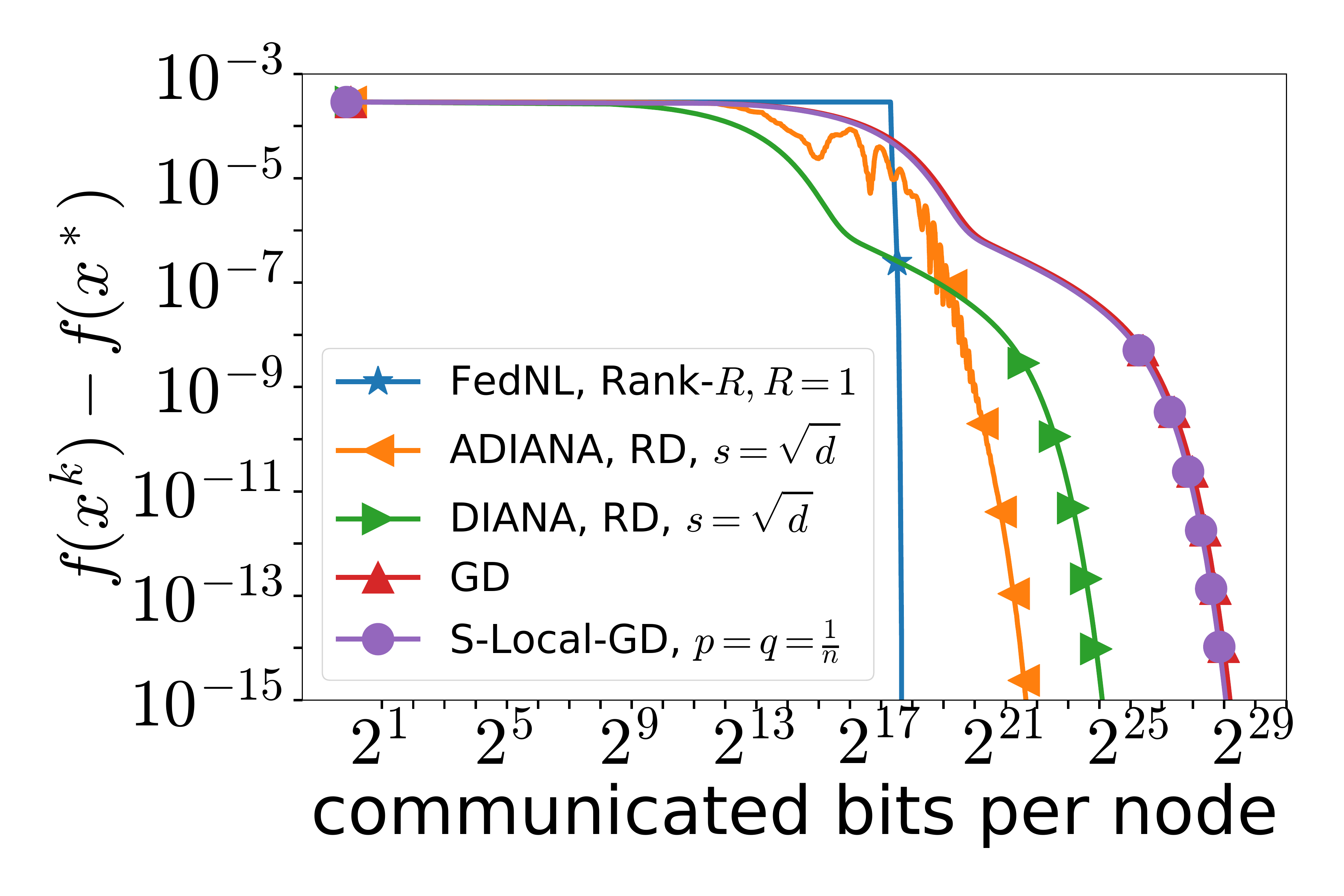} &
            \includegraphics[width=0.23\linewidth]{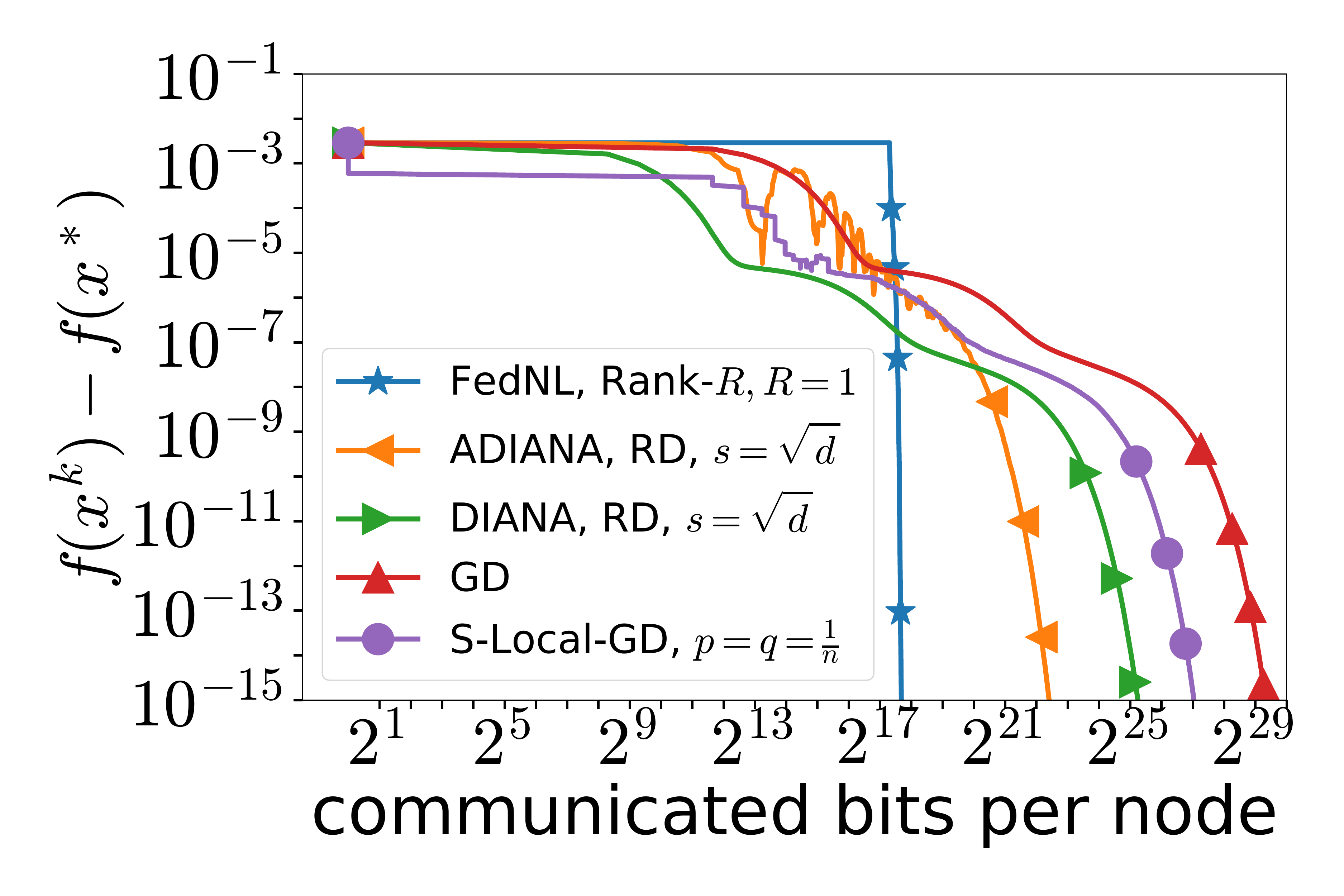}\\
            (a) \dataname{IID} &
            (b) \dataname{Synthetic}($0.5, 0.5$) &
            (c) \dataname{Synthetic}($0.75, 0.75$) &
            (d) \dataname{Syntethic}($1, 1$)\\
            \includegraphics[width=0.23\linewidth]{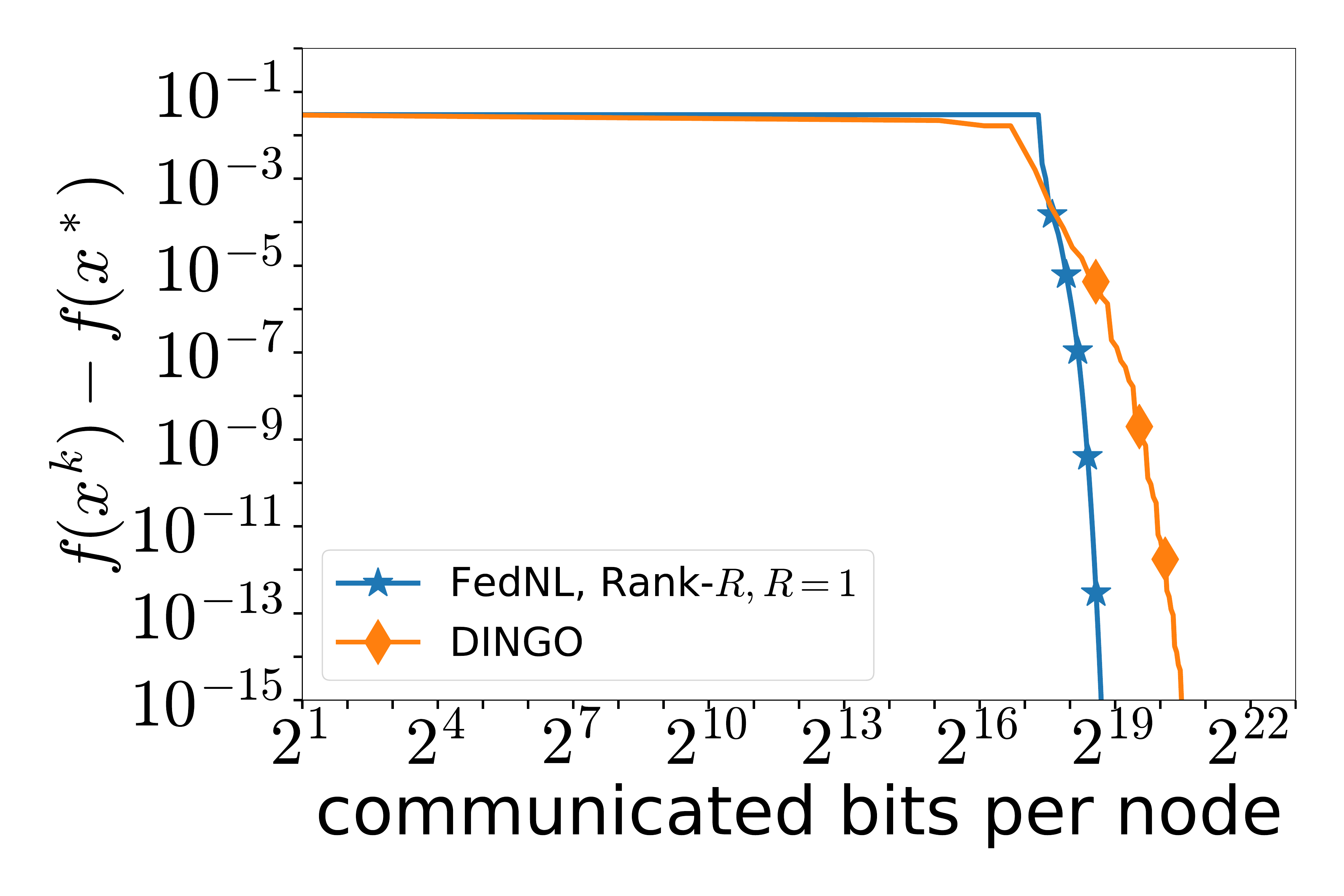} &
            \includegraphics[width=0.23\linewidth]{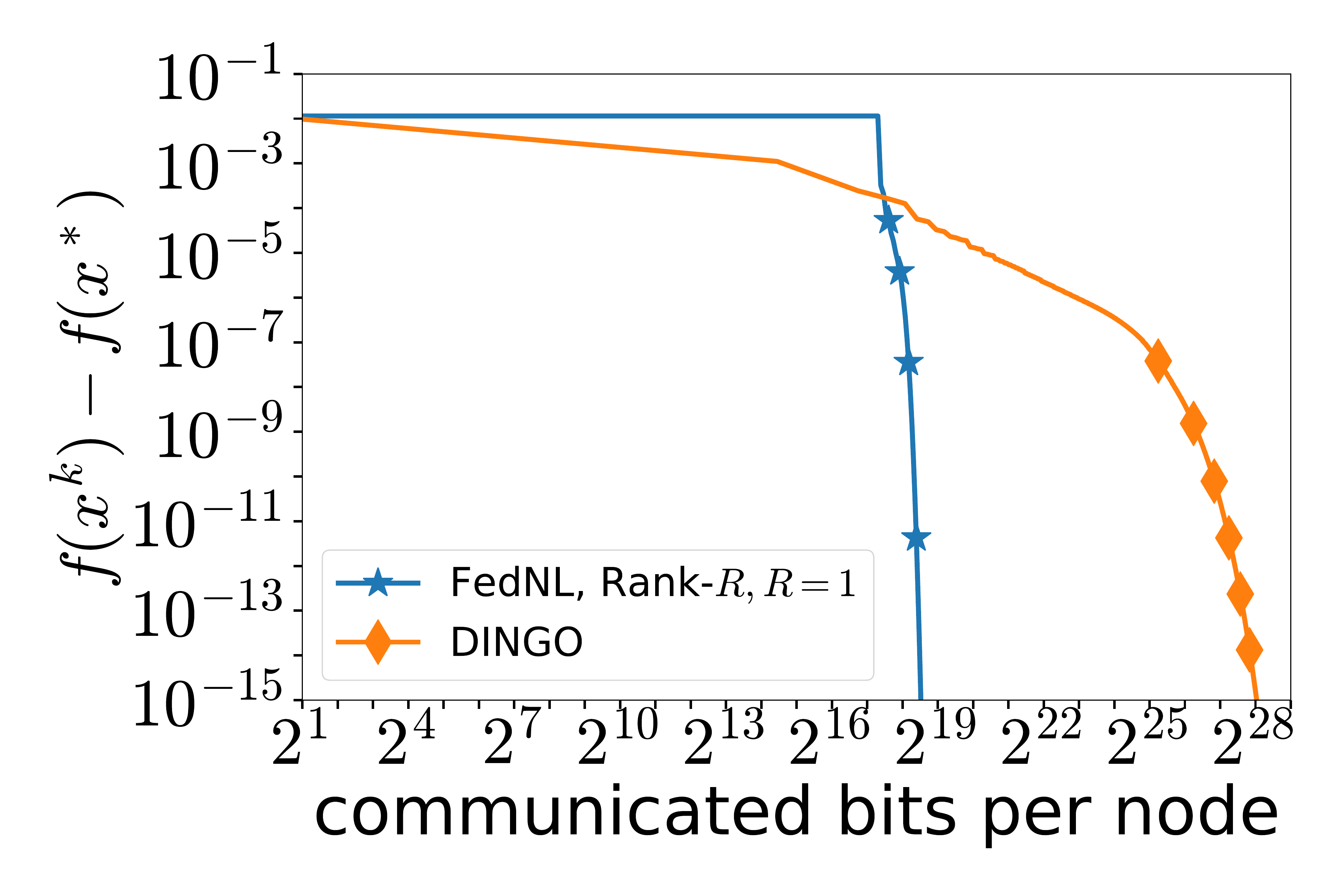} &
            \includegraphics[width=0.23\linewidth]{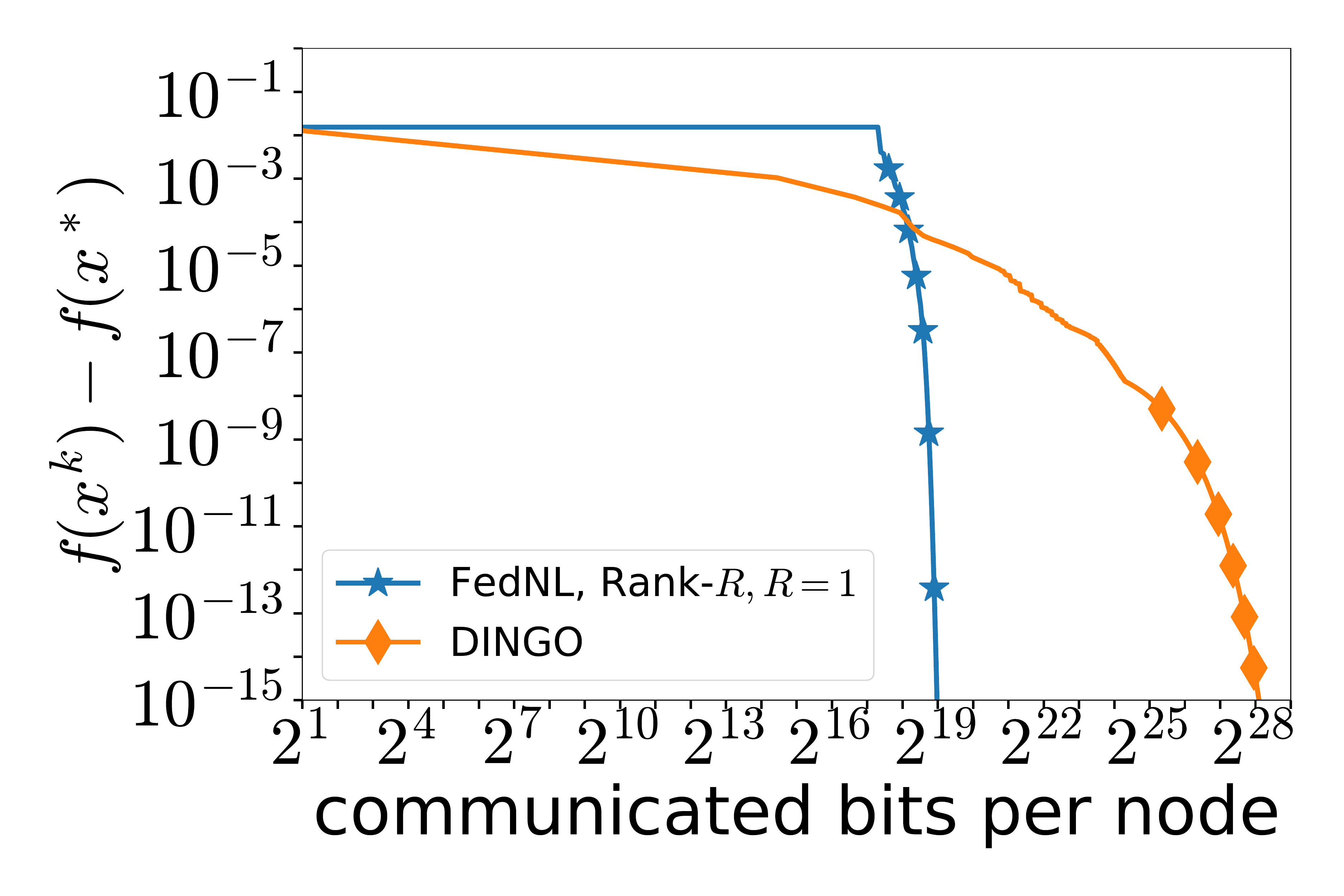} &
            \includegraphics[width=0.23\linewidth]{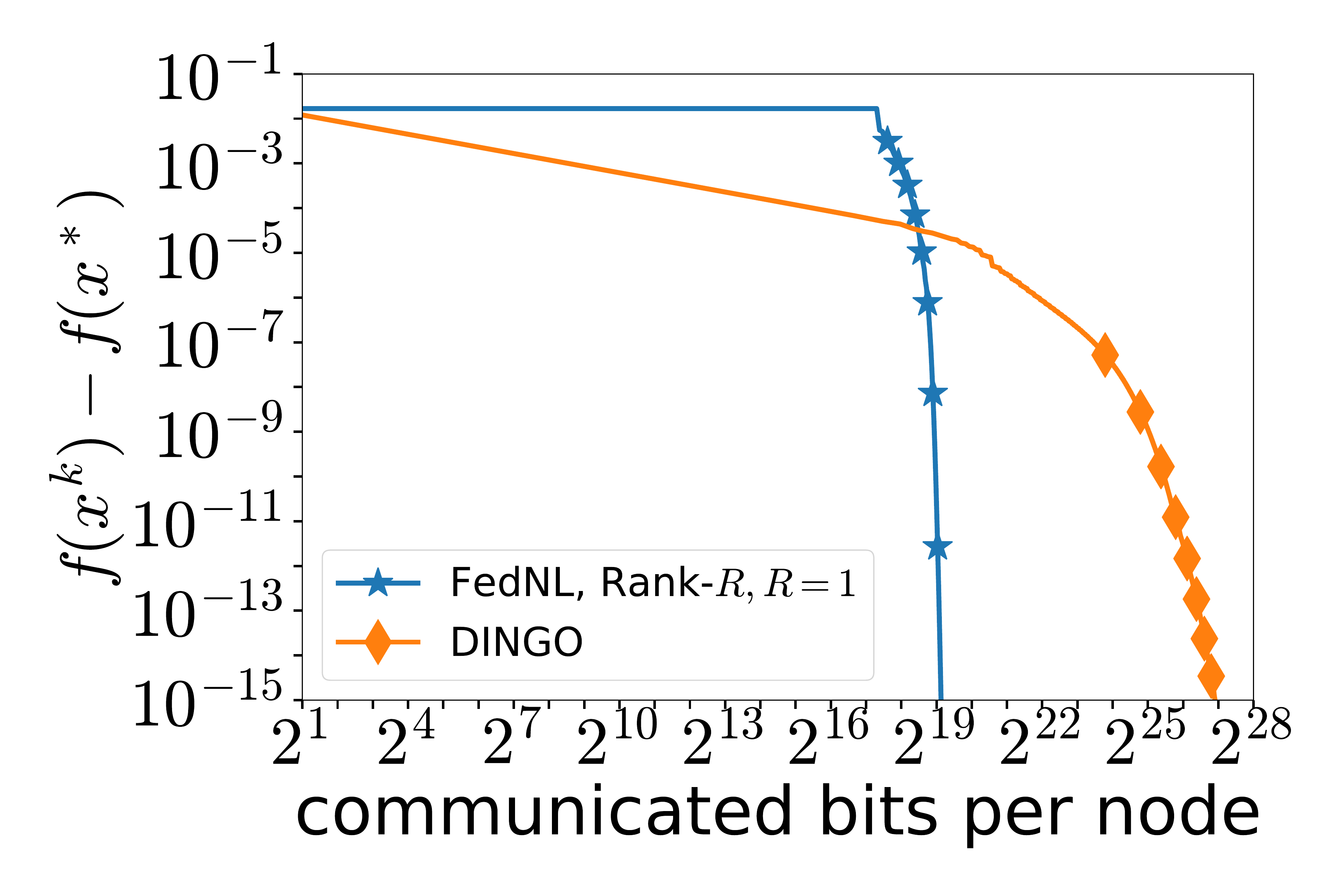}\\
            (a) \dataname{IID} &
            (b) \dataname{Synthetic}($0.5, 0.5$) &
            (c) \dataname{Synthetic}($0.75, 0.75$) &
            (d) \dataname{Syntethic}($1, 1$)\\
        \end{tabular}
    \end{center}
    \caption{{\bf First row:} Local comparison of \algname{FedNL},  with \algname{ADIANA}, \algname{DIANA}, \algname{S-Local-GD}, and \algname{GD} in terms of communication complexity. {\bf Second row:} Local comparison of \algname{FedNL} with \algname{DINGO} in terms of communication complexity.}
    \label{FIG:data_heterogeneity}
\end{figure*}

\section{Proofs of Results from Section~\ref{sec:FedNL-main-paper}}
\subsection{Auxiliary lemma}

Denote by $\E_k\[\cdot\]$ the conditional expectation given $k^{th}$ iterate $x^k$. We first develop a lemma to handle different cases of compressors for $\mathbb{E}_k \|\mH_i^k + \alpha \cC_i^k(\nabla^2 f_i(y) - \mH_i^k) - \nabla^2 f_i(z)\|^2_{\rm F}$, where $\mathbb{E}_k[y] = y$ and $\mathbb{E}_k[z] = z$.

\begin{lemma}\label{lm:threecomp}
  For any $y$, $z\in \R^d$ such that $\mathbb{E}_k[y] = y$ and $\mathbb{E}_k[z] = z$, we have the following results in different cases. 

  (i) If $\cC_i^k \in \B(\omega)$ and $\alpha \leq \frac{1}{\omega+1}$, then
  $$
  \E_k \[\|\mH_i^k + \alpha \cC_i^k(\nabla^2 f_i(y) - \mH_i^k) - \nabla^2 f_i(z)\|^2_{\rm F}\]
  \leq (1-\alpha) \|\mH_i^k - \nabla^2 f_i(z) \|_{\rm F}^2 + \alpha \HF^2 \|y-z\|^2. 
  $$
  (ii) If $\cC_i^k \in \bC(\delta)$ and $\alpha = 1 - \sqrt{1-\delta}$, then
  $$
  \E_k \[\|\mH_i^k + \alpha \cC_i^k(\nabla^2 f_i(y) - \mH_i^k) - \nabla^2 f_i(z)\|^2_{\rm F}\]
  \leq (1-\alpha^2)\left\|\mH_i^{k} - \nabla^2 f_i(z)\right\|^2_{\rm F} + \alpha \HF^2\|y-z\|^2. 
  $$
  (iii) If $\cC_i^k \in \bC(\delta)$ and $\alpha=1$, then
  $$
  \E_k \[\|\mH_i^k + \alpha \cC_i^k(\nabla^2 f_i(y) - \mH_i^k) - \nabla^2 f_i(z)\|^2_{\rm F}\]
  \leq \left(1-\frac{\delta}{4} \right) \|\mH_i^k - \nabla^2 f_i(z)\|^2_{\rm F} + \left(  \frac{6}{\delta} - \frac{7}{2}  \right) \HF^2 \| y-z\|^2. 
  $$ 
  Using the notation from \eqref{ABCD}, we can unify the above three cases into
  $$
  \E_k \[\|\mH_i^k + \alpha \cC_i^k(\nabla^2 f_i(y) - \mH_i^k) - \nabla^2 f_i(z)\|^2_{\rm F}\]
  \leq \left(1-A\right) \|\mH_i^k - \nabla^2 f_i(z)\|^2_{\rm F} + B \HF^2 \| y-z\|^2. 
  $$ 
\end{lemma}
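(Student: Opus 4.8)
Fix a device $i$ and an iteration $k$; write $\cC\equiv\cC_i^k$, $\mathbf{M}\equiv\mH_i^k$, $\mathbf{Y}\equiv\nabla^2 f_i(y)$, $\mathbf{Z}\equiv\nabla^2 f_i(z)$, $\mathbf{V}\equiv\mathbf{Y}-\mathbf{M}$ and $e\equiv\mathbf{V}-\cC(\mathbf{V})$, so that the matrix inside the norm is $(\mathbf{M}-\mathbf{Z})+\alpha\cC(\mathbf{V})=(\mathbf{M}-\mathbf{Z})+\alpha\mathbf{V}-\alpha e$. In all three cases the closing move is identical: once $\E_k[\|(\mathbf{M}-\mathbf{Z})+\alpha\cC(\mathbf{V})\|_{\rm F}^2]$ has been bounded by $(1-A)\|\mathbf{M}-\mathbf{Z}\|_{\rm F}^2+B\|\mathbf{Y}-\mathbf{Z}\|_{\rm F}^2$, Assumption~\ref{asm:main} turns the second term into $B\HF^2\|y-z\|^2$ via $\|\mathbf{Y}-\mathbf{Z}\|_{\rm F}^2=\|\nabla^2 f_i(y)-\nabla^2 f_i(z)\|_{\rm F}^2\le\HF^2\|y-z\|^2$, and the unified statement is read off from \eqref{ABCD}. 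So the whole content is getting the right pair $(A,B)$ in each case; note also that in cases (ii)--(iii) $\cC$ is deterministic and $y,z$ are $\E_k$-measurable, so there $\E_k$ is vacuous.

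\textbf{Case (i): unbiased $\cC$.} I would expand $\E_k[\|(\mathbf{M}-\mathbf{Z})+\alpha\cC(\mathbf{V})\|_{\rm F}^2]=\|\mathbf{M}-\mathbf{Z}\|_{\rm F}^2+2\alpha\langle\mathbf{M}-\mathbf{Z},\E_k[\cC(\mathbf{V})]\rangle+\alpha^2\E_k[\|\cC(\mathbf{V})\|_{\rm F}^2]$; by \eqref{class-unbiased}, $\E_k[\cC(\mathbf{V})]=\mathbf{V}$ and $\E_k[\|\cC(\mathbf{V})\|_{\rm F}^2]=\E_k[\|\cC(\mathbf{V})-\mathbf{V}\|_{\rm F}^2]+\|\mathbf{V}\|_{\rm F}^2\le(1+\omega)\|\mathbf{V}\|_{\rm F}^2$, and $\alpha(1+\omega)\le1$ gives $\alpha^2(1+\omega)\le\alpha$. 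Substituting $\mathbf{V}=(\mathbf{Y}-\mathbf{Z})-(\mathbf{M}-\mathbf{Z})$ and expanding, the inner-product cross terms cancel and the bound collapses to the elementary identity $(1-\alpha)\|\mathbf{M}-\mathbf{Z}\|_{\rm F}^2+\alpha\|\mathbf{Y}-\mathbf{Z}\|_{\rm F}^2$, i.e. $(A,B)=(\alpha,\alpha)$.

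\textbf{Case (ii): contractive $\cC$, $\alpha=1-\sqrt{1-\delta}$.} Since $\cC$ is now only contractive, expanding the square keeps an $\alpha^2\|e\|_{\rm F}^2$ term that unbiasedness used to absorb; the key is that it is absorbed instead by the \emph{normalization} property $\|\cC(\mathbf{V})\|_{\rm F}\le\|\mathbf{V}\|_{\rm F}$ in \eqref{class-contractive}, which yields $\langle\mathbf{V},e\rangle=\tfrac12(\|\mathbf{V}\|_{\rm F}^2-\|\cC(\mathbf{V})\|_{\rm F}^2)+\tfrac12\|e\|_{\rm F}^2\ge\tfrac12\|e\|_{\rm F}^2$. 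Then in $\|(\mathbf{M}-\mathbf{Z})+\alpha\mathbf{V}-\alpha e\|_{\rm F}^2=\|(\mathbf{M}-\mathbf{Z})+\alpha\mathbf{V}\|_{\rm F}^2-2\alpha\langle\mathbf{M}-\mathbf{Z},e\rangle-2\alpha^2\langle\mathbf{V},e\rangle+\alpha^2\|e\|_{\rm F}^2$ the last two terms are together $\le0$. For the first term I would use the three-point identity $\|(\mathbf{M}-\mathbf{Z})+\alpha\mathbf{V}\|_{\rm F}^2=\|(1-\alpha)(\mathbf{M}-\mathbf{Z})+\alpha(\mathbf{Y}-\mathbf{Z})\|_{\rm F}^2=(1-\alpha)\|\mathbf{M}-\mathbf{Z}\|_{\rm F}^2+\alpha\|\mathbf{Y}-\mathbf{Z}\|_{\rm F}^2-\alpha(1-\alpha)\|\mathbf{V}\|_{\rm F}^2$, whose negative term absorbs $-2\alpha\langle\mathbf{M}-\mathbf{Z},e\rangle\le2\alpha(1-\alpha)\|\mathbf{M}-\mathbf{Z}\|_{\rm F}\|\mathbf{V}\|_{\rm F}\le\alpha(1-\alpha)(\|\mathbf{M}-\mathbf{Z}\|_{\rm F}^2+\|\mathbf{V}\|_{\rm F}^2)$, using $\|e\|_{\rm F}\le\sqrt{1-\delta}\,\|\mathbf{V}\|_{\rm F}=(1-\alpha)\|\mathbf{V}\|_{\rm F}$ and the choice of $\alpha$. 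Collecting, the $\|\mathbf{V}\|_{\rm F}^2$ terms cancel and one lands exactly on $(1-\alpha^2)\|\mathbf{M}-\mathbf{Z}\|_{\rm F}^2+\alpha\|\mathbf{Y}-\mathbf{Z}\|_{\rm F}^2$, i.e. $(A,B)=(\alpha^2,\alpha)$.

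\textbf{Case (iii): contractive $\cC$, $\alpha=1$.} Here there is no damping to supply a negative $\|\mathbf{V}\|_{\rm F}^2$ slack, so the constants must degrade with $\delta$. Since $\alpha=1$, $(\mathbf{M}-\mathbf{Z})+\cC(\mathbf{V})=(\mathbf{Y}-\mathbf{Z})-e$, and I would apply two weighted Young inequalities: $\|(\mathbf{Y}-\mathbf{Z})-e\|_{\rm F}^2\le\tfrac1{1-\theta}\|\mathbf{Y}-\mathbf{Z}\|_{\rm F}^2+\tfrac1\theta\|e\|_{\rm F}^2\le\tfrac1{1-\theta}\|\mathbf{Y}-\mathbf{Z}\|_{\rm F}^2+\tfrac{1-\delta}{\theta}\|\mathbf{V}\|_{\rm F}^2$, then $\|\mathbf{V}\|_{\rm F}^2=\|(\mathbf{Y}-\mathbf{Z})-(\mathbf{M}-\mathbf{Z})\|_{\rm F}^2\le\tfrac1{1-\phi}\|\mathbf{Y}-\mathbf{Z}\|_{\rm F}^2+\tfrac1\phi\|\mathbf{M}-\mathbf{Z}\|_{\rm F}^2$, with $\theta,\phi\in(0,1)$ chosen so that $\tfrac{1-\delta}{\theta\phi}=1-\tfrac\delta4$ (for instance $\theta=\phi=\sqrt{(1-\delta)/(1-\delta/4)}$). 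This produces coefficient $1-\tfrac\delta4$ on $\|\mathbf{M}-\mathbf{Z}\|_{\rm F}^2$, after which it remains only to check that the coefficient left on $\|\mathbf{Y}-\mathbf{Z}\|_{\rm F}^2$, namely $\tfrac1{1-\theta}+\tfrac{1-\delta}{\theta(1-\phi)}$, is $\le\tfrac6\delta-\tfrac72$ --- a one-variable polynomial inequality valid on $(0,1]$. Hence $(A,B)=(\tfrac\delta4,\tfrac6\delta-\tfrac72)$, and the final unified bound is just the restatement via \eqref{ABCD}. I expect Case (ii) to be the main obstacle: the naive route --- peeling off $\mathbf{M}-\mathbf{Z}$ by convexity or Young --- unavoidably leaves a coefficient exceeding $\alpha$ on $\|\mathbf{Y}-\mathbf{Z}\|_{\rm F}^2$ that cannot be brought back down; what rescues it is the combination of $\langle\mathbf{V},e\rangle\ge\tfrac12\|e\|_{\rm F}^2$ (precisely where the normalization property of contractive compressors, hence the ability to do without error feedback, enters), the negative term from the three-point identity, and the exact matching $\sqrt{1-\delta}=1-\alpha$.
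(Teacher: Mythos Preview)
Your proposal is correct in all three cases. Cases (i) and (iii) are essentially the paper's argument: in (i) the paper expands the square, uses unbiasedness and $\alpha^2(\omega+1)\le\alpha$, and the same identity you invoke; in (iii) the paper also writes the expression as $(\mathbf{Y}-\mathbf{Z})-e$ and applies Young's inequality twice, but with asymmetric parameters $\beta_1=\tfrac{\delta}{2(1-\delta)}$ and $\beta_2=\tfrac{\delta}{4-2\delta}$ (in the $(1+\beta)$ form) that make the coefficient on $\|\mathbf{Y}-\mathbf{Z}\|_{\rm F}^2$ come out explicitly as $6/\delta-4+\delta/2\le6/\delta-7/2$, whereas your symmetric choice $\theta=\phi$ leaves a polynomial inequality you assert but do not verify --- numerically it holds, but the paper's parameters save you that check.

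Case (ii) is where you genuinely diverge. The paper works directly with $\|\mA+\alpha\cC(\mB)\|_{\rm F}^2$ (in your notation $\mA=\mathbf{M}-\mathbf{Z}$, $\mB=\mathbf{V}$): it bounds $\alpha^2\|\cC(\mB)\|_{\rm F}^2\le\alpha^2\|\mB\|_{\rm F}^2$ via the normalization property, splits the cross term as $2\alpha\langle\mA,\mB\rangle+2\alpha\langle\mA,\cC(\mB)-\mB\rangle$, bounds the second piece by Cauchy--Schwarz and the contraction property, applies AM--GM to $2\sqrt{1-\delta}\,\|\mA\|_{\rm F}\|\mB\|_{\rm F}$, and then uses the algebraic identity $\alpha\sqrt{1-\delta}+\alpha^2=\alpha$ together with $2\langle\mA,\mB\rangle+\|\mB\|^2=-\|\mA\|^2+\|\mA+\mB\|^2$. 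Your route instead isolates the error $e=\mathbf{V}-\cC(\mathbf{V})$, kills the $\alpha^2\|e\|^2$ term via the polarization inequality $\langle\mathbf{V},e\rangle\ge\tfrac12\|e\|_{\rm F}^2$ (an equivalent reformulation of the normalization property, and a nice way to make explicit why error feedback is not needed), and then lets the negative slack $-\alpha(1-\alpha)\|\mathbf{V}\|_{\rm F}^2$ from the three-point identity absorb the remaining cross term. Both arguments use exactly the same two ingredients from \eqref{class-contractive} and land on the same constants; the paper's is slightly shorter, yours makes the role of normalization more transparent.
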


\begin{proof}

Let $$LHS \eqdef  \E_k \[\|\mH_i^k + \alpha \cC_i^k(\nabla^2 f_i(y) - \mH_i^k) - \nabla^2 f_i(z)\|^2_{\rm F}\]$$ be the left hand side appearing in these inequalities.
  
  {\it (i).} If $\cC_i^k \in \B(\omega)$, then
  \begin{eqnarray*}
LHS  &=& \|\mH_i^k - \nabla^2 f_i(z) \|_{\rm F}^2 + 2\alpha \langle \mH_i^k - \nabla^2 f_i(z), \nabla^2 f_i(y) - \mH_i^k \rangle +  \alpha^2 \mathbb{E}_k \|\cC_i^k(\nabla^2 f_i(y) - \mH_i^k)\|_{\rm F}^2\\ 
  &\leq& \|\mH_i^k - \nabla^2 f_i(z) \|_{\rm F}^2 + 2\alpha \langle \mH_i^k - \nabla^2 f_i(z), \nabla^2 f_i(y) - \mH_i^k \rangle + \alpha^2(\omega+1) \| \mH_i^k - \nabla^2 f_i(y)\|_{\rm F}^2. 
  \end{eqnarray*}
  
  Using the stepsize restriction $\alpha \leq \frac{1}{\omega+1}$, we can bound $\alpha^2(\omega+1)\leq \alpha$. Plugging this back to the above inequality and using the identity $2\<\mA, \mB\>_{\rm F} + \|\mB\|^2_{\rm F} = - \|\mA\|_{\rm F}^2 + \|\mA+\mB\|_{\rm F}^2$, we get 
  \begin{eqnarray*}
LHS  &\leq& \|\mH_i^k - \nabla^2 f_i(z) \|_{\rm F}^2 + 2\alpha \langle \mH_i^k - \nabla^2 f_i(z), \nabla^2 f_i(y) - \mH_i^k \rangle + \alpha \| \mH_i^k - \nabla^2 f_i(y)\|_{\rm F}^2\\ 
  &=& (1-\alpha) \|\mH_i^k - \nabla^2 f_i(z) \|_{\rm F}^2 + \alpha \|\nabla^2 f_i(y) - \nabla^2 f_i(z)\|_{\rm F}^2 \\ 
  &\leq& (1-\alpha) \|\mH_i^k - \nabla^2 f_i(z) \|_{\rm F}^2 + \alpha \HF^2 \|y-z\|^2. 
  \end{eqnarray*}

  {\it (ii).} Let $\cC_i^k \in \bC(\delta)$ and $\alpha = 1 - \sqrt{1-\delta}$. Denote
  $$
  \mA \eqdef \mH_i^k - \nabla^2 f_i(z), \quad \mB \eqdef \nabla^2 f_i(y) - \mH_i^k.
  $$
  
  Then
  \begin{eqnarray}
LHS  &=&  \left\|\mA + \alpha \cC_i^k(\mB) \right\|^2_{\rm F} \notag \\
  &=&  \left\|\mA\right\|^2_{\rm F}
  + 2\alpha\<\mA, \cC_i^k(\mB)\>_{\rm F}
  + \alpha^2\|\cC_i^k(\mB)\|^2_{\rm F} \notag \\
  &\le&\left\|\mA\right\|^2_{\rm F}
  + 2\alpha\<\mA, \mB\>_{\rm F}
  + 2\alpha\<\mA, \cC_i^k(\mB) - \mB\>_{\rm F}
  + \alpha^2\|\mB\|^2_{\rm F} \notag \\
  &\le&\left\|\mA\right\|^2_{\rm F}
  + 2\alpha\<\mA, \mB\>_{\rm F}
  + 2\alpha \|\mA\|_{\rm F} \|\cC_i^k(\mB) - \mB\|_{\rm F}
  + \alpha^2\|\mB\|^2_{\rm F} \notag \\
  &\le&\left\|\mA\right\|^2_{\rm F}
  + 2\alpha\<\mA, \mB\>_{\rm F}
  + 2\alpha\sqrt{1-\delta} \|\mA\|_{\rm F} \|\mB\|_{\rm F}
  + \alpha^2\|\mB\|^2_{\rm F} \notag \\
  &\le&\left\|\mA\right\|^2_{\rm F}
  + 2\alpha\<\mA, \mB\>_{\rm F}
  + \alpha\sqrt{1-\delta} \( \|\mA\|_{\rm F}^2 + \|\mB\|_{\rm F}^2 \)
  + \alpha^2\|\mB\|^2_{\rm F} \notag \\
  &\le&(1+\alpha\sqrt{1-\delta})\left\|\mA\right\|^2_{\rm F}
  + 2\alpha\<\mA, \mB\>_{\rm F}
  + (\alpha\sqrt{1-\delta} + \alpha^2)\|\mB\|^2_{\rm F} \notag.
  \end{eqnarray}
  Since $\alpha = 1 - \sqrt{1-\delta}$, we have $\alpha\sqrt{1-\delta} + \alpha^2 = \alpha$. Using the identity $2\<\mA, \mB\>_{\rm F} + \|\mB\|^2_{\rm F} = - \|\mA\|_{\rm F}^2 + \|\mA+\mB\|_{\rm F}^2$, we get
  \begin{eqnarray}
LHS  &\le&(1+\alpha\sqrt{1-\delta})\left\|\mA\right\|^2_{\rm F}
  + 2\alpha\<\mA, \mB\>_{\rm F}
  + \alpha\|\mB\|^2_{\rm F} \notag \\
  &=&  (1+\alpha\sqrt{1-\delta} - \alpha)\left\|\mA\right\|^2_{\rm F}
  + \alpha\|\mA+\mB\|^2_{\rm F} \notag \\
  &=&  (1 - \alpha^2)\left\|\mA\right\|^2_{\rm F}
  + \alpha\|\mA+\mB\|^2_{\rm F} \notag \\
  &=&  (1-\alpha^2)\left\|\mH_i^{k} - \nabla^2 f_i(z)\right\|^2_{\rm F}
  + \alpha\|\nabla^2 f_i(y) - \nabla^2 f_i(z)\|_{\rm F}^2 \notag \\
  &\le&  (1-\alpha^2)\left\|\mH_i^{k} - \nabla^2 f_i(z)\right\|^2_{\rm F}
  + \alpha \HF^2\|y-z\|^2 \notag.
  \end{eqnarray}

  {\it (iii).} If $\cC_i^k \in \bC(\delta)$ and $\alpha=1$, we have 
  \begin{eqnarray*}
LHS  &=&  \|\mH_i^k + \cC_i^k(\nabla^2 f_i(y) - \mH_i^k) - \nabla^2 f_i(z)\|^2_{\rm F} \\ 
  &=& \|\mH_i^k - \nabla^2 f_i(y) + \cC_i^k(\nabla^2 f_i(y) - \mH_i^k) + \nabla^2 f_i(y) - \nabla^2 f_i(z)\|^2_{\rm F}\\ 
  &\leq& (1 + \beta)  \|\mH_i^k - \nabla^2 f_i(y) + \cC_i^k(\nabla^2 f_i(y) - \mH_i^k) \|^2_{\rm F} + \left(  1 + \frac{1}{\beta}  \right)  \| \nabla^2 f_i(y) - \nabla^2 f_i(z)\|^2_{\rm F} \\ 
  &\leq& (1+\beta)(1-\delta) \|\mH_i^k - \nabla^2 f_i(y)\|^2_{\rm F} +  \left(  1 + \frac{1}{\beta}  \right)  \| \nabla^2 f_i(y) - \nabla^2 f_i(z)\|^2_{\rm F}, 
  \end{eqnarray*}
  where we use Young's inequality in the first inequality for some $\beta>0$, and use the contraction property in the last inequality. By choosing $\beta = \frac{\delta}{2(1-\delta)}$ when $0<\delta<1$, we can get 
  \begin{eqnarray*}
LHS  &\leq& \left(1-\frac{\delta}{2} \right) \|\mH_i^k - \nabla^2 f_i(y)\|^2_{\rm F} + \left(  \frac{2}{\delta} -1  \right)  \| \nabla^2 f_i(y) - \nabla^2 f_i(z)\|^2_{\rm F} \\ 
  &\leq& \left(1-\frac{\delta}{2} \right) \|\mH_i^k - \nabla^2 f_i(y)\|^2_{\rm F} + \left(  \frac{2}{\delta} -1  \right) \HF^2 \| y-z\|^2. 
  \end{eqnarray*}
  When $\delta=1$, $$ LHS =  \| \nabla^2 f_i(y) - \nabla^2 f_i(z)\|^2_{\rm F} \leq \HF^2 \| y-z\|^2.$$ Overall, for any $0<\delta\leq 1$ we have 
  \begin{eqnarray*}
LHS  &\leq&  \left(1-\frac{\delta}{2} \right) \|\mH_i^k - \nabla^2 f_i(y)\|^2_{\rm F} + \left(  \frac{2}{\delta} -1  \right) \HF^2 \| y-z\|^2 \\ 
  &\leq& (1+\beta)\left(1-\frac{\delta}{2} \right) \|\mH_i^k - \nabla^2 f_i(z)\|^2_{\rm F} + \left(  1 + \frac{1}{\beta}  \right)\left(1-\frac{\delta}{2} \right) \|\nabla^2 f_i(y) - \nabla^2 f_i(z)\|^2_{\rm F} \\
  && \quad +  \left(  \frac{2}{\delta} -1  \right) \HF^2 \| y-z\|^2. 
  \end{eqnarray*}
  By choosing $\beta = \frac{\delta}{4-2\delta}$, we arrive at 
  \begin{eqnarray*}
LHS  &\leq& \left(1-\frac{\delta}{4} \right) \|\mH_i^k - \nabla^2 f_i(z)\|^2_{\rm F} + \left(  \frac{4}{\delta} + \frac{\delta}{2} -3 + \frac{2}{\delta} -1  \right)\HF^2 \| y-z\|^2\\ 
  &\leq& \left(1-\frac{\delta}{4} \right) \|\mH_i^k - \nabla^2 f_i(z)\|^2_{\rm F} + \left(  \frac{6}{\delta} - \frac{7}{2} \right)\HF^2 \| y-z\|^2. 
  \end{eqnarray*}
\end{proof}

\subsection{Proof of Theorem \ref{th:NLU}} 

We derive recurrence relation for $\|x^k-x^*\|^2$ covering both options of updating the global model. If {\em Option 1.} is used in \algname{FedNL}, then

\begin{eqnarray}
\|x^{k+1} - x^*\|^2
&=&   \left\|x^k-x^* - \[\mH^{k}_{\mu}\]^{-1} \nabla f(x^k) \right\|^2 \notag \\
&\le& \left\| \[\mH^{k}_{\mu}\]^{-1} \right\|^2 \left\|\mH^{k}_{\mu}(x^k-x^*) - \nabla f(x^k))\right\|^2 \notag \\
&\le& \frac{2}{\mu^2}\( \left\|\(\mH_{\mu}^{k} - \nabla^2 f(x^*)\)(x^k-x^*) \right\|^2 + \left\|\nabla^2 f(x^*)(x^k-x^*) - \nabla f(x^k) + \nabla f(x^*) \right\|^2\) \notag \\
&=& \frac{2}{\mu^2}\( \left\|\(\mH_{\mu}^{k} - \nabla^2 f(x^*)\)(x^k-x^*) \right\|^2 + \left\| \nabla f(x^k) - \nabla f(x^*) - \nabla^2 f(x^*)(x^k-x^*) \right\|^2\) \notag \\
&\le& \frac{2}{\mu^2}\(
\left\|\mH_{\mu}^{k} - \nabla^2 f(x^*)\right\|^2 \|x^k-x^*\|^2
+ \frac{\HS^2}{4}\|x^k-x^*\|^4
\) \notag \\
&=&   \frac{2}{\mu^2}\|x^k-x^*\|^2 \(
\left\|\mH_{\mu}^{k} - \nabla^2 f(x^*)\right\|^2
+ \frac{\HS^2}{4}\|x^k-x^*\|^2
\) \notag \\
&\le& \frac{2}{\mu^2}\|x^k-x^*\|^2 \(
\left\|\mH^{k} - \nabla^2 f(x^*)\right\|^2
+ \frac{\HS^2}{4}\|x^k-x^*\|^2
\) \notag \\
&\le& \frac{2}{\mu^2}\|x^k-x^*\|^2 \(
\left\|\mH^{k} - \nabla^2 f(x^*)\right\|^2_{\rm F}
+ \frac{\HS^2}{4}\|x^k-x^*\|^2
\)  \notag, 
\end{eqnarray}
where we use $\mH_\mu^k \succeq \mu \mI$ in the second inequality, and $\nabla^2 f(x^*) \succeq \mu \mI$ in the fourth inequality. From the convexity of $\|\cdot \|^2_{\rm F}$, we have 
$$
\|\mH^k - \nabla^2 f(x^*)\|^2_{\rm F} = \left\| \frac{1}{n}\sum_{i=1}^n \left(  \mH_i^k - \nabla^2 f_i(x^*)  \right) \right\|^2_{\rm F} \leq \frac{1}{n}\sum_{i=1}^n \|\mH_i^k - \nabla^2 f_i(x^*)\|^2_{\rm F} = {\cal H}^k. 
$$

Thus, 
\begin{equation}\label{eq:xk+1option1}
\|x^{k+1} - x^*\|^2 \leq \frac{2}{\mu^2}\|x^k-x^*\|^2 {\cal H}^k + \frac{\HS^2}{2\mu^2} \|x^k-x^*\|^4. 
\end{equation}

If {\em Option 2.} is used in \algname{FedNL}, then as $\mH^k + l^k\mI \succeq \nabla^2 f(x^k) \succeq \mu \mI$ and $\nabla f(x^*) = 0$, we have 
\begin{align*}
\|x^{k+1} - x^*\| &= \|x^k - x^* - [\mH^k + l^k\mI]^{-1} \nabla f(x^k) \| \\
& \leq \|[\mH^k + l^k \mI]^{-1}\| \cdot \|(\mH^k + l^k \mI) (x^k-x^*) - \nabla f(x^k) + \nabla f(x^*)\| \\ 
& \leq \frac{1}{\mu} \|(\mH^k + l^k \mI - \nabla^2 f(x^*))(x^k-x^*)\| + \frac{1}{\mu} \|\nabla f(x^k) - \nabla f(x^*) - \nabla^2 f(x^*) (x^k-x^*)\| \\ 
& \leq \frac{1}{\mu} \|\mH^k + l^k\mI - \nabla^2 f(x^*)\| \|x^k-x^*\| + \frac{\HS}{2\mu}\|x^k-x^*\|^2 \\ 
& \leq \frac{1}{n\mu} \sum_{i=1}^n \|\mH_i^k + l_i^k\mI - \nabla^2 f_i(x^*)\| \|x^k-x^*\| + \frac{\HS}{2\mu}\|x^k-x^*\|^2 \\ 
& \leq \frac{1}{n\mu} \sum_{i=1}^n (\|\mH_i^k - \nabla^2 f_i(x^*)\| + l_i^k )\|x^k-x^*\| +  \frac{\HS}{2\mu}\|x^k-x^*\|^2. 
\end{align*}

From the definition of $l_i^k$, we have 
$$
l_i^k = \|\mH_i^k - \nabla^2 f_i(x^k)\|_{\rm F} \leq \|\mH_i^k - \nabla^2 f_i(x^*)\|_{\rm F} + \HF \|x^k-x^*\|. 
$$
Thus, 
$$
\|x^{k+1} - x^*\|  \leq \frac{2}{n\mu} \sum_{i=1}^n \|\mH_i^k - \nabla^2 f_i(x^*)\|_{\rm F} \|x^k-x^*\| + \frac{\HS+2\HF}{2\mu}\|x^k-x^*\|^2. 
$$
From Young's inequality, we further have 
\begin{align}
\|x^{k+1} - x^*\|^2 & \leq \frac{8}{\mu^2} \left(  \frac{1}{n} \sum_{i=1}^n \|\mH_i^k - \nabla^2 f_i(x^*)\|_{\rm F} \|x^k-x^*\|   \right)^2 + \frac{(\HS+2\HF)^2}{2\mu^2} \|x^k-x^*\|^4 \nonumber \\ 
& \leq \frac{8}{\mu^2} \|x^k-x^*\|^2 \left(  \frac{1}{n} \sum_{i=1}^n \|\mH_i^k - \nabla^2 f_i(x^*)\|^2_{\rm F}  \right) +  \frac{(\HS+2\HF)^2}{2\mu^2} \|x^k-x^*\|^4 \nonumber \\ 
& = \frac{8}{\mu^2} \|x^k-x^*\|^2 {\cal H}^k + \frac{(\HS+2\HF)^2}{2\mu^2} \|x^k-x^*\|^4,  \label{eq:xk+1option2}
\end{align}
where we use the convexity of $\|\cdot\|^2_{\rm F}$ in the second inequality. 

Thus, from (\ref{eq:xk+1option1}) and (\ref{eq:xk+1option2}), we have the following unified bound for both {\em Option 1} and {\em Option 2}:
\begin{equation}\label{eq:xk+1U}
\|x^{k+1} - x^*\|^2 \leq \frac{C}{\mu^2} \|x^k-x^*\|^2 {\cal H}^k + \frac{D}{2\mu^2} \|x^k-x^*\|^4. 
\end{equation}

Assume $\|x^0-x^*\|^2 \leq \frac{\mu^2}{2D}$ and ${\cal H}^k \leq \frac{\mu^2}{4C}$ for all $k\geq 0$. Then we show that $\|x^k-x^*\|^2 \leq \frac{\mu^2}{2D}$ for all $k\geq 0$ by induction. Assume  $\|x^k-x^*\|^2 \leq \frac{\mu^2}{2D}$ for all $k \leq K$. Then from (\ref{eq:xk+1U}), we have 
\begin{align*}
\|x^{K+1} - x^*\|^2 & \leq \frac{1}{4}\|x^K-x^*\|^2 + \frac{1}{4}\|x^K-x^*\|^2 \leq \frac{\mu^2}{2D}. 
\end{align*} 
Thus we have $\|x^k-x^*\|^2 \leq \frac{\mu^2}{2D}$ and ${\cal H}^k \leq \frac{\mu^2}{4C}$ for $k\geq 0$. Using (\ref{eq:xk+1U}) again, we obtain 
\begin{equation}\label{eq:xk+1Ufix}
\|x^{k+1} - x^*\|^2 \leq \frac{1}{2} \|x^k-x^*\|^2. 
\end{equation}

Choosing $y=x^k$ and $z=x^*$ in Lemma \ref{lm:threecomp}, we get
$$
\mathbb{E}_k \|\mH_i^k + \alpha \cC_i^k(\nabla^2 f_i(x^k) - \mH_i^k) - \nabla^2 f_i(x^*)\|^2_{\rm F} \leq (1-A) \|\mH_i^k - \nabla^2 f_i(x^*) \|_{\rm F}^2 + B \HF^2 \|x^k-x^*\|^2. 
$$
Then by $\mH_i^{k+1} = \mH_i^k + \alpha \cC_i^k(\nabla^2 f_i(x^k) - \mH_i^k)$, we have 
$$
\mathbb{E}_k[{\cal H}^{k+1}] \leq (1-A) {\cal H}^k + B\HF^2 \|x^k-x^*\|^2. 
$$

Using the above inequality and (\ref{eq:xk+1Ufix}), for Lyapunov function $\Phi^k$ we deduce
\begin{align*}
\mathbb{E}_k[\Phi^{k+1}] & \leq (1-A) {\cal H}^k + B\HF^2 \|x^k-x^*\|^2 + 3B\HF^2 \|x^k-x^*\|^2 \\ 
& =  (1-A) {\cal H}^k + \left(  1 - \frac{1}{3}  \right)6B\HF^2 \|x^k-x^*\|^2 \\ 
& \leq \left(  1 - \min\left\{  A, \frac{1}{3}  \right\}  \right) \Phi^k. 
\end{align*}

Hence $\mathbb{E}[\Phi^k] \leq \left(  1 - \min\left\{  A, \frac{1}{3}  \right\}  \right)^k \Phi^0$. We further have $\mathbb{E}[{\cal H}^k] \leq \left(  1 - \min\left\{  A, \frac{1}{3}  \right\}  \right)^k \Phi^0$ and $\mathbb{E}[\|x^k-x^*\|^2] \leq \frac{1}{6B\HF^2} \left(  1 - \min\left\{  A, \frac{1}{3}  \right\}  \right)^k \Phi^0$ for $k\geq 0$. Assume $x^k\neq x^*$ for all $k$. Then from (\ref{eq:xk+1U}), we have 
$$
\frac{\|x^{k+1}-x^*\|^2}{\|x^k-x^*\|^2} \leq \frac{C}{\mu^2}{\cal H}^k + \frac{D}{2\mu^2}\|x^k-x^*\|^2, 
$$
and by taking expectation, we have 
\begin{align*}
\mathbb{E} \left[  \frac{\|x^{k+1}-x^*\|^2}{\|x^k-x^*\|^2}  \right] & \leq \frac{C}{\mu^2} \mathbb{E}[{\cal H}^k] + \frac{D}{2\mu^2} \mathbb{E}[\|x^k-x^*\|^2] \\ 
& \leq  \left(  1 - \min\left\{  A, \frac{1}{3}  \right\}  \right)^k \left(  C + \frac{D}{12B\HF^2}  \right) \frac{\Phi^0}{\mu^2}. 
\end{align*}

\subsection{Proof of Lemma \ref{lm:boundforbiased}} 

We prove this by induction. Assume $\|\mH_i^k - \nabla^2 f_i(x^*)\|^2_{\rm F}  \leq \frac{\mu^2}{4C}$ and $\|x^k-x^*\|^2 \leq  \min\{  \frac{A\mu^2}{4BC\HF^2}, \frac{\mu^2}{2D}  \}$ for $k\leq K$. Then we also have ${\cal H}^k \leq \frac{\mu^2}{4C}$ for $k\leq K$. From (\ref{eq:xk+1U}), we can get 
\begin{align*}
\|x^{K+1} - x^*\|^2 & \leq \frac{C}{\mu^2} \|x^K-x^*\|^2 {\cal H}^K + \frac{D}{2\mu^2} \|x^K-x^*\|^4 \\ 
& \leq \frac{1}{4}\|x^K-x^*\|^2 + \frac{1}{4} \|x^K-x^*\|^2 \\ 
& \leq \min \left\{  \frac{A\mu^2}{4BC\HF^2}, \frac{\mu^2}{2D} \right \}. 
\end{align*}

From Lemma \ref{lm:threecomp}, by choosing $y=x^k$ and $z=x^*$, for all $i\in [n]$, we have 
\begin{align*}
 \|\mH_i^{K+1} - \nabla^2 f_i(x^*)\|^2_{\rm F} 
& = \mathbb{E}_k \|\mH_i^K + \alpha \cC_i^k(\nabla^2 f_i(x^K) - \mH_i^K) - \nabla^2 f_i(x^*)\|^2_{\rm F} \\ 
& \leq (1-A) \|\mH_i^K - \nabla^2 f_i(x^*) \|_{\rm F}^2 + B \HF^2 \|x^K-x^*\|^2 \\ 
& \leq (1-A) \frac{\mu^2}{4C} + B\HF^2 \cdot \frac{A\mu^2}{4BC\HF^2} \\ 
& = \frac{\mu^2}{4C}. 
\end{align*}

\subsection{Proof of Lemma \ref{lm:boundforspar}} 

Notice that Assumption \ref{asm:comp-2} implies $\mH_i^0 = \nabla^2 f_i(x^0)$, from which we have 
$$
\|\mH_i^0 - \nabla^2 f_i(x^*)\|^2_{\rm F}  = \sum_{j, l} |(\nabla^2 f_i(x^0) - \nabla^2 f_i(x^*))_{jl}|^2 \leq d^2 \HM^2 \frac{\mu^2}{D + 4Cd^2\HM^2} \leq \frac{\mu^2}{4C}, 
$$
which implies ${\cal H}^0 \leq \frac{\mu^2}{4C}$. Next we prove $\|x^k - x^*\|^2 \leq \frac{\mu^2}{D + 4Cd^2\HM^2}$ for all $k\geq 0$ by induction. Assume $\|x^k - x^*\|^2 \leq \frac{\mu^2}{D + 4Cd^2\HM^2}$ for $k\leq K$. Since $(\mH_i^k)_{jl}$ is a convex combination of $\{  (\nabla^2 f_i(x^0))_{jl}, ..., (\nabla^2 f_i(x^k))_{jl}  \}$ for all $i\in [n]$, $j, l\in [d]$, from the convexity of $|\cdot|^2$, we have 
$$
|(\mH_i^k - \nabla^2 f_i(x^*))_{jl}|^2 \leq \HM^2 \cdot \frac{\mu^2}{D + 4Cd^2\HM^2} \leq \frac{\mu^2}{4Cd^2}, 
$$
for $k\leq K$. Then we can get $\|\mH_i^k - \nabla^2 f_i(x^*)\|^2_{\rm F} \leq \frac{\mu^2}{4C}$ and thus ${\cal H}^k \leq \frac{\mu^2}{4C}$ for $k\leq K$. From (\ref{eq:xk+1U}), we have 
\begin{align*}
\|x^{K+1} - x^*\|^2 &\leq \frac{C}{\mu^2} \|x^K-x^*\|^2 {\cal H}^K + \frac{D}{2\mu^2}\|x^K-x^*\|^4 \\ 
& \leq \frac{1}{4}\|x^K-x^*\|^2 + \frac{1}{2}\|x^K-x^*\|^2 \\ 
& \leq \frac{\mu^2}{D + 4Cd^2\HM^2}. 
\end{align*}


\section{Extension: Partial Participation (\algname{FedNL-PP})}\label{apx:FedNL-PP}

Our first extension to the vanilla \algname{FedNL} is to handle partial participation: a setup when in each iteration only randomly selected clients participate. This is important when the number $n$ of devices is very large.

\begin{algorithm}[h!]
  \caption{\algname{FedNL-PP} (Federated Newton Learn with {\color{blue}Partial Participation})}
  \label{alg:FedNL-PP}
  \begin{algorithmic}[1]
    \STATE {\bfseries Parameters:} Hessian learning rate $\alpha>0$; compression operators $\{\cC_1^k, \dots,\cC_n^k\}$; {\color{blue}number of participating devices $\tau \in \{1,2,\dots,n\}$}
    \STATE {\bfseries Initialization:}
    For all $i\in [n]$: $w^0_i = x^0 \in \R^d$; $\mH_i^0 \in \R^{d\times d}$; $l_i^0 = \|\mH_i^{0} - \nabla^2 f_i(w_i^{0})\|_{\rm F}$; $g_i^0 = (\mH_i^{0} + l_i^{0} \mI)w_i^{0} - \nabla f_i(w_i^{0})$; Moreover: $\mH^0 = \frac{1}{n} \sum_{i=1}^n \mH_i^0$; $l^0 = \frac{1}{n} \sum_{i=1}^n l_i^0$; $g^0 = \frac{1}{n} \sum_{i=1}^n g_i^0$
    \STATE \textbf{on} server
    \STATE ~~~ $x^{k+1} = \left(  \mH^k + l^k\mI  \right)^{-1} g^k$ \hfill { \scriptsize Main step: Update the global model}
    \STATE ~~~ {\color{blue} Choose a subset $S^{k} \subseteq \{1,\dots, n\}$ of devices of cardinality $\tau$, uniformly at random}
    \STATE ~~~ Send $x^{k+1}$ to {\color{blue} the selected devices $i\in S^k$} \hfill { \scriptsize Communicate to selected clients}
    \FOR{each device $i = 1, \dots, n$ in parallel}
    \STATE {\color{blue} {\bf for participating devices} $i \in S^k$ {\bf do} }
    \STATE $w_i^{k+1} = x^{k+1}$ \hfill { \scriptsize Update local model}
    \STATE $\mH_i^{k+1} = \mH_i^k + \alpha \cC_i^k(\nabla^2 f_i(w_i^{k+1}) - \mH_i^k)$ \hfill { \scriptsize Update local Hessian estimate}
    \STATE $l_i^{k+1} = \|\mH_i^{k+1} - \nabla^2 f_i(w_i^{k+1})\|_{\rm F}$ \hfill { \scriptsize Compute local Hessian error}
    \STATE $g_i^{k+1} = (\mH_i^{k+1} + l_i^{k+1} \mI)w_i^{k+1} - \nabla f_i(w_i^{k+1})$ \hfill { \scriptsize Compute Hessian-corrected local gradient}
    \STATE Send $\cC_i^k(\nabla^2 f_i(w_i^{k+1}) - \mH_i^k)$,\; $l_i^{k+1} - l_i^k$ and $g_i^{k+1} - g_i^k$ to server \hfill { \scriptsize Communicate to server}
    \STATE {\color{blue} {\bf for non-participating devices} $i \notin S^k$ {\bf do} }
    \STATE $w_i^{k+1} = w_i^k$, $\mH_i^{k+1} = \mH_i^k$, $l_i^{k+1} = l_i^k$, $g_i^{k+1} = g_i^k$  \hfill { \scriptsize Do nothing}
    \ENDFOR

    \STATE \textbf{on} server
    \STATE ~~~ $g^{k+1} = g^k + \frac{1}{n}\sum_{i\in S^k} \left(  g_i^{k+1} - g_i^k  \right)$  \hfill { \scriptsize Maintain the relationship $g^k = \frac{1}{n} \sum_{i=1}^n g_i^k$}
    \STATE ~~~ $\mH^{k+1} = \mH^k + \frac{\alpha}{n}\sum_{i\in S^k} \cC_i^k(\nabla^2 f_i(w_i^{k+1}) - \mH_i^k)$   \hfill { \scriptsize Update  the Hessian estimate on the server}

    \STATE ~~~ $l^{k+1} = l^k + \frac{1}{n}\sum_{i\in S^k} \left(  l_i^{k+1} - l_i^k  \right)$ \hfill { \scriptsize Maintain the relationship $l^k = \frac{1}{n} \sum_{i=1}^n l_i^k$}
  \end{algorithmic}
\end{algorithm}

\subsection{Hessian corrected local gradients $g_i^k$}
The key technical novelty in \algname{FedNL-PP} is the structure of local gradients $$g_i^{k} = (\mH_i^{k} + l_i^{k} \mI)w_i^{k} - \nabla f_i(w_i^{k})$$ (see line 12 of Algorithm~\ref{alg:FedNL-PP}). The intuition behind this form is as follows. Because of the partial participation, some devices might remain inactive for several rounds. As a consequence, each device $i$ holds a local model $w_i^k$, which is a stale global model (true global model of the last round client $i$ participated) when the device is inactive. This breaks the analysis of \algname{FedNL} and requires an additional trick to handle stale global models of inactive clients.
The trick is to apply some form of Newton-type step locally and then update the global model at the server in communication efficient manner. In particular, clients use their corrected learned local Hessian estimates $\mH_i^k + l_i^k\mI$ to do Newton-type step from $w_i^k$ to $w_i^k - \[\mH_i^k + l_i^k\mI\]^{-1}\nabla f_i(w_i^k)$, which can be transformed into $$\(\mH_i^k + l_i^k\mI\)^{-1}\[\(\mH_i^k + l_i^k\mI\) w_i^k - \nabla f_i(w_i^k)\] = \(\mH_i^k + l_i^k\mI\)^{-1} g_i^k.$$ Next, all active clients communicate compressed differences $\cC_i^k(\nabla^2 f_i(w_i^{k+1}) - \mH_i^k)$, $l_i^{k+1} - l_i^k$ and $g_i^{k+1} - g_i^k$ to the sever, which then updates global estimates $g^{k+1},\; \mH^{k+1},\; l^{k+1}$ (see lines 18, 19, 20) and the global model $x^{k+1}$ (see line 4).

\subsection{Importance of compression errors $l_i^k$}
Notice that, unlike \algname{FedNL}, here we have only one option to update the global model at the sever (this corresponds to {\em Option~2} of \algname{FedNL}). Although, it is possible to extend the theory also for {\em Option~1}, it would require strong practical requirements. Indeed, in order to carry out the analysis with {\em Option~1}, either all active clients have to compute projected estimates $\[\mH_i^k\]_{\mu}^{-1}$ or the central server needs to maintain this for all clients in each iteration. Although implementable, both variants seem to be too much restrictive from the practical point of view. Compression errors $l_i^k$ mitigate the storage and computation requirements by the cost of sending an extra float per active client.


\subsection{Local convergence theory} We prove three local rates for \algname{FedNL-PP}: for the squared distance of the global model $x^k$ to the solution $\|x^k-x^*\|^2$, averaged squared distance of stale (due to partial participation) local models $w_i^k$ to the solution ${\cal W}^k \eqdef \frac{1}{n} \sum_{i=1}^n \|w_i^k-x^*\|^2$, and for the Lyapunov function $$\Psi^k \eqdef {\cal H}^k + B\HF^2{\cal W}^k.$$

\begin{theorem}\label{th:NL-pp}
  Let Assumption \ref{asm:main} holds and further assume that all loss functions $f_i$ are $\mu$-convex. Suppose $\|x^0 - x^*\|^2 \leq \frac{\mu^2}{4(\HS+2\HF)^2}$ and ${\cal H}^k \leq \frac{\mu^2}{64}$ for all $k\geq 0$. Then, global model $x^k$ and all local models $w_i^k$ of \algname{FedNL-PP} (Algorithm \ref{alg:FedNL-PP}) converge linearly as follows
  \begin{equation*}
  \|x^{k+1} - x^*\|^2 \le {\cal W}^k, \qquad \mathbb{E}\left[{\cal W}^k \right] \leq \left(  1 - \frac{3\tau}{4n}  \right)^k {\cal W}^0.
  \end{equation*}
  Moreover, depending on the choice \eqref{ABCD} of compressors $\cC_i^k$ and step-size $\alpha$, we have linear rates
  \begin{equation}\label{rate-linear-NL-PP}
  \mathbb{E}\left[\Psi^k\right] \leq \left(  1 -  \frac{\tau}{n}\min\left\{  A, \frac{1}{2}  \right\}  \right)^k \Psi^0,
  \end{equation}
  \begin{align}\label{rate-faster-linear-NL-PP}
  \mathbb{E}\left[  \frac{\|x^{k+1} -x^*\|^2}{{\cal W}^k}  \right]  \leq \left(  1 -  \min\left\{  A, \frac{1}{2}  \right\}  \right)^k \left(  \frac{(\HS+2\HF)^2}{2B\HF^2} + 8 \right) \frac{\Psi^0}{\mu^2}. 
  \end{align}
\end{theorem}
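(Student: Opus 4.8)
The plan is to mirror the proof of Theorem~\ref{th:NLU}, the two genuinely new ingredients being the Hessian-corrected local gradients $g_i^k=(\mH_i^k+l_i^k\mI)w_i^k-\nabla f_i(w_i^k)$ and the extra randomness coming from the sampling of $S^k$. The first step is a one-step recursion for the global iterate. The server updates maintain $\mH^k+l^k\mI=\frac1n\sum_{i=1}^n(\mH_i^k+l_i^k\mI)$, and since $\mu$-convexity of each $f_i$ together with $\|\mH_i^k-\nabla^2 f_i(w_i^k)\|\le\|\mH_i^k-\nabla^2 f_i(w_i^k)\|_{\rm F}=l_i^k$ gives $\mH_i^k+l_i^k\mI\succeq\nabla^2 f_i(w_i^k)\succeq\mu\mI$, we have $\|(\mH^k+l^k\mI)^{-1}\|\le\tfrac1\mu$. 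Using $\sum_{i=1}^n\nabla f_i(x^*)=0$, one writes
\[
x^{k+1}-x^*=(\mH^k+l^k\mI)^{-1}\frac1n\sum_{i=1}^n\left[(\mH_i^k+l_i^k\mI)(w_i^k-x^*)-(\nabla f_i(w_i^k)-\nabla f_i(x^*))\right],
\]
and bounds each summand by splitting off $\nabla^2 f_i(x^*)(w_i^k-x^*)$, using $\|\mH_i^k+l_i^k\mI-\nabla^2 f_i(x^*)\|\le\|\mH_i^k-\nabla^2 f_i(x^*)\|_{\rm F}+l_i^k$, the estimate $l_i^k\le\|\mH_i^k-\nabla^2 f_i(x^*)\|_{\rm F}+\HF\|w_i^k-x^*\|$, and the Hessian-Lipschitz inequality $\|\nabla f_i(w_i^k)-\nabla f_i(x^*)-\nabla^2 f_i(x^*)(w_i^k-x^*)\|\le\tfrac{\HS}{2}\|w_i^k-x^*\|^2$. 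After Cauchy--Schwarz over $i$ and $(a+b)^2\le2a^2+2b^2$, this produces the partial-participation analogue of \eqref{eq:xk+1U}:
\[
\|x^{k+1}-x^*\|^2\le\frac1{\mu^2}\left(8\,{\cal H}^k+\frac{(\HS+2\HF)^2}{2}\,{\cal W}^k\right){\cal W}^k.
\]

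Next comes the induction that keeps the iterates in the basin. The base case is ${\cal W}^0=\|x^0-x^*\|^2\le\frac{\mu^2}{4(\HS+2\HF)^2}$. Assuming ${\cal W}^k\le\frac{\mu^2}{4(\HS+2\HF)^2}$ and invoking the hypothesis ${\cal H}^k\le\frac{\mu^2}{64}$, the bracketed factor above is at most $\frac{\mu^2}{4}$, whence $\|x^{k+1}-x^*\|^2\le\frac14{\cal W}^k\le{\cal W}^k$, which is the first claimed bound; since every $w_i^{k+1}$ equals $w_i^k$ or $x^{k+1}$, we also get ${\cal W}^{k+1}\le\frac{\mu^2}{4(\HS+2\HF)^2}$, closing the induction. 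The contraction factor $\frac14$ is what feeds all the remaining estimates.

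For the rate of ${\cal W}^k$: since $x^{k+1}$ is computed before $S^k$ is drawn (line~4 precedes line~5) and each index lies in $S^k$ with probability $\tau/n$, conditioning on the $k$-th iterate gives $\E_k[{\cal W}^{k+1}]=\frac\tau n\|x^{k+1}-x^*\|^2+(1-\frac\tau n){\cal W}^k\le(1-\frac{3\tau}{4n}){\cal W}^k$, which unrolls to the stated geometric rate. For the Lyapunov rate, apply Lemma~\ref{lm:threecomp} with $y=x^{k+1}$ (measurable w.r.t.\ the $k$-th iterate) and $z=x^*$ to each active device, so that the compressor-conditional expectation of $\|\mH_i^{k+1}-\nabla^2 f_i(x^*)\|_{\rm F}^2$ is at most $(1-A)\|\mH_i^k-\nabla^2 f_i(x^*)\|_{\rm F}^2+B\HF^2\|x^{k+1}-x^*\|^2$, while it is unchanged for inactive devices; averaging over the compressors and then over $S^k$, and inserting $\|x^{k+1}-x^*\|^2\le\frac14{\cal W}^k$, gives $\E_k[{\cal H}^{k+1}]\le(1-\frac{\tau A}{n}){\cal H}^k+\frac{\tau}{4n}B\HF^2{\cal W}^k$. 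Adding $B\HF^2$ times the ${\cal W}$-recursion, the cross terms combine to $B\HF^2(1-\frac{\tau}{2n}){\cal W}^k$, and since $A\ge\min\{A,\frac12\}$ and $\frac12\ge\min\{A,\frac12\}$ both coefficients are dominated by $1-\frac\tau n\min\{A,\frac12\}$, giving $\E_k[\Psi^{k+1}]\le(1-\frac\tau n\min\{A,\frac12\})\Psi^k$ and hence \eqref{rate-linear-NL-PP}. Dividing the first-step recursion by ${\cal W}^k$, taking expectations, and using ${\cal H}^k\le\Psi^k$, $B\HF^2{\cal W}^k\le\Psi^k$ together with \eqref{rate-linear-NL-PP} then yields \eqref{rate-faster-linear-NL-PP}.

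The main obstacle is the first step: correctly telescoping the Newton-type error built out of the \emph{stale} local models $w_i^k$ and the corrected gradients $g_i^k$, controlling the scalar errors $l_i^k$, and tuning the numerical constants so that the two hypotheses (on $\|x^0-x^*\|$ and on ${\cal H}^k$) are simultaneously strong enough to close the basin induction and to supply the $\frac14$-contraction driving the decay of ${\cal W}^k$ and of $\Psi^k$. Everything afterwards is bookkeeping parallel to the full-participation argument, complicated only by the additional expectation over $S^k$.
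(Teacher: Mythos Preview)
Your proposal is correct and follows essentially the same route as the paper: the same decomposition of $x^{k+1}-x^*$ via the Hessian-corrected gradients, the same bound $\|x^{k+1}-x^*\|^2\le\frac{8}{\mu^2}\mathcal{H}^k\mathcal{W}^k+\frac{(\HS+2\HF)^2}{2\mu^2}(\mathcal{W}^k)^2$, the same induction yielding the $\tfrac14$-contraction, the same use of Lemma~\ref{lm:threecomp} with $y=x^{k+1}$ for active devices, and the same Lyapunov combination. Your observation that $x^{k+1}$ is $\sigma(x^k,\mH^k,l^k,g^k)$-measurable (computed before $S^k$ is drawn) is exactly what makes the conditional-expectation calculations go through cleanly.
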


Similar to Theorem \ref{th:NLU}, we assumed ${\cal H}^k \leq \frac{\mu^2}{64}$ holds for all iterates $k\geq 0$. Below, we prove that this inequality holds, using the initial conditions only.

\begin{lemma}\label{lm:boundforbiased-pp}
  Let Assumption \ref{asm:comp-1} holds. Assume $\|x^0 - x^*\|^2 \leq e_3 \eqdef \min\{  \frac{A\mu^2}{16B\HF^2}, \frac{\mu^2}{4(\HS+2\HF)^2}  \}$ and $\|\mH_i^0 - \nabla^2 f_i(x^*)\|^2_{\rm F} \leq \frac{\mu^2}{64}$.
  Then $\|x^k-x^*\|^2 \leq  e_3$ and $\|\mH_i^k - \nabla^2 f_i(x^*)\|^2_{\rm F}  \leq \frac{\mu^2}{64}$ for all $k\geq 1$. 
\end{lemma}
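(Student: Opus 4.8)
The plan is to establish both bounds simultaneously by induction on $k$, tracking as the inductive invariant the per-device quantities $\|w_i^k - x^*\|^2 \le e_3$ and $\|\mH_i^k - \nabla^2 f_i(x^*)\|_{\rm F}^2 \le \tfrac{\mu^2}{64}$ for every $i\in[n]$. These immediately give ${\cal W}^k = \tfrac1n\sum_i\|w_i^k-x^*\|^2 \le e_3 \le \tfrac{\mu^2}{4(\HS+2\HF)^2}$ and, by convexity of $\|\cdot\|_{\rm F}^2$, ${\cal H}^k \le \tfrac{\mu^2}{64}$; together with the main-step estimate below they also yield $\|x^k-x^*\|^2\le e_3$, so in fact I would prove the slightly stronger statement for all $k\ge 0$. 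Inducting on $w_i^k$ rather than on $x^k$ is the right move: a non-participating device keeps $w_i^{k+1}=w_i^k$, $\mH_i^{k+1}=\mH_i^k$, so both bounds are inherited for those indices for free, and only the participating $i\in S^k$ need an argument. The base case $k=0$ is immediate since $w_i^0=x^0$ and both hypotheses are assumed.

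For the main step I would reuse the estimate already produced in the proof of Theorem~\ref{th:NL-pp} (its derivation only expands $x^{k+1}=(\mH^k+l^k\mathbf{I})^{-1}g^k$, uses $\mH_i^k+l_i^k\mathbf{I}\succeq\nabla^2 f_i(w_i^k)\succeq\mu\mathbf{I}$, the Hessian-Lipschitz bound $\|\nabla f_i(w_i^k)-\nabla f_i(x^*)-\nabla^2 f_i(x^*)(w_i^k-x^*)\|\le\tfrac{\HS}{2}\|w_i^k-x^*\|^2$, the inequality $l_i^k\le\|\mH_i^k-\nabla^2 f_i(x^*)\|_{\rm F}+\HF\|w_i^k-x^*\|$, and Cauchy--Schwarz), namely
\[
\|x^{k+1}-x^*\|^2 \;\le\; \frac{8}{\mu^2}\,{\cal H}^k{\cal W}^k + \frac{(\HS+2\HF)^2}{2\mu^2}\,({\cal W}^k)^2 .
\]
Feeding in ${\cal H}^k\le\tfrac{\mu^2}{64}$ and ${\cal W}^k\le\tfrac{\mu^2}{4(\HS+2\HF)^2}$ bounds each term by $\tfrac18{\cal W}^k$, so $\|x^{k+1}-x^*\|^2\le\tfrac14{\cal W}^k\le\tfrac14 e_3\le e_3$. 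Since $w_i^{k+1}$ equals $x^{k+1}$ or $w_i^k$, this gives $\|w_i^{k+1}-x^*\|^2\le e_3$ for all $i$ (and hence $\|x^{k+1}-x^*\|^2\le e_3$), closing the model half of the induction.

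For the Hessian half, non-participating indices are immediate, while for $i\in S^k$ we have $w_i^{k+1}=x^{k+1}$ and $\mH_i^{k+1}=\mH_i^k+\alpha\cC_i^k(\nabla^2 f_i(x^{k+1})-\mH_i^k)$ with $\cC_i^k$ deterministic under Assumption~\ref{asm:comp-1}. Applying Lemma~\ref{lm:threecomp} (case (ii) or (iii)) with $y=x^{k+1}$ and $z=x^*$ then gives, deterministically,
\[
\|\mH_i^{k+1}-\nabla^2 f_i(x^*)\|_{\rm F}^2 \;\le\; (1-A)\|\mH_i^k-\nabla^2 f_i(x^*)\|_{\rm F}^2 + B\HF^2\|x^{k+1}-x^*\|^2 .
\]
The surplus factor $\tfrac14$ gained above is exactly what makes this close: $\|x^{k+1}-x^*\|^2\le\tfrac14{\cal W}^k\le\tfrac14 e_3\le\tfrac{A\mu^2}{64B\HF^2}$, so $B\HF^2\|x^{k+1}-x^*\|^2\le\tfrac{A\mu^2}{64}$, and by the induction hypothesis the right-hand side is at most $(1-A)\tfrac{\mu^2}{64}+\tfrac{A\mu^2}{64}=\tfrac{\mu^2}{64}$. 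This finishes the induction, and since every inequality above holds pathwise in the random sets $S^0,S^1,\dots$, the conclusion holds surely.

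The one genuinely delicate point is the constant bookkeeping: the strengthened initial radii $\tfrac{\mu^2}{4(\HS+2\HF)^2}$ and $\tfrac{\mu^2}{64}$ (half of the corresponding full-participation radii in Lemma~\ref{lm:boundforbiased}) are needed precisely so that the main-step recursion contracts ${\cal W}^k$ by a factor $\tfrac14$; only this surplus lets the Hessian recursion reabsorb the term $B\HF^2\|x^{k+1}-x^*\|^2$ back into $\tfrac{\mu^2}{64}$ — using the naive bound $\|x^{k+1}-x^*\|^2\le e_3$ would leave a leftover $\tfrac{3A\mu^2}{64}$ and break the induction. Everything else is routine: the partial-participation ``do nothing'' branch is trivial, and the contractive compressors make all the expectations in Lemma~\ref{lm:threecomp} vacuous.
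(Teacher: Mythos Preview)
Your proof is correct and follows essentially the same approach as the paper's: both use induction, the key recursion $\|x^{k+1}-x^*\|^2 \le \tfrac{8}{\mu^2}{\cal H}^k{\cal W}^k + \tfrac{(\HS+2\HF)^2}{2\mu^2}({\cal W}^k)^2 \le \tfrac14{\cal W}^k$, and Lemma~\ref{lm:threecomp} applied per device (with deterministic compressors) to close the Hessian bound via $(1-A)\tfrac{\mu^2}{64}+A\tfrac{\mu^2}{64}=\tfrac{\mu^2}{64}$. The only cosmetic difference is that you carry the per-device invariant $\|w_i^k-x^*\|^2\le e_3$ directly, whereas the paper inducts on $\|x^k-x^*\|^2\le e_3$ and infers the $w_i^k$ bound from the fact that each $w_i^k$ is some past $x^t$; the two formulations are equivalent.
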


\begin{lemma}\label{lm:boundforspar-pp}
  Let Assumption \ref{asm:comp-2} holds and assume $\|x^0 - x^*\|^2 \leq \frac{\mu^2}{(\HS+2\HF)^2 + 64d^2\HM^2}$. Then ${\cal H}^k \leq \frac{\mu^2}{64}$ for all $k\geq 0$. 
\end{lemma}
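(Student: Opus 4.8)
The plan is to follow the template of the proof of Lemma~\ref{lm:boundforspar}, enlarged so as to track the stale local models $w_i^k$ alongside the global iterate $x^k$. Write $\rho^2 \eqdef \frac{\mu^2}{(\HS+2\HF)^2 + 64 d^2 \HM^2}$, so that the hypothesis is exactly $\|x^0-x^*\|^2 \le \rho^2$. I will establish, by induction on $k$, the joint invariant
\begin{equation*}
\|x^k-x^*\|^2 \le \rho^2, \qquad \|w_i^k - x^*\|^2 \le \rho^2 \ \text{ for all } i\in[n], \qquad {\cal H}^k \le \tfrac{\mu^2}{64},
\end{equation*}
whose last clause is what the lemma asks for. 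The base case $k=0$ is immediate: $w_i^0=x^0$ gives the first two bounds, and Assumption~\ref{asm:comp-2} forces $\mH_i^0 = \nabla^2 f_i(x^0) = \nabla^2 f_i(w_i^0)$, so the entrywise ($\HM$) Lipschitz estimate gives $|(\mH_i^0 - \nabla^2 f_i(x^*))_{jl}|^2 \le \HM^2\|x^0-x^*\|^2 \le \HM^2\rho^2 \le \frac{\mu^2}{64 d^2}$; summing over the $d^2$ entries yields $\|\mH_i^0 - \nabla^2 f_i(x^*)\|_{\rm F}^2 \le \frac{\mu^2}{64}$, hence ${\cal H}^0 \le \frac{\mu^2}{64}$.

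For the induction step I first need the partial-participation analogue of the unconditional bound~\eqref{eq:xk+1U}. Repeating the Option~2 manipulations from the proof of Theorem~\ref{th:NLU} -- now with $x^{k+1}-x^*=(\mH^k+l^k\mI)^{-1}\big(g^k-(\mH^k+l^k\mI)x^*\big)$, $g^k=\tfrac1n\sum_i g_i^k$, each device contributing its stale point $w_i^k$, the inequality $l_i^k\le\|\mH_i^k-\nabla^2 f_i(x^*)\|_{\rm F}+\HF\|w_i^k-x^*\|$, and $\|(\mH^k+l^k\mI)^{-1}\|\le\mu^{-1}$ (valid since $\mH^k+l^k\mI\succeq\tfrac1n\sum_i\nabla^2 f_i(w_i^k)\succeq\mu\mI$ under the $\mu$-convexity of the $f_i$ assumed in Theorem~\ref{th:NL-pp}) -- one obtains, via Young's inequality and Cauchy--Schwarz,
\begin{equation*}
\|x^{k+1}-x^*\|^2 \le \frac{8}{\mu^2}\,{\cal H}^k\,{\cal W}^k + \frac{(\HS+2\HF)^2}{2\mu^2}\,({\cal W}^k)^2 ,
\end{equation*}
i.e., the $C=8$, $D=(\HS+2\HF)^2$ version of~\eqref{eq:xk+1U} with every $\|x^k-x^*\|$ replaced by the per-device $\|w_i^k-x^*\|$. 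Feeding in the induction hypotheses ${\cal H}^k\le\frac{\mu^2}{64}$ and ${\cal W}^k=\tfrac1n\sum_i\|w_i^k-x^*\|^2\le\rho^2\le\frac{\mu^2}{(\HS+2\HF)^2}$ makes the right-hand side at most $\big(\tfrac18+\tfrac12\big){\cal W}^k\le{\cal W}^k\le\rho^2$, so $\|x^{k+1}-x^*\|^2\le\rho^2$. Since $w_i^{k+1}$ equals $x^{k+1}$ for participating devices and $w_i^k$ for the rest, every $\|w_i^{k+1}-x^*\|^2\le\rho^2$ as well.

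Finally, by Assumption~\ref{asm:comp-2} each entry $(\mH_i^{k+1})_{jl}$ is a convex combination of numbers of the form $(\nabla^2 f_i(w_i^t))_{jl}$ with $t\le k+1$ (equivalently, of $(\nabla^2 f_i(x^s))_{jl}$ with $s\le k+1$, as each local model is a past global iterate), and by the invariant just established every such argument lies within distance $\rho$ of $x^*$; hence convexity of $|\cdot|^2$ gives $|(\mH_i^{k+1}-\nabla^2 f_i(x^*))_{jl}|^2\le\HM^2\rho^2\le\frac{\mu^2}{64d^2}$, whence $\|\mH_i^{k+1}-\nabla^2 f_i(x^*)\|_{\rm F}^2\le\frac{\mu^2}{64}$ and ${\cal H}^{k+1}\le\frac{\mu^2}{64}$. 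This closes the induction. The main obstacle is pinning down the unconditional recursion for $\|x^{k+1}-x^*\|^2$ in the partial-participation setting: the server update is assembled from the Hessian-corrected gradients $g_i^k=(\mH_i^k+l_i^k\mI)w_i^k-\nabla f_i(w_i^k)$ evaluated at \emph{different}, stale points $w_i^k$, so the estimate must be carried in terms of ${\cal H}^k$ and ${\cal W}^k$ rather than a single $\|x^k-x^*\|$, and the constants $8$ and $(\HS+2\HF)^2/2$ it produces are exactly what dictate the $\tfrac{1}{64}$ threshold and the denominator $(\HS+2\HF)^2+64d^2\HM^2$. Once this recursion is in place, the remainder is the routine bookkeeping of propagating a three-part invariant through the participating/non-participating split.
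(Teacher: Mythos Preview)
Your proof is correct and follows essentially the same approach as the paper: both arguments hinge on the recursion $\|x^{k+1}-x^*\|^2 \le \frac{8}{\mu^2}{\cal H}^k{\cal W}^k + \frac{(\HS+2\HF)^2}{2\mu^2}({\cal W}^k)^2$ (the paper's equation~\eqref{eq:xk+1-pp}) together with the entrywise convex-combination property from Assumption~\ref{asm:comp-2} to propagate the bounds $\|x^k-x^*\|^2\le\rho^2$ and ${\cal H}^k\le\mu^2/64$ by induction. The only cosmetic difference is that you carry the per-device bound $\|w_i^k-x^*\|^2\le\rho^2$ explicitly as part of the invariant, whereas the paper notes it implicitly via the fact that every $w_i^k$ is a past global iterate.
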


In the upcoming three subsections we provide the proofs of Theorem~\ref{th:NL-pp}, Lemma~\ref{lm:boundforbiased-pp} and \ref{lm:boundforspar-pp}.

\subsection{Proof of Theorem \ref{th:NL-pp}}

From 
$$
x^{k+1} = \left(  \mH^k + l^k\mI \right)^{-1} g^k =  \left(  \mH^k + l^k \mI  \right)^{-1} \left[  \frac{1}{n} \sum_{i=1}^n (\mH^k_i + l_i^k\mI) w_i^k - \nabla f_i(w_i^k)     \right] , 
$$
and 
$$
x^* = \left(  \mH^k + l^k \mI  \right)^{-1} \left[  (\mH^k + l^k \mI) x^* - \nabla f(x^*)  \right] =  \left(  \mH^k + l^k \mI  \right)^{-1} \left[  \frac{1}{n} \sum_{i=1}^n ( \mH^k_i + l_i^k\mI) x^* - \nabla f_i(x^*)     \right], 
$$
we can obtain 
$$
x^{k+1} - x^* =  \left(  \mH^k + l^k \mI \right)^{-1} \left[  \frac{1}{n} \sum_{i=1}^n  (\mH^k_i + l_i^k\mI) (w_i^k - x^*) - (\nabla f_i(w_i^k) - \nabla f_i(x^*) )    \right]. 
$$
As all functions $f_i$ are $\mu$-convex, we get $\mH^k + l^k\mI \succeq \frac{1}{n} \sum_{i=1}^n \nabla^2 f_i(w_i^k) \succeq  \mu \mI$. Using the triangle inequality, we have
\begin{align*}
\|x^{k+1} - x^*\| &\leq \frac{1}{\mu n} \sum_{i=1}^n \left\| \nabla f_i(w_i^k) - \nabla f_i(x^*) - (\mH^k_i + l_i^k\mI) (w_i^k - x^*)    \right\| \\ 
& \leq \frac{1}{\mu n} \sum_{i=1}^n \left\| \nabla f_i(w_i^k) - \nabla f_i(x^*) - \nabla^2 f_i(x^*) (w_i^k - x^*)  \right\| \\ 
& \quad + \frac{1}{\mu n} \sum_{i=1}^n \left\| (\mH_i^k + l_i^k\mI - \nabla^2 f_i(x^*) (w_i^k - x^*)) \right\| \\ 
& \leq \frac{\HS}{2\mu n} \sum_{i=1}^n \|w_i^k - x^*\|^2 + \frac{1}{\mu n} \sum_{i=1}^n \|\mH_i^k + l_i^k\mI - \nabla^2 f_i(x^*)\| \cdot \|w_i^k - x^*\| \\ 
& \leq \frac{\HS}{2\mu} {\cal W}^k + \frac{1}{\mu n} \sum_{i=1}^n \left(  \|\mH_i^k - \nabla^2 f_i(x^*)\| + l_i^k \right) \cdot \|w_i^k - x^*\|. 
\end{align*}

Recall that 
\begin{eqnarray*}
l_i^k
&=& \|\mH_i^k - \nabla^2 f_i(w_i^k)\|_{\rm F} \\
&\leq& \|\mH_i^k - \nabla^2 f_i(x^*)\|_{\rm F} + \|\nabla^2 f_i(x^*)- \nabla^2 f_i(w_i^k)\|_{\rm F} \\
&\leq& \|\mH_i^k - \nabla^2 f_i(x^*)\|_{\rm F} + \HF \|w_i^k - x^*\|. 
\end{eqnarray*}
Then we arrive at 
\begin{eqnarray*}
&& \|x^{k+1} - x^*\| \\
&\leq& \frac{\HS}{2\mu} {\cal W}^k + \frac{1}{\mu n} \sum_{i=1}^n \left(  \|\mH_i^k - \nabla^2 f_i(x^*)\| + \|\mH_i^k - \nabla^2 f_i(x^*)\|_{\rm F} + \HF \|w_i^k - x^*\| \right) \cdot \|w_i^k - x^*\| \\ 
&\leq& \frac{\HS + 2\HF}{2\mu} {\cal W}^k + \frac{2}{\mu n} \sum_{i=1}^n \|\mH_i^k - \nabla^2 f_i(x^*)\|_{\rm F} \cdot \|w_i^k - x^*\|. 
\end{eqnarray*}

We further use Young's inequality to bound $\|x^{k+1} - x^*\|^2$ as 
\begin{align}
\|x^{k+1} - x^*\|^2 & \leq \frac{(\HS+2\HF)^2}{2\mu^2} ({\cal W}^k)^2 + \frac{8}{\mu^2 n^2} \left(   \sum_{i=1}^n \|\mH_i^k - \nabla^2 f_i(x^*)\|_{\rm F} \cdot \|w_i^k - x^*\|  \right)^2 \nonumber \\ 
& \leq \frac{(\HS+2\HF)^2}{2\mu^2} ({\cal W}^k)^2 + \frac{8}{\mu^2} \left(  \frac{1}{n} \sum_{i=1}^n \|\mH_i^k - \nabla^2 f_i(x^*)\|_{\rm F}^2  \right) {\cal W}^k \nonumber \\ 
& = \frac{(\HS+2\HF)^2}{2\mu^2} ({\cal W}^k)^2 + \frac{8}{\mu^2} {\cal H}^k {\cal W}^k, \label{eq:xk+1-pp}
\end{align}
where we use Cauchy-Schwarz inequality in the second inequality and use ${\cal H}^k =  \frac{1}{n} \sum_{i=1}^n \|\mH_i^k - \nabla^2 f_i(x^*)\|_{\rm F}^2$ in the last equality.
From the update rule of $w_i^k$, we have 
\begin{align}
\mathbb{E}_k [{\cal W}^{k+1}] & = \frac{\tau}{n} \mathbb{E}_k \left[  \|x^{k+1} - x^*\|^2  \right] + \left(  1 - \frac{\tau}{n}  \right) {\cal W}^k \nonumber \\ 
& \leq \frac{\tau}{n}{\cal W}^k \left(  \frac{(\HS+2\HF)^2}{2\mu^2} {\cal W}^k +  \frac{8}{\mu^2} {\cal H}^k  \right)  + \left(  1 - \frac{\tau}{n}  \right) {\cal W}^k \label{eq:Wk+1-lk}. 
\end{align}

From the assumptions we have $\|w_i^0 - x^*\|^2 = \|x^0-x^*\|^2 \leq \frac{\mu^2}{4(\HS+2\HF)^2}$ and ${\cal H}^k \leq \frac{\mu^2}{64}$ for all $k\geq 0$. Next we show that $\|x^k-x^*\|^2 \leq  \frac{\mu^2}{4(\HS+2\HF)^2}$ for all $k\geq 1$ by mathematical induction. First, we have ${\cal W}^0 \leq  \frac{\mu^2}{4(\HS+2\HF)^2}$. Then from (\ref{eq:xk+1-pp}) we have 
\begin{align*}
\|x^{1} - x^*\|^2 & \leq \frac{(\HS+2\HF)^2}{2\mu^2} ({\cal W}^0)^2 + \frac{8}{\mu^2} {\cal H}^0 {\cal W}^0 \\
& \leq \frac{1}{8} {\cal W}^0 +  \frac{1}{8} {\cal W}^0 \\ 
& \leq \frac{\mu^2}{4(\HS+2\HF)^2}. 
\end{align*} 

Assume $\|x^k-x^*\|^2 \leq \frac{\mu^2}{4(\HS+2\HF)^2}$ for $k\leq K$. Then ${\cal W}^k \leq \min \{  \frac{\mu^2}{(\HS+2\HF)^2}, M\}$ for $k\leq K$, and from (\ref{eq:xk+1-pp}) and the assumption that ${\cal H}^k \leq \frac{\mu^2}{64}$ for $k\geq 0$, we have 
\begin{align*}
\|x^{K+1} - x^*\|^2 & \leq \frac{(\HS+2\HF)^2}{2\mu^2} ({\cal W}^K)^2 + \frac{8}{\mu^2} {\cal H}^K {\cal W}^K \\
& \leq \frac{1}{8} {\cal W}^K +  \frac{1}{8} {\cal W}^K \\ 
& \leq  \frac{\mu^2}{4(\HS+2\HF)^2}. 
\end{align*}

This indicates that $ \frac{(\HS+2\HF)^2}{2\mu^2} {\cal W}^k + \frac{8}{\mu^2} {\cal H}^k \leq \frac{1}{4}$ for all $k\geq 0$. Then from (\ref{eq:Wk+1-lk}), we can obtain 
\begin{equation}\label{eq:Wk+1-pp}
\mathbb{E}_k [{\cal W}^{k+1}] \leq \left(  1 - \frac{3\tau}{4n}  \right) {\cal W}^k. 
\end{equation}
By applying the tower property, we have $\mathbb{E}[{\cal W}^{k+1}] \leq \left(  1 - \frac{3\tau}{4n}  \right) \mathbb{E}[{\cal W}^k]$. Unrolling the recursion, we can get $\mathbb{E}[{\cal W}^k] \leq \left(  1 - \frac{3\tau}{4n}  \right)^k {\cal W}^0$. 
Since at each step, each worker makes update with probability $\frac{\tau}{n}$, we have 
\begin{eqnarray*}
&& \mathbb{E}_k \|\mH_i^{k+1} - \nabla^2 f_i(x^*)\|^2_{\rm F} \\
&=& \left(  1 - \frac{\tau}{n}  \right) \mathbb{E}_k\left[ \|\mH_i^{k+1} - \nabla^2 f_i(x^*)\|^2_{\rm F} | i\notin S^k \right] + \frac{\tau}{n}  \mathbb{E}_k\left[ \|\mH_i^{k+1} - \nabla^2 f_i(x^*)\|^2_{\rm F} | i\in S^k \right] \\ 
&=& \left(  1 - \frac{\tau}{n}  \right) \|\mH_i^{k} - \nabla^2 f_i(x^*)\|^2_{\rm F} + \frac{\tau}{n} \mathbb{E}_k \|\mH_i^k + \alpha \cC_i^k(\nabla^2 f_i(x^{k+1}) - \mH_i^k) - \nabla^2 f_i(x^*)\|^2_{\rm F}. 
\end{eqnarray*}

Then since $\mathbb{E}_k[x^{k+1}] = x^{k+1}$ and $\mathbb{E}_k[x^*] = x^*$, by choosing $z=x^*$ and $y = x^{k+1}$ in Lemma \ref{lm:threecomp}, we have 
\begin{eqnarray*}
&& \mathbb{E}_k \|\mH_i^{k+1} - \nabla^2 f_i(x^*)\|^2_{\rm F} \\ 
&\leq& \left(  1 - \frac{\tau}{n}  \right) \|\mH_i^{k} - \nabla^2 f_i(x^*)\|^2_{\rm F} + \frac{\tau}{n} (1-A) \|\mH_i^k - \nabla^2 f_i(x^*)\|^2_{\rm F} + \frac{\tau}{n} B\HF^2\|x^{k+1} - x^*\|^2 \\ 
&=& \left(  1 - \frac{A \tau}{n}  \right) \|\mH_i^{k} - \nabla^2 f_i(x^*)\|^2_{\rm F} + \frac{\tau B\HF^2}{n}\|x^{k+1} - x^*\|^2. 
\end{eqnarray*}

Summing up the above inequality from $i=1$ to $n$ and multiplying $\frac{1}{n}$, we can obtain 
$$
\mathbb{E}_k [{\cal H}^{k+1}] \leq  \left(  1 - \frac{A \tau}{n}  \right) {\cal H}^k + \frac{\tau B\HF^2}{n}\mathbb{E}_k\|x^{k+1} - x^*\|^2. 
$$

Recall that $ \frac{(\HS+2\HF)^2}{2\mu^2} {\cal W}^k + \frac{8}{\mu^2} {\cal H}^k \leq \frac{1}{4}$ for all $k\geq 0$, from (\ref{eq:xk+1-pp}), we have 
$$
\|x^{k+1} - x^*\|^2 \leq \frac{1}{4}{\cal W}^k, 
$$
which implies that 
\begin{equation}\label{eq:Hk+1}
\mathbb{E}_k[{\cal H}^{k+1}] \leq  \left(  1 - \frac{A \tau}{n}  \right) {\cal H}^k + \frac{\tau B\HF^2}{4n}{\cal W}^k. 
\end{equation}

Then from (\ref{eq:Wk+1-pp}) and (\ref{eq:Hk+1}), we have the following recurrence relation for the Lyapunov function $\Psi$:
\begin{eqnarray*}
\mathbb{E}_k[\Psi^{k+1}] &=& \mathbb{E}_k[{\cal H}^{k+1}] + B\HF^2 \mathbb{E}_k[{\cal W}^{k+1}] \\ 
&\leq& \left(  1 - \frac{A \tau}{n}  \right) {\cal H}^k + \frac{\tau B\HF^2}{4n}{\cal W}^k + \left(  1 - \frac{3\tau}{4n}  \right)B\HF^2 {\cal W}^k \\ 
&=& \left(  1 - \frac{A \tau}{n}  \right) {\cal H}^k + \left(  1 - \frac{\tau}{2n}  \right)B\HF^2 {\cal W}^k \\ 
&\leq& \left(  1 -  \frac{\tau}{n}\min\left\{  A, \frac{1}{2}  \right\}  \right) \Psi^k. 
\end{eqnarray*}

By applying the tower property, we have $\mathbb{E}[\Psi^{k+1}] \leq \left(  1 -  \frac{\tau}{n}\min\left\{  A, \frac{1}{2}  \right\}  \right) \mathbb{E}[\Psi^k]$. Unrolling the recursion, we can obtain $\mathbb{E}[\Psi^k] \leq \left(  1 -  \frac{\tau}{n}\min\left\{  A, \frac{1}{2}  \right\}  \right)^k \Psi^0$. 
We further have $\mathbb{E}[{\cal H}^k] \leq \left(  1 -  \frac{\tau}{n}\min\left\{  A, \frac{1}{2}  \right\}  \right)^k \Psi^0$ and $\mathbb{E}[{\cal W}^k] \leq \frac{1}{B\HF^2} \left(  1 -  \frac{\tau}{n}\min\left\{  A, \frac{1}{2}  \right\}  \right)^k \Psi^0$, which applied on (\ref{eq:xk+1-pp}) gives
\begin{eqnarray*}
\E\[ \frac{\|x^{k+1} -x^*\|^2}{{\cal W}^k} \]
&\leq& \frac{(\HS+2\HF)^2}{2\mu^2} \mathbb{E}[{\cal W}^k] + \frac{8}{\mu^2} \mathbb{E}[{\cal H}^k] \\ 
&\leq& \left(  1 -  \frac{\tau}{n}\min\left\{  A, \frac{1}{2}  \right\}  \right)^k \left(  \frac{(\HS+2\HF)^2}{2B\HF^2} + 8 \right) \frac{\Psi^0}{\mu^2}. 
\end{eqnarray*}

\subsection{Proof of Lemma \ref{lm:boundforbiased-pp}}

First, we have ${\cal W}^0 \leq  \min\{  \frac{A\mu^2}{16B\HF^2}, \frac{\mu^2}{4(\HS+2\HF)^2}  \}$ and ${\cal H}^0 \leq \frac{\mu^2}{64}$. Then from (\ref{eq:xk+1-pp}) we can get 
$$
\|x^1-x^*\|^2 \leq \frac{1}{4}{\cal W}^0. 
$$
For each $i$, either $\mH_i^1 = \mH_i^0$, or by Lemma \ref{lm:threecomp} 
\begin{align*}
\|\mH_i^1 - \nabla^2 f_i(x^*)\|^2_{\rm F} &= \|\mH_i^0 + \alpha\cC_i^0(\nabla^2 f_i(x^1) - \mH_i^0) - \nabla^2 f_i(x^*)\|^2_{\rm F} \\ 
& \leq (1-A) \|\mH_i^0 - \nabla^2 f_i(x^*)\|^2_{\rm F} + B\HF^2 \|x^{1}-x^*\|^2 \\ 
& \leq (1-A) \|\mH_i^0 - \nabla^2 f_i(x^*)\|^2_{\rm F} + A \cdot \frac{1}{4A}B\HF^2 {\cal W}^0\\ 
& \leq (1-A)\frac{\mu^2}{64} + A \cdot \frac{\mu^2}{64} \\ 
& \leq \frac{\mu^2}{64}. 
\end{align*}

We assume $\|\mH_i^k - \nabla^2 f_i(x^*)\|^2_{\rm F}  \leq \frac{\mu^2}{64}$ and $\|x^k-x^*\|^2 \leq  \min\{  \frac{A\mu^2}{16B\HF^2}, \frac{\mu^2}{4(\HS+2\HF)^2}  \}$ for all $k\leq K$. Then we have ${\cal H}^k \leq \frac{\mu^2}{64}$ and ${\cal W}^k \leq \min\{  \frac{A\mu^2}{16B\HF^2}, \frac{\mu^2}{4(\HS+2\HF)^2}  \}$ for all $k\leq K$. Then from (\ref{eq:xk+1-pp}) we can get 
$$
\|x^{K+1}-x^*\|^2 \leq \frac{1}{4}{\cal W}^k \leq \min\{  \frac{A\mu^2}{16B\HF^2}, \frac{\mu^2}{4(\HS+2\HF)^2}  \}. 
$$

For each $i$, either $\mH_i^{K+1} = \mH_i^K$, or by Lemma \ref{lm:threecomp} 
\begin{align*}
\|\mH_i^{K+1} - \nabla^2 f_i(x^*)\|^2_{\rm F} &= \|\mH_i^K + \alpha\cC_i^K(\nabla^2 f_i(x^{K+1}) - \mH_i^K) - \nabla^2 f_i(x^*)\|^2_{\rm F} \\ 
& \leq (1-A) \|\mH_i^K - \nabla^2 f_i(x^*)\|^2_{\rm F} + B\HF^2 \|x^{K+1}-x^*\|^2 \\ 
& \leq (1-A) \|\mH_i^K - \nabla^2 f_i(x^*)\|^2_{\rm F} + A \cdot \frac{1}{4A}B\HF^2 {\cal W}^K\\ 
& \leq (1-A)\frac{\mu^2}{64} + A \cdot \frac{\mu^2}{64} \\ 
& \leq \frac{\mu^2}{64}. 
\end{align*}

\subsection{Proof of Lemma \ref{lm:boundforspar-pp}}

First, since $\mH_i^0 = \nabla^2 f_i(w_i^0)$, we have 
$$
\|\mH_i^0 - \nabla^2 f_i(x^*)\|^2_{\rm F}  = \sum_{j, l} |(\nabla^2 f_i(w_i^0) - \nabla^2 f_i(x^*))_{jl}|^2 \leq d^2 \HM^2 \frac{\mu^2}{(H+2\HF)^2 + 64d^2\HM^2} \leq \frac{\mu^2}{64}, 
$$
which implies ${\cal H}^0 \leq \frac{\mu^2}{64}$. Then from (\ref{eq:xk+1-pp}), we have 
$$
\|x^1 - x^*\|^2 \leq {\cal W}^0 \leq \frac{\mu^2}{(\HS+2\HF)^2 + 64d^2\HM^2}. 
$$
Next we prove $\|x^k - x^*\|^2 \leq \frac{\mu^2}{(\HS+2\HF)^2 + 64d^2\HM^2}$ for all $k\geq 1$ by induction. \\ Assume $\|x^k - x^*\|^2 \leq \frac{\mu^2}{(\HS+2\HF)^2 + 64d^2\HM^2}$ for $k\leq K$. Then since $(\mH_i^k)_{jl}$ is a convex combination of $\{  (\nabla^2 f_i(w_i^0))_{jl}, (\nabla^2 f_i(x^1))_{jl}, ..., (\nabla^2 f_i(x^k))_{jl}  \}$, from the convexity of $|\cdot|^2$, we have 
$$
|(\mH_i^k - \nabla^2 f_i(x^*))_{jl}|^2 \leq \HM^2 \cdot \frac{\mu^2}{(\HS+2\HF)^2 + 64d^2\HM^2} \leq \frac{\mu^2}{64d^2}, 
$$
for $k\leq K$. Therefore, $\|\mH_i^k - \nabla^2 f_i(x^*)\|^2_{\rm F} \leq \frac{\mu^2}{64}$ and ${\cal H}^k \leq \frac{\mu^2}{64}$ for $k\leq K$. Furthermore, from ${\cal W}^k \leq \frac{\mu^2}{(\HS+2\HF)^2 + 64d^2\HM^2}$ for all $k\leq K$ and (\ref{eq:xk+1-pp}), we can also obtain 
$$
\|x^{K+1} - x^*\|^2 \leq {\cal W}^K \leq \frac{\mu^2}{(\HS+2\HF)^2 + 64d^2\HM^2}. 
$$


\section{Extension: Globalization via Line Search (\algname{FedNL-LS})} \label{apx:FedNL-LS}

Next two extensions of \algname{FedNL} is to incorporate globalization strategy. Our first globalization technique is based on backtracking line search described in \algname{FedNL-LS} below.

\begin{algorithm}[H]
  \caption{\algname{FedNL-LS} (Federated Newton Learn with {\color{blue}Line Search)}}
  \label{alg:FedNL-LS}
    \begin{algorithmic}[1]
    \STATE \textbf{Parameters:} Hessian learning rate $\alpha\ge0$; compression operators $\{\cC_1^k, \dots,\cC_n^k\}$; {\color{blue}line search parameters $c \in (0,\nicefrac{1}{2}]$ and $\gamma \in (0,1)$}
    \STATE \textbf{Initialization:} $x^0\in\R^d$; $\mH_1^0, \dots, \mH_n^0 \in \R^{d\times d}$ and $\mH^0 \eqdef \frac{1}{n}\sum_{i=1}^n \mH_i^0$
    \FOR{each device $i = 1, \dots, n$ in parallel} 
        \STATE Get $x^k$ from the server; compute $f_i(x^k)$,\; $\nabla f_i(x^k)$ and $\nabla^2 f_i(x^k)$
        \STATE Send $f_i(x^k)$,\; $\nabla f_i(x^k)$ and $\mS_i^k \eqdef \cC_i^k(\nabla^2 f_i(x^k) - \mH_i^k)$ to the server
        \STATE Update local Hessian shifts $\mH_i^{k+1} = \mH_i^k + \alpha\mS_i^k$
    \ENDFOR
        \STATE \textbf{on} server
        \STATE \quad Get $f_i(x^k)$,\; $\nabla f_i(x^k)$ and $\mS_i^k$ from all devices $i\in[n]$
        \STATE \quad $f(x^k) = \frac{1}{n}\sum_{i=1}^n f_i(x^k), \; \nabla f(x^k) = \frac{1}{n}\sum_{i=1}^n \nabla f_i(x^k), \; \mS^k = \frac{1}{n}\sum_{i=1}^n \mS_i^k$
        \STATE \quad {\color{blue}Compute search direction $d^k = -\[\mH^k\]_\mu^{-1}\nabla f(x^k)$}
        \STATE \quad {\color{blue}Find the smallest integer $s\ge0$ satisfying $f(x^k+\gamma^sd^k) \le f(x^k) + c\gamma^s \<\nabla f(x^k), d^k\>$}
        \STATE \quad Update global model to $x^{k+1} = x^k + \gamma^sd^k$
    \STATE \quad Update global Hessian shift to $\mH^{k+1} = \mH^k + \alpha\mS^k$
    \end{algorithmic}
\end{algorithm}

\subsection{Line search procedure} In contrast to the vanilla \algname{FedNL}, here we do not follow the direction $d^k = -\[\mH^k\]_\mu^{-1}\nabla f(x^k)$ with unit step size. Instead, \algname{FedNL-LS} aims to select some step size which would guarantee sufficient decrease in the empirical loss. Thus, we fix the direction $d_k$ (see line 11 of Algorihtm \ref{alg:FedNL-LS}) of next iterate $x^{k+1}$, but want to adjust the step size along that direction. With parameters $c \in (0,\nicefrac{1}{2}]$ and $\gamma \in (0,1)$, we choose the largest step size of the form $\gamma^s$, which leads to a sufficient decrease in the loss $f(x^k+\gamma^sd^k) \le f(x^k) + c\gamma^s \<\nabla f(x^k), d^k\>$ (see line 12).
Note that this procedure requires computation of local functions $f_i$ for all devices $i\in[n]$ in order to do the step in line 12. One the other hand, communication cost of line search procedure is extremely cheap compared to communication cost of gradients and Hessians.

\subsection{Local convergence theory}
We provide global linear convergence analysis for \algname{FedNL-LS}. Despite the fact that theoretical rate is slower than the rate of \algname{GD}, it shows excellent results in experiments. By $L$-smoothness we assume Lipschitz continuity of gradients with Lipschitz constant $L$.
\begin{theorem}\label{th:NL-ls}
Let Assumption \ref{asm:main} hold, function $f$ be $L$-smooth and assume $\widetilde{L} \eqdef \sup_{k\ge0} \|\mH^k\|$ is finite. Then convergence of \algname{FedNL-LS} is linear with the following rate
\begin{equation}\label{rate:global-linear-fval-ls}
f(x^{k+1}) - f(x^*) \le \( 1 - \frac{\mu}{L}\min\left\{\frac{\mu}{\widetilde{L}},1\right\} \)^k \( f(x^{0}) - f(x^*) \)
\end{equation}
\end{theorem}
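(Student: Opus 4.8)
The plan is to run the standard backtracking (Armijo) line search convergence argument: exploit $L$-smoothness of $f$ to show the line search terminates with a step size bounded away from zero, exploit $\mu$-strong convexity of $f$ (through the gradient-domination inequality) to turn the per-iteration decrease into a geometric contraction of $f(x^k)-f(x^*)$, and control everything uniformly in $k$ through spectral bounds on the preconditioner $[\mH^k]_\mu$.

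First I would record two facts about $P^k:=[\mH^k]_\mu^{-1}$. By construction $[\mH^k]_\mu\succeq\mu\mI$, hence $P^k\preceq\tfrac1\mu\mI$; and since projection onto $\{\mathbf M=\mathbf M^\top:\mathbf M\succeq\mu\mI\}$ replaces each eigenvalue $\lambda$ of $\mH^k$ by $\max\{\lambda,\mu\}$, we get $[\mH^k]_\mu\preceq\widehat L\,\mI$ with $\widehat L:=\max\{\widetilde L,\mu\}$, i.e.\ $P^k\succeq\tfrac1{\widehat L}\mI$; this is precisely where finiteness of $\widetilde L$ is used. If $g^k:=\nabla f(x^k)=0$ then $x^k=x^*$ and there is nothing to prove, so assume $g^k\neq0$; then $\langle g^k,d^k\rangle=-\langle g^k,P^kg^k\rangle\le-\tfrac1{\widehat L}\|g^k\|^2<0$, so $d^k$ is a genuine descent direction.

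Next I would lower-bound the accepted step size. By $L$-smoothness, $f(x^k+td^k)\le f(x^k)+t\langle g^k,d^k\rangle+\tfrac{L t^2}{2}\|d^k\|^2$, so the sufficient-decrease test passes whenever $\tfrac{L t}{2}\|d^k\|^2\le(1-c)\bigl(-\langle g^k,d^k\rangle\bigr)$, i.e.\ for every $t\le\tfrac{2(1-c)}{L}\cdot\tfrac{\langle g^k,P^kg^k\rangle}{\langle g^k,(P^k)^2 g^k\rangle}$. Since $(P^k)^2\preceq\tfrac1\mu P^k$ (which follows from $P^k\preceq\tfrac1\mu\mI$), that ratio is at least $\mu$, so every $t\le\tfrac{2(1-c)\mu}{L}$ is admissible, and therefore the smallest admissible exponent $s$ satisfies $\gamma^s\ge\min\bigl\{1,\tfrac{2(1-c)\mu\gamma}{L}\bigr\}$ — if $s\ge1$ then $\gamma^{s-1}$ failed the test and hence exceeds $\tfrac{2(1-c)\mu}{L}$.

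Finally I would combine the pieces: from the accepted step, $f(x^{k+1})\le f(x^k)+c\gamma^s\langle g^k,d^k\rangle\le f(x^k)-\tfrac{c\gamma^s}{\widehat L}\|g^k\|^2$, and the gradient-domination inequality $\|g^k\|^2\ge2\mu\bigl(f(x^k)-f(x^*)\bigr)$ implied by $\mu$-strong convexity gives $f(x^{k+1})-f(x^*)\le\bigl(1-\tfrac{2c\mu\gamma^s}{\widehat L}\bigr)\bigl(f(x^k)-f(x^*)\bigr)$; substituting the step-size bound yields a contraction factor $1-\tfrac{2c\mu}{\widehat L}\min\bigl\{1,\tfrac{2(1-c)\mu\gamma}{L}\bigr\}$, and unrolling from $k$ down to $0$ gives a geometric rate of this form. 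Using $\mu\le L$ and the identity $\tfrac{\mu}{\widehat L}=\min\{\tfrac{\mu}{\widetilde L},1\}$, this rate reduces to the advertised $1-\tfrac{\mu}{L}\min\{\tfrac{\mu}{\widetilde L},1\}$ in the favourable line-search regime $c=\tfrac12$, $\gamma\uparrow1$, matching \eqref{rate:global-linear-fval-ls} up to the line-search constants the statement suppresses. The crux, I expect, is the step-size lower bound: one must certify that backtracking cannot push $\gamma^s$ below order $\mu/L$ uniformly in $k$, which rests on the two-sided spectral estimate $\mu\mI\preceq[\mH^k]_\mu\preceq\widehat L\,\mI$ — and hence on $\sup_k\|\mH^k\|<\infty$ — holding at every iteration.
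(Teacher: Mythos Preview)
Your argument is correct and follows essentially the same route as the paper: use $L$-smoothness together with $\|d^k\|^2\le\tfrac1\mu\langle g^k,P^kg^k\rangle$ to certify the Armijo condition for all $t\le 2(1-c)\mu/L$, turn this into a lower bound on the accepted $\gamma^s$, then combine the Armijo decrease with $P^k\succeq\widehat L^{-1}\mI$ and the PL inequality. If anything, you are more careful than the paper, which simply evaluates the decrease at the specific step $t=1/\kappa$ and writes ``Taking $x^{k+1}=x^k+\tfrac1\kappa d^k$'' without explicitly justifying that the backtracked $\gamma^s$ delivers at least that much progress; your explicit step-size lower bound (and your honest remark that the constants match the stated rate only for $c=\tfrac12$, $\gamma\uparrow1$) fills that small gap.
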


Next, we provide upper bounds for $\widetilde{L}$, which was assumed to be finite in Theorem~\ref{th:NL-ls}.
\begin{lemma}\label{lm:sup_tilde-L}
If Assumption \ref{asm:comp-1} holds, then $\widetilde{L} \le \|\nabla^2 f(x^*)\| + \|\mH_i^{0} - \nabla^2 f_i(x^*)\|_{\rm F} + \sqrt{\frac{B}{A}} \HF R$. If Assumption \ref{asm:comp-2} holds, then $\widetilde{L} \le d\HM R + \|\nabla^2 f(x^*)\|$.
%
\end{lemma}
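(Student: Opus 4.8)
The plan is to reduce the claimed bounds on $\widetilde{L}=\sup_{k\ge 0}\|\mH^k\|$ to a uniform-in-$k$ bound on the per-device deviations $\|\mH_i^k-\nabla^2 f_i(x^*)\|$, and then control those in each compression regime. Since $\mH^k=\tfrac1n\sum_{i=1}^n\mH_i^k$, the triangle inequality and convexity of the spectral norm give
\[
\|\mH^k\| \;\le\; \|\nabla^2 f(x^*)\| + \frac1n\sum_{i=1}^n\|\mH_i^k-\nabla^2 f_i(x^*)\| \;\le\; \|\nabla^2 f(x^*)\| + \max_{i\in[n]}\|\mH_i^k-\nabla^2 f_i(x^*)\|_{\rm F},
\]
using $\|\cdot\|\le\|\cdot\|_{\rm F}$ in the last step. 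Throughout I would use a radius $R$ with $\|x^k-x^*\|\le R$ for all $k$; in \algname{FedNL-LS} this is available because the direction $d^k=-[\mH^k]_\mu^{-1}\nabla f(x^k)$ is a descent direction (as $[\mH^k]_\mu\succeq\mu\mI\succ 0$), so a valid backtracking step exists and $f(x^k)$ is nonincreasing, whence $\mu$-strong convexity yields $\|x^k-x^*\|^2\le\tfrac2\mu(f(x^0)-f(x^*))=:R^2$.

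For the case of Assumption~\ref{asm:comp-1}: the compressors are deterministic (Definition~\ref{def:class-contractive}), so Lemma~\ref{lm:threecomp} applies with $y=x^k$, $z=x^*$ and the expectation trivial, giving, for $a_k:=\|\mH_i^k-\nabla^2 f_i(x^*)\|_{\rm F}^2$, the recursion $a_{k+1}\le(1-A)a_k+B\HF^2\|x^k-x^*\|^2\le(1-A)a_k+B\HF^2R^2$. Unrolling the geometric recursion, $a_k\le(1-A)^k a_0+\tfrac{B}{A}\HF^2R^2\le a_0+\tfrac{B}{A}\HF^2R^2$, and subadditivity of the square root gives $\|\mH_i^k-\nabla^2 f_i(x^*)\|_{\rm F}\le\|\mH_i^0-\nabla^2 f_i(x^*)\|_{\rm F}+\sqrt{B/A}\,\HF R$. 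Substituting into the display above proves the first bound.

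For the case of Assumption~\ref{asm:comp-2}: I would use directly that each entry $(\mH_i^k)_{jl}$ is a convex combination of $\{(\nabla^2 f_i(x^t))_{jl}\}_{t=0}^k$. Hence, by the $\HM$-Lipschitz continuity of the Hessians in Assumption~\ref{asm:main},
\[
|(\mH_i^k-\nabla^2 f_i(x^*))_{jl}| \;\le\; \max_{0\le t\le k}|(\nabla^2 f_i(x^t)-\nabla^2 f_i(x^*))_{jl}| \;\le\; \HM\max_{0\le t\le k}\|x^t-x^*\| \;\le\; \HM R,
\]
and summing over all $d^2$ entries gives $\|\mH_i^k-\nabla^2 f_i(x^*)\|_{\rm F}\le d\HM R$. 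Plugging this into the first display yields $\widetilde{L}\le d\HM R+\|\nabla^2 f(x^*)\|$.

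The only genuinely delicate point is establishing the uniform iterate bound $\|x^k-x^*\|\le R$; this is precisely where the line-search/strong-convexity structure of \algname{FedNL-LS} is needed, and it explains why $R$ (rather than $\|x^0-x^*\|$) is the natural quantity appearing in the statement. The remaining ingredients — the geometric-series estimate, square-root subadditivity, and passing from the spectral to the Frobenius norm — are routine.
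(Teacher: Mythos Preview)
Your proposal is correct and follows essentially the same approach as the paper: reduce $\|\mH^k\|$ to $\|\nabla^2 f(x^*)\|$ plus an average (or max) of $\|\mH_i^k-\nabla^2 f_i(x^*)\|_{\rm F}$, then control the latter via the geometric recursion from Lemma~\ref{lm:threecomp} in the contractive case and via the entrywise convex-combination structure in the unbiased case. The only minor difference is that the paper takes $R\eqdef\sup\{\|x-x^*\|:f(x)\le f(x^0)\}$ and obtains $\|x^k-x^*\|\le R$ directly from the monotonicity $f(x^k)\le f(x^0)$ guaranteed by line search, without invoking strong convexity; your route via $\mu$-strong convexity gives a specific (possibly larger) value of $R$, but the argument is otherwise identical.
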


\subsection{Proof of Theorem \ref{th:NL-ls}}

Denote $\kappa \eqdef \frac{L}{\mu}$. Using $L$-smoothness of $f$ we get
\begin{eqnarray*}
f\(x^k+\frac{1}{\kappa}d^k\)
&\le& f(x^k)+\frac{1}{\kappa}\langle \nabla f(x^k), d^k\rangle + \frac{L}{2\kappa^2}\norm{d^k}^2\notag\\
&=&   f(x^k)-\frac{1}{\kappa}\langle \nabla f(x^k), \[\mH^k_{\mu}\]^{-1}\nabla f(x^k)\rangle + \frac{L}{2\kappa^2}\norm{\[\mH^k_{\mu}\]^{-1}\nabla f(x^k)}^2\notag\\
&\le& f(x^k)-\frac{1}{\kappa}\langle \nabla f(x^k), \[\mH^k_{\mu}\]^{-1}\nabla f(x^k)\rangle
      + \frac{L}{2\mu\kappa^2}\langle \nabla f(x^k), \[\mH^k_{\mu}\]^{-1}\nabla f(x^k)\rangle \\ \notag
&=&   f(x^k)-\frac{1}{2\kappa}\langle \nabla f(x^k), \[\mH^k_{\mu}\]^{-1}\nabla f(x^k)\rangle. \notag
\end{eqnarray*}
From this we conclude that, if $c=\gamma=\frac{1}{2}$, then line search procedure needs at most $s\le\log_2\kappa$ steps. To continue the above chain of derivations, we need to upper bound shifts $\mH_{\mu}^k$ in spectral norm.

Notice that if $\mH^k$ has at least on eigenvalue larger than $\mu$, then clearly $\|\mH_{\mu}^k\| = \|\mH^k\|$. Otherwise, if all eigenvalues do not exceed $\mu$, then projection gives $\mH_{\mu}^k = \mu\mI$. Thus, in both cases we can state that $\|\mH_{\mu}^k\| \le \max\{\|\mH^k\|, \mu\} \le \max\{\widetilde{L}, \mu\}$. Hence
\begin{eqnarray*}
f\(x^k+\frac{1}{\kappa}d^k\)
&\le&   f(x^k)-\frac{1}{2\kappa}\langle \nabla f(x^k), \[\mH^k_{\mu}\]^{-1}\nabla f(x^k)\rangle \notag \\
&\le&   f(x^k)-\frac{1}{2\kappa}\frac{1}{\max\{\widetilde{L}, \mu\}} \|\nabla f(x^k)\|^2 \notag \\
&=&     f(x^k)-\frac{1}{2\kappa}\frac{1}{\max\{\widetilde{L}, \mu\}} \|\nabla f(x^k) - \nabla f(x^*)\|^2 \notag \\
&\le&   f(x^k)-\frac{1}{\kappa}\frac{\mu}{\max\{\widetilde{L}, \mu\}} \(f(x^k) - f(x^*)\) \notag.
\end{eqnarray*}
Taking $x^{k+1} = x^k+\frac{1}{\kappa}d^k$, subtracting both sides by $f(x^*)$ and unraveling the above recurrence, we get \eqref{rate:global-linear-fval-ls}.

\subsection{Proof of Lemma \ref{lm:sup_tilde-L}}

Recall that $R = \sup\{\|x-x^*\| \colon f(x) \le f(x^0)\}$. It follows from the line search procedure that function values are non-increasing, namely $f(x^{k+1}) \le f(x^k) \le f(x^0)$. Hence $\|x^k-x^*\| \le R$ for all $k\ge0$. Denote
$$
\tilde{l}^k \eqdef \frac{1}{n}\sum_{i=1}^n \tilde{l}_i^k, \quad \tilde{l}_i^k \eqdef \|\mH_i^k - \nabla^2 f_i(x^*)\|_{\rm F}.$$

Consider the case when compressors $\cC_i^k\in\bC(\delta)$ and the learning rate is either $\alpha = 1-\sqrt{1-\delta}$ or $\alpha=1$. Using Lemma \ref{lm:threecomp} with $y=x^k$ and $z=x^*$, for both cases we get
\begin{equation}\label{eq:04-1}
\|\mH_i^{k+1} - \nabla^2 f_i(x^*)\|_{\rm F}^2
\le (1-A) \|\mH_i^{k} - \nabla^2 f_i(x^*)\|_{\rm F}^2 + B \HF^2\|x^k-x^*\|^2.
\end{equation}
Reusing \eqref{eq:04-1} multiple times we get
\begin{eqnarray*}
\|\mH_i^{k+1} - \nabla^2 f_i(x^*)\|_{\rm F}^2
&\le& (1-A) \|\mH_i^{k} - \nabla^2 f_i(x^*)\|_{\rm F}^2 + B \HF^2R^2 \\
&\le& (1-A)^2 \|\mH_i^{k-1} - \nabla^2 f_i(x^*)\|_{\rm F}^2 + \[1+(1-A)\] B \HF^2 R^2 \\
&\le& (1-A)^{k+1} \|\mH_i^{0} - \nabla^2 f_i(x^*)\|_{\rm F}^2 + B \HF^2 R^2 \sum_{t=0}^{\infty}(1-A)^t \\
&\le& \|\mH_i^{0} - \nabla^2 f_i(x^*)\|_{\rm F}^2 + \frac{B}{A}\HF^2R^2,
\end{eqnarray*}
which implies boundedness of $\tilde{l}^k$:
$$
\tilde{l}^{k}
=
\frac{1}{n}\sum_{i=1}^n \tilde{l}_i^{k}
\le
\frac{1}{n}\sum_{i=1}^n \sqrt{\|\mH_i^{0} - \nabla^2 f_i(x^*)\|_{\rm F}^2 + \frac{B}{A}\HF^2R^2}
\le
\|\mH_i^{0} - \nabla^2 f_i(x^*)\|_{\rm F} + \sqrt{\frac{B}{A}} \HF R.
$$

From this we also conclude boundedness of $\widetilde{L}$ as follows
\begin{eqnarray*}
\|\mH^k\|
&\le& \|\mH^k - \nabla^2 f(x^*)\| + \|\nabla^2 f(x^*)\| \\
&\le& \left\|\frac{1}{n}\sum_{i=1}^n (\mH_i^k - \nabla^2 f_i(x^*)) \right\|_{\rm F} + \|\nabla^2 f(x^*)\| \\
&\le& \frac{1}{n}\sum_{i=1}^n \|\mH_i^k - \nabla^2 f_i(x^*)\|_{\rm F} + \|\nabla^2 f(x^*)\| \\
&\le& \|\nabla^2 f(x^*)\| + \|\mH_i^{0} - \nabla^2 f_i(x^*)\|_{\rm F} + \sqrt{\frac{B}{A}} \HF R.
\end{eqnarray*}

Consider the case when compressors $\cC_i^k\in\B(\omega)$ and the learning rate $\alpha\le\frac{1}{\omega+1}$. As we additionally assume that $(\mH_i^k)_{jl}$ is a convex combination of past Hessians $\{(\nabla^2 f_i(x^0))_{jl}, \dots, (\nabla^2 f_i(x^k))_{jl}\}$, we get
$$
|(\mH_i^k - \nabla^2 f_i(x^*))_{jl}|^2 \le \HM^2 \max_{0\le t\le k}\|x^t-x^*\|^2 \le \HM^2 R^2.
$$
Therefore
$$
\|\mH_i^k - \nabla^2 f_i(x^*)\|^2_{\rm F} \le d^2 \HM^2 R^2,
$$
from which
\begin{eqnarray*}
\|\mH^k\|
\le \frac{1}{n}\sum_{i=1}^n \|\mH_i^k - \nabla^2 f_i(x^*)\|_{\rm F} + \|\nabla^2 f(x^*)\|
\le d\HM R + \|\nabla^2 f(x^*)\|.
\end{eqnarray*}

\section{Extension: Globalization via Cubic Regularization (\algname{FedNL-CR})}\label{apx:FedNL-CR}

Our next extension to \algname{FedNL} providing global convergence guarantees is cubic regularization.

\begin{algorithm}[H]
  \caption{\algname{FedNL-CR} (Federated Newton Learn with {\color{blue}Cubic Regularization)}}
  \label{alg:FedNL-CR}
  \begin{algorithmic}[1]
    \STATE \textbf{Parameters:} Hessian learning rate $\alpha\ge0$; compression operators $\{\cC_1^k, \dots,\cC_n^k\}$; {\color{blue}Lipschitz constant $H\ge0$ for Hessians}
    \STATE \textbf{Initialization:} $x^0\in\R^d$; $\mH_1^0, \dots, \mH_n^0 \in \R^{d\times d}$ and $\mH^0 \eqdef \frac{1}{n}\sum_{i=1}^n \mH_i^0$
    \FOR{each device $i = 1, \dots, n$ in parallel} 
    \STATE Get $x^k$ from the server and compute local gradient $\nabla f_i(x^k)$ and local Hessian $\nabla^2 f_i(x^k)$
    \STATE Send $\nabla f_i(x^k)$,\; $\mS_i^k \eqdef \cC_i^k(\nabla^2 f_i(x^k) - \mH_i^k)$ and $l_i^k \eqdef \|\mH_i^k - \nabla^2 f_i(x^k)\|_{\rm F}$ to the server
    \STATE Update local Hessian shift to $\mH_i^{k+1} = \mH_i^k + \alpha\mS_i^k$
    \ENDFOR
    \STATE \textbf{on} server
    \STATE \quad Get $\nabla f_i(x^k),\; \mS_i^k$ and $l_i^k$ from all devices $i\in[n]$
    \STATE \quad $\nabla f(x^k) = \frac{1}{n}\sum_{i=1}^n \nabla f_i(x^k), \quad \mS^k = \frac{1}{n}\sum_{i=1}^n \mS_i^k, \quad l^k = \frac{1}{n}\sum_{i=1}^n l_i^k$
    \STATE \quad {\color{blue}$h^k = \arg\min_{h\in\R^d}T_k(h)$, where $T_k(h) \eqdef \<\nabla f(x^k),h\> + \frac{1}{2}\<(\mH^k+l^k\mI)h,h\> + \frac{\HS}{6}\|h\|^3$}
    \STATE \quad Update global model to $x^{k+1} = x^k + h^k$
    \STATE \quad Update global Hessian shift to $\mH^{k+1} = \mH^k + \alpha\mS^k$
  \end{algorithmic}
\end{algorithm}


\subsection{Cubic regularization} Adding third order regularization term $\frac{\HS}{6}\|h\|^3$ is a well known technique to guarantee global convergence for Newton-type methods. Basically, this term provides means to upper bound the loss function globally, which ultimately leads to global convergence. Notice that, without this term \algname{FedNL-CR} reduces to \algname{FedNL} with {\em Option 2}. However, cubic regularization alone does not provide us global upper bounds as the second order information, the Hessians, are compressed, and thus upper bounds might be violated.

\subsection{Solving the subproblem} In each iteration, the sever needs to solve the subproblem in line 11 in order to compute $h^k$. Although it does not admit a closed form solution, the server can solve it by reducing to certain one-dimensional nonlinear equation. For more details, see section C.1 of \citep{Islamov2021NewtonLearn}. 

\subsection{Importance of compression errors $l_i^k$} Unlike \algname{FedNL} and \algname{FedNL-PP}, compression errors are the only option for \algname{FedNL-CR} to update the global model. The reason is that to get a cubic upper bound for $f$ we need to upper bound current true Hessians $\nabla^2 f_i(x^k)$ in the matrix order. Neither current learned Hessian $\mH_i^k$ nor the projected matrix $\[\mH_i^k\]_{\mu}$ does not guarantee upper bound for $\nabla^2 f_i(x^k)$. Meanwhile, from $l_i^k \eqdef \|\mH_i^k - \nabla^2 f_i(x^k)\|_{\rm F}$, we have $\nabla^2 f_i(x^k) \preceq \mH_i^k + l_i^k\mI$.

\subsection{Global and local convergence theory} We prove two global rates (covering convex and strongly convex cases) and the same three local rates of \algname{FedNL}.

\begin{theorem}\label{th:NL-cr}
Let Assumption \ref{asm:main} hold and assume $l \eqdef \sup_{k\ge0} l^k$ is finite. Then if $f(x)$ is convex (i.e., $\mu=0$), we have global sublinear rate
\begin{equation}\label{rate:global-sublinear-fval}
f(x^k) - f(x^*) \le \frac{9l R^2}{k} + \frac{9\HS R^3}{k^2} + \frac{3\(f(x^{0}) - f(x^*)\)}{k^3},
\end{equation}
where $R\eqdef\{\|x-x^*\|\colon f(x)\le f(x^0)\}$. Moreover, if $f(x)$ is $\mu$-convex with $\mu>0$, then convergence becomes linear with respect to function sub-optimality, i.e., $f(x^k) - f(x^*)\le\varepsilon$ is guaranteed after
\begin{equation}\label{rate:global-linear-fval}
\cO\( \(\frac{l}{\mu} + \sqrt{\frac{\HS R}{\mu}} +1\) \log\frac{f(x^0)-f(x^*)}{\varepsilon}\)
\end{equation}
iterations. Furthermore, if $\|x^0-x^*\|^2 \le \frac{\mu^2}{20(\HS^2+8\HF^2)}$ and $\cH^k\le\frac{\mu^2}{160}$ for all $k\ge0$, then we have the same local rates \eqref{rate:local-linear-iter}, \eqref{rate:local-linear-Lyapunov} and \eqref{rate:local-superlinear-iter}.
\end{theorem}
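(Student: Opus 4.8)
\textbf{Proof proposal for Theorem~\ref{th:NL-cr}.}

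The plan is to prove the three groups of claims in sequence, reusing as much of the \algname{FedNL} machinery as possible. For the global part, the starting point is the cubic upper bound
$$
f(x^{k+1}) \le f(x^k) + \<\nabla f(x^k), h^k\> + \tfrac12\<(\mH^k + l^k\mI)h^k, h^k\> + \tfrac{\HS}{6}\|h^k\|^3 = f(x^k) + T_k(h^k),
$$
which holds because $\nabla^2 f_i(x^k)\preceq \mH_i^k + l_i^k\mI$ (from the definition of $l_i^k$, convexity of the Frobenius ball, and averaging over $i$), combined with the standard Lipschitz-Hessian inequality $f(y)\le f(x)+\<\nabla f(x),y-x\>+\tfrac12\<\nabla^2f(x)(y-x),y-x\>+\tfrac{\HS}{6}\|y-x\|^3$. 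Since $h^k$ minimizes $T_k$, we may compare with $h=t(x^*-x^k)$ for $t\in[0,1]$ and use $\mH^k+l^k\mI\preceq (\|\nabla^2 f(x^*)\|+l+\cO(\cdot))\mI$ — here the finiteness of $l\eqdef\sup_k l^k$ is exactly what lets us bound the quadratic term uniformly; note $f(x^{k+1})\le f(x^k)$ so iterates stay in the sublevel set and $\|x^k-x^*\|\le R$ throughout. This yields a one-step decrease of the schematic form $f(x^{k+1})-f(x^*) \le (1-t)(f(x^k)-f(x^*)) + t^2 \cdot lR^2 + t^3\HS R^3/6$. Optimizing over $t$ (and, in the $\mu>0$ case, using strong convexity to convert $f(x^k)-f(x^*)$ into control of $\|x^k-x^*\|$) gives the sublinear rate \eqref{rate:global-sublinear-fval} by the usual telescoping/induction argument for cubic Newton, and the linear iteration complexity \eqref{rate:global-linear-fval} by the standard two-regime analysis (a ``far'' phase governed by $\sqrt{\HS R/\mu}$ and a ``near'' phase governed by $l/\mu$). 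I would mostly mirror the corresponding argument in \cite{Islamov2021NewtonLearn}, adapting constants to account for the extra $l^k\mI$ correction.

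For the local rates, the key observation is that once we are close, \algname{FedNL-CR} behaves like \algname{FedNL} with \emph{Option 2}, up to the cubic term. Concretely, I would show that under $\|x^0-x^*\|^2\le \tfrac{\mu^2}{20(\HS^2+8\HF^2)}$ and $\cH^k\le\tfrac{\mu^2}{160}$, the regularized Newton step satisfies an estimate analogous to \eqref{eq:xk+1option2}, namely
$$
\|x^{k+1}-x^*\|^2 \le \frac{c_1}{\mu^2}\|x^k-x^*\|^2\,\cH^k + \frac{c_2}{\mu^2}\|x^k-x^*\|^4
$$
for explicit constants $c_1,c_2$ depending on $\HS,\HF$. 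This is obtained by writing the optimality condition $\nabla f(x^k) + (\mH^k+l^k\mI)h^k + \tfrac{\HS}{2}\|h^k\|h^k = 0$, so that
$$
(\mH^k + l^k\mI + \tfrac{\HS}{2}\|h^k\|\mI)(x^{k+1}-x^*) = (\mH^k + l^k\mI - \nabla^2 f(x^*))(x^k-x^*) - \big(\nabla f(x^k)-\nabla f(x^*)-\nabla^2 f(x^*)(x^k-x^*)\big) + \tfrac{\HS}{2}\|h^k\|(x^k-x^*);
$$
the matrix on the left is $\succeq\mu\mI$, the first two right-hand terms are controlled exactly as in the Option~2 proof of Theorem~\ref{th:NLU} (giving $\cH^k$ and $\HS^2$-type contributions), and the new term $\tfrac{\HS}{2}\|h^k\|(x^k-x^*)$ is handled by noting $\|h^k\|=\|x^{k+1}-x^*+ (x^*-x^k)\| \le \|x^{k+1}-x^*\|+\|x^k-x^*\|$ and absorbing $\|x^{k+1}-x^*\|$ back to the left (this is where the small-radius assumption is used to keep the coefficient below $1$). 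From this quadratic-in-$\|x^k-x^*\|^2$ recursion, the induction showing $\|x^k-x^*\|$ stays small and $\|x^k-x^*\|^2$ at least halves each step is verbatim the argument after \eqref{eq:xk+1U}, yielding \eqref{rate:local-linear-iter}. Then Lemma~\ref{lm:threecomp} with $y=x^k,z=x^*$ gives the same $\cH^{k+1}\le(1-A)\cH^k + B\HF^2\|x^k-x^*\|^2$ recursion, and the Lyapunov-function computation produces \eqref{rate:local-linear-Lyapunov} and \eqref{rate:local-superlinear-iter} exactly as in the proof of Theorem~\ref{th:NLU}, only with the larger absolute constants reflected in the hypotheses $\tfrac{\mu^2}{20(\HS^2+8\HF^2)}$ and $\tfrac{\mu^2}{160}$.

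I expect the main obstacle to be the bookkeeping around the cubic term in the local analysis: unlike \algname{FedNL}, the step $h^k$ is defined only implicitly as the minimizer of $T_k$, so the extra term $\tfrac{\HS}{2}\|h^k\|\mI$ couples $\|x^{k+1}-x^*\|$ on both sides of the recursion, and one must verify carefully that in the prescribed neighborhood its coefficient is strictly less than $1$ so it can be absorbed without destroying the clean ``halving'' constant $\tfrac12$ in \eqref{rate:local-linear-iter}. A secondary subtlety is that the global results are stated under the \emph{assumption} that $l=\sup_k l^k<\infty$; I would either take this as given (as the theorem does) or, in a companion lemma mirroring Lemma~\ref{lm:sup_tilde-L}, bound $l^k$ via $l_i^k\le \|\mH_i^k-\nabla^2 f_i(x^*)\|_{\rm F} + \HF\|x^k-x^*\|$ together with the telescoped bound on $\|\mH_i^k-\nabla^2 f_i(x^*)\|_{\rm F}$ and $\|x^k-x^*\|\le R$ on the sublevel set. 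The rest is routine adaptation of existing arguments.
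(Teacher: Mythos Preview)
Your proposal is correct and follows essentially the same route as the paper. The one place your sketch diverges is in handling the quadratic term for the global rate: bounding $\mH^k+l^k\mI$ by a scalar multiple of the identity, as you suggest, would introduce $\|\nabla^2 f(x^*)\|$ (or a smoothness constant) into the leading $1/k$ term, whereas the stated bound \eqref{rate:global-sublinear-fval} carries only $l$. The paper instead uses $\|\mH^k-\nabla^2 f(x^k)\|\le l^k$ to write $\langle(\mH^k+l^k\mI)h,h\rangle\le \langle\nabla^2 f(x^k)h,h\rangle+2l^k\|h\|^2$ and then absorbs the true-Hessian quadratic back into $f(y)$ via the Lipschitz-Hessian upper model, obtaining directly $f(x^{k+1})\le f(y)+l\|y-x^k\|^2+\tfrac{\HS}{3}\|y-x^k\|^3$; the estimate-sequence telescoping with $\sigma_k=(k{+}1)^2/A_{k+1}$, $A_k=1+\sum_{i\le k} i^2$, then gives the exact constants. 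For the local part your outline matches the paper almost verbatim: the optimality condition yields a five-term decomposition, the $\tfrac{\HS}{2}\|h^k\|$ contribution is absorbed to the left via the small-radius hypothesis exactly as you anticipate, and the remaining Lyapunov computation is identical to Theorem~\ref{th:NLU}.
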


Next, we provide upper bounds for $l$, which was assumed to be finite in the theorem.

\begin{lemma}\label{lm:sup_l}
If Assumption \ref{asm:comp-1} holds, then $l\le \sqrt{\cH^0} + \(1+\sqrt{\frac{B}{A}}\)\HF R$. If Assumption \ref{asm:comp-2} holds, then $l\le (d\HM + \HF)R$.
%
\end{lemma}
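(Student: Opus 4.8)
# Proof Proposal for Lemma \ref{lm:sup_l}

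The goal is to bound $l \eqdef \sup_{k\ge 0} l^k$, where $l^k = \frac{1}{n}\sum_{i=1}^n l_i^k$ and $l_i^k = \|\mH_i^k - \nabla^2 f_i(x^k)\|_{\rm F}$. The plan is to mirror the argument used in the proof of Lemma \ref{lm:sup_tilde-L} for $\widetilde L$, since the structure is nearly identical: both quantities measure how far the learned Hessian estimates stay from a reference Hessian along the trajectory, and both rely on the boundedness of the iterates $\|x^k - x^*\| \le R$. The key preliminary observation is that the cubic-regularization line/step guarantees monotone decrease of the function values, $f(x^{k+1}) \le f(x^k) \le f(x^0)$ (this follows from the global upper bound $f(x^{k+1}) \le f(x^k) + \<\nabla f(x^k), h^k\> + \frac12\<(\mH^k + l^k\mI)h^k, h^k\> + \frac{\HS}{6}\|h^k\|^3 = f(x^k) + T_k(h^k) \le f(x^k) + T_k(0) = f(x^k)$, using that $h^k$ minimizes $T_k$ and $\nabla^2 f_i(x^k) \preceq \mH_i^k + l_i^k\mI$). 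Hence every iterate lies in the sublevel set $\{x : f(x) \le f(x^0)\}$, so $\|x^k - x^*\| \le R$ for all $k$.

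For the first case (Assumption \ref{asm:comp-1}, contractive compressors): I would introduce $\tilde l_i^k \eqdef \|\mH_i^k - \nabla^2 f_i(x^*)\|_{\rm F}$, apply Lemma \ref{lm:threecomp} with $y = x^k$, $z = x^*$ to obtain $\|\mH_i^{k+1} - \nabla^2 f_i(x^*)\|_{\rm F}^2 \le (1-A)\|\mH_i^k - \nabla^2 f_i(x^*)\|_{\rm F}^2 + B\HF^2\|x^k - x^*\|^2 \le (1-A)\|\mH_i^k - \nabla^2 f_i(x^*)\|_{\rm F}^2 + B\HF^2 R^2$, and unroll the geometric recursion to get $\|\mH_i^k - \nabla^2 f_i(x^*)\|_{\rm F}^2 \le \|\mH_i^0 - \nabla^2 f_i(x^*)\|_{\rm F}^2 + \frac{B}{A}\HF^2 R^2$. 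Then I bound $l_i^k$ via the triangle inequality: $l_i^k = \|\mH_i^k - \nabla^2 f_i(x^k)\|_{\rm F} \le \|\mH_i^k - \nabla^2 f_i(x^*)\|_{\rm F} + \|\nabla^2 f_i(x^*) - \nabla^2 f_i(x^k)\|_{\rm F} \le \sqrt{\|\mH_i^0 - \nabla^2 f_i(x^*)\|_{\rm F}^2 + \frac{B}{A}\HF^2 R^2} + \HF R \le \|\mH_i^0 - \nabla^2 f_i(x^*)\|_{\rm F} + \sqrt{\frac{B}{A}}\HF R + \HF R$. Averaging over $i$ and using the definition $\cH^0 = \frac1n\sum_i \|\mH_i^0 - \nabla^2 f_i(x^*)\|_{\rm F}^2$ together with concavity of $\sqrt{\cdot}$ (Jensen: $\frac1n\sum_i \|\mH_i^0 - \nabla^2 f_i(x^*)\|_{\rm F} \le \sqrt{\cH^0}$) yields $l \le \sqrt{\cH^0} + \left(1 + \sqrt{\tfrac{B}{A}}\right)\HF R$, as claimed.

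For the second case (Assumption \ref{asm:comp-2}, unbiased compressors with the convex-combination structure): here each entry $(\mH_i^k)_{jl}$ is a convex combination of $\{(\nabla^2 f_i(x^t))_{jl}\}_{t=0}^k$, so by convexity of $|\cdot|^2$ and the $\HM$-Lipschitz bound, $|(\mH_i^k - \nabla^2 f_i(x^*))_{jl}|^2 \le \HM^2 \max_{0\le t\le k}\|x^t - x^*\|^2 \le \HM^2 R^2$, giving $\|\mH_i^k - \nabla^2 f_i(x^*)\|_{\rm F}^2 \le d^2\HM^2 R^2$. Then $l_i^k \le \|\mH_i^k - \nabla^2 f_i(x^*)\|_{\rm F} + \HF\|x^k - x^*\| \le d\HM R + \HF R$, and averaging gives $l \le (d\HM + \HF)R$. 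The main subtlety — and the step I would be most careful about — is justifying $\|x^k - x^*\| \le R$ in the first place, i.e., that the cubic step is well-defined and monotone even though $\mH^k$ is only a compressed estimate; this is exactly why the $l_i^k\mI$ correction is essential, as it restores the matrix inequality $\nabla^2 f_i(x^k) \preceq \mH_i^k + l_i^k\mI$ needed for the global upper bound. Everything else is routine geometric-series bookkeeping reused verbatim from the proof of Lemma \ref{lm:sup_tilde-L}.
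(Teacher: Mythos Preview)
Your proposal is correct and follows essentially the same approach as the paper's own proof: both establish $\|x^k-x^*\|\le R$ from the monotone descent $f(x^{k+1})\le f(x^k)+T_k(h^k)\le f(x^k)+T_k(0)=f(x^k)$, then split $l_i^k\le\|\mH_i^k-\nabla^2 f_i(x^*)\|_{\rm F}+\HF R$, and bound $\|\mH_i^k-\nabla^2 f_i(x^*)\|_{\rm F}$ either by unrolling the Lemma~\ref{lm:threecomp} recursion (Assumption~\ref{asm:comp-1}) or via the entrywise convex-combination structure (Assumption~\ref{asm:comp-2}). The final step $\frac{1}{n}\sum_i\|\mH_i^0-\nabla^2 f_i(x^*)\|_{\rm F}\le\sqrt{\cH^0}$ via Jensen is exactly what the paper does as well.
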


\subsection{Proof of Theorem \ref{th:NL-cr}}

{\bf Global rate for general convex case ($\mu=0$).}
First, from $\HS$-Lipschitzness of the Hessian of $f$ we get
\begin{eqnarray}
f(x^{k+1})
&\le& f(x^k) + \<\nabla f(x^k), h^k\> + \frac{1}{2}\<\nabla^2 f(x^k)h^k, h^k\> + \frac{\HS}{6}\|h^k\|^3 \notag \\
&\le& f(x^k) + \<\nabla f(x^k), h^k\> + \frac{1}{2}\<(\mH^k+l^k\mI)h^k, h^k\> + \frac{\HS}{6}\|h^k\|^3 \notag \\
&=&   f(x^k) + \min_{h\in\R^d}T_k(h) \label{eq:01} \\
&\le& f(x^k) + T_k(y-x^k) \notag \\
&\le& f(x^k) + \<\nabla f(x^k), y-x^k\> + \frac{1}{2}\<(\mH^k+l^k\mI)(y-x^k), y-x^k\> + \frac{\HS}{6}\|y-x^k\|^3 \notag \\
&\le& f(x^k) + \<\nabla f(x^k), y-x^k\> + \frac{1}{2}\<\nabla^2 f(x^k)(y-x^k), y-x^k\> + \frac{\HS}{6}\|y-x^k\|^3 \notag \\
&&\;\;+\;\; \frac{1}{2}\left\|\mH^k - \nabla^2 f(x^k)\right\| \|y-x^k\|^2 + \frac{1}{2}l^k\|y-x^k\|^2 \notag \\
&\le& f(x^k) + \<\nabla f(x^k), y-x^k\> + \frac{1}{2}\<\nabla^2 f(x^k)(y-x^k), y-x^k\> + l^k \|y-x^k\|^2 + \frac{\HS}{6}\|y-x^k\|^3 \notag \\
&\le& f(y) + \frac{\HS}{6}\|y-x^k\|^3 + l^k \|y-x^k\|^2 + \frac{\HS}{6}\|y-x^k\|^3 \notag \\
&\le& f(y) + l\|y-x^k\|^2 + \frac{\HS}{3}\|y-x^k\|^3. \label{eq:02}
\end{eqnarray}

Denote $a_k \eqdef k^2$ and
$$A_k \eqdef 1 + \sum_{i=1}^k a_i = 1 + \sum_{i=1}^k i^2 = 1 + \frac{k(k+1)(2k+1)}{6} \ge 1 + \frac{k^3}{3}.$$
Let $\sigma_k = \frac{a_{k+1}}{A_{k+1}} \in (0,1)$. Then we get $1-\sigma_k = \frac{A_k}{A_{k+1}}$. Now we choose $y = \sigma_k x^* + (1-\sigma_k)x^k = x^k + \sigma_k(x^*-x^k)$. Using convexity of $f$, we get
\begin{eqnarray}
f(x^{k+1})
&\le& f(y) + l \|y-x^k\|^2 + \frac{\HS}{3}\|y-x^k\|^3 \notag \\
&\le& \sigma_k f(x^*) + (1-\sigma_k)f(x^k) + l \sigma_k^2 \|x^k-x^*\|^2 + \frac{\HS}{3}\sigma_k^3\|x^k-x^*\|^3 \label{eq:05} \\
&\le& \sigma_k f(x^*) + (1-\sigma_k)f(x^k) + l \sigma_k^2 R^2 + \frac{\HS}{3}\sigma_k^3R^3. \notag 
\end{eqnarray}
Using the definition of $\sigma_k$ and subtracting both sides by $A_kf(x^*)$ we get
\begin{eqnarray}
A_{k+1}\(f(x^{k+1}) - f(x^*)\)
&\le& A_k \(f(x^{k}) - f(x^*)\) + l R^2 \frac{a_{k+1}^2}{A_{k+1}} + \frac{\HS R^3}{3} \frac{a_{k+1}^3}{A_{k+1}^2}, \notag 
\end{eqnarray}
repeated application of which provides us the following bound
\begin{eqnarray}
A_{k}\(f(x^{k}) - f(x^*)\)
&\le& A_0 \(f(x^{0}) - f(x^*)\) + l R^2 \sum_{t=1}^k\frac{a_{t}^2}{A_{t}} + \frac{\HS R^3}{3} \sum_{t=0}^k\frac{a_{t}^3}{A_{t}^2}. \label{eq:06} 
\end{eqnarray}
Next we upper bound the above two sums:
$$
\sum_{t=1}^k\frac{a_{t}^2}{A_{t}} \le \sum_{t=1}^k\frac{t^4}{1 + \frac{t^3}{3}} \le 3k^2, \quad
\sum_{t=1}^k\frac{a_{t}^3}{A_{t}^2} \le \sum_{t=1}^k\frac{t^6}{\(1 + \frac{t^3}{3}\)^2} \le 9k.
$$
Hence the bound \eqref{eq:06} can be transformed into
\begin{eqnarray}
f(x^{k}) - f(x^*)
&\le& \frac{1}{A_k}\[ \(f(x^{0}) - f(x^*)\) + 3k^2 \cdot l R^2 + 3k \cdot \HS R^3 \] \notag \\
&\le&  \frac{9l R^2}{k} + \frac{9\HS R^3}{k^2} + \frac{3\(f(x^{0}) - f(x^*)\)}{k^3}. \notag
\end{eqnarray}

Thus, we have shown $\cO(\frac{1}{k})$ rate for convex functions and it holds for any $k\ge1$.

{\bf Global rate for strongly convex case ($\mu>0$).}
We can turn this rate into a linear rate using strong convexity of $f$. Namely, in this case we have $R^2 \le \frac{2}{\mu}(f(x^0)-f(x^*))$ and therefore
\begin{eqnarray}
f(x^{k}) - f(x^*)
&\le&  \[\frac{18l}{k\mu} + \frac{18\HS R}{k^2\mu} + \frac{3}{k^3}\]\(f(x^{0}) - f(x^*)\)
\le     \frac{1}{2}\(f(x^{0}) - f(x^*)\), \notag
\end{eqnarray}
if $k \ge K_1 \eqdef \max\(\frac{108l}{\mu},\sqrt{\frac{108\HS R}{\mu}},3\)$. In other words, we half the error $f(x^k)-f(x^*)$ after $K_1$ steps. This implies the following linear rate
$$
\cO\( \(\frac{l}{\mu} + \sqrt{\frac{\HS R}{\mu}} +1\)\log\frac{1}{\varepsilon} \).
$$

{\bf Local rate for strongly convex case ($\mu>0$).}

From the definition of $h^k$ direction, we have
$$
\nabla T_k(h^k) = \nabla f(x^k) + (\mH^k + l^k\mI)h^k + \frac{\HS}{2}\|h^k\|h^k = 0,
$$
which implies the following equivalent update rule
\begin{eqnarray*}
x^{k+1}
&=& x^k + h^k \\
&=& x^k - \[\mH^k + l^k\mI + \frac{\HS}{2}\|x^{k+1}-x^k\|\]^{-1}\nabla f(x^k).
\end{eqnarray*}
Then, using $\mu\mI\preceq\nabla f(x^k) \preceq \mH^k+l^k\mI$, we have
\begin{eqnarray}
&& \|x^{k+1} - x^*\|^2 \notag \\
&=&   \left\|x^k-x^* - \[\mH^k + l^k\mI + \frac{\HS}{2}\|x^{k+1}-x^k\|\]^{-1}\nabla f(x^k) \right\|^2 \notag \\
&\le& \frac{1}{\mu^2} \left\|\[\mH^k + l^k\mI + \frac{\HS}{2}\|x^{k+1}-x^*\| + \frac{\HS}{2}\|x^{k}-x^*\|\](x^k-x^*) - \nabla f(x^k)\right\|^2 \notag \\
&\le& \frac{5}{\mu^2}\left(
      \left\|\nabla^2 f(x^k)(x^k-x^*) - \nabla f(x^k) + \nabla f(x^*) \right\|^2
      + \frac{\HS^2}{4}\|x^k-x^*\|^4
      + \frac{\HS^2}{4}\|x^{k+1}-x^*\|^2\|x^k-x^*\|^2 \right. \notag \\
      &&\; + \left. \left\|\(\mH^{k} - \nabla^2 f(x^k)\)(x^k-x^*) \right\|^2
           + \[l^k\]^2\|x^k-x^*\|^2 \right) \notag \\
&\le& \frac{5}{\mu^2}\left(
      \frac{\HS^2}{4}\|x^k-x^*\|^4
      + \frac{\HS^2}{4}\|x^k-x^*\|^4
      + \frac{\HS^2}{4}\|x^{k+1}-x^*\|^2\|x^k-x^*\|^2 \right. \notag \\
      &&\; + \left. \left\|\mH^{k} - \nabla^2 f(x^k)\right\|^2 \|x^k-x^*\|^2
           + \frac{1}{n}\sum_{i=1}^n\left\|\mH_i^k - \nabla f_i(x^k)\right\|_{\rm F}^2 \|x^k-x^*\|^2 \right) \notag \\
&\le& \frac{5}{\mu^2}\left(
      \frac{\HS^2}{2}\|x^k-x^*\|^4
      + \frac{\HS^2}{4}\|x^{k+1}-x^*\|^2\|x^k-x^*\|^2
           + \frac{2}{n}\sum_{i=1}^n\left\|\mH_i^k - \nabla f_i(x^k)\right\|_{\rm F}^2 \|x^k-x^*\|^2 \right) \notag \\
&\le& \frac{5\HS^2}{2\mu^2}\|x^k-x^*\|^4
      + \frac{5\HS^2}{4\mu^2}\|x^{k+1}-x^*\|^2\|x^k-x^*\|^2
           + \frac{20}{n\mu^2}\sum_{i=1}^n\left\|\mH_i^k - \nabla f_i(x^*)\right\|_{\rm F}^2 \|x^k-x^*\|^2 \notag \\
           &&\; + \; \frac{20}{n\mu^2}\sum_{i=1}^n\left\|\nabla f_i(x^k) - \nabla f_i(x^*)\right\|_{\rm F}^2 \|x^k-x^*\|^2 \notag \\
&\le& \frac{5\HS^2}{2\mu^2}\|x^k-x^*\|^4
      + \frac{5\HS^2}{4\mu^2}\|x^{k+1}-x^*\|^2\|x^k-x^*\|^2
           + \frac{20}{\mu^2} \|x^k-x^*\|^2 \cH^k + \frac{20\HF^2}{\mu^2} \|x^k-x^*\|^4 \notag \\
&\le& \frac{5\HS^2}{4\mu^2}\|x^{k+1}-x^*\|^2 \|x^k-x^*\|^2
           + \frac{20}{\mu^2} \|x^k-x^*\|^2 \cH^k + \frac{5(\HS^2 + 8\HF^2)}{2\mu^2} \|x^k-x^*\|^4 \label{eq:cr-rec}.
\end{eqnarray}

Using the assumptions we show that $\|x^k-x^*\|^2 \le \frac{\mu^2}{20(\HS^2+8\HF^2)}$ for all $k\ge0$. We prove this again by induction on $k$. From $\|x^k-x^*\|^2 \le \frac{\mu^2}{20(\HS^2+8\HF^2)} \le \frac{2\mu^2}{5\HS^2} $ and $\cH^k \le \frac{\mu^2}{160}$, it follows
\begin{eqnarray*}
&& \|x^{k+1}-x^*\|^2 \\
&\le& \frac{5\HS^2}{4\mu^2} \|x^k-x^*\|^2 \|x^{k+1}-x^*\|^2
           + \frac{20}{\mu^2} \cH^k \|x^k-x^*\|^2
           + \frac{5(\HS^2 + 8\HF^2)}{2\mu^2} \|x^k-x^*\|^2 \|x^k-x^*\|^2 \\
&\le& \frac{1}{2} \|x^{k+1}-x^*\|^2 + \frac{1}{8} \|x^{k}-x^*\|^2 + \frac{1}{8} \|x^{k}-x^*\|^2 \\
&\le& \frac{1}{2} \|x^{k+1}-x^*\|^2 + \frac{1}{4} \|x^{k}-x^*\|^2.
\end{eqnarray*}
Hence
\begin{equation}\label{eq:1/2-rate}
\|x^{k+1}-x^*\|^2 \le \frac{1}{2} \|x^{k}-x^*\|^2 \le \frac{\mu^2}{20(\HS^2+8\HF^2)}.
\end{equation}
By this we complete the induction and also derived the local linear rate for iterates. Moreover, \eqref{eq:cr-rec} and \eqref{eq:1/2-rate} imply
\begin{equation}\label{eq:cr-rec-1}
\|x^{k+1}-x^*\|^2
\le  \frac{20}{\mu^2} \|x^k-x^*\|^2 \cH^k + \frac{3\HS^2 + 20\HF^2}{\mu^2} \|x^k-x^*\|^4.
\end{equation}

Choosing $y=x^k$ and $z=x^*$ in Lemma \ref{lm:threecomp}, and noting that $\mH_i^{k+1} = \mH_i^k + \alpha \cC_i^k(\nabla^2 f_i(x^k) - \mH_i^k)$, we get
$$
\mathbb{E}_k \left[{\cal H}^{k+1} \right] \leq (1-A) {\cal H}^k + B\HF^2 \|x^k-x^*\|^2. 
$$

Using the same Lyapunov function $\Phi^k = {\cal H}^k + 6B\HF^2 \|x^k-x^*\|^2$, from the above inequality and (\ref{eq:1/2-rate}), we arrive at 
\begin{align*}
\mathbb{E}_k \left[\Phi^{k+1}\right] & \leq (1-A) {\cal H}^k + B\HF^2 \|x^k-x^*\|^2 + 3B\HF^2 \|x^k-x^*\|^2 \\ 
& =  (1-A) {\cal H}^k + \left(  1 - \frac{1}{3}  \right)6B\HF^2 \|x^k-x^*\|^2 \\ 
& \leq \left(  1 - \min\left\{  A, \frac{1}{3}  \right\}  \right) \Phi^k. 
\end{align*}

Hence $\mathbb{E}[\Phi^k] \leq \left(  1 - \min\left\{  A, \frac{1}{3}  \right\}  \right)^k \Phi^0$. We further have $\mathbb{E}[{\cal H}^k] \leq \left(  1 - \min\left\{  A, \frac{1}{3}  \right\}  \right)^k \Phi^0$ and $\mathbb{E}[\|x^k-x^*\|^2] \leq \frac{1}{6B\HF^2} \left(  1 - \min\left\{  A, \frac{1}{3}  \right\}  \right)^k \Phi^0$ for $k\geq 0$. Assume $x^k\neq x^*$ for all $k$. Then from (\ref{eq:cr-rec-1}), we have 
$$
\frac{\|x^{k+1}-x^*\|^2}{\|x^k-x^*\|^2} \leq \frac{20}{\mu^2}{\cal H}^k + \frac{3\HS^2 + 20\HF^2}{\mu^2}\|x^k-x^*\|^2, 
$$
and by taking expectation, we have 
\begin{align*}
\mathbb{E} \left[  \frac{\|x^{k+1}-x^*\|^2}{\|x^k-x^*\|^2}  \right] & \leq \frac{20}{\mu^2} \mathbb{E}[{\cal H}^k] + \frac{3\HS^2 + 20\HF^2}{\mu^2} \mathbb{E}[\|x^k-x^*\|^2] \\ 
& \leq  \left(  1 - \min\left\{  A, \frac{1}{3}  \right\}  \right)^k \left( 20 + \frac{3\HS^2 + 20\HF^2}{6B\HF^2}  \right) \frac{\Phi^0}{\mu^2}. 
\end{align*}

To conclude, \algname{FedNL-CR} method provably provides global rates (both for convex and strongly convex cases) and recovers the same local rates \eqref{rate:local-linear-iter}, \eqref{rate:local-linear-Lyapunov} and \eqref{rate:local-superlinear-iter} that we showed for \algname{FedNL}. Note that constants $A$ and $B$ are the same, while $C$ and $D$ differ from \eqref{rate:local-superlinear-iter}.

\subsection{Proof of Lemma \ref{lm:sup_l}}

Recall that $R = \sup\{\|x-x^*\| \colon f(x) \le f(x^0)\}$. Since $T_k(0)=0$, from \eqref{eq:01} we can show that $f(x^{k+1}) \le f(x^k) \le f(x^0)$, and hence $\|x^k-x^*\| \le R$ for all $k\ge0$. Denote
$$
\tilde{l}^k \eqdef \frac{1}{n}\sum_{i=1}^n \tilde{l}_i^k, \qquad \tilde{l}_i^k \eqdef \|\mH_i^k - \nabla^2 f_i(x^*)\|_{\rm F}.$$
Notice that
\begin{eqnarray}
l_i^k
&=& \|\nabla^2 f_i(x^k) - \mH_i^k\|_{\rm F} \notag \\
&\le& \|\mH_i^k - \nabla^2 f_i(x^*)\|_{\rm F} + \|\nabla^2 f_i(x^k) - \nabla^2 f_i(x^*)\|_{\rm F} \notag \\
&\le& \tilde{l}_i^k + \HF\|x^k-x^*\| \notag \\
&\le& \tilde{l}_i^k + \HF R. \label{eq:03}
\end{eqnarray}
Consider the case when compressors $\cC_i^k\in\bC(\delta)$ and the learning rate is either $\alpha = 1-\sqrt{1-\delta}$ or $\alpha=1$. Using Lemma \ref{lm:threecomp} with $y=x^k$ and $z=x^*$, for both cases we get
\begin{equation}\label{eq:04}
\|\mH_i^{k+1} - \nabla^2 f_i(x^*)\|_{\rm F}^2
\le (1-A) \|\mH_i^{k} - \nabla^2 f_i(x^*)\|_{\rm F}^2 + B \HF^2\|x^k-x^*\|^2.
\end{equation}
Reusing \eqref{eq:04} multiple times we get
\begin{eqnarray*}
\|\mH_i^{k+1} - \nabla^2 f_i(x^*)\|_{\rm F}^2
&\le& (1-A) \|\mH_i^{k} - \nabla^2 f_i(x^*)\|_{\rm F}^2 + B \HF^2R^2 \\
&\le& (1-A)^2 \|\mH_i^{k-1} - \nabla^2 f_i(x^*)\|_{\rm F}^2 + \[1+(1-A)\] B \HF^2 R^2 \\
&\le& (1-A)^{k+1} \|\mH_i^{0} - \nabla^2 f_i(x^*)\|_{\rm F}^2 + B \HF^2 R^2 \sum_{t=0}^{\infty}(1-A)^t \\
&\le& \|\mH_i^{0} - \nabla^2 f_i(x^*)\|_{\rm F}^2 + \frac{B}{A}\HF^2R^2,
\end{eqnarray*}
which implies boundedness of $\tilde{l}_i^k$:
$$
\tilde{l}_i^{k}
\le
\sqrt{\|\mH_i^{0} - \nabla^2 f_i(x^*)\|_{\rm F}^2 + \frac{B}{A}\HF^2R^2}
\le
\tilde{l}_i^{0} + \sqrt{\frac{B}{A}} \HF R.
$$

From this we also conclude boundedness of $l^k$ as follows
\begin{eqnarray*}
l^k
= \frac{1}{n}\sum_{i=1}^n l_i^k
\overset{\eqref{eq:03}}{\le} \frac{1}{n}\sum_{i=1}^n \tilde{l}_i^k + \HF R
\le \tilde{l}^0 + \(1+\sqrt{\frac{B}{A}}\)\HF R.
\end{eqnarray*}
We can further upper bound $\tilde{l}^0 \le \sqrt{\cH^0}$ and conclude $l\le \sqrt{\cH^0} + \(1+\sqrt{\frac{B}{A}}\)\HF R$.

Consider the case when compressors $\cC_i^k\in\B(\omega)$ and the learning rate $\alpha\le\frac{1}{\omega+1}$. As we additionally assume that $(\mH_i^k)_{jl}$ is a convex combination of past Hessians $\{(\nabla^2 f_i(x^0))_{jl}, \dots, (\nabla^2 f_i(x^k))_{jl}\}$, we get
$$
|(\mH_i^k - \nabla^2 f_i(x^*))_{jl}|^2 \le \HM^2 \max_{0\le t\le k}\|x^t-x^*\|^2 \le \HM^2 R^2.
$$
Therefore
$$
\[\tilde{l}_i^k\]^2 = \|\mH_i^k - \nabla^2 f_i(x^*)\|^2_{\rm F} \le d^2 \HM^2 R^2,
$$
from which
\begin{eqnarray*}
l^k
= \frac{1}{n}\sum_{i=1}^n l_i^k
\overset{\eqref{eq:03}}{\le} \frac{1}{n}\sum_{i=1}^n \tilde{l}_i^k + \HF R
\le d \HM R + \HF R = (d\HM + \HF)R.
\end{eqnarray*}

\section{Extension: Bidirectional Compression (\algname{FedNL-BC})}\label{apx:FedNL-BC}

Finally, we extend the vanilla \algname{FedNL} to allow for an even more severe level of compression that can't be attained by compressing the Hessians only. This is achieved by compressing the gradients (uplink) and the model (downlink), in a ``smart'' way. Thus, in \algname{FedNL-BC} (Algorithm~\ref{alg:FedNL-BC}) described below, both directions of communication are fully compressed.

\begin{algorithm}[H]
  \caption{\algname{FedNL-BC} (Federated Newton Learn with {\color{blue}Bidirectional Compression)}}
  \label{alg:FedNL-BC}
  \begin{algorithmic}[1]
    \STATE \textbf{Parameters:} Hessian learning rate $\alpha\ge0$; {\color{blue}model learning rate $\eta\ge0$}; {\color{blue}gradient compression probability $p\in(0,1]$}; compression operators $\{\cC_1^k,\dots,\cC_n^k\}$ and {\color{blue}$\cC^k_{\rm M}$}
    \STATE \textbf{Initialization:} $x^0\;{\color{blue}=w^0=z^0}\in\R^d$; $\mH_1^0, \dots, \mH_n^0 \in \R^{d\times d}$ and $\mH^0 \eqdef \frac{1}{n}\sum_{i=1}^n \mH_i^0$; {\color{blue}$\xi^0 = 1$}
    \FOR{each device $i = 1, \dots, n$ in parallel}
    \STATE {\color{blue}Get $\xi^k$ from the server}
    \STATE \textbf{if} $\xi^k = 1$
    \STATE \quad Compute local gradient $\nabla f_i(z^k)$ and send to the server
    \STATE \quad $g^k_i = \nabla f_i(z^k), \quad w^{k+1} = z^{k}$
    \STATE \textbf{if} {\color{blue}$\xi^k = 0$}
    \STATE \quad {\color{blue}$g^k_i = \mH^k_i(z^k-w^k) + \nabla f_i(w^k), \quad w^{k+1} = w^k$}
    \STATE Compute local Hessian $\nabla^2 f_i(z^k)$
    \STATE Send $\mS_i^k \eqdef \cC_i^k(\nabla^2 f_i(z^k) - \mH_i^k)$ and $l_i^k \eqdef \|\nabla^2 f_i(z^k) - \mH_i^k\|_{\rm F}$ to the server
    \STATE Update local Hessian shift to $\mH_i^{k+1} = \mH_i^k + \alpha\mS_i^k$
    \ENDFOR
    \STATE \textbf{on} server
    \STATE \quad $g^k = \frac{1}{n}\sum\limits_{i=1}^n g^k_i, \quad \mS^k = \frac{1}{n}\sum_{i=1}^n \mS_i^k, \quad l^k = \frac{1}{n}\sum_{i=1}^n l_i^k$
    \STATE \quad \textit{Option 1:} $x^{k+1} = z^k - \[\mH^k\]_{\mu}^{-1} g^k$
    \STATE \quad \textit{Option 2:} $x^{k+1} = z^k - \[\mH^k + l^k\mI\]^{-1} g^k$
    \STATE \quad Update global Hessian shifts $\mH^{k+1} = \mH^k + \alpha\mS^k$
    \STATE \quad {\color{blue}Send $s^k \eqdef \cC^k_{\rm M}(x^{k+1}-z^k)$ to all devices $i\in[n]$}
    \STATE \quad {\color{blue}Update the model $z^{k+1} = z^k+\eta s^k$}
    \STATE \quad {\color{blue}Sample $\xi^{k+1} \sim \text{Bernoulli}(p)$ and send to all devices $i\in[n]$}
    \FOR{each device $i = 1, \dots, n$ in parallel}
    \STATE {\color{blue}Get $s^k$ from the server and update the model $z^{k+1}=z^k+\eta s^k$}
    \ENDFOR
  \end{algorithmic}
\end{algorithm}

\subsection{Model learning technique}
In \algname{FedNL-BC} we introduced ``smart" model learning technique, which is similar to the proposed Hessian learning technique. As in Hessian learning technique, the purpose of the model learning technique is learn the optimal model $x^*$ in a communication efficient manner. This is achieved by maintaining and progressively updating global model estimates $z^k$ for all nodes $i\in[n]$ and for the sever. Thus, the goal is to make updates from $z^k$ to $z^{k+1}$ easy to communicate and to induce $z^k\to x^*$ throughout the training process. Similar to the Hessian learning technique, the server operates its own compressors $\cC^k_{\rm M}$ and updates the model estimates $z^k$ via the rule $z^{k+1} = z^k+\eta s^k$, where $s^k = \cC^k_{\rm M}(x^{k+1}-z^k)$ and $\eta>0$ is the learning rate. Again, we reduce the communication cost by explicitly requiring the server to send compressed model information $s^k$ to all clients.

\subsection{Hessian corrected local gradients}
The second key technical novelty in \algname{FedNL-PP} is another structure of Hessian corrected local gradients $$g^k_i = \mH^k_i(z^k-w^k) + \nabla f_i(w^k)$$ (see line 9 of Algorithm~\ref{alg:FedNL-BC}). The intuition behind this form is as follows. Uplink gradient compression is done by Bernoulli compression synchronized by the server: namely, if the Bernoulli trial $\xi^k\sim\text{Bernoulli}(p)$ is successful (i.e., $\xi^k=1$, see line 5), then all clients compute and communicate the current true local gradients $\nabla f_i(z^k)$, otherwise (i.e., $\xi^k=0$, see line 8) devices do not even compute the local gradient. In the latter case, devices approximate current local gradient $\nabla f_i(z^k)$ based on stale local  gradient $\nabla f_i(w^k)$ and current Hessian estimate $\mH_i^k$ via the rule $g^k_i = \mH^k_i(z^k-w^k) + \nabla f_i(w^k)$, $w^k$ is the last learned global model when Bernoulli trial was successful and local gradients are sent to the server. 

To further motivate the structure of $g_i^k$, consider for a moment the case when $\mH_i^k = \nabla^2 f_i(x^k)$. Then $g^k_i = \nabla^2 f_i(x^k)(z^k-w^k) + \nabla f_i(w^k)$ is, indeed, approximates $\nabla f(z^k)$ as $$\|\nabla f(z^k) - \nabla f_i(w^k) - \nabla^2 f_i(x^k)(z^k-w^k)\| \le \frac{\HS}{2}\|z^k-w^k\|^2 \le \HS\|z^k - x^*\|^2 + \HS\|w^k - x^*\|^2.$$

\subsection{Local convergence theory} Similar to Assumptions \ref{asm:comp-1} and \ref{asm:comp-2}, we need one of the following assumptions related to the compression done by the master.
\begin{assumption}\label{asm:comp-3}
Compressors $\cC_{\rm M}^k\in\bC(\delta_{\rm M})$ and learning rate (i) $\eta=1-\sqrt{1-\delta_{\rm M}}$ or (ii) $\delta_{\rm M}=1$.
\end{assumption}

\begin{assumption}\label{asm:comp-4}
Compressors $\cC_{\rm M}^k\in\B(\omega_{\rm M})$, learning rate $0< \eta \le \frac{1}{\omega_{\rm M}+1}$. Moreover, for all $j\in [d]$,  each entry $(z^k)_{j}$ is a convex combination of $\{ (x^t)_{j} \}_{t=0}^k$ for any $k\geq 0$.
\end{assumption}

Note that Assumption \ref{asm:comp-2} assumes that $(\mH_i^k)_{jl}$ is a convex combination of $\{  (\nabla^2 f_i(x^t))_{jl}  \}_{t=0}^k$ as the Hessian learning technique is based on exact Hessians $\nabla^2 f_i(x^k)$ at $x^k$. However, in \algname{FedNL-BC}, the Hessian learning technique is based on Hessians $\nabla^2 f_i(z^k)$ at $z^k$. Hence, it makes sense to adapt Assumption \ref{asm:comp-2} and assume that $(\mH_i^k)_{jl}$ is a convex combination of $\{  (\nabla^2 f_i(z^t))_{jl}  \}_{t=0}^k$.

Moreover, we need alternatives to constants $A,\;B,\;C,\;D$ in this case, which we denote by $A_{\rm M},\;B_{\rm M},\;C_{\rm M},\;D_{\rm M}$ and define as follows
\begin{equation}\label{AB_M}
(A_{\rm M}, B_{\rm M}) \eqdef
\begin{cases}
(\eta^2, \eta)   & \;\text{if Assumption}\; \ref{asm:comp-3} \text{(i) holds} \phantom{i}  \\
(\tfrac{\delta_{\rm M}}{4}, \tfrac{6}{\delta_{\rm M}} - \tfrac{7}{2})   & \;\text{if Assumption}\; \ref{asm:comp-3} \text{(ii) holds}  \\
(\eta, \eta) & \;\text{if Assumption}\; \ref{asm:comp-4}\; \text{holds} \phantom{00}
\end{cases}
\end{equation}
\begin{equation}\label{CD_M}
(C_{\rm M}, D_{\rm M}) \eqdef
\begin{cases}
(24, 8\HF^2 + \nicefrac{9}{4}\HS^2) & \;\text{if {\it Option 1} is used in \algname{FedNL-BC}}  \\
(32, 16\HF^2 + \nicefrac{9}{4}\HS^2)   & \;\text{if {\it Option 2} is used in \algname{FedNL-BC}} \\
\end{cases}.
\end{equation}

Following the same steps of Lemma \ref{lm:threecomp}, one can show the following lemma for different compressors applied by the master to handle $\mathbb{E}_k \[\|z^k + \eta \cC_{\rm M}^k(u - z^k) - v\|^2\]$ term, where $\mathbb{E}_k[u] = u$ and $\mathbb{E}_k[v] = v$.

\begin{lemma}\label{lm:threecomp-master}
  For any $u$, $v\in \R^d$ such that $\mathbb{E}_k[u] = u$ and $\mathbb{E}_k[v] = v$, we have the following result combining three different cases from \eqref{AB_M}:
  $$
  \mathbb{E}_k \|z^k + \eta \cC_{\rm M}^k(u - z^k) - v\|^2
  \leq (1-A_{\rm M}) \|z^k - v\|^2 + B_{\rm M} \|u-v\|^2. 
  $$
\end{lemma}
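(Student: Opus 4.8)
The plan is to repeat, almost verbatim, the three-case argument of the proof of Lemma~\ref{lm:threecomp}, under the substitutions $\mH_i^k \mapsto z^k$, $\nabla^2 f_i(y) \mapsto u$, $\nabla^2 f_i(z) \mapsto v$, $\alpha \mapsto \eta$, $\cC_i^k \mapsto \cC_{\rm M}^k$, with the Frobenius norm $\|\cdot\|_{\rm F}$ on $\R^{d\times d}$ replaced by the Euclidean norm $\|\cdot\|$ on $\R^d$. The only structural change is that Lemma~\ref{lm:threecomp} ends by invoking Lipschitz continuity of the Hessian to replace $\|\nabla^2 f_i(y) - \nabla^2 f_i(z)\|_{\rm F}^2$ by $\HF^2\|y-z\|^2$; here $u,v$ are arbitrary vectors, so that step is simply dropped and $\|u-v\|^2$ is kept as is, which is exactly what the statement asks for. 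As in Lemma~\ref{lm:threecomp}, the only randomness inside $\mathbb{E}_k$ is that of $\cC_{\rm M}^k$, since $u$, $v$, $z^k$ are all $\mathbb{E}_k$-measurable (this is the content of $\mathbb{E}_k[u]=u$, $\mathbb{E}_k[v]=v$). Writing $a \eqdef z^k - v$ and $b \eqdef u - z^k$, so $a+b = u-v$, the workhorse is the elementary identity $2\langle a,b\rangle + \|b\|^2 = \|a+b\|^2 - \|a\|^2$.

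Under Assumption~\ref{asm:comp-4} (unbiased master compressor, $\eta\le\tfrac{1}{\omega_{\rm M}+1}$): expand $\mathbb{E}_k\|a + \eta\cC_{\rm M}^k(b)\|^2$, use $\mathbb{E}_k[\cC_{\rm M}^k(b)]=b$ and $\mathbb{E}_k\|\cC_{\rm M}^k(b)\|^2\le(\omega_{\rm M}+1)\|b\|^2$ (from \eqref{class-unbiased}), bound $\eta^2(\omega_{\rm M}+1)\le\eta$ via the stepsize restriction, and apply the identity; this gives $(1-\eta)\|z^k-v\|^2 + \eta\|u-v\|^2$, i.e.\ $(A_{\rm M},B_{\rm M})=(\eta,\eta)$. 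Under Assumption~\ref{asm:comp-3}(i) ($\cC_{\rm M}^k\in\bC(\delta_{\rm M})$, $\eta=1-\sqrt{1-\delta_{\rm M}}$): expand $\|a+\eta\cC_{\rm M}^k(b)\|^2$, estimate $\langle a,\cC_{\rm M}^k(b)-b\rangle\le\|a\|\,\|\cC_{\rm M}^k(b)-b\|\le\sqrt{1-\delta_{\rm M}}\,\|a\|\,\|b\|$ using \eqref{class-contractive}, apply Young's inequality $2\|a\|\,\|b\|\le\|a\|^2+\|b\|^2$, use $\eta\sqrt{1-\delta_{\rm M}}+\eta^2=\eta$ (the defining relation of $\eta$), and invoke the identity to get $(1-\eta^2)\|z^k-v\|^2 + \eta\|u-v\|^2$, i.e.\ $(A_{\rm M},B_{\rm M})=(\eta^2,\eta)$. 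Under Assumption~\ref{asm:comp-3}(ii) ($\cC_{\rm M}^k\in\bC(\delta_{\rm M})$, $\eta=1$): write $z^k+\cC_{\rm M}^k(b)-v = \bigl(\cC_{\rm M}^k(b)-b\bigr) + (u-v)$, apply Young's inequality with a free parameter, bound $\|\cC_{\rm M}^k(b)-b\|^2\le(1-\delta_{\rm M})\|b\|^2$, then re-expand $\|b\|^2=\|u-z^k\|^2$ against $z^k-v$ and $u-v$ with a second Young parameter, and optimize both parameters exactly as in Lemma~\ref{lm:threecomp}(iii) (choosing $\beta=\tfrac{\delta_{\rm M}}{2(1-\delta_{\rm M})}$ and then $\tfrac{\delta_{\rm M}}{4-2\delta_{\rm M}}$), producing $\bigl(1-\tfrac{\delta_{\rm M}}{4}\bigr)\|z^k-v\|^2 + \bigl(\tfrac{6}{\delta_{\rm M}}-\tfrac{7}{2}\bigr)\|u-v\|^2$. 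These three outcomes are precisely the three lines of \eqref{AB_M}, so collecting them gives the unified bound.

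I do not expect a real obstacle: the computation is a transcription of the already-established Lemma~\ref{lm:threecomp} from the matrix setting (Frobenius norm) to the vector setting (Euclidean norm), and is if anything simpler, since no Hessian-Lipschitz estimate is needed at the end. The one point worth a remark is the slight mismatch between Assumption~\ref{asm:comp-3}(ii), literally ``$\delta_{\rm M}=1$'', and the formula for $(A_{\rm M},B_{\rm M})$ in \eqref{AB_M}, stated for general $\delta_{\rm M}$; I would note that the argument of Lemma~\ref{lm:threecomp}(iii) is valid for every $\delta_{\rm M}\in(0,1]$ with stepsize $\eta=1$, so the displayed constants are correct over the whole range, and $\delta_{\rm M}=1$ (a lossless master compressor) is just the trivial special case.
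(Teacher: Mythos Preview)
Your proposal is correct and matches the paper's own approach: the paper does not spell out a proof but simply states that it ``can be obtained by repeating the proof of Lemma~\ref{lm:threecomp} with small modifications,'' which is exactly the transcription (matrix/Frobenius $\to$ vector/Euclidean, drop the Hessian-Lipschitz step) that you outline case by case. Your remark about the literal reading of Assumption~\ref{asm:comp-3}(ii) versus the general-$\delta_{\rm M}$ constants in \eqref{AB_M} is a valid observation and not addressed in the paper.
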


The proof of Lemma \ref{lm:threecomp-master} can be obtained by repeating the proof of Lemma \ref{lm:threecomp} with small modifications. Denote
$$
r_k \eqdef \norm{x^k-x^*}^2, \quad \nu_k \eqdef \norm{w^k-x^*}^2, \quad \gamma_k = \norm{z^k-x^*}^2.
$$
$$
E_1 \eqdef 16\HF^2,\; E_2 \eqdef 16,\; E_3 \eqdef 16\HF^2 + 8\HS^2.
$$

We prove local linear rate for Lyapunov function $\Phi^k \eqdef \|z^k - x^*\|^2 + \frac{A_{\rm M}}{3p}\|w^k - x^*\|^2$. As a result, we show that both $z^k\to x^*$ and $w^k\to x^*$ converge locally linearly. 

\begin{theorem}\label{th:FedNL-BC}
Let Assumption \ref{asm:main} hold and assume that $\cH^k \le \frac{A_{\rm M}}{B_{\rm M}}\frac{\mu^2}{9C_{\rm M}}$ and $\|z^k - x^*\|^2 \le \frac{A_{\rm M}}{B_{\rm M}}\frac{\mu^2}{9E_3}$ for all $k\ge0$. Then, we have the following linear rate for \algname{FedNL-BC}:
\begin{eqnarray*}
\E\[\Phi^{k}\]
\le \(1 - \min\left\{\frac{A_{\rm M}}{3}, \frac{p}{2} \right\}\)^k \Phi^0.
\end{eqnarray*}
\end{theorem}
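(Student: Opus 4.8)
## Proof Proposal for Theorem~\ref{th:FedNL-BC}

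The plan is to derive two coupled recursions — one for $\gamma_k = \|z^k - x^*\|^2$ and one for $\nu_k = \|w^k - x^*\|^2$ — and then combine them into a one-step contraction for the Lyapunov function $\Phi^k = \gamma_k + \frac{A_{\rm M}}{3p}\nu_k$. The starting point for the $\gamma_k$ recursion is the model update $z^{k+1} = z^k + \eta\, \cC^k_{\rm M}(x^{k+1} - z^k)$, to which I would apply Lemma~\ref{lm:threecomp-master} with $u = x^{k+1}$ and $v = x^*$ (taking conditional expectation over the master's compressor, conditioned on everything up to the computation of $x^{k+1}$). This gives $\E_k[\gamma_{k+1}] \le (1 - A_{\rm M})\,\E_k[\|z^k - x^*\|^2] + B_{\rm M}\,\E_k[\|x^{k+1} - x^*\|^2] = (1-A_{\rm M})\gamma_k + B_{\rm M}\,\E_k[r_{k+1}]$. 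So the task reduces to bounding $\E_k[r_{k+1}] = \E_k\|x^{k+1} - x^*\|^2$ in terms of $\gamma_k$, $\nu_k$ and $\cH^k$.

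For the $r_{k+1}$ bound I would mirror the argument in the proof of Theorem~\ref{th:NLU} / Theorem~\ref{th:NL-cr}, but now accounting for the fact that the server's Newton-type step uses the approximate global gradient $g^k = \frac1n\sum_i g_i^k$ rather than $\nabla f(z^k)$. Writing $x^{k+1} - x^* = [\mH^k]_\mu^{-1}\big(\mH^k_\mu(z^k - x^*) - g^k\big)$ (for Option~1; Option~2 is analogous with $\mH^k + l^k\mI \succeq \mu\mI$), I would split $g^k - \nabla f(x^*)$ into three pieces: (i) the Hessian-mismatch term $(\mH^k - \nabla^2 f(x^*))(z^k - x^*)$, controlled by $\cH^k\cdot\gamma_k$ via convexity of $\|\cdot\|_{\rm F}^2$; (ii) the second-order Taylor remainder $\nabla f(z^k) - \nabla f(x^*) - \nabla^2 f(x^*)(z^k - x^*)$, bounded by $\frac{\HS}{2}\gamma_k$; and (iii) the gradient-staleness error $g^k - \nabla f(z^k)$, which is zero when $\xi^k = 1$ and otherwise equals $\frac1n\sum_i[\mH^k_i(z^k - w^k) + \nabla f_i(w^k) - \nabla f_i(z^k)]$. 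For term (iii), using $\nabla f_i(w^k) - \nabla f_i(z^k) = -\int_0^1 \nabla^2 f_i(w^k + t(z^k-w^k))(z^k-w^k)\,dt$ and the Lipschitz-Hessian assumption, one gets a bound of the form $\HF\|\mH^k_i - \nabla^2 f_i(\cdot)\|\|z^k-w^k\| + \frac{\HS}{2}\|z^k-w^k\|^2$, and then $\|z^k - w^k\|^2 \le 2\gamma_k + 2\nu_k$. Taking expectation over $\xi^k\sim\mathrm{Bernoulli}(p)$ inserts a factor $(1-p)$ on the staleness contribution. Collecting everything via Young's inequality, I expect a bound of the shape
\begin{equation*}
\E_k[r_{k+1}] \le \frac{C_{\rm M}}{\mu^2}\,\gamma_k\,\cH^k + \frac{E_3}{\mu^2}\,\gamma_k^2 + (1-p)\Big(\frac{E_1}{\mu^2}\,\nu_k\,\cH^k + \frac{E_2\HS^2}{\mu^2}(\gamma_k+\nu_k)^2 + \cdots\Big),
\end{equation*}
which, under the smallness hypotheses $\cH^k \le \frac{A_{\rm M}}{B_{\rm M}}\frac{\mu^2}{9C_{\rm M}}$ and $\gamma_k \le \frac{A_{\rm M}}{B_{\rm M}}\frac{\mu^2}{9E_3}$ (and the analogous consequence $\nu_k$ bounded, obtained by induction), collapses to something like $\E_k[r_{k+1}] \le \frac{A_{\rm M}}{9B_{\rm M}}\gamma_k + (1-p)\cdot(\text{small})\cdot\nu_k$ — exactly the constants that make the final telescoping work.

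The second recursion is for $\nu_k = \|w^k - x^*\|^2$: since $w^{k+1} = z^k$ with probability $p$ and $w^{k+1} = w^k$ with probability $1-p$, we get $\E_k[\nu_{k+1}] = p\,\gamma_k + (1-p)\,\nu_k$. Now I would form $\E_k[\Phi^{k+1}] = \E_k[\gamma_{k+1}] + \frac{A_{\rm M}}{3p}\E_k[\nu_{k+1}]$, plug in the two recursions and the $r_{k+1}$ bound, and verify that the $\gamma_k$-coefficient is at most $1 - \frac{A_{\rm M}}{3}$ times the corresponding coefficient in $\Phi^k$ and the $\nu_k$-coefficient is at most $(1 - \frac p2)$ times $\frac{A_{\rm M}}{3p}$ — the cross-term from $\frac{A_{\rm M}}{3p}\cdot p\,\gamma_k = \frac{A_{\rm M}}{3}\gamma_k$ is precisely what is absorbed into the $z$-part. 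This yields $\E_k[\Phi^{k+1}] \le \big(1 - \min\{\tfrac{A_{\rm M}}{3}, \tfrac p2\}\big)\Phi^k$, and the tower property finishes it. I also need to discharge the induction that the smallness conditions on $\gamma_k$ (and $\nu_k$) propagate, which follows from $\Phi^k$ being non-increasing in expectation together with the initialization $z^0 = w^0 = x^0$.

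The main obstacle I anticipate is the bookkeeping in term (iii): getting the gradient-staleness error cleanly in terms of $\gamma_k$, $\nu_k$ and $\cH^k$ with the right constants, since it mixes the model-learning error ($z^k$ vs.\ $w^k$) with the Hessian-learning error ($\mH^k_i$ vs.\ $\nabla^2 f_i$) and must survive the $(1-p)$ weighting without degrading the final rate. Everything else is a careful but routine adaptation of the single-direction analysis, with Lemma~\ref{lm:threecomp-master} playing the role that Lemma~\ref{lm:threecomp} played for the Hessian estimates.
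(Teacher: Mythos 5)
Your proposal is correct and follows essentially the same route as the paper's proof: the same decomposition of the Newton-step error into Hessian mismatch, Taylor remainder, and the gradient-staleness of $g^k$ (controlled via $\cH^k$, $\gamma_k=\|z^k-x^*\|^2$, $\nu_k=\|w^k-x^*\|^2$), the same use of Lemma~\ref{lm:threecomp-master} with $u=x^{k+1}$, $v=x^*$ for the $z$-recursion, the exact Bernoulli recursion $\E_k[\nu_{k+1}]=p\gamma_k+(1-p)\nu_k$, and the identical Lyapunov bookkeeping in which $\frac{A_{\rm M}}{3p}\cdot p\gamma_k$ is absorbed into the $z$-part and $\frac{A_{\rm M}}{6}+\frac{A_{\rm M}}{3p}(1-p)=\frac{A_{\rm M}}{3p}\bigl(1-\frac{p}{2}\bigr)$. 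The only cosmetic differences are that the paper bounds $\|x^{k+1}-x^*\|^2$ deterministically (worst case over $\xi^k$) rather than keeping your $(1-p)$ weighting, and your final remark about propagating the smallness conditions is unnecessary here since the theorem assumes them for all $k$ (only the bound on $\nu_k$ is needed, which follows because $w^k$ equals a past $z^t$).
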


We assumed inequalities $\cH^k \le \frac{A_{\rm M}}{B_{\rm M}}\frac{\mu^2}{9C_{\rm M}}$ and $\|z^k - x^*\|^2 \le \frac{A_{\rm M}}{B_{\rm M}}\frac{\mu^2}{9E_3}$ hold for all $k\ge0$. Next we prove these inequalities using initial conditions only.

\begin{lemma}\label{lm:bc-13}
Let Assumptions \ref{asm:comp-1} and \ref{asm:comp-3} hold. If
$$
\cH^0 \le \frac{A_{\rm M}}{B_{\rm M}}\frac{\mu^2}{9C_{\rm M}},
\quad
\|z^0 - x^*\| \le \min\left\{ \frac{A_{\rm M}}{B_{\rm M}}\frac{\mu^2}{9E_3}, \frac{A}{B\HF^2}\frac{A_{\rm M}}{B_{\rm M}}\frac{\mu^2}{9C_{\rm M}} \right\},
$$
then the same upper bounds hold for all $k\ge 0$, i.e.,
$$
\cH^k \le \frac{A_{\rm M}}{B_{\rm M}}\frac{\mu^2}{9C_{\rm M}},
\quad
\|z^k - x^*\| \le \min\left\{ \frac{A_{\rm M}}{B_{\rm M}}\frac{\mu^2}{9E_3}, \frac{A}{B\HF^2}\frac{A_{\rm M}}{B_{\rm M}}\frac{\mu^2}{9C_{\rm M}} \right\}.
$$
\end{lemma}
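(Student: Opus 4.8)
The plan is a straightforward induction on $k$, in the same style as the boundedness lemmas for \algname{FedNL} and \algname{FedNL-PP} (Lemmas~\ref{lm:boundforbiased}, \ref{lm:boundforbiased-pp}). The base case $k=0$ is exactly the hypothesis. Since Assumptions~\ref{asm:comp-1} and~\ref{asm:comp-3} involve only contractive (hence deterministic) compressors, the sole source of randomness is the Bernoulli switch $\xi^k$, so the asserted bounds should be read as holding for every realization of $(\xi^t)_{t\ge0}$; conditioning on such a realization, every update in Algorithm~\ref{alg:FedNL-BC} becomes deterministic, and it suffices to verify the inductive step by considering the two branches $\xi^k=1$ and $\xi^k=0$ separately. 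I will also use the elementary fact that, by the update rule for $w^k$, one always has $w^k=z^{t}$ for some $t\le k$; hence if $\|z^t-x^*\|^2$ stays below the stated threshold for all $t\le k$, so does $\|w^k-x^*\|^2$.

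Three one-step inequalities drive the inductive step, all of which are instances of (or identical to) estimates already in the excerpt. (1) \emph{Hessian recursion.} In \algname{FedNL-BC} the local Hessians are taken at $z^k$, so applying Lemma~\ref{lm:threecomp} (cases (ii)/(iii)) with $y=z^k$, $z=x^*$ and averaging over $i$ gives, deterministically, $\cH^{k+1}\le(1-A)\cH^k+B\HF^2\|z^k-x^*\|^2$. (2) \emph{Model recursion.} Since $z^{k+1}=z^k+\eta\cC^k_{\rm M}(x^{k+1}-z^k)$, the argument proving Lemma~\ref{lm:threecomp-master} (which is pointwise when $\cC^k_{\rm M}\in\bC$) yields $\|z^{k+1}-x^*\|^2\le(1-A_{\rm M})\|z^k-x^*\|^2+B_{\rm M}\|x^{k+1}-x^*\|^2$. (3) \emph{Inner bound on $r_{k+1}=\|x^{k+1}-x^*\|^2$.} This is exactly the quantity estimated in the proof of Theorem~\ref{th:FedNL-BC}: splitting on $\xi^k$, the branch $\xi^k=1$ reproduces the ordinary \algname{FedNL} step with $x^k$ replaced by $z^k$, so the bound \eqref{eq:xk+1U} applies (with $x^k$ replaced by $z^k$, with constants $C,D$), while the branch $\xi^k=0$ additionally pays for the staleness of $g_i^k=\mH_i^k(z^k-w^k)+\nabla f_i(w^k)$ through the Hessian-corrected-gradient estimate $\|\nabla f(z^k)-\nabla f_i(w^k)-\nabla^2 f_i(x^k)(z^k-w^k)\|\le\HS\|z^k-x^*\|^2+\HS\|w^k-x^*\|^2$ together with $\mu\mI\preceq\nabla^2 f_i$ to invert $[\mH^k]_\mu$ (resp.\ $\mH^k+l^k\mI$). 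Taking the worse of the two branches, the upshot is a bound of the shape $r_{k+1}\le\frac{C_{\rm M}}{\mu^2}\cH^k\big(\|z^k-x^*\|^2+\|w^k-x^*\|^2\big)+\frac{1}{\mu^2}\big(E_3\|z^k-x^*\|^4+\dots\big)$ with coefficients assembled from $C_{\rm M},D_{\rm M},E_1,E_2,E_3$.

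Closing the induction is then bookkeeping calibrated by the constants. Under the inductive hypotheses $\cH^t\le\frac{A_{\rm M}}{B_{\rm M}}\frac{\mu^2}{9C_{\rm M}}$ and $\|z^t-x^*\|^2\le\min\{\frac{A_{\rm M}}{B_{\rm M}}\frac{\mu^2}{9E_3},\,\frac{A}{B\HF^2}\frac{A_{\rm M}}{B_{\rm M}}\frac{\mu^2}{9C_{\rm M}}\}$ for $t\le k$ (together with the induced bound on $\|w^k-x^*\|^2$), each term of the inner bound (3) is at most $\tfrac13\cdot\frac{A_{\rm M}}{B_{\rm M}}\cdot(z\text{-threshold})$ — this is exactly what the factor $9$ and the constants $C_{\rm M},D_{\rm M},E_i$ are tuned for — so that $B_{\rm M}\,r_{k+1}\le A_{\rm M}\cdot(z\text{-threshold})$; feeding this into (2) gives $\|z^{k+1}-x^*\|^2\le(1-A_{\rm M})(z\text{-thr})+A_{\rm M}(z\text{-thr})=(z\text{-thr})$. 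For the Hessian bound, the hypothesis forces $\|z^k-x^*\|^2\le\frac{A}{B\HF^2}\frac{A_{\rm M}}{B_{\rm M}}\frac{\mu^2}{9C_{\rm M}}$, and then (1) gives $\cH^{k+1}\le(1-A)\frac{A_{\rm M}}{B_{\rm M}}\frac{\mu^2}{9C_{\rm M}}+B\HF^2\cdot\frac{A}{B\HF^2}\frac{A_{\rm M}}{B_{\rm M}}\frac{\mu^2}{9C_{\rm M}}=\frac{A_{\rm M}}{B_{\rm M}}\frac{\mu^2}{9C_{\rm M}}$, completing the step.

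The main obstacle is ingredient (3): producing a single one-step estimate for $\|x^{k+1}-x^*\|^2$ that simultaneously covers both global-update options, absorbs the bias and staleness introduced by the Hessian-corrected local gradient in the $\xi^k=0$ branch, and emerges with constants matching $C_{\rm M},D_{\rm M},E_1,E_2,E_3$; everything else here is a routine transcription of the induction patterns already used for \algname{FedNL}, \algname{FedNL-PP} and \algname{FedNL-CR}. If that estimate is quoted from the proof of Theorem~\ref{th:FedNL-BC}, the present lemma reduces to the short computation sketched above.
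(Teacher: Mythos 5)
Your proposal is correct and follows essentially the same route as the paper: induction using the Hessian recursion from Lemma~\ref{lm:threecomp} (with $y=z^k$, $z=x^*$), the model recursion from Lemma~\ref{lm:threecomp-master}, and the one-step bound on $\|x^{k+1}-x^*\|^2$ (equation~\eqref{eq:rec-iter}) imported from the proof of Theorem~\ref{th:FedNL-BC}, with determinism of the contractive compressors letting you drop expectations and the two components of the $\min$ calibrated exactly as in the paper's bookkeeping. Your explicit handling of $w^k=z^t$ for some $t\le k$ and of the Bernoulli variable $\xi^k$ is in fact slightly more careful than the paper's terse "from the definition of $w^k$" step, and the only looseness (the "each term at most $\tfrac13$" remark) is harmless since the factor $9$ gives even more slack.
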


\begin{lemma}\label{lm:bc-24}
Let Assumptions \ref{asm:comp-2} and \ref{asm:comp-4} hold. If
$$
\|x^0 - x^*\| \le \min\left\{ \frac{\mu^2}{9d^2E_3}, \frac{\mu^2}{9C_{\rm M}d^4\HM^2} \right\},
$$
then the following upper bounds hold for all $k\ge0$, i.e.,
\begin{equation}\label{locality-bc}
\cH^k \le \frac{\mu^2}{9dC_{\rm M}},
\quad
\|z^k - x^*\| \le \min\left\{ \frac{\mu^2}{9dE_3}, \frac{\mu^2}{9C_{\rm M}d^3\HM^2} \right\}.
\end{equation}
\end{lemma}

\subsection{Proof of Theorem \ref{th:FedNL-BC}}

Consider {\it Option 1} first and expand $\|x^{k+1} - x^*\|^2$: 
\begin{eqnarray*}
  \norm{x^{k+1}-x^*}^2 =  \norm{\[\mH^k_\mu\]^{-1}(\mH^k_\mu(z^k-x^*) - g^k)}^2 \leq \frac{1}{\mu^2}\norm{\mH^k_\mu(z^k-x^*) - g^k}^2.
\end{eqnarray*}

Then we decompose the term $\mH^k_\mu (z^k -x^*) - g^k$ as follows
\begin{eqnarray}
  && \mH^k_\mu (z^k -x^*) - g^k\notag\\
  &=& (\mH^k_\mu-\nabla^2 f(z^k))(z^k-x^*) + \[\nabla^2f(z^k) - \nabla f(z^k) + \nabla f(x^*)\] + \[\nabla f(z^k) - g^k\]\notag\\
  &=& (\mH^k_\mu - \nabla^2f(z^k))(z^k-x^*) + \[\nabla^2f(z^k)(z^k-x^*) - \nabla f(z^k) + \nabla f(x^*)\]\notag\\
  && \qquad + \[\nabla f(z^k)-\nabla f(w^k) - \mH^k(z^k-w^k)\]\notag\\
  &=&  (\mH^k_\mu - \nabla^2f(z^k))(z^k-x^*) + \[\nabla^2f(z^k)(z^k-x^*) - \nabla f(z^k) + \nabla f(x^*)\]\notag\\
  && \qquad + \[\nabla f(z^k)-\nabla f(w^k) - \nabla^2 f(z^k)(z^k-w^k)\] + (\nabla^2 f(z^k)-\mH^k)(z^k-w^k) \label{eq:decompose-error}
\end{eqnarray}
and apply back to the previous inequality
\begin{eqnarray}
  &&\norm{x^{k+1}-x^*}^2\notag\\
  &\leq&\frac{4}{\mu^2}\left(\norm{(\mH^k_\mu-\nabla^2f(z^k))(z^k-x^*)}^2 + \norm{\nabla^2f(z^k)(z^k-x^*) - \nabla f(z^k) + \nabla f(x^*)}^2\right.\notag\\
  &&  \quad + \left.\norm{\nabla f(z^k) - \nabla f(w^k) - \nabla^2f(z^k)(z^k-w^k)}^4 + \norm{(\mH^k-\nabla f(w^k))(z^k-w^k)}^4\right)\notag\\
  & \leq & \frac{4}{\mu^2}\left(\fronorm{\mH^k - \nabla ^2f(z^k)}^2\norm{z^k-x^*}^2 + \fronorm{\mH^k-\nabla^2 f(w^k)}\norm{z^k-w^k}^2\right.\notag\\
  && \quad + \left.\frac{\HS^2}{4}\norm{z^k-x^*}^2 + \frac{\HS^2}{4}\norm{z^k-w^k}^2\right)\notag\\
  &\leq & \frac{4}{\mu^2}\left(2\fronorm{\mH^k-\nabla^2 f(x^*)}^2\norm{z^k-x^*}^2+2\fronorm{\nabla^2f(z^k)-\nabla^2f(x^*)}^2\norm{z^k-x^*}^2\right. \notag \\
  && \quad + 4\[\fronorm{\mH^k-\nabla^2f(x^*)}^2+\fronorm{\nabla^2f(w^k)-\nabla^2f(x^*)}^2\]\[\norm{z^k-x^*}^2 + \norm{w^k-x^*}^2\]\notag\\
  &&\quad + \left.\frac{\HS^2}{4}\norm{z^k-x^*}^4 + \frac{\HS^2}{4}\norm{z^k-w^k}^4\right)\notag\\
  &\leq & \frac{4}{\mu^2}\left(2\fronorm{\mH^k-\nabla^2 f(x^*)}^2\norm{z^k-x^*}^2 + 2\HF^2\norm{z^k-x^*}^4\right. \notag\\
  && \quad + 4\[\fronorm{\mH^k-\nabla^2f(x^*)}^2 + \HF^2\norm{w^k-x^*}^2\]\[\norm{z^k-x^*}^2+\norm{w^k-x^*}^2\]\notag\\
  &&\quad + \left.\frac{\HS^2}{4}\norm{z^k-x^*}^4 + 2\HS^2\norm{z^k-x^*}^4+2H^2\norm{w^k-x^*}^4\right) \notag\\
  &\leq& \frac{4}{\mu^2}\left(2\cH^k \gamma_k + 2\HF^2 \gamma_k^2 + 4\[\cH^k + \HF^2\nu_k\](\gamma_k+\nu_k)+\frac{\HS^2}{4}\gamma_k^2+2H^2\gamma_k^2 + 2\HS^2\nu_k^2 \right)\notag\\
  &=&\gamma_k\(\frac{24}{\mu^2}\cH^k + \frac{8\HF^2 + \nicefrac{9}{4}\HS^2}{\mu^2}\gamma_k + \frac{16\HF^2}{\mu^2}\nu_k\)+\frac{16}{\mu^2}\cH^k\nu_k + \frac{16\HF^2 + 8\HS^2}{\mu^2}\nu_k^2 \label{eq:rec-bc-1},
\end{eqnarray}
where
\begin{eqnarray}
  \cH^k &\eqdef& \frac{1}{n}\sum\limits_{i=1}^n\fronorm{\mH^k_i-\nabla^2f_i(x^*)}^2,\notag\\
  r_k &\eqdef& \norm{x^k-x^*}^2, \quad \nu_k \eqdef \norm{w^k-x^*}^2, \quad \gamma_k = \norm{z^k-x^*}^2\notag.
\end{eqnarray}

For {\it Option 2} we have similar bound with different constants. Recall that $\mu\mI \preceq \nabla^2 f(x^k) \preceq \mH^k + l^k\mI$.
\begin{eqnarray*}
\norm{x^{k+1}-x^*}
=  \norm{\[\mH^k + l^k\mI\]^{-1}\(\[\mH^k + l^k\mI\](z^k-x^*) - g^k\)}
\leq \frac{1}{\mu}\norm{\[\mH^k + l^k\mI\](z^k-x^*) - g^k}.
\end{eqnarray*}

Then we decompose the term $\[\mH^k + l^k\mI\](z^k-x^*) - g^k$ similar to \eqref{eq:decompose-error}:
\begin{eqnarray}
  && \[\mH^k + l^k\mI\](z^k -x^*) - g^k\notag\\
  &=&  (\mH^k - \nabla^2f(z^k))(z^k-x^*) + l^k(z^k-x^*) + \[\nabla^2f(z^k)(z^k-x^*) - \nabla f(z^k) + \nabla f(x^*)\]\notag\\
  && \qquad + \[\nabla f(z^k)-\nabla f(w^k) - \nabla^2 f(z^k)(z^k-w^k)\] + (\nabla^2 f(z^k)-\mH^k)(z^k-w^k)\notag
\end{eqnarray}
and apply back to the previous inequality
\begin{eqnarray}
  &&\norm{x^{k+1}-x^*}^2\notag\\
  &\leq&\frac{5}{\mu^2}\left(\norm{(\mH^k-\nabla^2f(z^k))(z^k-x^*)}^2 + \|l^k(z^k-x^*)\|^2 + \norm{\nabla^2f(z^k)(z^k-x^*) - \nabla f(z^k) + \nabla f(x^*)}^2\right.\notag\\
  &&  \quad + \left.\norm{\nabla f(z^k) - \nabla f(w^k) - \nabla^2f(z^k)(z^k-w^k)}^4 + \norm{(\mH^k-\nabla f(w^k))(z^k-w^k)}^4\right)\notag\\
  & \leq & \frac{5}{\mu^2}\left(\fronorm{\mH^k - \nabla ^2f(z^k)}^2\norm{z^k-x^*}^2 + \[l^k\]^2\|(z^k-x^*)\|^2 + \fronorm{\mH^k-\nabla^2 f(w^k)}\norm{z^k-w^k}^2\right.\notag\\
  && \quad + \left.\frac{\HS^2}{4}\norm{z^k-x^*}^2 + \frac{\HS^2}{4}\norm{z^k-w^k}^2\right)\notag\\
  & \leq & \frac{5}{\mu^2}\left( \frac{2}{n}\sum_{i=1}^n\fronorm{\mH_i^k - \nabla^2f_i(z^k)}^2\norm{z^k-x^*}^2 + \fronorm{\mH^k-\nabla^2 f(w^k)}\norm{z^k-w^k}^2\right.\notag\\
  && \quad + \left.\frac{\HS^2}{4}\norm{z^k-x^*}^2 + \frac{\HS^2}{4}\norm{z^k-w^k}^2\right)\notag\\
  &\leq & \frac{5}{\mu^2}\left(\frac{4}{n}\sum_{i=1}^n\fronorm{\mH_i^k - \nabla^2f_i(x^*)}^2\norm{z^k-x^*}^2 + \frac{4}{n}\sum_{i=1}^n\fronorm{\nabla^2f_i(z^k) - \nabla^2f_i(x^*)}^2\norm{z^k-x^*}^2 \right. \notag\\
  && \quad + 4\[\fronorm{\mH^k-\nabla^2f(x^*)}^2+\fronorm{\nabla^2f(w^k)-\nabla^2f(x^*)}^2\]\[\norm{z^k-x^*}^2 + \norm{w^k-x^*}^2\]\notag\\
  &&\quad + \left.\frac{\HS^2}{4}\norm{z^k-x^*}^4 + \frac{\HS^2}{4}\norm{z^k-w^k}^4\right)\notag\\
  &\leq & \frac{5}{\mu^2}\left(4\cH^k\norm{z^k-x^*}^2 + 4\HF^2\norm{z^k-x^*}^4\right. \notag\\
  && \quad + 4\[\fronorm{\mH^k-\nabla^2f(x^*)}^2 + \HF^2\norm{w^k-x^*}^2\]\[\norm{z^k-x^*}^2+\norm{w^k-x^*}^2\]\notag\\
  &&\quad + \left.\frac{\HS^2}{4}\norm{z^k-x^*}^4 + 2\HS^2\norm{z^k-x^*}^4+2H^2\norm{w^k-x^*}^4\right) \notag\\
  &\leq& \frac{5}{\mu^2}\left(4\cH^k \gamma_k + 4\HF^2 \gamma_k^2 + 4\[\cH^k + \HF^2\nu_k\](\gamma_k+\nu_k)+\frac{9\HS^2}{4}\gamma_k^2 + 2\HS^2\nu_k^2 \right)\notag\\
  &=&\gamma_k\(\frac{32}{\mu^2}\cH^k + \frac{16\HF^2 + \nicefrac{9}{4}\HS^2}{\mu^2}\gamma_k + \frac{16\HF^2}{\mu^2}\nu_k\)+\frac{16}{\mu^2}\cH^k\nu_k + \frac{16\HF^2 + 8\HS^2}{\mu^2}\nu_k^2 \label{eq:rec-bc-2}.
\end{eqnarray}

Combining \eqref{eq:rec-bc-1} and \eqref{eq:rec-bc-2} with \eqref{CD_M}, we have
\begin{equation}\label{eq:rec-iter}
r_{k+1}
\le \gamma_k\(\frac{C_{\rm M}}{\mu^2}\cH^k + \frac{D_{\rm M}}{\mu^2}\gamma_k + \frac{E_1}{\mu^2}\nu_k\)+\frac{E_2}{\mu^2}\cH^k\nu_k + \frac{E_3}{\mu^2}\nu_k^2,
\end{equation}
where $E_1 \eqdef 16\HF^2,\; E_2 \eqdef 16,\; E_3 \eqdef 16\HF^2 + 8\HS^2$.

Choosing $y=z^k$ and $z=x^*$ in Lemma \ref{lm:threecomp}, we get the following recurrence for $\cH^k$:
\begin{eqnarray}\label{rec-H}
  \E_k\[\cH^{k+1}\] \leq (1-A)\cH^k + B\HF^2\gamma_k.
\end{eqnarray}
Choosing $u=x^{k+1}$ and $v=x^*$ in Lemma \ref{lm:threecomp-master}, we get the following recurrence for $\gamma_k$:
\begin{eqnarray}
  && \E_k\[\gamma_{k+1}\] \\
  &\le& (1-A_{\rm M})\gamma_k + B_{\rm M}r_{k+1} \notag\\
  &\overset{\eqref{eq:rec-bc-2}}{\le}& (1-A_{\rm M})\gamma_k + \gamma_k\(\frac{B_{\rm M}C_{\rm M}}{\mu^2}\cH^k + \frac{B_{\rm M}D_{\rm M}}{\mu^2}\gamma_k + \frac{B_{\rm M}E_1}{\mu^2}\nu_k\)+\frac{B_{\rm M}E_2}{\mu^2}\cH^k\nu_k + \frac{B_{\rm M}E_3}{\mu^2}\nu_k^2 \notag.
\end{eqnarray}

Assume that $\cH^k \le \frac{A_{\rm M}}{B_{\rm M}}\frac{\mu^2}{\max(9C_{\rm M},12E_2)} = \frac{A_{\rm M}}{B_{\rm M}}\frac{\mu^2}{9C_{\rm M}}$ and $\gamma_k \le \frac{A_{\rm M}}{B_{\rm M}}\frac{\mu^2}{9\max(D_{\rm M}, E_1, E_3)} = \frac{A_{\rm M}}{B_{\rm M}}\frac{\mu^2}{9E_3}$ for all $k\ge0$. Then from the update rule of $w^k$ we also have $\nu_k \le \frac{\mu^2 A_{\rm M}}{9B_{\rm M}\max(D_{\rm M}, E_1, E_3)}$. Using this upper bounds we can simplify the recurrence relation for $\gamma_k$ to the following
\begin{equation}\label{rec-gamma}
\E_k\[\gamma_{k+1}\] \le \(1 - \frac{2A_{\rm M}}{3}\)\gamma_k + \frac{A_{\rm M}}{6}\nu_k.
\end{equation}

In addition, from the update rule of $w^k$ we imply
\begin{eqnarray*}
  \E_k\[\nu_{k+1}\] = (1-p)\nu_k + p \gamma_k.
\end{eqnarray*}

Finally, for the Lyapunov function
$$
\Phi^k = \gamma_k + \frac{A_{\rm M}}{3p}\nu_k,
$$
we have
\begin{eqnarray}
\E_k\[\Phi^{k+1}\]
&=& \E_k\[\gamma_{k+1}\] + \frac{A_{\rm M}}{3p}\E_k\[\nu_{k+1}\] \notag \\
&\le& \(1 - \frac{2A_{\rm M}}{3}\)\gamma_k + \frac{A_{\rm M}}{6}\nu_k + \frac{A_{\rm M}}{3p}\[ (1-p)\nu_k + p \gamma_k \] \notag \\
&=& \(1 - \frac{A_{\rm M}}{3}\)\gamma_k + \(1 - \frac{p}{2}\)\frac{A_{\rm M}}{3p}\nu_k \notag \\
&\le& \(1 - \min\left\{\frac{A_{\rm M}}{3}, \frac{p}{2} \right\}\) \Phi^k.
\end{eqnarray}

\subsection{Proof of Lemma \ref{lm:bc-13}}

We prove the lemma by induction. Let for some $k$ we have $\cH^k \le \frac{A_{\rm M}}{B_{\rm M}}\frac{\mu^2}{9C_{\rm M}}$ and $\gamma_k \le \min\left\{ \frac{A_{\rm M}}{B_{\rm M}}\frac{\mu^2}{9E_3}, \frac{A}{B\HF^2}\frac{A_{\rm M}}{B_{\rm M}}\frac{\mu^2}{9C_{\rm M}} \right\}$. Then, from the definition of $w^k$ we have $\nu_k \le \min\left\{ \frac{A_{\rm M}}{B_{\rm M}}\frac{\mu^2}{9E_3}, \frac{A}{B\HF^2}\frac{A_{\rm M}}{B_{\rm M}}\frac{\mu^2}{9C_{\rm M}} \right\}$. Since compressors $\cC_{\rm M}^k$ are deterministic (Assumption \ref{asm:comp-3}), from \eqref{rec-gamma} we conclude
$$
\gamma_{k+1} \le \(1 - \frac{2A_{\rm M}}{3}\)\gamma_k + \frac{A_{\rm M}}{6}\nu_k \le \max\{\gamma_k, \nu_k\}
\le \min\left\{ \frac{A_{\rm M}}{B_{\rm M}}\frac{\mu^2}{9E_3}, \frac{A}{B\HF^2}\frac{A_{\rm M}}{B_{\rm M}}\frac{\mu^2}{9C_{\rm M}} \right\}.
$$
Since compressors $\cC_i^k$ are deterministic (Assumption \ref{asm:comp-1}), from \eqref{rec-H} we conclude
$$
\cH_{k+1} \le (1-A)\cH^k + B\HF^2\gamma_k \le (1-A)\frac{A_{\rm M}}{B_{\rm M}}\frac{\mu^2}{9C_{\rm M}} + B\HF^2\frac{A}{B\HF^2}\frac{A_{\rm M}}{B_{\rm M}}\frac{\mu^2}{9C_{\rm M}}
= \frac{A_{\rm M}}{B_{\rm M}}\frac{\mu^2}{9C_{\rm M}}.
$$

\subsection{Proof of Lemma \ref{lm:bc-24}}

First note that in this case $A_{\rm M} = B_{\rm M} = \eta$ so that the ratio $\frac{A_{\rm M}}{B_{\rm M}} = 1$. From the Assumption \ref{asm:comp-4}, we have $\mH_i^0 = \nabla^2 f_i(z^0)$, from which we get
$$
\|\mH_i^0 - \nabla^2 f_i(x^*)\|_{\rm M}^2 \le \HM^2\|z^0 - x^*\|^2 \le \frac{\mu^2}{9dC_{\rm M}},
$$
which implies $\cH^0 \le \frac{\mu^2}{9dC_{\rm M}}$. Also notice that $x^0=z^0$ so that \eqref{locality-bc} holds for $k=0$. Next we do induction. Let
\begin{equation*}
\cH^k \le \frac{\mu^2}{9dC_{\rm M}},
\;
\|z^k - x^*\| \le \min\left\{ \frac{\mu^2}{9dE_3}, \frac{\mu^2}{9C_{\rm M}d^3\HM^2} \right\},
\;
\|x^k - x^*\| \le \min\left\{ \frac{\mu^2}{9d^2E_3}, \frac{\mu^2}{9C_{\rm M}d^4\HM^2} \right\}.
\end{equation*}
hold for all $k\le K$ and prove it for $k=K+1$. Using bounds $\cH^k\le\frac{\mu^2}{9dC_{\rm M}}$ and $\gamma_k\le\frac{\mu^2}{9dE_3}$ we deduce from \eqref{eq:rec-iter} that
$$
\|x^{K+1} - x^*\|^2 \le \frac{1}{3d}\gamma_K + \frac{1}{6d}\nu_K \le \frac{1}{d}\max\{\gamma_K,\nu_K\}
\le \min\left\{ \frac{\mu^2}{9d^2E_3}, \frac{\mu^2}{9C_{\rm M}d^4\HM^2} \right\}.
$$

Since $(z^{K+1})_{j}$ is a convex combination of $\{ (x^t)_{j} \}_{t=0}^{K+1}$, we get
\begin{eqnarray*}
\|z^{K+1}-x^*\|^2
&=& \sum_{j=1}^d |(z^{K+1} - x^*)_j|^2 \\
&\le& \sum_{j=1}^d \max_{0\le t\le K+1}|(x^t - x^*)_j|^2 \\
&\le& d \max_{0\le t\le K+1}\|x^t - x^*\|^2
\le \min\left\{ \frac{\mu^2}{9dE_3}, \frac{\mu^2}{9C_{\rm M}d^3\HM^2} \right\}.
\end{eqnarray*}
Since $(\mH_i^{K+1})_{jl}$ is a convex combination of $\{  (\nabla^2 f_i(z^t))_{jl}  \}_{t=0}^{K+1}$, we get
\begin{eqnarray*}
\|\mH_i^{K+1} - \nabla^2 f_i(x^*)\|^2
&=& \sum_{j,l=1}^d |(\mH_i^{K+1} - \nabla^2 f_i(x^*))_{jl}|^2 \\
&\le& d^2\HM^2 \max_{0\le t\le K+1}\|z^t - x^*\|^2
\le \frac{\mu^2}{9dC_{\rm M}}.
\end{eqnarray*}
The last three inequalities complete the induction step and we conclude the lemma.

\clearpage
\section{Local Quadratic Rate of \algname{NEWTON-STAR} for General Finite-Sum Problems}

In their recent work, \citet{Islamov2021NewtonLearn} proposed a novel Newton-type method, which does not update the Hessian estimator from iteration to iteration and, meanwhile, preserves fast local quadratic rate of convergence. The method can be describe with a single update rule preformed by the master:
\begin{equation}\label{eq:newton-star}
x^{k+1} = x^k - \[\nabla^2 f(x^*)\]^{-1} \nabla f(x^k), \quad k\ge0.
\end{equation}
Note that parallel nodes need to send the master only gradient information $\nabla f_i(x^k)$. Then master aggregates them, performs the update step \eqref{eq:newton-star} and sends new parameters $x^{k+1}$ to devices for the next round. While this scheme is very simple-looking, notice that the update rule \eqref{eq:newton-star} depends on the knowledge of $\nabla^2 f(x^*)$, where $x^*$ is the (unique) solution of \eqref{erm-prob}. As we do not know $x^*$ (otherwise there is no sense to do any training), this method, called \algname{NEWTON-STAR}, is practically useless and cannot be implemented. However, this method was quite useful in theory, since it led to a new practical method.

Now, the local quadratic rate of \algname{NEWTON-STAR} was shown using some special structure of local loss functions $f_i(x)$. Here we provide a very simple proof of local quadratic rate which works for any smooth losses and does not need special structure of $f_i(x)$.

\begin{theorem}\label{th:N*}
Assume that $f\colon\R^d\to\R$ has $\HS$-Lipschitz Hessian and the Hessian at the optimum $x^*$ is positive definite with parameter $\mu>0$. Then local convergence rate of \algname{NEWTON-STAR} \eqref{eq:newton-star} is quadratic, i.e., for any $k\ge0$ and initial point $x^0\in\R^d$ we have
\begin{eqnarray*}
\|x^{k+1} - x^*\| \le \frac{\HS}{2\mu}\|x^k-x^*\|^2.
\end{eqnarray*}\
\end{theorem}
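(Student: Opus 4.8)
The plan is to run the completely standard local analysis of Newton's method, but using the Hessian at the optimum $\nabla^2 f(x^*)$ in place of $\nabla^2 f(x^k)$, which is actually what makes the argument cleaner rather than harder. First I would write, using $\nabla f(x^*) = 0$ and the definition \eqref{eq:newton-star},
\[
x^{k+1} - x^* = x^k - x^* - \[\nabla^2 f(x^*)\]^{-1}\nabla f(x^k) = \[\nabla^2 f(x^*)\]^{-1}\(\nabla^2 f(x^*)(x^k-x^*) - \nabla f(x^k) + \nabla f(x^*)\).
\]
Taking norms and using $\nabla^2 f(x^*) \succeq \mu \mI$ (hence $\|\[\nabla^2 f(x^*)\]^{-1}\| \le \tfrac{1}{\mu}$) gives
\[
\|x^{k+1}-x^*\| \le \frac{1}{\mu}\left\|\nabla f(x^k) - \nabla f(x^*) - \nabla^2 f(x^*)(x^k-x^*)\right\|.
\]

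The second step is to bound the residual term. The key estimate is the standard consequence of $\HS$-Lipschitz continuity of $\nabla^2 f$, namely
\[
\left\|\nabla f(x) - \nabla f(y) - \nabla^2 f(y)(x-y)\right\| \le \frac{\HS}{2}\|x-y\|^2,
\]
which follows by writing $\nabla f(x) - \nabla f(y) = \int_0^1 \nabla^2 f(y + t(x-y))(x-y)\,dt$ and bounding $\int_0^1 \|\nabla^2 f(y+t(x-y)) - \nabla^2 f(y)\|\,dt \le \int_0^1 \HS t\|x-y\|\,dt = \tfrac{\HS}{2}\|x-y\|$. Applying this with $x = x^k$, $y = x^*$ yields exactly $\|x^{k+1}-x^*\| \le \frac{\HS}{2\mu}\|x^k-x^*\|^2$, which is the claim. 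Note this is already an iteration-independent quadratic contraction inequality; it holds for every $k\ge 0$ and every $x^0$ without any a priori proximity assumption (the inequality is vacuously ``quadratic'' even if the right side is large), so no induction or locality hypothesis is needed for the stated bound.

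I expect there to be essentially no obstacle here — the only mild subtlety is that the theorem as stated assumes only that $\nabla^2 f$ is Lipschitz and $\nabla^2 f(x^*) \succeq \mu\mI$, with no assumption that $f$ itself is globally convex or that a unique optimum exists; but since \eqref{eq:newton-star} is literally defined and the bound is a pure consequence of the two hypotheses plus $\nabla f(x^*)=0$, nothing more is required. If one wants to additionally conclude genuine local convergence $x^k \to x^*$, one would observe that whenever $\|x^0 - x^*\| < \tfrac{2\mu}{\HS}$ the factor $\tfrac{\HS}{2\mu}\|x^k-x^*\|$ is $<1$ and shrinks, giving quadratic convergence; but for the displayed inequality alone this remark is unnecessary. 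The contrast with \citet{Islamov2021NewtonLearn}, worth a one-line comment, is that their proof exploited the generalized-linear-model structure \eqref{eq:GLM}, whereas the argument above uses nothing beyond Assumption~\ref{asm:main}-type smoothness of $f$.
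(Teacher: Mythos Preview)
Your proposal is correct and follows essentially the same approach as the paper's own proof: write $x^{k+1}-x^* = \[\nabla^2 f(x^*)\]^{-1}\(\nabla^2 f(x^*)(x^k-x^*) - \nabla f(x^k) + \nabla f(x^*)\)$, bound $\|\[\nabla^2 f(x^*)\]^{-1}\|\le 1/\mu$, and apply the standard $\HS$-Lipschitz-Hessian inequality to the residual. Your write-up is in fact slightly more detailed than the paper's (you derive the residual bound via the integral representation, and add the remark on the neighborhood $\|x^0-x^*\|<2\mu/\HS$), but the argument is the same.
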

\begin{proof}
As we do not have a regularization term in our ERM problem, we imply $\nabla f(x^*)=0$. Hence
\begin{eqnarray*}
\|x^{k+1} - x^*\|
&=& \left\|x^k-x^* - \[\nabla^2 f(x^*)\]^{-1} \nabla f(x^k)\right\| \\
&\le& \left\| \[\nabla^2 f(x^*)\]^{-1} \right\| \left\|\nabla^2 f(x^*)(x^k-x^*) - \nabla f(x^k) + \nabla f(x^*)\right\| \\
&\le& \frac{\HS}{2\mu}\|x^k-x^*\|^2,
\end{eqnarray*}
where we used positive definiteness $\nabla^2 f(x^*) \succeq \mu\mI$ and $\HS$-Lipschitzness of the Hessian $\nabla^2 f(x)$, namely
$$
\left\|\nabla^2 f(y)(x-y) - \nabla f(x) + \nabla f(y)\right\| \le \frac{\HS}{2}\|x-y\|^2, \quad x,y\in\R^d.
$$
\end{proof}

\clearpage
\section{Limitations}\label{sec:limitations}

Here we discuss main limitations of our approach and directions which are not explored in this work.

\begin{itemize}
\item Our theory covers general convex (the rate \eqref{rate:global-sublinear-fval}) and strongly convex (all other rates of this paper) loss functions. We do not consider non-convex objectives in this work.

\item All the proposed methods are analyzed in the regime when the exact local gradients and exact local Hessians of local loss functions are computed for all participating devices. We do not consider stochastic gradient or stochastic Hessian oracles of local loss functions in our analyses.
\item We present separate methods/extensions (\algname{FedNL}, \algname{FedNL-PP}, \algname{FedNL-CR}, etc) for each setup (compressed communication, partial participation, globalization, etc) to make our contributions clearer. For practical purposes, however, one might need to combine these extensions in order to get a method which supports compressed communication, partial participation, globalization, etc at the same time. We do not design a single master method containing all these extensions as special cases.
\item Finally, we do not provide strong (differential) privacy guarantees for our methods. Our privacy enhancement mechanism offers the most  rudimentary level of privacy only: we forbid the devices do directly send/reveal their training data  to the server.
\end{itemize}

\clearpage
\section{Table of Frequently Used Notation}

\begin{table}[!h]
\caption{Notation we use throughout the paper.}
\label{tbl:notation}
\begin{center}
{\scriptsize
\begin{tabular}{|c|l|c|}
\hline 
\multicolumn{3}{|c|}{{\bf Basic} }\\
\hline 
$d$ & number of the model parameters to be trained & \\
\hline
$n$ & number of the devices/workers/clients in distributed system & \\
\hline
$[n]$ & $\eqdef \{1, 2, \dots, n\}$ & \\
\hline
$f_i$ & local loss function associated with data stored on device $i\in[n]$ & \eqref{erm-prob}\\
\hline
$f$ & $\eqdef \frac{1}{n}\sum_{i=1}^n f_i(x)$, overall empirical loss/risk & \eqref{erm-prob}\\
\hline
$x^*$ & trained model, i.e., the optimal solution to \eqref{erm-prob} & \\
\hline
$\varepsilon$ & target accuracy & \\
\hline
$\R^{d\times d}$ & the set of $d\times d$ square matrices & \\
\hline
$(\mM)_{jl}$ & the element at $j^{th}$ row and $l^{th}$ column of matrix $\mM$ & \\
\hline
\hline 
\multicolumn{3}{|c|}{{\bf Standard} }\\
\hline 
$\mu$ & strong convexity parameter of $f$ & Asm~\ref{asm:main} \\
\hline
$L$ & Lipschitz constant of the gradient $\nabla f(x)$ w.r.t. the Euclidean norm & Thm~\ref{th:NL-ls} \\
\hline
$\HS$ & Lipschitz constant of the Hessian $\nabla^2 f(x)$ w.r.t. the spectral norm & Asm~\ref{asm:main} \\
\hline
$\HF$ & Lipschitz constant of the Hessian $\nabla^2 f(x)$ w.r.t. the Frobenius norm & Asm~\ref{asm:main} \\
\hline
$\HM$ & Lipschitz constant of the Hessian $\nabla^2 f(x)$ w.r.t. the $\max$ norm & Asm~\ref{asm:main} \\
\hline
$\cC$ & (possibly randomized) compression operator $\cC\colon\R^d\to\R^d$ & \eqref{class-unbiased}, \eqref{class-contractive}  \\
\hline
$\B(\omega)$ & class of unbiased compressors with bounded variance $\omega\ge0$ & Def~\ref{def:class-unbiased} \\
\hline
$\bC(\delta)$ & class of deterministic contractive compressors with contraction $\delta\in[0,1]$ & Def~\ref{def:class-contractive} \\
\hline
\hline 
\multicolumn{3}{|c|}{{\bf Algorithm names} }\\
\hline 
\algname{GD} & Gradient Descent & \\
\hline
\algname{GD-LS} & \algname{GD} with Line Search procedure & \\
\hline
\algname{DIANA} & Compressed \algname{GD} with variance reduction \citep{DIANA} & \\
\hline
\algname{ADIANA} & \algname{DIANA} with Nesterov's acceleration \citep{ADIANA} & \\
\hline
\algname{N} & classical Newton & \\
\hline
\algname{NS} & Newton Star & \eqref{eq:newton-star} \\
\hline
\algname{N0} & Newton Zero {\bf (new)} & \eqref{eq:N0} \\
\hline
\algname{N0-LS} & Newton Zero with Line Search procedure {\bf (new)} & \\
\hline
\algname{NL1, NL2} & Newton Learn methods of \citet{Islamov2021NewtonLearn} & \\
\hline
\algname{CNL} & Cubic Newton Learn \citep{Islamov2021NewtonLearn} & \\
\hline
\algname{DINGO} & Distributed Newton-type method of \cite{DINGO} & \\
\hline
\algname{FedNL} & Federated Newton Learn {\bf (new)} & Alg~\ref{alg:FedNL} \\
\hline
\algname{FedNL-PP} & Extension to \algname{FedNL}: Partial Participation {\bf (new)} & Alg~\ref{alg:FedNL-PP} \\
\hline
\algname{FedNL-LS} & Extension to \algname{FedNL}: Globalization via Line Search {\bf (new)} & Alg~\ref{alg:FedNL-LS} \\
\hline
\algname{FedNL-CR} & Extension to \algname{FedNL}: Globalization via Cubic Regularization {\bf (new)} & Alg~\ref{alg:FedNL-CR} \\
\hline
\algname{FedNL-BC} & Extension to \algname{FedNL}: Bidirectional Compression {\bf (new)} & Alg~\ref{alg:FedNL-BC} \\
\hline
\hline 
\multicolumn{3}{|c|}{{\bf Federated Newton Learn (\algname{FedNL})} }\\
\hline 
$\mH_i^k$ & estimate of the local optimal Hessian $\nabla^2 f_i(x^*)$ at client $i$ in iteration $k$ & \\
\hline
$\mH^k$ & estimate of the global optimal Hessian $\nabla^2 f(x^*)$ at the server in iteration $k$ & \\
\hline
$\alpha$ & Hessian learning rate & \\
\hline
$\cC_i^k$ & compression operator applied by the client $i$ in iteration $k$ & \\
\hline
$\mS_i^k$ & $\eqdef \cC_i^k(\nabla^2 f_i(x^k) - \mH_i^k)$ compressed second order information & \\
\hline
$l_i^k$ & $\eqdef \|\nabla^2 f_i(x^k) - \mH_i^k\|_{\rm F}$ compression error & \\
\hline
$A,\; B$ & constants depending on the choice of compressors $\cC_i^k$ and learning rate $\alpha$ & \eqref{ABCD} \\
\hline
$C,\; D$ & constants depending on which option is chosen for the global update & \eqref{ABCD} \\
\hline
\hline 
\multicolumn{3}{|c|}{{\bf Experiments} }\\
\hline 
$\{a_{ij}, b_{ij}\}$ & $j^{th}$ data point stored in device $i$ & \eqref{prob:log-reg} \\
\hline
$m$ & number of local training data points & \eqref{prob:log-reg} \\
\hline
$\lambda$ & regularization parameter & \eqref{prob:log-reg} \\
\hline
\end{tabular}
}
\end{center}
\end{table}

\end{document}